\newtheorem{lemma}{Lemma}[chapter]
\newcommand{\K}{\mathrm{KL}}
\newcommand{\kl}{\mathrm{kl}}
\renewcommand{\phi}{\varphi}
\renewcommand{\P}{\mathbb{P}}
\newcommand{\E}{\mathbb{E}}
\newcommand{\N}{\mathbb{N}}
\newcommand{\R}{\mathbb{R}}
\newcommand{\cF}{\mathcal{F}}
\newcommand{\cL}{\mathcal{L}}
\newcommand{\cX}{\mathcal{X}}
\newcommand{\cC}{\mathcal{C}}
\newcommand{\cS}{\mathcal{S}}
\newcommand{\cE}{\mathcal{E}}
\newcommand{\cK}{\mathcal{K}}
\newcommand{\cD}{\mathcal{D}}
\newcommand{\oD}{\overline{\mathcal{D}}}
\newcommand{\oR}{\overline{R}}
\def\ds1{\mathds{1}}
\renewcommand{\epsilon}{\varepsilon}
\newcommand{\wh}{\widehat}
\newcommand{\argmax}{\mathop{\mathrm{argmax}}}
\newcommand{\argmin}{\mathop{\mathrm{argmin}}}
\renewcommand{\tilde}{\widetilde}
\newlength{\minipagewidth}
\newcommand{\bookbox}[1]{
\par\medskip\noindent
\framebox[\textwidth]{
\begin{minipage}{\minipagewidth}
{#1}
\end{minipage} } \par\medskip }
\newcommand{\beq}{\begin{equation}}
\newcommand{\eeq}{\end{equation}}
\newcommand{\beqa}{\begin{eqnarray}}
\newcommand{\eeqa}{\end{eqnarray}}
\newcommand{\beqan}{\begin{eqnarray*}}
\newcommand{\eeqan}{\end{eqnarray*}}
\def\ba#1\ea{\begin{align*}#1\end{align*}} 
\def\banum#1\eanum{\begin{align}#1\end{align}} 
\newcommand{\dt}{\displaystyle}
\newcommand{\arginf}{\mathop{\mathrm{arginf}}}
\newcommand{\theset}[2]{\left\{{#1}\,:\,{#2}\right\}}
\newcommand{\inner}[2]{\bigl\langle{#1},{#2}\bigr\rangle}
\newcommand{\ve}{\varepsilon}
\newcommand{\scO}{\mathcal{O}}
\newcommand{\conv}{\mathrm{Conv}}
\newcommand{\Ball}{\mathbb{B}}
\newcommand{\Sphere}{\mathbb{S}}
\newcommand{\norm}[1]{\left\|{#1}\right\|}
\newcommand{\oRx}{\oR^{\mathrm{ctx}}}
\newcommand{\Rx}{R^{\mathrm{ctx}}}
\newcommand{\oRS}{\oR^{\cS}}
\newcommand{\xhat}{\wh{x}}
\newcommand{\yhat}{\wh{y}}
\newcommand{\muhat}{\wh{\mu}}
\newcommand{\tX}{\widetilde{X}}
\newcommand{\tloss}{\widetilde{\ell}}
\newcommand{\gtilde}{\widetilde{g}}
\begin{document}

\isbn{xxxxxxxxxxx}

\DOI{xxxxxx}


\abstract{
Multi-armed bandit problems are the most basic examples of sequential decision problems with an exploration-exploitation trade-off. This is the balance between staying with the option that gave highest payoffs in the past and exploring new options that might give higher payoffs in the future. Although the study of bandit problems dates back to the Thirties, exploration-exploitation trade-offs arise in several modern applications, such as ad placement, website optimization, and packet routing. Mathematically, a multi-armed bandit is defined by the payoff process associated with each option. In this survey, we focus on two extreme cases in which the analysis of regret is particularly simple and elegant: i.i.d.\ payoffs and adversarial payoffs. Besides the basic setting of finitely many actions, we also analyze some of the most important variants and extensions, such as the contextual bandit model.
}

\articletitle{Regret Analysis of Stochastic and Nonstochastic Multi-armed Bandit Problems}

\authorname1{S\'ebastien Bubeck}
\authorname2{Nicol{\`o} Cesa-Bianchi}

\author1address2ndline{Department of Operations Research and Financial Engineering, Princeton University, Princeton 08544, USA, sbubeck@princeton.edu}
\author2address2ndline{Dipartimento di Informatica, Universit{\`a} degli Studi di Milano, Milano 20135, Italy, nicolo.cesa-bianchi@unimi.it}

\journal{sample}
\volume{xx}
\issue{xx}
\copyrightowner{xxxxxxxxx}
\pubyear{xxxx}

\maketitle

\cleardoublepage \pagenumbering{roman}

\tableofcontents

\clearpage

\setcounter{page}{0}
\pagenumbering{arabic}

\newtheorem{theorem}{Theorem}[chapter]
\newtheorem{definition}{Definition}[chapter]

\chapter{Introduction}
\label{intro}
A multi-armed bandit problem (or, simply, a bandit problem) is a sequential allocation problem defined by a set of actions. At each time step, a unit resource is allocated to an action and some observable payoff is obtained. The goal is to maximize the total payoff obtained in a sequence of allocations. The name \textsl{bandit} refers to the colloquial term for a slot machine (``one-armed bandit'' in American slang). In a casino, a sequential allocation problem is obtained when the player is facing many slot machines at once (a ``multi-armed bandit''), and must repeatedly choose where to insert the next coin.

Bandit problems are basic instances of sequential decision making with limited information, and naturally address the fundamental trade-off between exploration and exploitation in sequential experiments. Indeed, the player must balance the exploitation of actions that did well in the past and the exploration of actions that might give higher payoffs in the future.

Although the original motivation of \cite{Tho33} for studying bandit problems came from clinical trials (when different treatments are available for a certain disease and one must decide which treatment to use on the next patient), modern technologies have created many opportunities for new applications, and bandit problems now play an important role in several industrial domains. In particular, online services are natural targets for bandit algorithms, because there one can benefit from adapting the service to the individual sequence of requests. We now describe a few concrete examples in various domains.

Ad placement is the problem of deciding which advertisement to display on the web page delivered to the next visitor of a website. Similarly, website optimization deals with the problem of sequentially choosing design elements (font, images, layout) for the web page. Here the payoff is associated with visitor's actions, e.g., clickthroughs or other desired behaviors. Of course there are important differences with the basic bandit problem: in ad placement the pool of available ads (bandit arms) may change over time, and there might be a limit on the number of times each ad could be displayed.

In source routing a sequence of packets must be routed from a source host to a destination host in a given network, and the protocol allows to choose a specific source-destination path for each packet to be sent. The (negative) payoff is the time it takes to deliver a packet, and depends additively on the congestion of the edges in the chosen path.

In computer game-playing, each move is chosen by simulating and evaluating many possible game continuations after the move. Algorithms for bandits (more specifically, for a tree-based version of the bandit problem) can be used to explore more efficiently the huge tree of game continuations by focusing on the most promising subtrees. This idea has been successfully implemented in the MoGo player of \cite{GWMT06}, which plays Go at world-class level. MoGo is based on the UCT strategy for hierarchical bandits of \cite{KS06}, which is in turn derived from the UCB bandit algorithm ---see Chapter~\ref{stochastic}.

%
There are three fundamental formalizations of the bandit problem depending on the assumed nature of the reward process: stochastic, adversarial, and Markovian. Three distinct playing strategies have been shown to effectively address each specific bandit model: the UCB algorithm in the stochastic case, the Exp3 randomized algorithm in the adversarial case, and the so-called Gittins indices in the Markovian case. In this survey, we focus on stochastic and adversarial bandits, and refer the reader to the survey by \cite{mahajan2008multi} or to the recent monograph by \cite{GGW11} for an extensive analysis of Markovian bandits.

In order to analyze the behavior of a player or forecaster (i.e., the agent implementing a bandit strategy), we may compare its performance with that of an optimal strategy that, for any horizon of $n$ time steps, consistently plays the arm that is best in the first $n$ steps. In other terms, we may study the \textsl{regret} of the forecaster for not playing always optimally. More specifically, given $K \ge 2$ arms and given sequences $X_{i,1},X_{i,2},\dots$ of unknown rewards associated with each arm $i=1,\dots,K$, we study forecasters that at each time step $t=1,2,\dots$ select an arm $I_t$ and receive the associated reward $X_{I_t,t}$. The regret after $n$ plays $I_1,\dots,I_n$ is defined by
\begin{equation} \label{eq:regret}
    R_n = \max_{i=1,\hdots,K} \sum_{t=1}^n X_{i,t} - \sum_{t=1}^n X_{I_t,t}~.
\end{equation}
If the time horizon is not known in advance we say that the forecaster is {\em anytime}.

In general, both rewards $X_{i,t}$ and forecaster's choices $I_t$ might be stochastic. This allows to distinguish between the two following notions of averaged regret: the \textsl{expected regret}
\begin{equation} \label{eq:exp-regret}
    \E\,R_n = \E\left[\max_{i=1,\hdots,K} \sum_{t=1}^n X_{i,t} - \sum_{t=1}^n X_{I_t,t}\right]
\end{equation}
and the \textsl{pseudo-regret}
\begin{equation} \label{eq:pseudoregret}
\overline{R}_n = \max_{i=1,\dots,K} \E\left[ \sum_{t=1}^n X_{i,t} - \sum_{t=1}^n X_{I_t,t}\right]~.
\end{equation}
In both definitions, the expectation is taken with respect to the random draw of both rewards and forecaster's actions. Note that pseudo-regret is a weaker notion of regret, since one compares to the optimal action in expectation. The expected regret, instead, is the expectation of the regret with respect to the action which is optimal on the sequence of reward realizations. More formally one has $\oR_n \leq \E R_n$. 

In the original formalization of \cite{Rob52}, which builds on the work of~\cite{wald1947sequential} ---see also~\cite{arrow1949bayes}, each arm $i=1,\dots,K$ corresponds to an unknown probability distribution $\nu_i$ on $[0,1]$, and rewards $X_{i,t}$ are independent draws from the distribution $\nu_i$ corresponding to the selected arm.
\begin{center}
\bookbox{\small
\textbf{The stochastic bandit problem}

\smallskip\noindent
\textsl{Known parameters:} number of arms $K$ and (possibly) number of rounds $n \ge K$. 
\\
\textsl{Unknown parameters:} $K$ probability distributions $\nu_1,\ldots,\nu_K$ on $[0,1]$.

\smallskip\noindent
For each round $t=1,2,\ldots$
\begin{itemize}
\item[(1)]
the forecaster chooses $I_t \in \{1,\ldots,K \}$;
\item[(2)]
given $I_t$, the environment draws the reward $X_{I_t,t} \sim \nu_{I_t}$ independently from the past and reveals it to the forecaster.
\end{itemize}
%
}
\end{center}
%
For $i=1,\dots,K$ we denote by $\mu_i$ the mean of $\nu_i$ (mean reward of arm $i$).
Let
\[
    \mu^*=\max_{i=1,\hdots,K} \mu_i \qquad\text{and}\qquad i^* \in \argmax_{i=1,\hdots,K} \mu_i~.
\]
In the stochastic setting, it is easy to see that the pseudo-regret can be written as
\begin{equation} \label{eq:pregretstocha}
\overline{R}_n = n \mu^* - \sum_{t=1}^n \E\bigl[\mu_{I_t}\bigr]~.
\end{equation}
The analysis of the stochastic bandit model was pioneered in the seminal paper of~\cite{LR85}, who introduced the technique of upper confidence bounds for the asymptotic analysis of regret. In Chapter~\ref{stochastic} we describe this technique using the simpler formulation of~\cite{Agr95}, which naturally lends itself to a finite-time analysis.

In parallel to the research on stochastic bandits, a game-theoretic formulation of the trade-off between exploration and exploitation has been independently investigated, although for quite some time this alternative formulation was not recognized as an instance of the multi-armed bandit problem. In order to motivate these game-theoretic bandits, consider again the initial example of gambling on slot machines. We now assume that we are in a rigged casino, where for each slot machine $i=1,\hdots,K$ and time step $t \ge 1$ the owner sets the gain $X_{i,t}$ to some arbitrary (and possibly maliciously chosen) value $g_{i,t} \in [0,1]$. Note that it is not in the interest of the owner to simply set all the gains to zero (otherwise, no gamblers would go to that casino). Now recall that a forecaster selects sequentially one arm $I_t \in \{1,\hdots,K\}$ at each time step $t=1,2,\dots$ and observes (and earns) the gain $g_{I_t,t}$. Is it still possible to minimize regret in such a setting?

Following a standard terminology, we call adversary, or opponent, the mechanism setting the sequence of gains for each arm. If this mechanism is independent of the forecaster's actions, then we call it an \textsl{oblivious} adversary. In general, however, the adversary may adapt to the forecaster's past behaviour, in which case we speak of a \textsl{non-oblivious} adversary. For instance, in the rigged casino the owner may observe the way a gambler plays in order to design even more evil sequences of gains.
Clearly, the distinction between oblivious and non-oblivious adversary is only meaningful when the player is randomized (if the player is deterministic, then the adversary can pick a bad sequence of gains right at the beginning of the game by simulating the player's future actions). Note, however, that in presence of a non-oblivious adversary the interpretation of regret is ambiguous. Indeed, in this case the assignment of gains $g_{i,t}$ to arms $i=1,\dots,K$ made by the adversary at each step $t$ is allowed to depend on the player's past randomized actions $I_1,\dots,I_{t-1}$. In other words, $g_{i,t} = g_{i,t}(I_1,\dots,I_{t-1})$ for each $i$ and $t$. Now, the regret compares the player's cumulative gain to that obtained by playing the single best arm for the first $n$ rounds. However, had the player consistently chosen the same arm $i$ in each round, namely $I_t = i$ for $t=1,\dots,n$, the adversarial gains $g_{i,t}(I_1,\dots,I_{t-1})$ would have been possibly different than those actually experienced by the player.

The study of non-oblivious regret is mainly motivated by the connection between regret minimization and equilibria in games ---see, e.g.~\cite[Section~9]{ACFS03}. Here we just observe that game-theoretic equilibria are indeed defined similarly to regret: in equilibrium, the player has nSo incentive to behave differently provided the opponent does not react to changes in the player's behaviour. Interestingly, regret minimization has been also studied against \textsl{reactive opponents}, see for instance the works of~\cite{farias2006combining} and~\cite{arora2012online}.
\begin{center}
\bookbox{\small
\textbf{The adversarial bandit problem}

\smallskip\noindent
\textsl{Known parameters:} number of arms $K \ge 2$ and (possibly) number of rounds $n \ge K$.

\smallskip\noindent
For each round $t=1,2,\ldots$
\begin{itemize}
\item[(1)]
the forecaster chooses $I_t \in \{1,\ldots,K \}$, possibly with the help of external randomization;
\item[(2)]
simultaneously, the adversary selects a gain vector $g_t=(g_{1,t},\hdots,g_{K,t}) \in [0,1]^K$, possibly with the help of external randomization;
\item[(3)]
the forecaster receives (and observes) the reward $g_{I_t,t}$, while the gains of the other arms are not observed.
\end{itemize}
%
}
\end{center}
In this adversarial setting the goal is to obtain regret bounds in high probability or in expectation with respect to any possible randomization in the strategies used by the forecaster or the opponent, and irrespective of the opponent. In the case of a non-oblivious adversary this is not an easy task, and for this reason we usually start by bounding the pseudo-regret
%
\[
\overline{R}_n = \max_{i=1,\dots,K} \E\left[\sum_{t=1}^n g_{i,t} - \sum_{t=1}^n g_{I_t,t}\right]~.
\]
%
Note that the randomization of the adversary is not very important here since we ask for bounds which hold for any opponent.
On the other hand, it is fundamental to allow randomization for the forecaster ---see Chapter~\ref{adversarial} for details and basic results in the adversarial bandit model.
This adversarial, or non-stochastic, version of the bandit problem was originally proposed as a way of playing an unknown game against an opponent. The problem of playing a game repeatedly, now a classical topic in game theory, was initiated by the groundbreaking work of James Hannan and David Blackwell. In Hannan's seminal paper~\cite{Han57}, the game (i.e., the payoff matrix) is assumed to be known by the player, who also observes the opponent's moves in each play. Later, \cite{Ban68} considered the problem of a repeated unknown game, where in each game round the player only observes its own payoff. This problem turns out to be exactly equivalent to the adversarial bandit problem with a non-oblivious adversary. Simpler strategies for playing unknown games were more recently proposed by \cite{FoVo98} and \cite{HaMa99,HaMa00}. Approximately at the same time, the problem was re-discovered in computer science by~\cite{ACFS03}. It was them who made apparent the connection to stochastic bandits by coining the term nonstochastic multi-armed bandit problem. 

The third fundamental model of multi-armed bandits assumes that the reward processes are neither i.i.d.\ (like in stochastic bandits) nor adversarial. More precisely, arms are associated with $K$ Markov processes, each with its own state space. Each time an arm $i$ is chosen in state $s$, a stochastic reward is drawn from a probability distribution $\nu_{i,s}$, and the state of the reward process for arm $i$ changes in a Markovian fashion, based on an underlying stochastic transition matrix $M_i$. Both reward and new state are revealed to the player. On the other hand, the state of arms that are not chosen remains unchanged. Going back to our initial interpretation of bandits as sequential resource allocation processes, here we may think of $K$ competing projects that are sequentially allocated a unit resource of work. However, unlike the previous bandit models, in this case the state of a project that gets the resource may change. Moreover, the underlying stochastic transition matrices $M_i$ are typically assumed to be known, thus the optimal policy can be computed via dynamic programming and the problem is essentially of computational nature. The seminal result of~\cite{gittins1979bandit} provides an optimal greedy policy which can be computed efficiently.

A notable special case of Markovian bandits is that of Bayesian bandits. These are parametric stochastic bandits, where the parameters of the reward distributions are assumed to be drawn from known priors, and the regret is computed by also averaging over the draw of parameters from the prior. The Markovian state change associated with the selection of an arm corresponds here to updating the posterior distribution of rewards for that arm after observing a new reward.

Markovian bandits are a standard model in the areas of Operations Research and Economics. However, the techniques used in their analysis are significantly different from those used to analyze stochastic and adversarial bandits. For this reason, in this survey we do not cover Markovian bandits and their many variants.

\chapter{Stochastic bandits: fundamental results}
\label{stochastic}
We start by recalling the basic definitions for the stochastic bandit problem. Each arm $i \in \{1, \hdots, K\}$ corresponds to an unknown probability distribution $\nu_i$. At each time step $t =1, 2, \hdots,$ the forecaster selects some arm $I_t \in \{1, \hdots, K\}$ and receives a reward $X_{I_t,t}$ drawn from $\nu_{I_t}$ (independently from the past). Denote by $\mu_i$ the mean of arm $i$ and define
\[
    \mu^*=\max_{i=1,\hdots,K} \mu_i \qquad\text{and}\qquad i^* \in \argmax_{i=1,\hdots,K} \mu_i~.
\]
We focus here on the pseudo-regret, which is defined as
\begin{equation} \label{eq:pseudoregretstocha}
\oR_n = n \mu^* - \E \sum_{t=1}^n \mu_{I_t}~.
\end{equation}
We choose the pseudo-regret as our main quantity of interest because in a stochastic framework it is more natural to compete against the optimal action in expectation, rather than the optimal action on the sequence of realized rewards (as in the definition of the plain regret~\eqref{eq:regret}). Furthermore, because of the order of magnitude of typical random fluctuations, in general one cannot hope to prove a bound on the expected regret~\eqref{eq:exp-regret} better than $\Theta\bigl(\sqrt{n}\bigr)$. On the contrary, the pseudo-regret can be controlled so well that we are able to bound it by a \textsl{logarithmic} function of $n$.

In the following we also use a different formula for the pseudo-regret. Let $T_{i}(s)=\sum_{t=1}^s \ds1_{I_t=i}$ denote the number of times the player selected arm $i$ on the first $s$ rounds. Let $\Delta_i = \mu^*-\mu_i$ be the suboptimality parameter of arm $i$. Then the pseudo-regret can be written as:
\begin{align*}
\oR_n =
\left( \sum_{i=1}^K \E\,T_i(n) \right) \mu^* - \E \sum_{i=1}^K T_i(n) \mu_i = \sum_{i=1}^K \Delta_i\, \E\,T_i(n)~.
\end{align*}

\section{Optimism in face of uncertainty}
The difficulty of the stochastic multi-armed bandit problem lies in the {\em exploration-exploitation dilemma} that the forecaster is facing. Indeed, there is an intrinsic tradeoff between {\em exploiting} the current knowledge to focus on the arm that seems to yield the highest rewards, and {\em exploring} further the other arms to identify with better precision which arm is actually the best. As we shall see, the key to obtain a good strategy for this problem is, in a certain sense, to simultaneously perform exploration and exploitation.

A simple heuristic principle for doing that is the so-called {\em optimism in face of uncertainty}. The idea is very general, and applies to many sequential decision making problems in uncertain environments. Assume that the forecaster has accumulated some data on the environment and must decide how to act next. First, a set of ``plausible'' environments which are ``consistent'' with the data (typically, through concentration inequalities) is constructed. Then, the most ``favorable'' environment is identified in this set. Based on that, the heuristic prescribes that the decision which is optimal in this most favorable and plausible environment should be made. As we see below, this principle gives simple and yet almost optimal algorithms for the stochastic multi-armed bandit problem. More complex algorithms for various extensions of the stochastic multi-armed bandit problem are also based on the same idea which, along with the exponential weighting scheme presented in Section~\ref{adversarial}, is an algorithmic cornerstone of regret analysis in bandits. 

\section{Upper Confidence Bound (UCB) strategies} \label{sec:UCB}
In this section we assume that the distribution of rewards $X$ satisfy the following moment conditions. There exists a convex function\footnote{
One can easily generalize the discussion to functions $\psi$ defined only on an interval $[0,b]$.
}
$\psi$ on the reals such that, for all $\lambda \geq 0$,
\begin{equation} \label{eq:psicond}
\ln \E\,e^{\lambda \bigl(X-\E[X]\bigr)} \leq \psi(\lambda) \quad\text{and}\quad  \ln\,\E\,e^{\lambda \bigl(\E[X] - X\bigr)} \leq \psi(\lambda)~.
\end{equation}
For example, when $X \in [0,1]$ one can take $\psi(\lambda) = \tfrac{\lambda^2}{8}$. In this case~\eqref{eq:psicond} is known as Hoeffding's lemma.

We attack the stochastic multi-armed bandit using the optimism in face of uncertainty principle. In order do so, we use assumption \eqref{eq:psicond} to construct an upper bound estimate on the mean of each arm at some fixed confidence level, and then choose the arm that looks best under this estimate. We need a standard notion from convex analysis: the Legendre-Fenchel transform of $\psi$, defined by
\[
    \psi^*(\epsilon) = \sup_{\lambda \in \R} \bigl(\lambda \epsilon - \psi(\lambda)\bigr)~.
\]
For instance, if $\psi(x) = e^x$ then $\psi^*(x) = x\ln x - x$ for $x > 0$. If $\psi(x) = \frac{1}{p}|x|^p$ then $\psi^*(x) = \frac{1}{q}|x|^q$ for any pair $1 < p,q < \infty$ such that $\frac{1}{p} + \frac{1}{q} = 1$ ---see also Section~\ref{sec:OMD}, where the same notion is used in a different bandit model. 

Let $\wh{\mu}_{i,s}$ be the sample mean of rewards obtained by pulling arm $i$ for $s$ times. Note that since the rewards are i.i.d., we have that in distribution $\wh{\mu}_{i,s}$ is equal to $\frac{1}{s} \sum_{t=1}^s X_{i,t}$.

Using Markov's inequality, from~\eqref{eq:psicond} one obtains that
\begin{equation} \label{eq:psiconcentration}
\P(\mu_i - \wh{\mu}_{i,s} > \epsilon) \leq e^{- s\,\psi^*(\epsilon)}~.
\end{equation}
In other words, with probability at least $1- \delta$,
$$\wh{\mu}_{i,s} + \left(\psi^*\right)^{-1}\left( \frac{1}{s}\ln\frac{1}{\delta}\right) > \mu_i~.$$
We thus consider the following strategy, called $(\alpha, \psi)$-UCB, where $\alpha >0$ is an input parameter: At time $t$, select
\[
I_t \in \argmax_{i=1,\hdots,K} \left[ \wh{\mu}_{i,T_i(t-1)} + \left(\psi^*\right)^{-1}\left( \frac{\alpha \ln t}{T_i(t-1)}\right) \right]~.
\]
We can prove the following simple bound.
\begin{theorem}[Pseudo-regret of $(\alpha,\psi)$-UCB] \label{th:ucb}
Assume that the reward distributions satisfy \eqref{eq:psicond}. Then $(\alpha, \psi)$-UCB with $\alpha > 2$ satisfies
$$\oR_n \leq \sum_{i \,:\, \Delta_i > 0} \left( \frac{\alpha \Delta_i}{\psi^*(\Delta_i / 2)} \ln n + \frac{\alpha}{\alpha - 2}  \right)~.$$
\end{theorem}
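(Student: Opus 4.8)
The plan is to bound $\E\,T_i(n)$ for each suboptimal arm $i$ (i.e.\ each $i$ with $\Delta_i > 0$), since the pseudo-regret decomposes as $\oR_n = \sum_{i \,:\, \Delta_i > 0} \Delta_i\,\E\,T_i(n)$. The key step is to understand when a suboptimal arm $i$ gets pulled at time $t$: this requires its UCB index to exceed that of the optimal arm $i^*$. First I would fix a deterministic threshold $u$ and write $T_i(n) \le u + \sum_{t=u+1}^n \ds1\{I_t = i,\ T_i(t-1) \ge u\}$. The event $\{I_t = i\}$ implies that the UCB index of $i$ is at least that of $i^*$, which in turn forces at least one of the following three ``bad'' events: (a) the optimal arm's confidence bound fails, $\wh{\mu}_{i^*,T_{i^*}(t-1)} + (\psi^*)^{-1}\bigl(\tfrac{\alpha \ln t}{T_{i^*}(t-1)}\bigr) \le \mu^*$; (b) the suboptimal arm's empirical mean is badly overestimated, $\wh{\mu}_{i,T_i(t-1)} > \mu_i + (\psi^*)^{-1}\bigl(\tfrac{\alpha \ln t}{T_i(t-1)}\bigr)$; or (c) arm $i$ has not yet been pulled enough times, i.e.\ $T_i(t-1)$ is small enough that its confidence width is still larger than $\Delta_i$.

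Next I would choose $u$ precisely to rule out case (c). Since the confidence width $(\psi^*)^{-1}\bigl(\tfrac{\alpha \ln t}{s}\bigr)$ shrinks as $s$ grows, one wants $u$ to be the smallest integer such that $(\psi^*)^{-1}\bigl(\tfrac{\alpha \ln n}{u}\bigr) < \tfrac{\Delta_i}{2}$, which using $\psi^*$ and its inverse amounts to $u = \bigl\lceil \tfrac{\alpha \ln n}{\psi^*(\Delta_i/2)}\bigr\rceil$. With this choice, once $T_i(t-1) \ge u$, case (c) cannot occur, so for $t > u$ the indicator $\ds1\{I_t=i,\ T_i(t-1)\ge u\}$ is bounded by the indicator of (a) or (b). The term $u$ itself contributes the leading $\tfrac{\alpha \Delta_i}{\psi^*(\Delta_i/2)}\ln n$ (after multiplying by $\Delta_i$ in the regret sum), which is where the main term of the bound comes from.

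It then remains to bound $\E\sum_{t=u+1}^n \bigl(\ds1\{(a)\} + \ds1\{(b)\}\bigr)$. I would control each of (a) and (b) by a union bound over the unknown value of the relevant count $T_{i^*}(t-1)$ or $T_i(t-1)$, replacing it by all possible values $s \in \{1,\dots,t\}$, and then apply the concentration inequality~\eqref{eq:psiconcentration} (and its symmetric counterpart for overestimation) to each fixed-$s$ event. For a fixed $s$, the bad event has probability at most $e^{-s\,\psi^*((\psi^*)^{-1}(\alpha \ln t / s))} = e^{-\alpha \ln t} = t^{-\alpha}$, so the double sum over $t$ and $s$ gives a bound of the form $\sum_{t} \sum_{s} 2\,t^{-\alpha} \le 2\sum_{t=1}^\infty t^{1-\alpha}$. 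This is where the condition $\alpha > 2$ is essential: it guarantees the series $\sum_t t^{1-\alpha}$ converges, yielding the constant contribution $\tfrac{\alpha}{\alpha-2}$ (after bounding the tail sum by an integral).

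The main obstacle I anticipate is the bookkeeping in step three: being careful that the random counts $T_i(t-1)$ are \emph{not} independent of the empirical means evaluated at them, so one cannot directly apply concentration to $\wh{\mu}_{i,T_i(t-1)}$. The standard fix, which I would use, is the union bound over all deterministic values $s$ of the count (the ``peeling'' or $\max$-over-$s$ trick), which decouples the random stopping index from the i.i.d.\ average $\wh{\mu}_{i,s}$ and legitimizes~\eqref{eq:psiconcentration}. Getting the summation ranges and the integral bound for $\sum_{t=1}^\infty t^{1-\alpha}$ to produce exactly $\tfrac{\alpha}{\alpha-2}$ is the only genuinely delicate constant-tracking, but it is routine once the decoupling is in place.
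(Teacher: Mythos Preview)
Your proposal is correct and follows essentially the same approach as the paper: the three-way case split on why $I_t=i$ can occur, the choice $u=\bigl\lceil \alpha\ln n/\psi^*(\Delta_i/2)\bigr\rceil$ to eliminate case~(c), and the union bound over the deterministic values $s\in\{1,\dots,t\}$ of the pull count to decouple before applying~\eqref{eq:psiconcentration} are exactly what the paper does. Your anticipation of the random-stopping-time issue and its resolution via the union bound is spot on, and the constant-tracking you flag as ``routine'' is indeed what the paper dismisses with ``straightforward computations.''
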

In case of $[0,1]$-valued random variables, taking $\psi(\lambda) = \tfrac{\lambda^2}{8}$ in~\eqref{eq:psicond} ---the Hoeffding's Lemma--- gives $\psi^*(\epsilon) = 2 \epsilon^2$, which in turns gives the following pseudo-regret bound
\begin{equation} \label{eq:regretUCBbounded}
\oR_n \leq \sum_{i : \Delta_i > 0} \left( \frac{2 \alpha}{\Delta_i} \ln n + \frac{\alpha}{\alpha - 2}  \right)~.
\end{equation}
In this important special case of bounded random variables we refer to $(\alpha,\psi)$-UCB simply as $\alpha$-UCB.
\begin{proof}
First note that if $I_t = i$, then at least one of the three following equations must be true:
\begin{align}
\label{eq:UCB1}
& \wh{\mu}_{i^*,T_{i^*}(t-1)} + \left(\psi^*\right)^{-1}\left( \frac{\alpha \ln t}{T_{i^*}(t-1)}\right) \leq \mu^*
\\
\label{eq:UCB2}
& \widehat{\mu}_{i,T_i(t-1)} > \mu_i + \left(\psi^*\right)^{-1}\left( \frac{\alpha \ln t}{T_{i}(t-1)}\right)
\\
\label{eq:UCB3}
& T_i(t-1) < \frac{\alpha \ln n}{\psi^*(\Delta_i/2)}~.
\end{align}
Indeed, assume that the three equations are all false, then we have:
\begin{align*}
\wh{\mu}_{i^*,T_{i^*}(t-1)} + \left(\psi^*\right)^{-1}\left( \frac{\alpha \ln t}{T_{i^*}(t-1)}\right) & > \mu^* \\
& = \mu_i+\Delta_i  \\
& \geq \mu_i + 2 \left(\psi^*\right)^{-1}\left( \frac{\alpha \ln t}{T_{i}(t-1)}\right) \\
& \geq \widehat{\mu}_{i,T_i(t-1)} + \left(\psi^*\right)^{-1}\left( \frac{\alpha \ln t}{T_{i}(t-1)}\right)
\end{align*}
which implies, in particular, that $I_t \neq i$. In other words, letting
\[
    u = \left\lceil \frac{\alpha \ln n}{\psi^*(\Delta_i/2)} \right\rceil
\]
we just proved
\begin{align*}
\E\,T_i(n) = \E \sum_{t=1}^n \ds1_{I_t=i} & \leq u + \E \sum_{t=u+1}^n \ds1_{I_t = i \; \mbox{and \eqref{eq:UCB3} is false}} \\
& \leq u + \E \sum_{t=u+1}^n \ds1_{\mbox{\eqref{eq:UCB1} or \eqref{eq:UCB2} is true}} \\
& = u + \sum_{t=u+1}^n \P\bigl(\mbox{\eqref{eq:UCB1} is true}\bigr) + \P\bigl(\mbox{\eqref{eq:UCB2} is true}\bigr).
\end{align*}
Thus it suffices to bound the probability of the events \eqref{eq:UCB1} and \eqref{eq:UCB2}. Using an union bound and \eqref{eq:psiconcentration} one directly obtains:
\begin{align*}
\P\bigl(\mbox{\eqref{eq:UCB1} is true}\bigr) & \leq \P\left(\exists s \in \{1,\hdots,t\} : \wh{\mu}_{i^*,s} + \left(\psi^*\right)^{-1}\left( \frac{\alpha \ln t}{s}\right)  \leq \mu^* \right) \\
& \leq \sum_{s=1}^t \P\left(\wh{\mu}_{i^*,s} + \left(\psi^*\right)^{-1}\left( \frac{\alpha \ln t}{s}\right)  \leq \mu^* \right) \\
& \leq \sum_{s=1}^t \frac{1}{t^{\alpha}}
= \frac{1}{t^{\alpha - 1}}~. 
\end{align*}
The same upper bound holds for \eqref{eq:UCB2}. Straightforward computations conclude the proof.
\end{proof}

\section{Lower bound}
\label{s:stoch-lower}
We now show that the result of the previous section is essentially unimprovable when the reward distributions are Bernoulli. For $p, q \in [0,1]$ we denote by $\kl(p,q)$ the Kullback-Leibler divergence between a Bernoulli of parameter $p$ and a Bernoulli of parameter $q$, defined as
$$\kl(p,q) =p \ln\frac{p}{q} + (1-p) \ln\frac{1-p}{1-q}~.$$
\begin{theorem}[Distribution-dependent lower bound] \label{th:LR85}
Consider a strategy that satisfies $\E\,T_i(n) = o(n^a)$ for any set of Bernoulli reward distributions, any arm $i$ with $\Delta_i > 0$, and any $a>0$. 
Then, for any set of Bernoulli reward distributions the following holds
$$\liminf_{n \to +\infty} \frac{\oR_n}{\ln n} \geq \sum_{i \,:\, \Delta_i > 0} \frac{\Delta_i}{\kl(\mu_i,\mu^*)}~.$$
\end{theorem}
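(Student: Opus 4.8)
The plan is to prove the classical Lai-Robbins lower bound via a change-of-measure argument. The intuition is that to avoid large regret, any reasonable strategy must pull each suboptimal arm $i$ enough times to statistically distinguish $\nu_i$ from the optimal distribution $\nu^*$; the number of samples needed is governed by the Kullback-Leibler divergence. I would fix a suboptimal arm $i$ (with $\Delta_i > 0$) and argue that $\E\,T_i(n) \gtrsim \frac{\ln n}{\kl(\mu_i,\mu^*)}$, after which summing over all such $i$ and using $\oR_n = \sum_i \Delta_i\,\E\,T_i(n)$ yields the claim. So the crux is the single-arm lower bound.

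First I would set up two environments that differ only in arm $i$. In the original environment arm $i$ has parameter $\mu_i < \mu^*$; in the modified environment I replace $\nu_i$ by a Bernoulli with parameter $\mu_i' = \mu^* + \ve$ for small $\ve>0$, so that arm $i$ becomes the unique best arm while all other arms are unchanged. Let $\kl_i = \kl(\mu_i, \mu_i')$, and note that as $\ve \to 0$ continuity gives $\kl_i \to \kl(\mu_i,\mu^*)$. Writing $X_{i,1},\dots,X_{i,T_i(n)}$ for the observed rewards from arm $i$, I would introduce the empirical log-likelihood ratio $\wh{L}_s = \sum_{t=1}^s \ln\frac{\nu_i(X_{i,t})}{\nu_i'(X_{i,t})}$, whose expectation under the original measure equals $s\,\kl_i$. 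The key transfer tool is that for any event $A$ measurable with respect to the first $n$ rounds, the two induced measures $\P$ and $\P'$ on the observed history are related by $\P'(A) = \E\bigl[\ds1_A\,e^{-\wh{L}_{T_i(n)}}\bigr]$, since the likelihood ratio only picks up factors on the rounds where arm $i$ is actually pulled.

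The heart of the argument is then a concentration statement: under the original measure, on the event that $T_i(n)$ is small (say $T_i(n) \le \frac{(1-\ve)\ln n}{\kl_i}$), the log-likelihood ratio $\wh{L}_{T_i(n)}$ cannot be much larger than its mean. Formally I would prove a maximal-type bound showing that $\P\bigl(T_i(n) < \frac{(1-\ve)\ln n}{\kl_i} \text{ and } \wh{L}_{T_i(n)} \le (1-\tfrac{\ve}{2})\ln n\bigr) \to 1$, which rests on the strong law of large numbers applied to the i.i.d.\ increments of $\wh{L}_s$ together with a control on the maximal fluctuations of the random walk. Combining this with the change-of-measure identity, if $\P\bigl(T_i(n) < \frac{(1-\ve)\ln n}{\kl_i}\bigr)$ did not tend to zero, then $\P'\bigl(T_i(n) < \frac{(1-\ve)\ln n}{\kl_i}\bigr)$ would stay bounded away from $0$; but in the modified environment arm $i$ is optimal, so by the hypothesis $\E'\,T_j(n) = o(n^a)$ applied to the now-suboptimal arms, arm $i$ must be pulled $n - o(n^a)$ times, forcing $T_i(n)$ to be large with high probability --- a contradiction. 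This yields $\P\bigl(T_i(n) < \frac{(1-\ve)\ln n}{\kl_i}\bigr) \to 0$, hence $\E\,T_i(n) \ge (1+o(1))\frac{(1-\ve)\ln n}{\kl_i}$ by Markov's inequality.

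Finally I would divide by $\ln n$, take $\liminf_{n\to\infty}$ to get $\liminf \frac{\E\,T_i(n)}{\ln n} \ge \frac{1-\ve}{\kl_i}$, then let $\ve \to 0$ so that $\kl_i \to \kl(\mu_i,\mu^*)$ and the factor $(1-\ve)$ disappears; multiplying by $\Delta_i$ and summing over all suboptimal arms gives the theorem. The main obstacle I anticipate is the maximal inequality controlling $\max_{s \le (1-\ve)\ln n/\kl_i} \wh{L}_s$: one must rule out the possibility that the likelihood ratio spikes above $(1-\tfrac{\ve}{2})\ln n$ at some intermediate number of pulls even though its terminal mean is smaller, and making this uniform-over-$s$ control rigorous (rather than just pointwise via the SLLN) is the delicate technical point where the $o(n^a)$ hypothesis on the strategy is genuinely used.
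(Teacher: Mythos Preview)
Your approach is essentially identical to the paper's: same change-of-measure setup, same log-likelihood ratio, same maximal strong law of large numbers to control the fluctuations, and same use of the $o(n^a)$ hypothesis in the modified environment. Two points need correction, however.

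First, the displayed claim $\P\bigl(T_i(n) < \tfrac{(1-\ve)\ln n}{\kl_i} \text{ and } \wh{L}_{T_i(n)} \le (1-\tfrac{\ve}{2})\ln n\bigr) \to 1$ is mis-stated: this joint probability actually tends to $0$, not $1$ (indeed, you ultimately want $\P(T_i(n) < f_n) \to 0$). The correct decomposition is: (i) the maximal SLLN gives $\P\bigl(\max_{s \le f_n} \wh{L}_s \le (1-\tfrac{\ve}{2})\ln n\bigr) \to 1$; (ii) the change-of-measure plus the $o(n^a)$ hypothesis gives that the \emph{joint} event has probability $o(1)$; combining (i) and (ii) yields $\P(T_i(n) < f_n) \to 0$. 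This is exactly how the paper orders the argument.

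Second, you locate the role of the $o(n^a)$ hypothesis in the maximal inequality, but that is not where it enters. The maximal control of $\wh{L}_s$ is a purely probabilistic fact about i.i.d.\ sums and holds for any strategy. The $o(n^a)$ hypothesis is used only in the change-of-measure step: in the modified bandit, arm $i$ is optimal, so $\E'[n - T_i(n)] = o(n^a)$, and this is what makes $n^{1-\ve/2}\,\P'(T_i(n) < f_n) \to 0$.
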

In order to compare this result with \eqref{eq:regretUCBbounded} we use the following standard inequalities (the left hand side follows from Pinsker's inequality, and the right hand side simply uses $\ln x \leq x -1$),
\begin{equation} \label{eq:klbernoullis}
2 (p-q)^2 \leq \kl(p,q) \leq \frac{(p-q)^2}{q(1-q)}~.
\end{equation}
\begin{proof}
The proof is organized in three steps. For simplicity, we only consider the case of two arms.

\subsection*{First step: Notations.}
Without loss of generality assume that arm $1$ is optimal and arm $2$ is suboptimal, that is $\mu_2 < \mu_1 < 1$. Let $\epsilon > 0$. Since $x \mapsto \kl(\mu_2,x)$ is continuous one can find $\mu_2' \in (\mu_1,1)$ such that
\begin{equation} \label{eq:defmu2prime}
\kl(\mu_2,\mu_2') \leq (1+\epsilon) \kl(\mu_2,\mu_1)~.
\end{equation}
We use the notation $\E', \P'$ when we integrate with respect to the modified bandit where the parameter of arm $2$ is replaced by $\mu_2'$. We want to compare the behavior of the forecaster on the initial and modified bandits. In particular, we prove that with a big enough probability the forecaster can not distinguish between the two problems. Then, using the fact that we have a good forecaster by hypothesis, we know that the algorithm does not make too many mistakes on the modified bandit where arm $2$ is optimal. In other words, we have a lower bound on the number of times the optimal arm is played. This reasoning implies a lower bound on the number of times arm $2$ is played in the initial problem.

We now slightly change the notation for rewards and denote by $X_{2,1}, \hdots, X_{2,n}$ the sequence of random variables obtained when pulling arm $2$ for $n$ times (that is, $X_{2,s}$ is the reward obtained from the $s$-th pull).
For $s \in \{1, \hdots, n\}$, let
$$\widehat{\kl}_s=\sum_{t=1}^{s} \ln \frac{\mu_2 X_{2,t} + (1-\mu_2) (1-X_{2,t})}{\mu_2' X_{2,t} + (1-\mu_2') (1-X_{2,t})}~.$$
Note that, with respect to the initial bandit, $\widehat{\kl}_{T_2(n)}$ is the (non re-normalized) empirical estimate of $\kl(\mu_2, \mu_2')$ at time $n$, since in that case the process $(X_{2,s})$ is i.i.d.\ from a Bernoulli of parameter $\mu_2$. Another important property is the following: for any event $A$ in the $\sigma$-algebra generated by $X_{2,1},\dots,X_{2,n}$ the following change-of-measure identity holds:
\begin{equation} \label{eq:KLchapeauprop}
\P'(A) = \E \left[\ds1_{A} \exp\left(- \widehat{\kl}_{T_2(n)} \right)\right]~.
\end{equation}
In order to link the behavior of the forecaster on the initial and modified bandits we introduce the event
\begin{equation} \label{eq:cndef}
C_n = \left\{ T_2(n) < \frac{1-\epsilon}{\kl(\mu_2,\mu_2')} \ln(n) \;\; \text{and} \;\; \widehat{\kl}_{T_2(n)} \leq \left(1-\frac{\epsilon}{2}\right) \ln(n) \right\}~.
\end{equation}

\subsection*{Second step: $\P(C_n) = o(1)$.}
By \eqref{eq:KLchapeauprop} and \eqref{eq:cndef} one has
$$\P'(C_n) = \E \; \ds1_{C_n} \exp\left(- \widehat{\kl}_{T_2(n)} \right) \geq e^{- (1-\epsilon/2) \ln(n)} \P(C_n)~.$$
Introduce the shorthand
\[
    f_n = \frac{1-\epsilon}{\kl(\mu_2,\mu_2')} \ln(n)~.
\]
Using again~\eqref{eq:cndef} and Markov's inequality, the above implies
\begin{align*}
\P(C_n) \leq n^{(1-\epsilon/2)} \P'(C_n)
&\leq n^{(1-\epsilon/2)} \P'(T_2(n) < f_n)
\\ &\leq n^{(1-\epsilon/2)} \frac{\E'[n - T_2(n)]}{n - f_n}~.
\end{align*}
Now note that in the modified bandit arm $2$ is the unique optimal arm. Hence the assumption that for any bandit, any suboptimal arm $i$, and any $a>0$, the strategy satisfies $\E\,T_i(n) = o(n^a)$, implies that
$$\P(C_n) \leq n^{(1-\epsilon/2)} \frac{\E'[n - T_2(n)]}{n - f_n} = o(1)~.$$ 

\subsection*{Third step: $\P\left(T_2(n) < f_n \right) = o(1)$.}
Observe that
\begin{align}
    \P(C_n)
& \ge
    \P\left(T_2(n) < f_n \;\; \text{and} \;\; \max_{s \le f_n} \widehat{\kl}_s \leq \left(1-\frac{\epsilon}{2}\right) \ln(n) \right)
\notag
\\&=
    \P\biggl(T_2(n) < f_n
\notag
\\&
    \qquad \text{and} \;\; \frac{\kl(\mu_2,\mu_2')}{(1-\epsilon) \ln(n)} \times \max_{s \le f_n} \widehat{\kl}_s \leq \frac{1-\epsilon/2}{1-\epsilon} \kl(\mu_2, \mu_2') \biggr)~.
\label{eq:endLR}
\end{align}
Now we use the maximal version of the strong law of large numbers: for any sequence $\bigl(X_t\bigr)$ of independent real random variables with positive mean $\mu > 0$,
\[
    \lim_{n\to\infty}\frac{1}{n}\sum_{t=1}^n X_t = \mu \quad a.s. \quad\text{implies}\quad
\lim_{n\to\infty}\frac{1}{n}\max_{s=1,\dots,n}\sum_{t=1}^s X_t = \mu \quad a.s.
\]
See, e.g., \cite[Lemma 10.5]{Bub10}.

Since $\kl(\mu_2,\mu_2')>0$ and $\frac{1-\epsilon/2}{1-\epsilon} > 1$, we deduce that
$$\lim_{n \to \infty} \P \left(\frac{\kl(\mu_2,\mu_2')}{(1-\epsilon) \ln(n)} \times \max_{s \le f_n} \widehat{\kl}_s \leq \frac{1-\epsilon/2}{1-\epsilon} \kl(\mu_2, \mu_2') \right) = 1~.$$
Thus, by the result of the second step and~\eqref{eq:endLR}, we get
$$\P\left(T_2(n) < f_n\right) = o(1)~.$$
Now recalling that $f_n = \frac{1-\epsilon}{\kl(\mu_2,\mu_2')} \ln(n)$, and using~\eqref{eq:defmu2prime}, we obtain
$$\E\,T_2(n) \geq (1+o(1)) \frac{1-\epsilon}{1+\epsilon} \frac{\ln(n)}{\kl(\mu_2,\mu_1)}$$
which concludes the proof.
\end{proof}

\section{Refinements and bibliographic remarks}
The UCB strategy presented in Section \ref{sec:UCB} was introduced by \cite{ACF02} for bounded random variables. Theorem~\ref{th:LR85} is extracted from \cite{LR85}. Note that in this last paper the result is more general than ours, which is restricted to Bernoulli distributions. Although \cite{BK97} prove an even more general lower bound, Theorem \ref{th:LR85} and the UCB regret bound provide a reasonably complete solution to the problem. We now discuss some of the possible refinements. In the following, we restrict our attention to the case of bounded rewards (except in Section \ref{sec:heavytail}).

\subsection{Improved constants}
The regret bound proof for UCB can be improved in two ways. First, the union bound over the different time steps can be replaced by a ``peeling'' argument. This allows to show a logarithmic regret for any $\alpha > 1$, whereas the proof of Section~\ref{sec:UCB} requires $\alpha > 2$ ---see \cite[Section 2.2]{Bub10} for more details. A second improvement, proposed by \cite{GC11}, is to use a more subtle set of conditions than \eqref{eq:UCB1}--\eqref{eq:UCB3}. In fact, the authors take both improvements into account, and show that $\alpha$-UCB has a regret of order $\frac{\alpha}{2} \ln n$ for any $\alpha > 1$. In the limit when $\alpha$ tends to $1$, this constant is unimprovable in light of Theorem~\ref{th:LR85} and \eqref{eq:klbernoullis}.

\subsection{Second order bounds}
Although $\alpha$-UCB is essentially optimal, the gap between \eqref{eq:regretUCBbounded} and Theorem \ref{th:LR85} can be important if $\kl(\mu_{i^*}, \mu_i)$ is significantly larger than $\Delta_i^2$. Several improvements have been proposed towards closing this gap. In particular, the UCB-V algorithm of \cite{AMS09} takes into account the variance of the distributions and replaces Hoeffding's inequality by Bernstein's inequality in the derivation of UCB. A previous algorithm with similar ideas was developed by \cite{ACF02} without theoretical guarantees. A second type of approach replaces $L_2$-neighborhoods in $\alpha$-UCB by $\kl$-neighborhoods. This line of work started with \cite{HT10} where only asymptotic guarantees were provided. Later, \cite{GC11} and~\cite{MMS11} (see also \cite{CGMMS12}) independently proposed a similar algorithm, called KL-UCB, which is shown to attain the optimal rate in finite-time. More precisely, \cite{GC11} showed that KL-UCB attains a regret smaller than
\[
    \sum_{i \,:\, \Delta_i > 0} \frac{\Delta_i}{\kl(\mu_i,\mu^*)} \alpha \ln n + \mathcal{O}(1)
\]
where $\alpha >1$ is a parameter of the algorithm. Thus, KL-UCB is optimal for Bernoulli distributions, and strictly dominates $\alpha$-UCB for any bounded reward distributions.

\subsection{Distribution-free bounds}
In the limit when $\Delta_i$ tends to $0$, the upper bound in \eqref{eq:regretUCBbounded} becomes vacuous. On the other hand, it is clear that the regret incurred from pulling arm $i$ cannot be larger than $n\,\Delta_i$. Using this idea, it is easy to show that the regret of $\alpha$-UCB is always smaller than $\sqrt{\alpha n K \ln n}$ (up to a numerical constant). However, as we shall see in the next chapter, one can show a minimax lower bound on the regret of order $\sqrt{n K}$. \cite{AB09} proposed a modification of $\alpha$-UCB that gets rid of the extraneous logarithmic term in the upper bound. More precisely, let $\Delta = \min_{i \neq i^*} \Delta_i$, \cite{AB10} show that MOSS (Minimax Optimal Strategy in the Stochastic case) attains a regret smaller than 
$$\min \left\{ \sqrt{n K},\, \frac{K}{\Delta} \ln\frac{n \Delta^2}{K} \right\}$$
up to a numerical constant. The weakness of this result is that the second term in the above equation only depends on the smallest gap $\Delta$. In \cite{AO10} (see also \cite{PR11}) the authors designed a strategy, called improved UCB, with a regret of order
$$\sum_{i \,:\, \Delta_i > 0} \frac{1}{\Delta_i} \ln \bigl(n \Delta_i^2\bigr)~.$$
This latter regret bound can be better than the one for MOSS in some regimes, but it does not attain the minimax optimal rate of order $\sqrt{n K}$. It is an open problem to obtain a strategy with a regret always better than those of MOSS and improved UCB. A plausible conjecture is that a regret of order
\[
    \sum_{i \,:\, \Delta_i > 0} \frac{1}{\Delta_i} \ln \frac{n}{H}
\qquad\text{with}\qquad
    H = \sum_{i \,:\, \Delta_i > 0} \frac{1}{\Delta_i^2} 
\]
is attainable. Note that the quantity $H$ appears in other variants of the stochastic multi-armed bandit problem, see \cite{ABM10}.

\subsection{High probability bounds}
While bounds on the pseudo-regret $\oR_n$ are important, one typically wants to control the quantity $\hat{R}_n = n \mu^* - \sum_{t=1}^n \mu_{I_t}$ with high probability. Showing that $\hat{R}_n$ is close to its expectation $\oR_n$ is a challenging task, since naively one might expect the fluctuations of $\hat{R}_n$ to be of order $\sqrt{n}$, which would dominate the expectation $\oR_n$ which is only of order $\ln n$. The concentration properties of $\hat{R}_n$ for UCB are analyzed in detail in \cite{AMS09}, where it is shown that $\hat{R}_n$ concentrates around its expectation, but that there is also a polynomial (in $n$) probability that $\hat{R}_n$ is of order $n$. In fact the polynomial concentration of $\hat{R}_n$ around $\oR_n$ can be directly derived from our proof of Theorem \ref{th:ucb}. In \cite{SA11} it is proved that for anytime strategies (i.e., strategies that do not use the time horizon $n$) it is basically impossible to improve this polynomial concentration to a classical exponential concentration.
In particular this shows that, as far as high probability bounds are concerned, anytime strategies are surprisingly weaker than strategies using the time horizon information (for which exponential concentration of $\hat{R}_n$ around $\ln n$ are possible, see \cite{AMS09}). 


\subsection{$\epsilon$-greedy}
A simple and popular heuristic for bandit problems is the $\ve$-greedy strategy ---see, e.g., \cite{SB98}. The idea is very simple. First, pick a parameter $0 < \ve < 1$. Then, at each step greedily play the arm with highest empirical mean reward with probability $1-\ve$, and play a random arm with probability $\ve$. \cite{ACF02} proved that, if $\ve$ is allowed to be a certain function $\ve_t$ of the current time step $t$, namely $\ve_t = K/(d^2 t)$, then the regret grows logarithmically like $(K \ln n)/d^2$, provided $0 < d < \min_{i \neq i^*} \Delta_i$. While this bound has a suboptimal dependence on $d$, \cite{ACF02} show that this algorithm performs well in practice, but the performance degrades quickly if $d$ is not chosen as a tight lower bound of $\min_{i \neq i^*} \Delta_i$.

\subsection{Thompson sampling}
In the very first paper on the multi-armed bandit problem, \cite{Tho33}, a simple strategy was proposed for the case of Bernoulli distributions. The so-called Thompson sampling algorithm proceeds as follows. Assume a uniform prior on the parameters $\mu_i \in [0,1]$, let $\pi_{i,t}$ be the posterior distribution for $\mu_i$ at the $t^{th}$ round, and let $\theta_{i,t} \sim \pi_{i,t}$ (independently from the past given $\pi_{i,t}$). The strategy is then given by $I_t \in \argmax_{i =1,\hdots, K} \theta_{i,t}$. Recently there has been a surge of interest for this simple policy, mainly because of its flexibility to incorporate prior knowledge on the arms, see for example \cite{CLi11} and the references therein. While the theoretical behavior of Thompson sampling has remained elusive for a long time, we have now a fairly good understanding of its theoretical properties: in \cite{AG12} the first logarithmic regret bound was proved, and in \cite{KKM12} it was showed that in fact Thompson sampling attains essentially the same regret than in \eqref{eq:regretUCBbounded}. Interestingly note that while Thompson sampling comes from a Bayesian reasoning, it is analyzed with a frequentist perspective. For more on the interplay between Bayesian strategy and frequentist regret analysis we refer the reader to \cite{KCG12}.

\subsection{Heavy-tailed distributions} \label{sec:heavytail}
We showed in Section \ref{sec:UCB} how to obtain a UCB-type strategy through a bound on the moment generating function. Moreover one can see that the resulting bound in Theorem \ref{th:ucb} deteriorates as the tail of the distributions become heavier. In particular, we did not provide any result for the case of distributions for which the moment generating function is not finite. Surprisingly, it was shown in \cite{BCL12} that in fact there exists strategy with essentially the same regret than in \eqref{eq:regretUCBbounded}, as soon as the {\em variance} of the distributions are finite. More precisely, using more refined robust estimators of the mean than the basic empirical mean, one can construct a UCB-type strategy such that for distributions with moment of order $1+\epsilon$ bounded by $1$ it satisfies
\[
R_n \leq \sum_{i \,:\, \Delta_i > 0} \left(8 \left(\frac{4}{\Delta_i}\right)^{\frac{1}{\epsilon}} \ln n + 5\Delta_i\right)~.
\]
We refer the interested reader to \cite{BCL12} for more details on these 'robust' versions of UCB.

\chapter{Adversarial bandits: fundamental results}
\label{adversarial}
In this chapter we consider the important variant of the multi-armed bandit problem where no stochastic assumption is made on the generation of rewards. Denote by $g_{i,t}$ the reward (or gain) of arm $i$ at time step $t$. We assume all rewards are bounded, say $g_{i,t}\in [0,1]$. At each time step $t=1, 2,\hdots$, simultaneously with the player's choice of the arm $I_t \in \{1,\hdots,K\}$, an adversary assigns to each arm $i=1,\dots,K$ the reward $g_{i,t}$. Similarly to the stochastic setting, we measure the performance of the player compared to the performance of the best arm through the regret
$$R_n = \max_{i=1,\hdots,K} \sum_{t=1}^n g_{i,t} - \sum_{t=1}^n g_{I_t,t}~.$$
Sometimes we consider losses rather than gains. In this case we denote by $\ell_{i,t}$ the loss of arm $i$ at time step $t$, and the regret rewrites as
$$R_n = \sum_{t=1}^n \ell_{I_t,t} - \min_{i=1,\hdots,K} \sum_{t=1}^n \ell_{i,t}~.$$
The loss and gain versions are symmetric, in the sense that one can translate the analysis from one to the other setting via the equivalence $\ell_{i,t} = 1 - g_{i,t}$. In the following we emphasize the loss version, but we revert to the gain version whenever it makes proofs simpler.

The main goal is to achieve sublinear (in the number of rounds) bounds on the regret uniformly over all possible adversarial assignments of gains to arms. At first sight, this goal might seem hopeless. Indeed, for any deterministic forecaster there exists a sequence of losses $(\ell_{i,t})$ such that $R_n \geq n/2$. Concretely, it suffices to consider the following sequence of losses:
$$
\begin{array}{c}
\text{if} \; I_t = 1, \; \text{then} \; \ell_{2,t} = 0 \; \text{and} \; \ell_{i,t}=1 \; \text{for all} \; i \neq 2;
\\
\text{if} \; I_t \neq 1, \; \text{then} \; \ell_{1,t} = 0 \; \text{and} \; \ell_{i,t}=1 \; \text{for all} \; i \neq 1.
\end{array}
$$
The key idea to get around this difficulty is to add randomization to the selection of the action $I_t$ to play. By doing so, the forecaster can ``surprise'' the adversary, and this surprise effect suffices to get a regret essentially as low as the minimax regret for the stochastic model. Since the regret $R_n$ then becomes a random variable, the goal is thus to obtain bounds in high probability or in expectation on $R_n$ (with respect to both eventual randomization of the forecaster and of the adversary). This task is fairly difficult, and a convenient first step is to bound the pseudo-regret
\begin{equation} \label{eq:pseudoregretadv}
\oR_n = \E \sum_{t=1}^n \ell_{I_t,t} - \min_{i=1,\hdots,K} \E \sum_{t=1}^n \ell_{i,t}~.
\end{equation}
Clearly $\oR_n \leq \E\,R_n$, and thus an upper bound on the pseudo-regret does not imply a bound on the expected regret. As argued in the Introduction, the pseudo-regret has no natural interpretation unless the adversary is oblivious. In that case, the pseudo-regret coincides with the standard regret, which is always the ultimate quantity of interest.

\section{Pseudo-regret bounds}
As we pointed out, in order to obtain non-trivial regret guarantees in the adversarial framework it is necessary to consider randomized forecasters. Below we describe the randomized forecaster Exp3, which is based on two fundamental ideas.
%
\begin{center}
\bookbox{
{\em Exp3 (Exponential weights for Exploration and Exploitation)}
\\
{Parameter:} a non-increasing sequence of real numbers $(\eta_t)_{t \in \N}$.

\smallskip\noindent
Let $p_1$ be the uniform distribution over $\{1,\hdots,K\}$.

\smallskip\noindent
For each round $t=1,2,\ldots,n$
\begin{itemize}
\item[(1)]
Draw an arm $I_t$ from the probability distribution $p_t$.
\item[(2)]
For each arm $i=1,\dots,K$ compute the estimated loss $\tilde{\ell}_{i,t} = \frac{\ell_{i,t}}{p_{i,t}} \ds1_{I_t = i}$ and update the estimated cumulative loss
$\tilde{L}_{i,t} = \tilde{L}_{i,t-1} + \tilde{\ell}_{i,s}$.
\item[(3)]
Compute the new probability distribution over arms $p_{t+1}=\bigl(p_{1,t+1},\hdots,p_{K,t+1}\bigr)$, where
$$p_{i,t+1} = \frac{\exp{\left(- \eta_t \tilde{L}_{i,t}\right)}}{\sum_{k=1}^K \exp{\left(- \eta_t \tilde{L}_{k,t}\right)}}~.$$
\end{itemize}
}
\end{center}
First, despite the fact that only the loss of the played arm is observed, with a simple trick it is still possible to build an unbiased estimator for the loss of any other arm. Namely, if the next arm $I_t$ to be played is drawn from a probability distribution $p_t=\bigl(p_{1,t},\hdots,p_{K,t}\bigr)$, then
\[
    \tilde{\ell}_{i,t} = \frac{\ell_{i,t}}{p_{i,t}} \ds1_{I_t=i}
\]
is an unbiased estimator (with respect to the draw of $I_t$) of $\ell_{i,t}$. Indeed, for each $i=1,\dots,K$ we have
\[
    \E_{I_t \sim p_t} \bigl[\tilde{\ell}_{i,t}\bigr] = \sum_{j=1}^K  p_{j,t}\frac{\ell_{i,t}}{p_{i,t}} \ds1_{j=i}  = \ell_{i,t}~.
\]
The second idea is to use an exponential reweighting of the cumulative estimated losses to define the probability distribution $p_t$ from which the forecaster will select the arm $I_t$. Exponential weighting schemes are a standard tool in the study of sequential prediction schemes under adversarial assumptions. The reader is referred to the monograph by \cite{CL06} for a general introduction to prediction of individual sequences, and to the recent survey by \cite{arora2012multiplicative} focussed on computer science applications of exponential weighting.

We provide two different pseudo-regret bounds for this strategy. The bound \eqref{eq:exp3boundanytime} is obtained assuming that the forecaster does not know the number of rounds $n$. This is the anytime version of the algorithm. The bound \eqref{eq:exp3bound}, instead, shows that a better constant can be achieved using the knowledge of the time horizon.
\begin{theorem}[Pseudo-regret of Exp3] \label{th:Exp3}
If Exp3 is run with $\eta_t=\eta = \sqrt{\frac{2 \ln K}{n K}}$, then
\begin{equation} \label{eq:exp3bound}
\oR_n \leq \sqrt{2 n K \ln K}~.
\end{equation}
Moeover, if Exp3 is run with $\eta_t = \sqrt{\frac{\ln K}{t K}}$, then
\begin{equation} \label{eq:exp3boundanytime}
\oR_n \leq 2 \sqrt{n K \ln K}~.
\end{equation}
\end{theorem}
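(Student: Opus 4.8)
The plan is to analyze the Exp3 forecaster through the classical exponential-weights potential argument, working in the loss setting. The central object is the normalization factor $W_t = \sum_{i=1}^K \exp(-\eta_t \tilde{L}_{i,t})$, and the standard strategy is to track how $\ln W_t$ evolves from round to round. First I would fix the horizon case with constant $\eta$, since that is cleaner, and then adapt to the time-varying $\eta_t$. For a single step, I would write $\ln\frac{W_{t}}{W_{t-1}} = \ln \sum_i p_{i,t} \exp(-\eta \tilde{\ell}_{i,t})$, where I have rewritten the ratio in terms of the current distribution $p_t$ using the definition $p_{i,t} \propto \exp(-\eta \tilde{L}_{i,t-1})$. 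The two key inequalities I would deploy are: (i) the elementary bound $e^{-x} \le 1 - x + \tfrac{x^2}{2}$ for $x \ge 0$ applied to $x = \eta \tilde{\ell}_{i,t}$ (valid since losses and hence estimates are nonnegative), followed by $\ln(1+u) \le u$; and (ii) control of the resulting first- and second-moment terms after taking expectation over the draw of $I_t \sim p_t$.

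The crux of the calculation is that, after applying these bounds, the per-step increment of $\ln W_t$ is controlled by $-\eta \sum_i p_{i,t} \tilde{\ell}_{i,t} + \tfrac{\eta^2}{2} \sum_i p_{i,t} \tilde{\ell}_{i,t}^2$. Summing over $t=1,\dots,n$ telescopes the left-hand side into $\ln W_n - \ln W_0 = \ln W_n - \ln K$, and lower-bounding $\ln W_n \ge -\eta \tilde{L}_{j,n}$ for any fixed arm $j$ (by keeping only the $j$-th term in the sum) isolates the comparator. Rearranging gives a bound of the form
\[
\sum_{t=1}^n \sum_{i=1}^K p_{i,t}\tilde{\ell}_{i,t} - \tilde{L}_{j,n} \le \frac{\ln K}{\eta} + \frac{\eta}{2} \sum_{t=1}^n \sum_{i=1}^K p_{i,t} \tilde{\ell}_{i,t}^2~.
\]
The next step is to take expectation over all the internal randomization. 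Here the unbiasedness property established in the text, namely $\E_{I_t \sim p_t}[\tilde{\ell}_{i,t}] = \ell_{i,t}$, is what converts the left-hand side into the genuine pseudo-regret $\oR_n = \E\sum_t \ell_{I_t,t} - \min_j \E\sum_t \ell_{j,t}$, after noting that $\E\sum_i p_{i,t}\tilde{\ell}_{i,t} = \E\,\ell_{I_t,t}$.

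The remaining work is to bound the variance term $\sum_i p_{i,t}\tilde{\ell}_{i,t}^2$ in expectation. I expect this to be the main obstacle in the sense that it is where the structure of the importance-weighted estimator matters: since $\tilde{\ell}_{i,t} = \frac{\ell_{i,t}}{p_{i,t}}\ds1_{I_t=i}$, one computes $\E_{I_t}\bigl[\sum_i p_{i,t}\tilde{\ell}_{i,t}^2\bigr] = \sum_i \frac{\ell_{i,t}^2}{p_{i,t}}\,p_{i,t} = \sum_i \ell_{i,t}^2 \le K$ using $\ell_{i,t}\in[0,1]$. Plugging this in yields $\oR_n \le \frac{\ln K}{\eta} + \frac{\eta}{2} n K$, which is optimized by $\eta = \sqrt{\frac{2\ln K}{nK}}$, giving exactly $\sqrt{2nK\ln K}$ and proving \eqref{eq:exp3bound}. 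For the anytime bound \eqref{eq:exp3boundanytime} with $\eta_t = \sqrt{\frac{\ln K}{tK}}$, the telescoping no longer works verbatim because $\eta$ varies; the standard fix is to carry the potential argument with a varying $\eta_t$ (which introduces an extra term from the change in learning rate, controlled via the monotonicity of $(\eta_t)$ and the boundedness of the entropy-like quantity $\sum_i p_{i,t}\ln p_{i,t}$), and then to bound the resulting sum $\sum_{t=1}^n \eta_t K = K\sqrt{\tfrac{K}{K}\ln K}\sum_t t^{-1/2} \le 2\sqrt{nK\ln K}$ using $\sum_{t=1}^n t^{-1/2} \le 2\sqrt{n}$. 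The constant $2$ rather than $\sqrt{2}$ reflects the looseness introduced by the time-varying analysis.
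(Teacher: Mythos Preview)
Your argument for the fixed-$\eta$ case is correct and is essentially the paper's proof, just phrased in the ``track $\ln W_t$'' style rather than via the explicit cumulant decomposition $\E_{i\sim p_t}\tilde{\ell}_{i,t} = \tfrac{1}{\eta}\ln \E_{i\sim p_t}e^{-\eta(\tilde{\ell}_{i,t}-\E_k\tilde{\ell}_{k,t})} - \tfrac{1}{\eta}\ln \E_{i\sim p_t}e^{-\eta\tilde{\ell}_{i,t}}$ that the paper uses; the two are equivalent and invoke the same inequalities $e^{-x}\le 1-x+x^2/2$ and $\ln(1+u)\le u$.

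The anytime part is where your sketch is thinner than the paper's argument. The paper does not treat the two cases separately; it proves directly, for \emph{any} nonincreasing sequence $(\eta_t)$, the master bound
\[
\oR_n \le \frac{K}{2}\sum_{t=1}^n \eta_t + \frac{\ln K}{\eta_n}~,
\]
from which both \eqref{eq:exp3bound} and \eqref{eq:exp3boundanytime} follow by substitution (the latter via $\sum_{t\le n} t^{-1/2}\le 2\sqrt{n}$). The mechanism for the time-varying $\eta_t$ is made precise as follows: with $\Phi_t(\eta)=\tfrac{1}{\eta}\ln\tfrac{1}{K}\sum_i e^{-\eta\tilde L_{i,t}}$, an Abel summation turns $\sum_t(\Phi_{t-1}(\eta_t)-\Phi_t(\eta_t))$ into $-\Phi_n(\eta_n)+\sum_t(\Phi_t(\eta_{t+1})-\Phi_t(\eta_t))$, and the extra sum is shown to be nonpositive by computing
\[
\Phi_t'(\eta)=\frac{1}{\eta^2}\,\KL\!\bigl(p_t^{\eta},\,\text{uniform}\bigr)\ge 0,
\]
so that $\eta_{t+1}\le\eta_t$ gives $\Phi_t(\eta_{t+1})\le\Phi_t(\eta_t)$. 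Your mention of ``boundedness of the entropy-like quantity'' points in the right direction, but the actual step is this monotonicity of $\Phi_t$ in $\eta$, not a separate bound on the entropy. If you fill in that step, your proof and the paper's coincide.
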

\begin{proof}
We prove that for any non-increasing sequence $(\eta_t)_{t \in \N}$ Exp3 satisfies
\begin{equation} \label{eq:exp3lem}
\oR_n \leq \frac{K}{2} \sum_{t=1}^n \eta_t + \frac{\ln K}{\eta_n}~.
\end{equation}
Inequality \eqref{eq:exp3bound} then trivially follows from \eqref{eq:exp3lem}. For \eqref{eq:exp3boundanytime} we use \eqref{eq:exp3lem} and
$\sum_{t=1}^n \frac{1}{\sqrt{t}} \leq \int_{0}^n \frac{1}{\sqrt{t}} dt = 2 \sqrt{n}$. The proof of \eqref{eq:exp3lem} in divided in five steps.

\subsection*{First step: Useful equalities.}
The following equalities can be easily verified:
\begin{equation} \label{eq:equalities}
\E_{i \sim p_t} \tilde{\ell}_{i,t} = \ell_{I_t,t}, \quad \E_{I_t \sim p_t} \tilde{\ell}_{i,t} = \ell_{i,t}, \quad
\E_{i \sim p_t} {\tilde{\ell}}^2_{i,t} = \frac{\ell_{I_t,t}^2}{p_{I_t,t}}, \quad \E_{I_t \sim p_t} \frac{1}{p_{I_t,t}} = K~.
\end{equation}
In particular, they imply
\begin{equation} \label{eq:expregret}
\sum_{t=1}^n \ell_{I_t,t} - \sum_{t=1}^n \ell_{k,t}  = \sum_{t=1}^n \E_{i \sim p_t} \tilde{\ell}_{i,t} - \sum_{t=1}^n \E_{I_t \sim p_t} \tilde{\ell}_{k,t}~.
\end{equation}
The key idea of the proof is rewrite $\E_{i \sim p_t} \tilde{\ell}_{i,t}$ as follows 
\begin{align} 
 \E_{i \sim p_t} \tilde{\ell}_{i,t} = & \frac{1}{\eta_t} \ln \E_{i \sim p_t} \exp{\left(- \eta_t \bigl(\tilde{\ell}_{i,t}  - \E_{k \sim p_t} \tilde{\ell}_{k,t}\bigr) \right)}  \notag \\
&- \frac{1}{\eta_t} \ln \E_{i \sim p_t} \exp{\left(- \eta_t \tilde{\ell}_{i,t} \right)}~. \label{eq:logmoment}
\end{align}
The reader may recognize $\ln \E_{i \sim p_t} \exp{\bigl(- \eta_t \tilde{\ell}_{i,t} \bigr)}$ as the cumulant-generating function (or the log of the moment-generating function) of the random variable $\tilde{\ell}_{I_t,t}$. This quantity naturally arises in the analysis of forecasters based on exponential weights. In the next two steps we study the two terms in the right-hand side of \eqref{eq:logmoment}.

\subsection*{Second step: Study of the first term in \eqref{eq:logmoment}.}
We use the inequalities $\ln x \leq x-1$ and $\exp(-x) - 1 + x \leq x^2/2$, for all $x \geq 0$, to obtain:
\begin{align}
\ln \E_{i \sim p_t} &\exp{\left(- \eta_t (\tilde{\ell}_{i,t} - \E_{k \sim p_t} \tilde{\ell}_{k,t}) \right)} \notag \\
& =  \ln \E_{i \sim p_t} \exp{\left(- \eta_t \tilde{\ell}_{i,t}\right)} + \eta_t \E_{k \sim p_t} \tilde{\ell}_{k,t} \notag \\
&\leq \E_{i \sim p_t} \left(\exp{\left(- \eta_t \tilde{\ell}_{i,t}\right)} - 1 + \eta_t \tilde{\ell}_{i,t} \right) \notag \\
&\leq \E_{i \sim p_t} \frac{\eta_t^2 \tilde{\ell}_{i,t}^2}{2} \notag \\
&\leq \frac{\eta_t^2}{2 p_{I_t,t}} \label{eq:secondstep}
\end{align}
where the last step comes from the third equality in \eqref{eq:equalities}.

\subsection*{Third step: Study of the second term in \eqref{eq:logmoment}.}
Let $\tilde{L}_{i,0}=0$, $\Phi_0(\eta)=0$ and $\Phi_t(\eta) = \frac{1}{\eta} \ln \frac{1}{K} \sum_{i=1}^K \exp{\left(- \eta \tilde{L}_{i,t}\right)}$. Then, by definition of $p_t$ we have
\begin{align}
- \frac{1}{\eta_t} \ln \E_{i \sim p_t} \exp{\left(- \eta_t \tilde{\ell}_{i,t} \right)} 
& = - \frac{1}{\eta_t} \ln \frac{\sum_{i=1}^K \exp{\left(- \eta_t \tilde{L}_{i,t}\right)}}{\sum_{i=1}^K \exp{\left(- \eta_t \tilde{L}_{i,t-1}\right)}} \notag \\
& = \Phi_{t-1}(\eta_t) - \Phi_{t}(\eta_t)~. \label{eq:thirdstep}
\end{align}

\subsection*{Fourth step: Summing.}
Putting together \eqref{eq:expregret}, \eqref{eq:logmoment}, \eqref{eq:secondstep} and \eqref{eq:thirdstep} we obtain
$$\sum_{t=1}^n g_{k,t} - \sum_{t=1}^n g_{I_t,t} \leq \sum_{t=1}^n \frac{\eta_t}{2 p_{I_t,t}} + \sum_{t=1}^n \Phi_{t-1}(\eta_t) - \Phi_t(\eta_t) - \sum_{t=1}^n \E_{I_t \sim p_t} \tilde{\ell}_{k,t}~.$$
The first term is easy to bound in expectation since, by the rule of conditional expectations and the last equality in \eqref{eq:equalities} we have
$$\E \sum_{t=1}^n \frac{\eta_t}{2 p_{I_t,t}} = \E \sum_{t=1}^n \E_{I_t \sim p_t} \frac{\eta_t}{2 p_{I_t,t}} = \frac{K}{2} \sum_{t=1}^n \eta_t~.$$
For the second term we start with an Abel transformation,
$$\sum_{t=1}^n \bigl(\Phi_{t-1}(\eta_t) - \Phi_t(\eta_t)\bigr) = \sum_{t=1}^{n-1} \bigl(\Phi_t(\eta_{t+1}) - \Phi_t(\eta_t)\bigr) - \Phi_n(\eta_n)$$
since $\Phi_0(\eta_1)=0$. Note that
\begin{align*}
- \Phi_n(\eta_n) & = \frac{\ln K}{\eta_n} - \frac{1}{\eta_n} \ln\left(\sum_{i=1}^K \exp\left(-\eta_n \tilde{L}_{i,n}\right)\right) \\ & \leq \frac{\ln K}{\eta_n} - \frac{1}{\eta_n} \ln\left(\exp\left(-\eta_n \tilde{L}_{k,n}\right)\right)\\
& =  \frac{\ln K}{\eta_n} + \sum_{t=1}^n \tilde{\ell}_{k,t}
\end{align*}
and thus we have
$$\E \left[ \sum_{t=1}^n g_{k,t} - \sum_{t=1}^n g_{I_t,t} \right] \leq \frac{K}{2} \sum_{t=1}^n \eta_t + \frac{\ln K}{\eta_n} + \E \sum_{t=1}^{n-1} \Phi_t(\eta_{t+1}) - \Phi_t(\eta_t)~.$$
To conclude the proof, we show that $\Phi_t'(\eta)\geq 0$. Since $\eta_{t+1}\leq \eta_t$, we then obtain $\Phi_t(\eta_{t+1}) - \Phi_t(\eta_t) \leq 0$. 
Let
\[
    p_{i,t}^{\eta}=\frac{\exp{\left(- \eta \tilde{L}_{i,t}\right)}}{\sum_{k=1}^K \exp{\left(- \eta \tilde{L}_{k,t}\right)}}~.
\]
Then
\begin{align*}
\Phi_t'(\eta) & = - \frac{1}{\eta^2} \ln\left(\frac{1}{K} \sum_{i=1}^K \exp{\left(- \eta \tilde{L}_{i,t}\right)}\right) - \frac{1}{\eta}\frac{\sum_{i=1}^K \tilde{L}_{i,t} \exp{\left(- \eta \tilde{L}_{i,t}\right)}}{\sum_{i=1}^K \exp{\left(- \eta \tilde{L}_{i,t}\right)}} \\*
& = \frac{1}{\eta^2} \frac{1}{\sum_{i=1}^K \exp{\left(- \eta \tilde{L}_{i,t}\right)}} \sum_{i=1}^K \exp{\left(- \eta \tilde{L}_{i,t}\right)} \\*
& \qquad \times \left(-\eta \tilde{L}_{i,t} - \ln\left(\frac{1}{K} \sum_{i=1}^K \exp{\left(- \eta \tilde{L}_{i,t}\right)}\right) \right)~.
\end{align*}
Simplifying, we get (since $p_1$ is the uniform distribution over $\{1,\hdots,K\}$),
\[
\Phi_t'(\eta) = \frac{1}{\eta^2} \sum_{i=1}^K p_{i,t}^{\eta} \ln(K p_{i,t}^{\eta}) = \frac{1}{\eta^2} \K(p_t^{\eta}, p_1) \geq 0~.
\]
\end{proof}

\section{High probability and expected regret bounds} \label{sec:regret}
In this section we prove a high probability bound on the regret. Unfortunately, the Exp3 strategy defined in the previous section is not adequate for this task. Indeed, the variance of the estimate $\tilde{\ell}_{i,t}$ is of order $1/p_{i,t}$, which can be arbitrarily large. In order to ensure that the probabilities $p_{i,t}$ are bounded from below, the original version of Exp3 mixes the exponential weights with a uniform distribution over the arms. In order to avoid increasing the regret, the mixing coefficient $\gamma$ associated with the uniform distribution cannot be larger than $n^{-1/2}$. Since this implies that the variance of the cumulative loss estimate $\tilde{L}_{i,n}$ can be of order $n^{3/2}$, very little can be said about the concentration of the regret also for this variant of Exp3.

This issue can be solved by combining the mixing idea with a different estimate for losses. In fact, the core idea is more transparent when expressed in terms of gains, and so we turn to the gain version of the problem. The trick is to introduce a bias in the gain estimate which allows to derive a high probability statement on this estimate.
\begin{lemma} \label{lem:exp3P}
For $\beta \le 1$, let
\[
    \tilde{g}_{i,t} = \frac{g_{i,t} \ds1_{I_t=i} + \beta}{p_{i,t}}~.
\]
Then, with probability at least $1-\delta$,
$$\sum_{t=1}^n g_{i,t} \leq \sum_{t=1}^n \tilde{g}_{i,t}  + \frac{\ln(\delta^{-1})}{\beta}~.$$
\end{lemma}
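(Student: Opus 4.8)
The plan is to fix an arm $i$ and reduce the whole statement to a single exponential-moment estimate via Markov's inequality. Writing $u = \ln(\delta^{-1})/\beta$ and exponentiating with rate $\beta$, Markov's inequality gives
\[
\P\Bigl(\sum_{t=1}^n (g_{i,t}-\tilde{g}_{i,t}) > u\Bigr) \le e^{-\beta u}\,\E\Bigl[\exp\Bigl(\beta\sum_{t=1}^n(g_{i,t}-\tilde{g}_{i,t})\Bigr)\Bigr] = \delta\,\E\Bigl[\exp\Bigl(\beta\sum_{t=1}^n(g_{i,t}-\tilde{g}_{i,t})\Bigr)\Bigr].
\]
Since the event $\{\sum_t (g_{i,t}-\tilde{g}_{i,t}) > u\}$ is exactly the complement of the event claimed in the lemma, everything reduces to proving the single inequality $\E\bigl[\exp\bigl(\beta\sum_{t=1}^n(g_{i,t}-\tilde{g}_{i,t})\bigr)\bigr]\le 1$.

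To obtain this I would peel off the sum one time step at a time. Let $\E_t$ denote expectation conditioned on the past draws $I_1,\dots,I_{t-1}$; then $p_{i,t}$ and $g_{i,t}$ are fixed and the only randomness in $\tilde g_{i,t}$ is $\ds1_{I_t=i}\sim \Ber(p_{i,t})$. If I can establish the one-step bound $\E_t\bigl[\exp(\beta(g_{i,t}-\tilde g_{i,t}))\bigr]\le 1$ for each $t$, then the partial products $M_t = \exp\bigl(\beta\sum_{s\le t}(g_{i,s}-\tilde g_{i,s})\bigr)$ form a supermartingale: since $M_{t-1}$ is $\E_t$-measurable and nonnegative, $\E_t[M_t] = M_{t-1}\,\E_t[\exp(\beta(g_{i,t}-\tilde g_{i,t}))]\le M_{t-1}$. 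Iterating the tower rule from $M_0=1$ then yields $\E[M_n]\le 1$, which is exactly what the first paragraph needs.

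The heart of the argument, and the step I expect to be the delicate one, is the one-step bound. Write $G=g_{i,t}\in[0,1]$ and $p=p_{i,t}$. Pulling out the deterministic factor $\exp(\beta G-\beta^2/p)$ and computing the Bernoulli expectation of $\exp(-\beta G\ds1_{I_t=i}/p)$ gives
\[
\E_t\bigl[\exp(\beta(G-\tilde g_{i,t}))\bigr] = \exp\Bigl(\beta G - \tfrac{\beta^2}{p}\Bigr)\Bigl[p\,\exp\bigl(-\tfrac{\beta G}{p}\bigr)+(1-p)\Bigr];
\]
the favourable factor $\exp(-\beta^2/p)$ is precisely the effect of the additive bias $\beta$ built into the estimator. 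I would then bound the bracket using $e^{-x}\le 1-x+\tfrac{x^2}{2}$ (valid for $x\ge 0$) with $x=\beta G/p\ge 0$, obtaining $1-\beta G+\tfrac{\beta^2 G^2}{2p}$, and then $1+y\le e^y$ to pass to $\exp\bigl(-\beta G+\tfrac{\beta^2 G^2}{2p}\bigr)$. Multiplying through, the $\beta G$ terms cancel against $\exp(\beta G)$ and what remains is $\exp\bigl(\tfrac{\beta^2}{p}(\tfrac{G^2}{2}-1)\bigr)$. Because $G\in[0,1]$ forces $\tfrac{G^2}{2}-1\le-\tfrac12<0$, this factor is at most $1$, closing the one-step bound. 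The only point that requires care is that each auxiliary exponential inequality is applied only where its sign hypothesis ($x\ge 0$) holds, which is guaranteed since $g_{i,t},\beta,p_{i,t}\ge 0$; note that boundedness $g_{i,t}\in[0,1]$ is the crucial input that makes the exponent negative.
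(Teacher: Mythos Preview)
Your proof is correct and follows essentially the same route as the paper: Markov's inequality on the exponential moment, then a tower/supermartingale argument reducing to the one-step bound $\E_t\bigl[\exp(\beta(g_{i,t}-\tilde g_{i,t}))\bigr]\le 1$, with the bias term $\beta/p_{i,t}$ supplying the crucial factor $\exp(-\beta^2/p_{i,t})$. The only cosmetic difference is in the one-step estimate: the paper applies $e^x\le 1+x+x^2$ for $x\le 1$ directly to the random variable $\beta g_{i,t}(1-\ds1_{I_t=i}/p_{i,t})$ (this is where $\beta\le 1$ is used), whereas you compute the Bernoulli expectation explicitly and use $e^{-x}\le 1-x+x^2/2$ for $x\ge 0$; your version is slightly sharper and in fact never invokes the hypothesis $\beta\le 1$.
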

\begin{proof}
Let $\E_t$ be the expectation conditioned on $I_1,\dots,I_{t-1}$. Since $\exp(x)\le 1+x+x^2$ for $x\leq 1$, for $\beta\leq 1$ we have
\begin{align*}
    \E_t \exp&\bigg(\beta g_{i,t} - \beta \frac{g_{i,t} \ds1_{I_t=i} + \beta}{p_{i,t}}\bigg)
\\ &\leq
    \Bigg(1 + \E_t \bigg[\beta g_{i,t} - \beta \frac{g_{i,t} \ds1_{I_t=i}}{p_{i,t}}\bigg] + \E_t \bigg[\beta g_{i,t} - \beta \frac{g_{i,t} \ds1_{I_t=i}}{p_{i,t}}\bigg]^2 \Bigg)
\\ &\quad
    \times\exp\bigg(\!- \frac{\beta^2}{p_{i,t}}\bigg)
\\ &\leq
    \Bigg(1 + \beta^2 \frac{g^2_{i,t}}{p_{i,t}} \Bigg) \exp\bigg(\!- \frac{\beta^2}{p_{i,t}}\bigg)
\\ &\leq
     1
\end{align*}
where the last inequality uses $1+u \leq \exp(u)$. As a consequence, we have
  \begin{align*}
  \E \exp\bigg(\beta \sum_{t=1}^n g_{i,t} - \beta \sum_{t=1}^n \frac{g_{i,t} \ds1_{I_t=i} + \beta}{p_{i,t}}\bigg) \leq 1.
  \end{align*}
Moreover, Markov's inequality implies $\P\left(X>\ln(\delta^{-1})\right) \leq \delta \E e^X$ and thus, with probability at least $1-\delta$,
$$\beta \sum_{t=1}^n g_{i,t} - \beta \sum_{t=1}^n \frac{g_{i,t} \ds1_{I_t=i} + \beta}{p_{i,t}} \leq \ln(\delta^{-1})~.$$
\end{proof}
\begin{figure}[t]
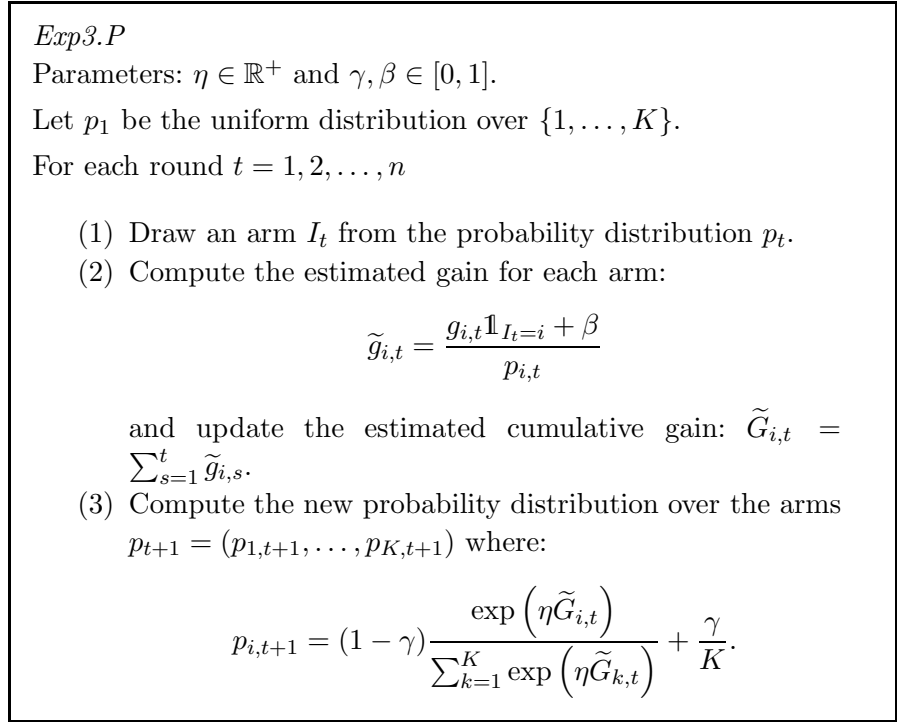

\bookbox{
{\em Exp3.P}
\\
{Parameters:} $\eta \in \R^+$ and $\gamma, \beta \in [0,1]$.

\smallskip\noindent
Let $p_1$ be the uniform distribution over $\{1,\hdots,K\}$.

\smallskip\noindent
For each round $t=1,2,\ldots,n$
\begin{itemize}
\item[(1)]
Draw an arm $I_t$ from the probability distribution $p_t$.
\item[(2)]
Compute the estimated gain for each arm:
\[
    \tilde{g}_{i,t} = \frac{g_{i,t} \ds1_{I_t = i} + \beta}{p_{i,t}}
\]
and update the estimated cumulative gain: $\tilde{G}_{i,t} = \sum_{s=1}^t \tilde{g}_{i,s}$.
\item[(3)]
Compute the new probability distribution over the arms $p_{t+1}=(p_{1,t+1},\hdots,p_{K,t+1})$ where:
$$p_{i,t+1} = (1-\gamma) \frac{\exp{\left(\eta \tilde{G}_{i,t}\right)}}{\sum_{k=1}^K \exp{\left(\eta \tilde{G}_{k,t}\right)}} + \frac{\gamma}{K}.$$
\end{itemize}
}
\caption{Exp3.P forecaster.}
\label{fig:exp3P}
\end{figure}

The strategy associated with these new estimates, called Exp3.P, is described in Figure \ref{fig:exp3P}. Note that, for the sake of simplicity, the strategy is described in the setting with known time horizon ($\eta$ is constant). Anytime results can easily be derived with the same techniques as in the proof of Theorem~\ref{th:Exp3}. 

In the next theorem we propose two different high probability bounds. In \eqref{eq:exp3Pbound1} the algorithm needs the confidence level $\delta$ as an input parameter. In~\eqref{eq:exp3Pbound2} the algorithm satisfies a high probability bound for any confidence level. This latter property is particularly important to derive good bounds on the expected regret.
\begin{theorem}[High probability bound for Exp3.P] \label{th:exp3P}
For any given $\delta \in (0,1)$, if Exp3.P is run with
\[
    \beta = \sqrt{\frac{\ln(K \delta^{-1})}{n K}}, \quad \eta = 0.95 \sqrt{\frac{\ln(K)}{n K}}, \quad \gamma = 1.05 \sqrt{\frac{K \ln(K)}{n}}
\]
then, with probability at least $1-\delta$,
\begin{equation} \label{eq:exp3Pbound1}
R_n \leq 5.15 \sqrt{n K \ln(K \delta^{-1})}~.
\end{equation}
Moreover, if Exp3.P is run with
$
    \beta = \sqrt{\tfrac{\ln(K)}{n K}} 
$
while $\eta$ and $\gamma$ are chosen as before, then, with probability at least $1-\delta$,
\begin{equation} \label{eq:exp3Pbound2}
R_n \leq \sqrt{\frac{n K}{\ln(K)}} \ln(\delta^{-1}) + 5.15 \sqrt{n K \ln(K)}~.
\end{equation}
\end{theorem}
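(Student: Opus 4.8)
The plan is to work throughout with the gain version and to combine two ingredients: a high-probability \emph{over}-estimation guarantee for the biased estimates $\tilde{g}_{i,t}$, and a purely deterministic exponential-weights inequality bounding the estimated cumulative gain of any fixed arm by the forecaster's realized gain. First I would invoke Lemma~\ref{lem:exp3P} simultaneously for all $K$ arms: applying it at confidence level $\delta/K$ and taking a union bound yields, with probability at least $1-\delta$, the bound $\sum_{t=1}^n g_{i,t} \le \tilde{G}_{i,n} + \frac{\ln(K\delta^{-1})}{\beta}$ for every arm $i$, and in particular for the optimal arm $i^*$. This is the ``upper confidence'' half: it lets us lower bound $\tilde{G}_{i^*,n}$ by the quantity $\max_i \sum_t g_{i,t}$ appearing in the regret, at the price of the additive bias $\ln(K\delta^{-1})/\beta$.

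Next I would carry out the deterministic potential analysis, mirroring the proof of Theorem~\ref{th:Exp3} but adapted to the $\gamma$-mixing and to the fact that gain estimates can be large. Writing $W_t=\sum_{i} \exp(\eta \tilde{G}_{i,t})$, I would expand $\ln(W_t/W_{t-1})$, use $\exp(x)\le 1+x+x^2$ (valid since $p_{i,t}\ge \gamma/K$ forces $\eta\tilde{g}_{i,t}\le 1$ under the prescribed parameters), and absorb the mixing through $\tfrac{\exp(\eta\tilde{G}_{i,t-1})}{W_{t-1}} = \tfrac{p_{i,t}-\gamma/K}{1-\gamma}\le \tfrac{p_{i,t}}{1-\gamma}$. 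Using the identities $\sum_j p_{j,t}\tilde{g}_{j,t}=g_{I_t,t}+K\beta$ and $\sum_j p_{j,t}\tilde{g}_{j,t}^2\le(1+\beta)\sum_j \tilde{g}_{j,t}$, then telescoping over $t$ and bounding $\ln(W_n/W_0)\ge \eta\tilde{G}_{i,n}-\ln K$, I would obtain for every fixed arm $i$
\[
\tilde{G}_{i,n}-\frac{\ln K}{\eta}\le \frac{1}{1-\gamma}\Big(\sum_{t=1}^n g_{I_t,t}+K\beta n\Big)+\frac{\eta(1+\beta)}{1-\gamma}\,S,
\qquad S:=\sum_{j=1}^K \tilde{G}_{j,n}.
\]

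The main obstacle is the variance term $\tfrac{\eta(1+\beta)}{1-\gamma}S$: the crude deterministic bound $S\le nK(1+\beta K)/\gamma$ makes this term linear in $n$, so it must be controlled more tightly. The device I would use is self-bounding: summing the displayed inequality over all $K$ arms and solving for $S$ gives $S\big(1-\tfrac{K\eta(1+\beta)}{1-\gamma}\big)\le \tfrac{K\ln K}{\eta}+\tfrac{K}{1-\gamma}\big(\sum_t g_{I_t,t}+K\beta n\big)$, and the prescribed choices $\eta=0.95\sqrt{\ln K/(nK)}$, $\gamma=1.05\sqrt{K\ln K/n}$ make the coefficient of $S$ strictly below one (for $n$ large enough that $\gamma<1$). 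With $\sum_t g_{I_t,t}\le n$ this yields $\eta S=O(\sqrt{nK\ln K})$, the correct order.

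Finally I would assemble the pieces: take $i=i^*$, substitute the Lemma lower bound $\tilde{G}_{i^*,n}\ge \max_i\sum_t g_{i,t}-\ln(K\delta^{-1})/\beta$ into the deterministic inequality, use $R_n=\max_i\sum_t g_{i,t}-\sum_t g_{I_t,t}$ together with $\sum_t g_{I_t,t}\le n$ and $\tfrac{1}{1-\gamma}-1=\tfrac{\gamma}{1-\gamma}$, and collect the terms $\tfrac{\ln(K\delta^{-1})}{\beta}$, $\tfrac{\ln K}{\eta}$, $\tfrac{\gamma n}{1-\gamma}$, $\tfrac{K\beta n}{1-\gamma}$ and $\eta S$. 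Plugging in the stated $\beta,\eta,\gamma$, each term is of order $\sqrt{nK\ln(K\delta^{-1})}$, and the delicate (but routine) part is the numerical bookkeeping with the constants $0.95$ and $1.05$ certifying the factor $5.15$ in~\eqref{eq:exp3Pbound1}. For~\eqref{eq:exp3Pbound2} I would rerun the identical argument with $\beta=\sqrt{\ln K/(nK)}$ independent of $\delta$; then the only $\delta$-dependent contribution is $\ln(\delta^{-1})/\beta=\sqrt{nK/\ln K}\,\ln(\delta^{-1})$, which appears as the separate additive term, while $\ln K/\beta=\sqrt{nK\ln K}$ merges into the remaining $5.15\sqrt{nK\ln K}$.
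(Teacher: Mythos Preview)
Your proposal is correct and follows the same two-ingredient architecture as the paper: Lemma~\ref{lem:exp3P} with a union bound to lower bound $\tilde{G}_{i^*,n}$ in terms of $\max_i\sum_t g_{i,t}$, combined with a deterministic exponential-weights (potential) inequality. The one substantive difference lies in how you control the variance sum $S=\sum_{j}\tilde{G}_{j,n}$. You self-bound $S$ by summing your displayed inequality over all $K$ arms and solving the resulting linear inequality in $S$. The paper instead uses the cruder but more direct bound $S\le K\max_{j}\tilde{G}_{j,n}$ and merges this straight into the potential term $-(1-\gamma)\max_j\tilde{G}_{j,n}$, producing the single coefficient $-(1-\gamma-(1+\beta)\eta K)$ in front of $\max_j\tilde{G}_{j,n}$; only then is Lemma~\ref{lem:exp3P} invoked, and finally $\max_j\sum_t g_{j,t}\le n$. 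This route is shorter and keeps the numerical bookkeeping cleaner: it avoids the extra $K\ln K/\eta$ term your summation introduces and the multiplicative factor $\bigl(1-\tfrac{K\eta(1+\beta)}{1-\gamma}\bigr)^{-1}$, both of which you would have to absorb when certifying the constant $5.15$. Your self-bounding device yields the same order and is a legitimate alternative, but the paper's absorption trick is the tighter and more economical way to close the argument.
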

\begin{proof}
We first prove (in three steps) that if $\gamma \leq 1/2$ and $(1+\beta) K \eta \leq \gamma$, then Exp3.P satisfies, with probability at least $1-\delta$,
\begin{equation} \label{eq:exp3Plem}
R_n \leq \beta n K + \gamma n + (1+\beta) \eta K n + \frac{\ln(K \delta^{-1})}{\beta} + \frac{\ln K}{\eta}~.
\end{equation}

\subsection*{First step: Notation and simple equalities.}
One can immediately see that $\E_{i \sim p_t} \tilde{g}_{i,t} = g_{I_t,t} + \beta K$, and thus
\begin{equation} \label{eq:exp3P1}
\sum_{t=1}^n g_{k,t} - \sum_{t=1}^n g_{I_t,t} = \beta n K  + \sum_{t=1}^n g_{k,t} - \sum_{t=1}^n \E_{i \sim p_t} \tilde{g}_{i,t}~.
\end{equation}
The key step is, again, to consider the cumulant-generating function of $\tilde{g}_{i,t}$. However, because of the mixing, we need to introduce a few more notations. Let $u=\bigl(\tfrac{1}{K},\hdots,\tfrac{1}{K}\bigr)$ be the uniform distribution over the arms, let and $w_t = \tfrac{p_t - u}{1-\gamma}$ be the distribution induced by Exp3.P at time $t$ without the mixing. Then we have:
\begin{align} 
- \E_{i \sim p_t} \tilde{g}_{i,t} & = - (1-\gamma) \E_{i \sim w_t} \tilde{g}_{i,t} - \gamma \E_{i \sim u} \tilde{g}_{i,t} \notag \\
& = (1-\gamma) \biggl( \frac{1}{\eta} \ln \E_{i \sim w_t} \exp \bigl( \eta (\tilde{g}_{i,t} - \E_{k \sim w_t} \tilde{g}_{k,t}) \bigr) \notag \\
& \quad - \frac{1}{\eta} \ln \E_{i \sim w_t} \exp\left( \eta \tilde{g}_{i,t} \right) \biggr) - \gamma \E_{i \sim u} \tilde{g}_{i,t}~. \label{eq:exp3P2} 
\end{align}

\subsection*{Second step: Study of the first term in \eqref{eq:exp3P2}.}
We use the inequalities $\ln x \leq x-1$ and $\exp(x) \leq 1 + x + x^2$, for all $x \leq 1$, as well as the fact that $\eta \tilde{g}_{i,t} \leq 1$ since $(1+\beta) \eta K \leq \gamma$:
\begin{align}
\ln \E_{i \sim w_t} \exp{\Bigl( \eta \bigl(\tilde{g}_{i,t} - \E_{k \sim p_t} \tilde{g}_{k,t}\bigr) \Bigr)} \notag
& = \ln \E_{i \sim w_t} \exp{\bigl( \eta \tilde{g}_{i,t}\bigr)} - \eta\,\E_{k \sim p_t} \tilde{g}_{k,t} \notag \\
&\leq \E_{i \sim w_t} \Bigl[\exp{\bigl( \eta \tilde{g}_{i,t}\bigr)} - 1 - \eta \tilde{g}_{i,t} \Bigr] \notag \\
&\leq \E_{i \sim w_t} \eta^2 \tilde{g}_{i,t}^2  \notag \\
&\leq \frac{1+\beta}{1-\gamma} \eta^2 \sum_{i=1}^K \tilde{g}_{i,t} \label{eq:exp3P3}
\end{align}
where we used $\frac{w_{i,t}}{p_{i,t}} \leq \frac{1}{1-\gamma}$ in the last step.

\subsection*{Third step: Summing.}
Set $\tilde{G}_{i,0} = 0$. Recall that $w_t=\bigl(w_{1,t},\hdots,w_{K,t}\bigr)$ with 
\begin{equation} \label{eq:exp3P4}
w_{i,t} = \frac{\exp{\left(- \eta \tilde{G}_{i,t-1}\right)}}{\sum_{k=1}^K \exp{\left(- \eta \tilde{G}_{k,t-1}\right)}}~.
\end{equation}
Then substituting \eqref{eq:exp3P3} in \eqref{eq:exp3P2} and summing using \eqref{eq:exp3P4}, we obtain
\begin{align*}
- \sum_{t=1}^n &\E_{i \sim p_t} \tilde{g}_{i,t} \\
& \leq (1+\beta) \eta \sum_{t=1}^n \sum_{i=1}^K \tilde{g}_{i,t} - \frac{1-\gamma}{\eta} \sum_{t=1}^n \ln \left( \sum_{i=1}^K w_{i,t} \exp{\left( \eta \tilde{g}_{i,t} \right)}  \right) \\
& = (1+\beta) \eta \sum_{t=1}^n \sum_{i=1}^K \tilde{g}_{i,t} - \frac{1-\gamma}{\eta} \ln \left( \prod_{t=1}^n \frac{\sum_{i=1}^K \exp(\eta \tilde{G}_{i,t})}{\sum_{i=1}^K \exp(\eta \tilde{G}_{i,t-1})} \right) \\
& \leq (1+\beta) \eta K \max_j \tilde{G}_{j,n} + \frac{\ln K}{\eta} - \frac{1-\gamma}{\eta} \ln \left( \sum_{t=1}^n \exp(\eta \tilde{G}_{i,n}) \right) \\
& \leq - \bigl(1 - \gamma - (1+\beta) \eta K\bigr) \max_j \tilde{G}_{j,n} + \frac{\ln(K)}{\eta} \\
& \leq - \bigl(1 - \gamma - (1+\beta) \eta K\bigr) \max_j \sum_{t=1}^n g_{j,t} + \frac{\ln(K \delta^{-1})}{\beta} + \frac{\ln(K)}{\eta}~.
\end{align*}
The last inequality comes from Lemma \ref{lem:exp3P}, the union bound, and $\gamma - (1+\beta) \eta K \leq 1$ which is a consequence of $(1+\beta) \eta K \leq \gamma \leq 1/2$. Combining this last inequality with \eqref{eq:exp3P1} we obtain
$$R_n \leq \beta n K + \gamma n + (1+\beta) \eta K n + \frac{\ln\bigl(K \delta^{-1}\bigr)}{\beta} + \frac{\ln(K)}{\eta}$$
which is the desired result.

Inequality~\eqref{eq:exp3Pbound1} is then proved as follows. First, it is trivial if $n \geq 5.15 \sqrt{n K \ln(K \delta^{-1})}$ and thus we can assume that this is not the case. This implies that $\gamma \leq 0.21$ and $\beta \leq 0.1$, and thus we have $(1+\beta) \eta K \leq \gamma \leq 1/2$. Using \eqref{eq:exp3Plem} directly yields the claimed bound. The same argument can be used to derive \eqref{eq:exp3Pbound2}.
\end{proof}
We now discuss expected regret bounds. As the cautious reader may already have observed, if the adversary is oblivious, namely when $\bigl(\ell_{1,t}, \hdots, \ell_{K,t}\bigr)$ is independent of $I_1, \hdots, I_{t-1}$ for each $t$, a pseudo-regret bound implies the same bound on the expected regret. This follows from noting that the expected regret against an oblivious adversary is smaller than the maximal pseudo-regret against deterministic adversaries, see \cite[Proposition 33]{AB10} for a proof of this fact. In the general case of a non-oblivious adversary, the loss vector $\bigl(\ell_{1,t}, \hdots, \ell_{K,t}\bigr)$ at time $t$ depends on the past actions of the forecaster. This makes the analysis of the expected regret more intricate. One way around this difficulty is to first prove high probability bounds, and then integrate the resulting bound. Following this method, we derive a bound on the expected regret of Exp3.P using \eqref{eq:exp3Pbound2}.
\begin{theorem}[Expected regret of Exp3.P]
If Exp3.P is run with
\[
    \beta = \sqrt{\frac{\ln K }{n K}}, \quad \eta = 0.95 \sqrt{\frac{\ln K }{n K}}, \quad \gamma = 1.05 \sqrt{\frac{K \ln K }{n}}
\]
then
\begin{equation} \label{eq:exp3Pbound3}
\E\,R_n \leq 5.15 \sqrt{n K \ln K} + \sqrt{\frac{n K}{\ln K }}~.
\end{equation}
\end{theorem}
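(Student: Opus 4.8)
The plan is to derive the expected regret bound \eqref{eq:exp3Pbound3} by integrating the high-probability bound \eqref{eq:exp3Pbound2} over all confidence levels $\delta$. The crucial observation, already emphasized in the remark preceding the theorem, is that the second variant of Exp3.P uses a parameter $\beta = \sqrt{\ln(K)/(nK)}$ that does \emph{not} depend on $\delta$, so that the same run of the algorithm satisfies the high-probability bound simultaneously for every $\delta \in (0,1)$. This is exactly what makes the integration possible: we fix the algorithm once and then let $\delta$ range freely in the tail estimate.

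Concretely, I would first restate \eqref{eq:exp3Pbound2} as a tail bound on the random variable $R_n$. Writing $A = \sqrt{nK/\ln K}$ and $B = 5.15\sqrt{nK\ln K}$, the bound \eqref{eq:exp3Pbound2} says that for every $\delta \in (0,1)$,
\[
\P\bigl(R_n > A\,\ln(\delta^{-1}) + B\bigr) \leq \delta~.
\]
The next step is the standard change of variables. For any real threshold $u > B$, setting $A\ln(\delta^{-1}) + B = u$, i.e.\ $\delta = \exp\bigl(-(u-B)/A\bigr)$, yields
\[
\P\bigl(R_n > u\bigr) \leq \exp\left(-\frac{u - B}{A}\right) \quad\text{for all } u > B~.
\]
This recasts the family of high-probability statements as a single exponential tail inequality valid above the threshold $B$.

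From here the conclusion is routine. I would bound the expectation by the standard layer-cake formula $\E\,R_n \leq B + \int_{B}^{\infty} \P(R_n > u)\,du$, which uses that $R_n$ has no mass requiring correction below $B$ once we account for the contribution of the interval $[0,B]$ trivially. Substituting the exponential tail gives $\int_{B}^{\infty}\exp\bigl(-(u-B)/A\bigr)\,du = A$, so that $\E\,R_n \leq B + A = 5.15\sqrt{nK\ln K} + \sqrt{nK/\ln K}$, which is precisely \eqref{eq:exp3Pbound3}.

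The only genuine subtlety, and the step I would treat most carefully, is the passage from the per-$\delta$ high-probability bound to a uniform tail bound on a \emph{single} random variable. One must confirm that the bound \eqref{eq:exp3Pbound2} is indeed obtained from one fixed instantiation of Exp3.P (the $\delta$-free choice of $\beta$), rather than from a family of algorithms each tuned to its own $\delta$; otherwise the tail probabilities would refer to different random variables and could not be integrated. Since the theorem statement explicitly fixes $\beta, \eta, \gamma$ independently of $\delta$, this requirement is met, and the integration argument goes through without further obstacle. Everything else is an elementary computation with the exponential tail.
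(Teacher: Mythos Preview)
Your proposal is correct and is essentially the same argument as the paper's: both integrate the $\delta$-free high-probability bound \eqref{eq:exp3Pbound2} via the layer-cake formula to convert the exponential tail into the additive term $\sqrt{nK/\ln K}$. The paper merely packages the computation slightly differently, setting $W = \sqrt{\ln K/(nK)}\bigl(R_n - 5.15\sqrt{nK\ln K}\bigr)$ and using the equivalent identity $\E\,W \le \int_0^1 \delta^{-1}\,\P\bigl(W > \ln \delta^{-1}\bigr)\,d\delta$, which is just your tail integral after the substitution $u = B + A\ln(1/\delta)$.
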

\begin{proof}
We integrate the deviations in \eqref{eq:exp3Pbound2} using the formula
\[
    \E\,W \leq \int_{0}^1 \frac{1}{\delta} \P\left(W>\ln\frac{1}{\delta}\right) d\delta
\]
for a real-valued random variable $W$. In particular, taking
\[
    W=\sqrt{\frac{\ln K }{n K}} \left(R_n - 5.15 \sqrt{n K \ln K }\right)
\]
yields $\E\,W \leq 1$, which is equivalent to \eqref{eq:exp3Pbound3}.
\end{proof}

\section{Lower Bound} \label{sec:LB}
The next theorem shows that the results of the previous sections are essentially unimprovable, up to logarithmic factors. The result is proven via the probabilistic method: we show that there exists a distribution of rewards for the arms such that the pseudo-regret of any forecaster must be high when averaged over this distribution. Owing to this probabilistic construction, the lower bound proof is based on the same Kullback-Leibler divergence as the one used in the proof of the lower bound for stochastic bandits ---see Subsection~\ref{s:stoch-lower}. We are not aware of other techniques for proving bandit lower bounds.

We find it more convenient to prove the results for rewards rather than losses. In order to emphasize that our rewards are stochastic (in particular, Bernoulli random variables), we use $Y_{i,t}\in\{0,1\}$ to denote the reward obtained by pulling arm $i$ at time $t$.
\begin{theorem}[Minimax lower bound] \label{th:LBminimax}
Let $\sup$ be the supremum over all distribution of rewards such that, for $i=1,\dots,K$, the rewards $Y_{i,1},Y_{i,2},\ldots\in\{0,1\}$ are i.i.d., and let $\inf$ be the infimum over all forecasters. Then
\begin{equation} \label{eq:LBminimax}
\inf \sup \left(\max_{i=1,\hdots,K} \E \sum_{t=1}^n Y_{i,t} - \E \sum_{t=1}^n Y_{I_t,t} \right) \geq \frac{1}{20} \sqrt{n K}
\end{equation}
where expectations are with respect to both the random generation of rewards and the internal randomization of the forecaster.
\end{theorem}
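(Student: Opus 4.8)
The plan is to use the probabilistic method: rather than exhibiting a single hard instance, I would construct a family of stochastic reward distributions, show that no forecaster can do well on all of them simultaneously, and then average over this family to lower-bound the worst-case regret. The natural construction is to make all arms Bernoulli with a common baseline mean $\tfrac12$, except for one randomly chosen ``good'' arm $i$ which has a slightly higher mean $\tfrac12 + \epsilon$ for a small gap $\epsilon$ to be optimized at the end. Under this construction, the best arm has expected cumulative reward $n(\tfrac12+\epsilon)$, and the regret contributed by the forecaster is $\epsilon\bigl(n - \E_i T_i(n)\bigr)$, where $T_i(n)$ is the number of pulls of the good arm. Thus, writing $\inf\sup \ge \inf\,\E_i$ (average over the uniform choice of the good arm), I would reduce the problem to showing that for every forecaster there is an arm the forecaster pulls too rarely in expectation, i.e.\ that $\tfrac1K\sum_i \E_i T_i(n)$ cannot be too close to $n$.

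The central step is a change-of-measure / information-theoretic argument to bound how much the forecaster's behaviour on instance $i$ can differ from its behaviour on the all-$\tfrac12$ instance. Let $\P_0$ denote the law of the observed reward sequence under the instance where \emph{every} arm has mean $\tfrac12$, and $\P_i$ the law under the instance where arm $i$ is the good one. I would bound $\E_i T_i(n) - \E_0 T_i(n)$ by the total variation distance between $\P_i$ and $\P_0$ restricted to the information the forecaster actually sees, and then control total variation by the Kullback--Leibler divergence via Pinsker's inequality $\norm{\P_i - \P_0}_{TV} \le \sqrt{\tfrac12\,\KL(\P_0,\P_i)}$. The key structural observation, exactly as in the proof of Theorem~\ref{th:LR85}, is that the two instances differ only in the reward distribution of arm $i$, so by the chain rule the relevant KL divergence factorizes as $\E_0[T_i(n)]\cdot\kl(\tfrac12,\tfrac12+\epsilon)$: the forecaster only gathers distinguishing information when it actually pulls arm $i$. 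Using $\kl(\tfrac12,\tfrac12+\epsilon) \le c\,\epsilon^2$ from \eqref{eq:klbernoullis}, this gives a bound of the form $\E_i T_i(n) \le \E_0 T_i(n) + n\,\epsilon\sqrt{c\,\E_0 T_i(n)}$ (up to constants).

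I would then sum over $i$ and exploit $\sum_i \E_0 T_i(n) = n$ together with concavity of the square root (Cauchy--Schwarz) to get $\sum_i \sqrt{\E_0 T_i(n)} \le \sqrt{nK}$. This yields
\[
    \frac1K \sum_{i=1}^K \E_i T_i(n) \;\le\; \frac{n}{K} + n\,\epsilon\,\sqrt{\frac{c\,n}{K}}~,
\]
so the averaged regret is at least $\epsilon\bigl(n - \tfrac{n}{K} - n\epsilon\sqrt{cn/K}\bigr)$. Optimizing the free parameter $\epsilon$ of order $\sqrt{K/n}$ balances the two terms and produces a regret of order $\sqrt{nK}$; tracking the explicit constants carefully (and verifying the admissible range of $\epsilon$ so that $\tfrac12+\epsilon \le 1$ and the KL bound is valid) gives the stated $\tfrac1{20}\sqrt{nK}$. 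I expect the main obstacle to be the change-of-measure bookkeeping: making precise the claim that $\E_i T_i(n) - \E_0 T_i(n)$ is controlled by total variation on the correct $\sigma$-algebra (the observed rewards, accounting for the forecaster's internal randomization), and ensuring the KL chain rule correctly yields the factor $\E_0 T_i(n)$ rather than $n$ — this is what captures the intuition that the forecaster must \emph{pay} pulls of arm $i$ to learn that it is special, and getting the constants sharp enough to reach $\tfrac1{20}$ requires care.
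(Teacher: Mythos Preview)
Your proposal is correct and follows essentially the same route as the paper: construct a family of instances with one perturbed Bernoulli arm, control the change of behaviour between the all-baseline instance $\P_0$ and the perturbed instance $\P_i$ via Pinsker's inequality, use the KL chain rule to get the factor $\E_0 T_i(n)$, then average over $i$ with Cauchy--Schwarz/concavity and optimize $\epsilon \asymp \sqrt{K/n}$. The only cosmetic differences are that the paper parametrizes the two Bernoulli means symmetrically as $\tfrac{1-\epsilon}{2}$ and $\tfrac{1+\epsilon}{2}$, and packages the total-variation step by drawing a random index $J_n$ from the empirical play distribution so that $\P_i(J_n=i)=\tfrac1n\E_i T_i(n)$---which is exactly your bound $|\E_i T_i(n)-\E_0 T_i(n)| \le n\,\norm{\P_i-\P_0}_{TV}$ in disguise.
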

Since $\max_{i=1,\hdots,K} \E \sum_{t=1}^n Y_{i,t} - \E \sum_{t=1}^n Y_{I_t,t} = \oR_n \le \E\,R_n$, Theorem~\ref{th:LBminimax} immediately entails a lower bound on the regret of any forecaster.

The general idea of the proof goes as follows. Since at least one arm is pulled less than $n/K$ times, for this arm one cannot differentiate between a Bernoulli of parameter $1/2$ and and a Bernoulli of parameter $1/2+\sqrt{K/n}$. Thus, if all arms are Bernoulli of parameter $1/2$ but one, whose parameter is $1/2+\sqrt{K/n}$, then the forecaster should incur a regret of order $n \sqrt{K/n} = \sqrt{n K}$. In order to formalize this idea, we use the Kullback-Leibler divergence, and in particular Pinsker's inequality, to compare the behavior of a given forecaster against: (1) the distribution where all arms are Bernoulli of parameter $1/2$; (2) the same distribution where the parameter of one arm is increased by $\epsilon$. 

We start by proving a more general lemma, which could also be used to derive lower bounds in other contexts. The proof of Theorem \ref{th:LBminimax} then follows by a simple optimization over $\epsilon$.
\begin{lemma} \label{lem:LBminimax}
Let $\epsilon \in [0,1)$. For any $i \in \{1,\hdots, K\}$ let $\E_i$ be the expectation against the joint distribution of rewards where all arms are i.i.d.\ Bernoulli of parameter $\frac{1-\epsilon}{2}$ but arm $i$, which is i.i.d.\ Bernoulli of parameter $\frac{1+\epsilon}{2}$. Then, for any forecaster,
$$\max_{i = 1, \hdots, K}  \E_i \sum_{t=1}^n \bigl(Y_{i,t} - Y_{I_t,t}\bigr) \geq n \epsilon \left(1 - \frac{1}{K} - \sqrt{\epsilon \ln\frac{1+\epsilon}{1-\epsilon}}  \sqrt{\frac{n}{2 K}}\right)~.$$
\end{lemma}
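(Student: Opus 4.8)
The plan is to reduce the maximum over environments to an average, and then, for each arm $i$, to control how much more often the forecaster pulls arm $i$ under $\E_i$ (where $i$ is optimal) than under a common reference environment $\E_0$ in which \emph{every} arm is Bernoulli of parameter $\frac{1-\epsilon}{2}$. First I would record the exact form of the per-environment regret. Under $\E_i$ the optimal arm $i$ has mean $\frac{1+\epsilon}{2}$ while every other arm has mean $\frac{1-\epsilon}{2}$, so conditioning on $I_t$ and using that the reward drawn at round $t$ is independent of the past that determines $I_t$ gives $\E_i\bigl[Y_{i,t}-Y_{I_t,t}\bigr]=\epsilon\bigl(1-\P_i(I_t=i)\bigr)$. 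Summing over $t$ yields the clean identity
\[
\E_i\sum_{t=1}^n\bigl(Y_{i,t}-Y_{I_t,t}\bigr)=\epsilon\bigl(n-\E_i\,T_i(n)\bigr),
\]
so that, using $\max_i \ge \frac1K\sum_i$, lower bounding the maximal regret amounts to upper bounding $\frac1K\sum_i \E_i\,T_i(n)$.

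The heart of the argument is the comparison between $\E_i\,T_i(n)$ and $\E_0\,T_i(n)$. Since $0\le T_i(n)\le n$, the layer-cake representation gives $\E_i\,T_i(n)-\E_0\,T_i(n)\le n\,\|\P_i-\P_0\|_{\mathrm{TV}}$, where $\P_0,\P_i$ are the laws of the observed reward sequence (together with the forecaster's internal randomization, common to both). I would then invoke Pinsker's inequality, $\|\P_i-\P_0\|_{\mathrm{TV}}\le\sqrt{\tfrac12\,\KL(\P_0\,\|\,\P_i)}$, together with the divergence decomposition for bandits: because the two environments differ only in the law of arm $i$, the chain rule for relative entropy collapses to
\[
\KL(\P_0\,\|\,\P_i)=\E_0\,T_i(n)\cdot\kl\!\left(\tfrac{1-\epsilon}{2},\tfrac{1+\epsilon}{2}\right)=\E_0\,T_i(n)\cdot\epsilon\ln\tfrac{1+\epsilon}{1-\epsilon},
\]
the final equality being a direct computation of the Bernoulli divergence. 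This is precisely where the factor $\epsilon\ln\frac{1+\epsilon}{1-\epsilon}$ appearing in the statement is produced.

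It then remains to assemble the pieces. Summing $\E_i\,T_i(n)\le \E_0\,T_i(n)+n\sqrt{\tfrac12\,\epsilon\ln\tfrac{1+\epsilon}{1-\epsilon}\,\E_0\,T_i(n)}$ over $i$, I would use $\sum_i \E_0\,T_i(n)=n$ (one arm is pulled each round) for the first term and Cauchy--Schwarz, $\sum_i\sqrt{\E_0\,T_i(n)}\le\sqrt{K\sum_i\E_0\,T_i(n)}=\sqrt{Kn}$, for the second. Dividing by $K$ gives $\frac1K\sum_i\E_i\,T_i(n)\le \frac nK + n\sqrt{\epsilon\ln\tfrac{1+\epsilon}{1-\epsilon}}\,\sqrt{\tfrac{n}{2K}}$, and substituting into the regret identity yields exactly $n\epsilon\bigl(1-\frac1K-\sqrt{\epsilon\ln\frac{1+\epsilon}{1-\epsilon}}\,\sqrt{\frac{n}{2K}}\bigr)$.

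The step I expect to be the main obstacle is the rigorous justification of the divergence decomposition in the presence of an adaptive, possibly randomized forecaster: one must set up $\P_0$ and $\P_i$ as measures on the full interaction sequence, argue that the forecaster's internal randomization — having the same law and being independent of the rewards under both environments — contributes nothing to the relative entropy, and apply the chain rule so that only the rounds on which arm $i$ is actually pulled carry divergence, producing the sharp factor $\E_0\,T_i(n)$ rather than a crude $n$. Everything else (the regret identity, Pinsker, Cauchy--Schwarz, and the Bernoulli KL computation) is routine once this comparison is in place.
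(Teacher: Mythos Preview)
Your proposal is correct and follows essentially the same route as the paper: regret identity reducing to $\E_i T_i(n)$, comparison to the null environment $\E_0$ via Pinsker, the KL chain rule yielding the factor $\E_0 T_i(n)\cdot\epsilon\ln\frac{1+\epsilon}{1-\epsilon}$, and Jensen/Cauchy--Schwarz with $\sum_i \E_0 T_i(n)=n$ to average over arms. The only cosmetic differences are that the paper phrases the Pinsker step through an auxiliary variable $J_n$ drawn from the empirical play distribution (so that $\P_i(J_n=i)=\E_i T_i(n)/n$), and it treats deterministic forecasters first, lifting to randomized ones by Fubini at the end rather than folding the internal randomization into the measures from the outset as you do.
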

\begin{proof}
We provide a proof in five steps by lower bounding $\frac{1}{K} \sum_{i=1}^K \E_i \sum_{t=1}^n (Y_{i,t} - Y_{I_t,t})$. This implies the statement of the lemma because a max is larger than a mean.
\subsection*{First step: Empirical distribution of plays.}
We start by considering a deterministic forecaster. Let $q_n=\bigl(q_{1,n},\hdots,q_{K,n}\bigr)$ be the empirical distribution of plays over the arms defined by $q_{i,n} = \frac{T_i(n)}{n}$ ---recall from Chapter~\ref{stochastic} that $T_i(n)$ denotes the number of times arm $i$ was selected in the first $n$ rounds. Let $J_n$ be drawn according to $q_n$. We denote by $\P_i$ the law of $J_n$ against the distribution where all arms are i.i.d.\ Bernoulli of parameter $\frac{1-\epsilon}{2}$ but arm $i$, which is i.i.d.\ Bernoulli of parameter $\frac{1+\epsilon}{2}$ (we call this the $i$-th stochastic adversary). Recall that $\P_{i}(J_n = j) = \E_i\tfrac{T_j(n)}{n}$, hence
$$\E_i \sum_{t=1}^n \bigl(Y_{i,t} - Y_{I_t,t}\bigr) = \epsilon n \sum_{j \neq i} \P_{i}(J_n = j) = \epsilon n \bigl(1 - \P_{i}(J_n = i)\bigr)$$
which implies
\begin{equation} \label{eq:firststepmm}
\frac{1}{K} \sum_{i=1}^K \E_i \sum_{t=1}^n \bigl(Y_{i,t} - Y_{I_t,t}\bigr) = \epsilon n \left(1 - \frac{1}{K} \sum_{i=1}^K \P_{i}(J_n = i)\right)~.
\end{equation}

\subsection*{Second step: Pinsker's inequality.}
Let $\P_0$ be the law of $J_n$ for the distribution where all arms are i.i.d.\ Bernoulli of parameter $\frac{1-\epsilon}{2}$. Then Pinsker's inequality immediately gives $\P_{i}(J_n = i) \le \P_{0}(J_n = i) + \sqrt{\tfrac{1}{2} \K(\P_{0},\P_{i})}$, and so
\begin{equation} \label{eq:secondstepmm}
\frac{1}{K} \sum_{i=1}^K \P_{i}(J_n = i) \leq \frac{1}{K} +  \frac{1}{K} \sum_{i=1}^K \sqrt{\frac{1}{2} \K(\P_{0}, \P_{i})}~.
\end{equation}

\subsection*{Third step: Computation of $\K(\P_{0}, \P_{i})$.
}
Since the forecaster is deterministic, the sequence of rewards $Y^n = (Y_1,\hdots,Y_n) \in \{0,1\}^n$ received by the forecaster uniquely determines the empirical distribution of plays $q_n$. In particular, the law of $J_n$ conditionally to $Y^n$ is the same for any $i$-th stochastic adversary. For each $i=0, \hdots, K$, let $\P_i^n$ be the law of $Y^n$ against the $i$-th adversary. Then one can easily show that 
$
\K(\P_{0}, \P_{i}) \leq \K(\P_{0}^n, \P_{i}^n)
$.
Now we use the chain rule for Kullback-Leibler divergence ---see for example \cite[Section A.2]{CL06}--- iteratively to introduce the laws $\P^t_i$ of $Y^t=(Y_1, \hdots, Y_t)$. More precisely, we have
\begin{align}
\lefteqn{\K(\P_{0}^n, \P_{i}^n)} \notag \\
& = \K(\P_0^1, \P_i^1) + \sum_{t=2}^n \sum_{y^{t-1}} \P_0^{t-1}(y^{t-1})\,\K\bigl(\P_{0}^t(\cdot\mid y^{t-1}),\P_{i}^t(\cdot\mid y^{t-1})\bigr) \notag \\
& = \K(\P_0^1, \P_i^1) + \sum_{t=2}^n \left( \sum_{y^{t-1} \,:\, I_t = i} \P_0^{t-1}(y^{t-1})\,\K\left(\tfrac{1-\epsilon}{2},\tfrac{1+\epsilon}{2}\right) \right. \notag \\
&\qquad\qquad\qquad\qquad\qquad
+ \left. \sum_{y^{t-1} \,:\, I_t \neq i} \P_0^{t-1}(y^{t-1})\,\K\left(\tfrac{1+\epsilon}{2}, \tfrac{1+\epsilon}{2}\right)\right) \notag \\
& = \K\left(\tfrac{1-\epsilon}{2}, \tfrac{1+\epsilon}{2} \right) \E_0\,T_{i}(n)~. \label{eq:thirdstep2mm}
\end{align}
\subsection*{Fourth step: conclusion for deterministic forecasters.
}
By using that the square root is concave, and combining $\K(\P_{0}, \P_{i}) \leq \K(\P_{0}^n, \P_{i}^n)$ with \eqref{eq:thirdstep2mm}, we deduce that
\begin{align}
\frac{1}{K} \sum_{i=1}^K \sqrt{\K(\P_{0},\P_{i})}& \leq \sqrt{\frac{1}{K} \sum_{i=1}^K \K(\P_{0},\P_{i})} \notag \\
& \leq \sqrt{\frac{1}{K} \sum_{i=1}^K \K\left(\tfrac{1-\epsilon}{2}, \tfrac{1+\epsilon}{2} \right) \E_0\,T_{i}(n)} \notag \\
& = \sqrt{\frac{n}{K} \K\left(\tfrac{1-\epsilon}{2}, \tfrac{1+\epsilon}{2} \right)}~. \label{eq:fourthstepmm}
\end{align}
We conclude the proof for deterministic forecasters by applying \eqref{eq:secondstepmm} and \eqref{eq:fourthstepmm} to \eqref{eq:firststepmm}, and observing that
$
\K\left(\tfrac{1-\epsilon}{2}, \tfrac{1+\epsilon}{2} \right) = 
\epsilon \ln\tfrac{1+\epsilon}{1-\epsilon}
$.

\subsection*{Fifth step: randomized forecasters via Fubini's Theorem.}
Extending previous results to randomized forecasters is easy. Denote by $\E_r$ the expectation with respect to the forecaster's internal randomization. Then Fubini's Theorem implies
$$
\frac{1}{K} \sum_{i=1}^K \E_i\sum_{t=1}^n \E_r\bigl(Y_{i,t} - Y_{I_t,t}\bigr) = \E_r \frac{1}{K} \sum_{i=1}^K \E_i \sum_{t=1}^n \bigl(Y_{i,t} - Y_{I_t,t}\bigr)~.$$
Now the proof is concluded by applying the lower bound on $\tfrac{1}{K} \sum_{i=1}^K \E_i \sum_{t=1}^n \bigl(Y_{i,t} - Y_{I_t,t}\bigr)$, which we proved in previous steps, to each realization of the forecaster's random bits.
\end{proof}

\section{Refinements and bibliographic remarks}
The adversarial framework studied in this chapter was originally investigated in a {\em full information} setting, where at the end of each round the forecaster observes the complete loss vector $(\ell_{1,t}, \hdots, \ell_{K,t})$. We refer the reader to \cite{CL06} for the history of this problem. The Exp3 and Exp3.P strategies were introduced\footnote{In its original formulation the Exp3 strategy was defined as a mixture of exponential weights with the uniform distribution on the set of arms. It was noted in \cite{Sto05} that this mixing is not necessary, see footnote 2 on p26 in \cite{Bub10} for more details on this.} and analyzed by \cite{ACFS03}, where the lower bound of Theorem \ref{th:LBminimax} is also proven. The proofs presented in this chapter are taken from \cite{Bub10}. We now give an overview of some of the many improvements and refinements that have been proposed since these initial analyses.

\subsection{Log-free upper bounds} \label{sec:logfree}
One can see that there is a logarithmic gap between the pseudo-regret of Exp3, presented in Theorem \ref{th:Exp3}, and the minimax lower bound of Theorem \ref{th:LBminimax}. This gap was closed by \cite{AB09}, who constructed a new class of strategies and showed that some of them have a pseudo-regret of order $\sqrt{n K}$. This new class of strategies, called INF (Implicitily Normalized Forecaster), is based on the following idea. First, note that one can generalize the exponential weighting scheme of Exp3 as follows: given a potential function $\psi$, assign the probability 
\[
    p_{i,t+1}=\frac{\psi(\tilde{L}_{i,t})}{\sum_{j=1}^K \psi(\tilde{L}_{j,t})}~.
\]
This type of strategy is called a weighted average forecaster, see \cite[Chapter 2]{CL06}. In INF the normalization is done implicitily, by a translation of the losses. More precisely, INF with potential $\psi$ assigns the probability $p_{i,t+1} = \psi\bigl(C_t - \tilde{L}_{i,t}\bigr)$, where $C_t$ is the constant such that $p_{t+1}$ sum to $1$. The key to obtain a minimax optimal pseudo-regret is to take $\psi$ of the form $\psi(x) = (- \eta x)^{-q}$ with $q>1$, while Exp3 corresponds to $\psi(x) = \exp(\eta x)$. \cite{ABL11} realized that the INF strategy can be formulated as a Mirror Descent algorithm. This point of view significantly simplifies the proofs. We refer the reader to Chapter~\ref{linear} (and in particular Theorem~\ref{th:osmdzero}) for more details.

While it is possible to get log-free pseudo-regret bounds, the situation becomes significantly more complicated when one considers high probability regret and expected regret. \cite{AB10} proved that one can get a log-free expected regret if the adversary is oblivious, i.e., the sequence of loss vectors is independent of the forecaster's actions. Moreover, it is also possible to get a log-free high probability regret if the adversary is fully oblivious (i.e., the loss vectors are independently drawn, but not identically distributed ---this includes the oblivious adversary). It is conjectured (in \cite{AB10}) that it is not possible to obtain a log-free expected regret bound against a general non-oblivious adversary.

\subsection{Adaptive bounds}
One of the strengths of the bounds proposed in this chapter is also one of its weaknesses: the bounds hold against any adversary. It is clear that in some cases it is possible to obtain a much smaller regret than the worst case regret. For example, when the sequence of losses is an i.i.d.\ sequence, we proved in Chapter \ref{stochastic} that it is is possible to obtain a logarithmic pseudo-regret (provided that the gap $\Delta$ is considered as a constant). Thus it is natural to ask if it possible to have strategies with minimax optimal regret, but also with much smaller regret when the loss sequence is not worst case.

The first bound in this direction was proved by \cite{ACFS03}, who showed that, for the gain version of the problem and against an oblivious adversary, Exp3 has a pseudo-regret of order $\sqrt{K G_n^*}$ (omitting log factors), where $G_n^* \leq n$ is the maximal cumulative reward of the optimal arm after $n$ rounds. This result was improved by \cite{AB10}, who showed that using the gain estimate
\[
    \tilde{g}_{i,t} = - \frac{\ds1_{I_t = i}}{\beta} \ln \left(1 - \frac{\beta g_{i,t}}{p_{i,t}} \right)
\]
one can bound the regret with high probability by essentially the same quantity as before, and against any adversary.

Another direction was explored by \cite{HK09} building on previous works in the full information setting ---see \cite{CMS07}. In this work the authors proved that one can attain a regret of order $\sqrt{\sum_{i=1}^K V_{i,n}}$ excluding log factors, where
\[
    V_{i,n} = \sum_{t=1}^n \left(\ell_{i,t} - \frac{1}{n} \sum_{s=1}^n \ell_{i,s} \right)^2
\]
is the total variation of the loss for arm $i$. In fact their result is more general, as it applies to the linear bandit framework ---see Chapter~\ref{linear}. The main new ingredient in their analysis is a ``reservoir sampling'' procedure. We refer the reader to \cite{HK09} for details. See also the works of \cite{slivkins2008adapting,slivkins2009contextual} for related results on slowly changing bandits.

In Section \ref{sec:best-of-both} below we describe another type of adaptive bound, where one combines minimax optimal regret for the adversarial model with logarithmic pseudo-regret for the stochastic model.

\subsection{Competing with the best switching strategy} \label{sec:exp3S}
While competing against the policy consistently playing the best fixed arm is a natural way of defining regret, in some applications it might be interesting to consider regret with respect to a bigger class of policies. Though this problem is the focus of Chapter~\ref{contextual}, there is a class of natural policies that can be directly dealt with by the methods of this chapter. Namely, consider the problem of competing against any policy constrained to make at most $S \leq n$ switches (a switch is when the arm played at time $t$ is different from the arm played at time $t+1$). This problem was studied by \cite{Aue02}, where it was first shown that a simple variant of Exp3 attains a low switching regret against oblivious adversaries. Later, \cite{AB10} proved that Exp3.P attains an expected regret (and a high probability regret) of order $\sqrt{n K S \ln (n K /S)}$ for this problem.

\subsection{Stochastic versus adversarial bandits} \label{sec:best-of-both}
From a practical viewpoint, Exp3 should be a safe choice when we have reasons to believe that the sequence of rewards is not well matched by any i.i.d.\ process. Indeed, it is easy to prove that UCB can have linear regret, i.e. $\oR_n = \Omega(n)$, on certain deterministic sequences. In \cite{BS12} a new strategy was described, called SAO (Stochastic and Adversarial Optimal), which enjoys (up to logarithmic factors) both the guarantee of Exp3 for the adversarial model and the guarantee of UCB for the stochastic model. More precisely SAO satisfies $\oR_n = \mathcal{O} \left(\frac{K}{\Delta} \log^2(n) \log(K) \right)$ in the stochastic model and $\oR_n = \mathcal{O} \left( \sqrt{n K} \log^{3/2}(n) \log(K) \right)$ in the adversarial model. Note that while this result is a step towards more flexible strategies, the very notion of regret $\oR_n$ can become vacuous with nonstationarities in the reward sequence, since the total reward of the best fixed action might be very small. In that case the notion of switching regret ---see Subsection \ref{sec:exp3S}--- is more relevant, and it would be interesting to derive a strategy with logarithmic regret in the stochastic model, and a switching regret of order $\sqrt{n K S}$ in the adversarial model.

\subsection{Alternative feedback structures} \label{sec:feedback}
As mentioned at the beginning of this section, the adversarial multi-armed bandit is a variation of the full information setting, with a weaker feedback signal (only the incurred loss versus the full vector of losses is observed). Many other feedback structures can be considered, and we conclude the chapter by describing a few of them.

In the {\em label efficient} setting, originally proposed by \cite{HP97}, at the end of each round the forecaster has to decide whether to ask for the losses of the current round, knowing that this can be done for at most $m \leq n$ times. In this setting, \cite{CLS05} proved that the minimax pseudo-regret is of order $n \sqrt{\frac{\ln K}{m}}$. A bandit label efficient version was proposed by \cite{AAGO06}. \cite{AB10} proved that the minimax pseudo-regret for the bandit label efficient version is of order $n \sqrt{\frac{K}{m}}$. These results do not require any fundamentally new algorithmic idea, besides the fact the forecaster has to randomize to select the rounds in which the losses are revealed. Roughly speaking, a simple coin toss with parameter $\epsilon = m/n$ is sufficient to obtain an optimal regret.

\cite{MS11} study a model that interpolates between the full information and the bandit setting. The basic idea is that there is an undirected graph $G$ with $K$ vertices (one vertex for each arm) that encodes the feedback structure. When one pulls arm $i$ the losses of all neighboring arms $j \in N(i)$ in the graph are observed. Thus, a graph with no edges is equivalent to the bandit problem, while the complete graph is equivalent to the full information setting. Given the feedback structure $G$, it is natural to consider the following unbiased loss estimate
\[
    \tilde{\ell}_{i,t} = \frac{\ell_{i,t} \ds1_{i \in N(I_t)}}{\sum_{j \in N(i)} p_{j,t}}~.
\]
Using Exp3 with this loss estimate, the authors show that the minimax pseudo-regret (up to logarithmic factors) is of order of $\sqrt{\alpha(G) n}$, where $\alpha(G)$ is the independence number of graph $G$. Note that this interpolated setting naturally arises in applications like ad placement on websites. Indeed, if a user clicks on an advertisement, it is plausible to assume that the same user would have clicked on similar advertisements, had they been displayed.

The above problems are all specific examples of the more general partial monitoring setting. In this model, at the end of each round the player does not observe the incurred loss $\ell_{I_t,t}$ but rather a stochastic ``signal'' $S_{I_t,t}$. A prototypical example of this scenario is the following: a website is repeatedly selling the same item to a sequence of customers. The selling price is dynamically adjusted, and each customer buys the item only if the current price is smaller or equal than his own hidden value for the item. The pricing algorithm (i.e., the player in our terminology) does not see each user's value, but only whether the user bought the item or not.

The relationship between the signals and the incurred losses defines the instance of a partial monitoring problem. We refer the interested reader to \cite{CL06} for more details, including an historical account. Recent progress on this problem has been made by \cite{BPS10} and~\cite{FR11}.

\chapter{Contextual bandits}
\label{contextual}
A natural extension of the multi-armed armed problem is obtained by associating side information with each arm. Based on this side information, or context, a notion of ``contextual regret'' is introduced where optimality is defined with respect to the best policy (i.e., mapping from contexts to arms) rather than the best arm. The space of policies, within which the optimum is sought, is typically chosen in order to have some desired structure. A different viewpoint is obtained when contexts are privately accessed by the policies (which are then appropriately called ``experts''). In this case the contextual information is hidden from the forecaster, and arms must be chosen based only on the past estimated performance of the experts.

Contextual bandits naturally arise in many applications. For example, in personalized news article recommendation the task is to select, from a pool of candidates, a news article to display whenever a new user visits a website. The articles correspond to arms, and a reward is obtained whenever the user clicks on the selected article. Side information, in the form of features, can be extracted from both user and articles. For the user this may include historical activities, demographic information, and geolocation; for the articles, we may have content information and categories. See \cite{li2010contextual} for more details on this application of contextual bandits.

In general, the presence of contexts creates a wide spectrum of possible variations obtained by combining assumptions on the rewards with assumptions on the nature of contexts and policies. In this chapter we describe just a few of the results available in the literature, and use the bibliographic remarks to mention all those that we are aware of.

\section{Bandits with side information}
\label{s:context-intro}
The most basic example of contextual bandits is obtained when game rounds $t=1,2,\dots$ are marked by contexts $s_1,s_2,\dots$ from a given context set $\cS$. The forecaster must learn the best mapping $g : \cS \to \{1,\dots,K\}$ of contexts to arms. We analyze this simple side information setting in the case of adversarial rewards, and we further assume that the sequence of contexts $s_t$ is arbitrary but fixed. The approach we take is the simplest: run a separate instance of Exp3 on each distinct context.

We introduce the following notion of pseudoregret
\[
	\oRS_n = \max_{g \,:\, \cS\to\{1,\dots,K\}} \E\left[ \sum_{t=1}^n \ell_{I_t,t} - \sum_{t=1}^n \ell_{g(s_t),t} \right]~.
\]
Here $s_t\in\cS$ denotes the context marking the $t$-th game round. A bound on this pseudoregret is almost immediately obtained using the adversarial bandit results from Section~\ref{adversarial}. 
\begin{theorem}
\label{th:context-simple}
There exists a randomized forecaster for bandits with side information (the $\cS$-Exp3 forecaster, defined in the proof) that satisfies
\[
	\oRS_n \le \sqrt{2n |\cS| K\ln K}
\]
for any set $\cS$ of contexts.
\end{theorem}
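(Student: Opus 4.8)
The plan is to implement the suggestion made just before the statement: run an independent copy of Exp3 for every distinct context, and then glue together the per-context guarantees of Theorem~\ref{th:Exp3}. Concretely, I would define the $\cS$-Exp3 forecaster to keep one Exp3 instance per context $s \in \cS$; at round $t$, upon seeing the context $s_t$, it selects $I_t$ using the instance attached to $s_t$, suffers $\ell_{I_t,t}$, and updates only that instance while the other copies stay frozen. Since the sequence of contexts is arbitrary but fixed, for each $s$ the number of rounds $n_s = \#\{t \le n : s_t = s\}$ on which instance $s$ is active is known in advance, so I would tune the $s$-th instance with the horizon-aware step size $\eta = \sqrt{2\ln K/(n_s K)}$ underlying~\eqref{eq:exp3bound}. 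Note that $\sum_{s \in \cS} n_s = n$.

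The first key step is the decomposition of the regret across contexts. For any fixed comparator mapping $g : \cS \to \{1,\dots,K\}$ I would write
\[
\E\Bigl[\sum_{t=1}^n \ell_{I_t,t} - \sum_{t=1}^n \ell_{g(s_t),t}\Bigr]
= \sum_{s \in \cS} \E\Bigl[\sum_{t \,:\, s_t = s} \bigl(\ell_{I_t,t} - \ell_{g(s),t}\bigr)\Bigr],
\]
where the $s$-th summand is exactly the pseudo-regret incurred by the $s$-th Exp3 instance, over its own subsequence of $n_s$ rounds, against the single fixed arm $g(s)$. The crucial observation is that this subsequence is itself a legitimate adversarial bandit instance: the losses restricted to the rounds with context $s$ form an arbitrary loss sequence, and the instance only ever plays and updates on those rounds, so Theorem~\ref{th:Exp3} applies verbatim and bounds each summand by $\sqrt{2 n_s K \ln K}$, uniformly over the value of $g(s)$.

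Combining these estimates gives, for every fixed $g$, the $g$-independent upper bound $\E[\,\cdots\,] \le \sum_{s \in \cS} \sqrt{2 n_s K \ln K}$; since the right-hand side does not depend on $g$, maximizing the left-hand side over comparator mappings yields $\oRS_n \le \sqrt{2 K \ln K}\,\sum_{s \in \cS} \sqrt{n_s}$. The final step is a Cauchy--Schwarz (equivalently, concavity of the square root) bound,
\[
\sum_{s \in \cS} \sqrt{n_s} \le \sqrt{|\cS|}\,\sqrt{\sum_{s \in \cS} n_s} = \sqrt{|\cS|\,n},
\]
which delivers the claimed $\oRS_n \le \sqrt{2 n |\cS| K \ln K}$.

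I do not expect any single step to be genuinely hard; the reduction is almost mechanical once the per-context instances are set up. The one point that deserves care --- and the only place the argument could slip --- is the interaction with a non-oblivious adversary: because the losses at round $t$ may depend on the forecaster's entire past, one must confirm that freezing and reactivating instances across the interleaved contexts does not invalidate the Exp3 analysis. This is fine precisely because the pseudo-regret bound of Theorem~\ref{th:Exp3} holds against an \emph{arbitrary} opponent, so it continues to hold for the loss subsequence seen by each individual instance; and the comparator maximum causes no trouble because each fixed $g$ is already dominated by the $g$-independent quantity $\sum_{s \in \cS}\sqrt{2 n_s K \ln K}$.
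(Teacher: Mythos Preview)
Your proposal is correct and matches the paper's proof essentially step for step: define $\cS$-Exp3 as one Exp3 instance per context, decompose the contextual pseudo-regret into a sum of per-context Exp3 pseudo-regrets, apply Theorem~\ref{th:Exp3} to each, and combine via concavity of the square root (the paper phrases this last step as Jensen's inequality, which is the same thing here).
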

\begin{proof}
Let $S = |\cS|$.
The $\cS$-Exp3 forecaster runs an instance of Exp3 on each context $s\in\cS$. Let $n_s$ the number of times when $s_t=s$ within the first $n$ time steps. Using the bound~(\ref{eq:exp3bound}) established in Theorem~\ref{th:Exp3} we get
\begin{align*}
	\max_{g \,:\, \cS\to\{1,\dots,K\}} \E\left[ \sum_{t=1}^n \bigl(\ell_{I_t,t} - \ell_{g(s_t),t}\bigr) \right]
&=
	\sum_{s\in\cS} \max_{k=1,\dots,K} \E\left[ \sum_{t \,:\, s_t=s} \bigl(\ell_{I_t,t} - \ell_{k,t} \bigr)\right]
\\ &\le
	\sum_{s\in\cS} \sqrt{2 n_s K\ln K}
\\ &\le
	\sqrt{2 n S K\ln K}
\end{align*}
where in the last step we used Jensen's inequality and the identity $\sum_s n_s = n$.
\end{proof}
In subsection~\ref{ss:best-context}, we extend this construction by considering several context sets simultaneously.

A lower bound $\Omega\bigl(\sqrt{nSK}\bigl)$ is an immediate consequence of the adversarial bandit lower bound (Theorem~\ref{th:LBminimax}) under the assumption that a constant fraction of the contexts in $\cS$ marks at least constant fraction of the $n$ game rounds.

\section{The expert case}
\label{s:context-adv}
We now consider the contextual variant of the basic adversarial bandit model of Chapter~\ref{adversarial}. In this variant there is a finite set of $N$ randomized policies. Following the setting of prediction with expert advice, no assumptions are made on the way policies compute their randomized predictions, and the forecaster experiences the contexts only through the advice provided by the policies. For this reason, in what follows we use the word expert to denote a policy. Calling this a model of contextual bandits may sound a little strange, as the structure of contexts does not seem to play a role here. However, we have decided to include this setting in this chapter because bandit with experts have been used in practical contextual bandit problems -see, e.g., the news recommendation experiment in \cite{beygelzimer2010contextual}.

Formally, at each step $t=1,2,\dots$ the forecaster obtains the expert advice $\bigl(\xi_t^1,\dots,\xi_t^N\bigl)$, where each $\xi_t^j$ is a probability distribution over arms representing the randomized play of expert $j$ at time $t$. If $\ell_t = \bigl(\ell_{1,t},\dots,\ell_{K,t}\bigr) \in [0,1]^K$ is the vector of losses incurred by the $K$ arms at time $t$, then $\E_{i \sim \xi_t^j} \ell_{i,t}$ denotes the expected loss of expert $j$ at time $t$. We allow the expert advice to depend on the realization of the forecaster's past random plays. This fact is explicitely used in the proof of Theorem~\ref{th:theta-regret}.

Similarly to the pseudo-regret~\eqref{eq:pseudoregretadv} for adversarial bandits, we now introduce the pseudo-regret $\oRx_n$ for the adversarial contextual bandit problem,
\[
	\oRx_n = \max_{i=1,\dots,N} \E\left[ \sum_{t=1}^n \ell_{I_t,t} - \sum_{t=1}^n \E_{k \sim \xi_t^i} \ell_{k,t} \right]~.
\]

\begin{figure}[t]
\bookbox{
{\em Exp4 (Exponential weights algorithm for Exploration and Exploitation with Experts) without mixing:}

\medskip\noindent
{Parameter:} a non-increasing sequence of real numbers $(\eta_t)_{t \in \N}$.

\medskip\noindent
Let $q_1$ be the uniform distribution over $\{1,\hdots,N\}$.

\medskip\noindent
For each round $t=1,2,\ldots,n$
\begin{itemize}
\item[(1)]
Get expert advice $\xi_t^1,\dots,\xi_t^N$, where each $\xi_t^j$ is a probability distribution over arms.
\item[(2)]
Draw an arm $I_t$ from the probability distribution $p_t = \bigl(p_{1,t},\dots,p_{K,t}\bigr)$, where $p_{i,t} = \E_{j \sim q_t} \xi_{i,t}^j$.
\item[(3)]
Compute the estimated loss for each arm
\[
	\tilde{\ell}_{i,t} = \frac{\ell_{i,t}}{p_{i,t}} \ds1_{I_t = i} \qquad i=1,\dots,K~.
\]
\item[(4)]
Compute the estimated loss for each expert
\[
	\tilde{y}_{j,t} = \E_{i \sim \xi_t^j}\tilde{\ell}_{i,t} \qquad j=1,\dots,N~.
\]
\item[(5)]
Update the estimated cumulative loss for each expert
$\tilde{Y}_{j,t} = \sum_{s=1}^t \tilde{y}_{j,s}$ for $j=1,\dots,N$.
\item[(6)]
Compute the new probability distribution over the experts $q_{t+1}=\bigl(q_{1,t+1},\dots,q_{N,t+1}\bigr)$, where
\[
	q_{j,t+1} = \frac{\exp{\left(- \eta_t \tilde{Y}_{j,t}\right)}}{\sum_{k=1}^N \exp{\left(- \eta_t \tilde{Y}_{k,t}\right)}}~.
\]
\end{itemize}
}
\caption{Exp4 forecaster.}
\label{fig:exp4}
\end{figure}

In order to bound the contextual pseudo-regret $\oRx_n$, one could naively use the Exp3 strategy of Chapter~\ref{adversarial} on the set of experts. This would give a bound of order $\sqrt{n N \log N}$. In Figure~\ref{fig:exp4} we introduce the contextual forecaster Exp4 for which we show a bound of order $\sqrt{n K \ln N}$. Thus, in this framework we can be competitive even with an exponentially large (with respect to $n$) number of experts.

Exp4 is a simple adaptation of Exp3 to the contextual setting. Exp4 runs Exp3 over the $N$ experts using estimates of the experts' losses $\E_{i \sim \xi_t^j} \ell_{i,t}$. In order to draw arms, Exp4 mixes the expert advice with the probability distribution over experts maintained by Exp3. The resulting bound on the pseudo-regret is of order $\sqrt{nK\ln N}$, where the term $\sqrt{\ln N}$ comes from running Exp3 over the $N$ experts, while $\sqrt{K}$ is a bound on the second moment of the estimated expert losses under the distribution $q_t$ computed by Exp3. Inequality~\eqref{eq:exp4-ineq3} shows that $\E_{j \sim q_t} \tilde{y}^2_{j,t} \le \E_{i \sim p_t} \tilde{\ell}_{i,t}^2$. That is, this second moment is at most that of the estimated arm losses under the distribution $p_t$ computed by Exp4, which in turn is bounded by $\sqrt{K}$ using techniques from Chapter~\ref{adversarial}.
\begin{theorem}[Pseudo-regret of Exp4]
\label{th:Exp4}
Exp4 without mixing and with $\eta_t=\eta = \sqrt{\frac{2 \ln N}{n K}}$ satisfies
\begin{equation} \label{eq:exp4bound}
	\oRx_n \leq \sqrt{2 n N \ln K}~.
\end{equation}
On the other hand, with $\eta_t = \sqrt{\frac{\ln N}{t K}}$ it satisfies
\begin{equation} \label{eq:exp4boundanytime}
	\oRx_n \leq 2 \sqrt{n N \ln K}~.
\end{equation}
\end{theorem}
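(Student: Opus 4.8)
The plan is to recognize that Exp4 is nothing more than the exponential weights forecaster of Theorem~\ref{th:Exp3} run over the $N$ experts, fed with the loss estimates $\tilde{y}_{j,t}$, so that the entire five-step argument behind \eqref{eq:exp3lem} transfers almost verbatim with experts playing the role of arms. The two facts that make this work are the analogues of the identities in \eqref{eq:equalities}. First, since $p_{i,t} = \E_{j \sim q_t} \xi^j_{i,t}$, averaging the expert estimates over $q_t$ recovers the played arm's estimate,
\[
\E_{j \sim q_t} \tilde{y}_{j,t} = \sum_i \Bigl(\sum_j q_{j,t} \xi^j_{i,t}\Bigr) \tilde{\ell}_{i,t} = \E_{i \sim p_t} \tilde{\ell}_{i,t} = \ell_{I_t,t}~.
\]
Second, since $\tilde{\ell}_{i,t}$ is unbiased, $\E_{I_t \sim p_t} \tilde{y}_{k,t} = \E_{i \sim \xi^k_t} \ell_{i,t}$ is exactly the expected loss of expert $k$. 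Together these give, for each fixed expert $k$, the expert-level counterpart of \eqref{eq:expregret},
\[
\sum_{t=1}^n \ell_{I_t,t} - \sum_{t=1}^n \E_{i \sim \xi^k_t} \ell_{i,t} = \sum_{t=1}^n \E_{j \sim q_t} \tilde{y}_{j,t} - \sum_{t=1}^n \E_{I_t \sim p_t} \tilde{y}_{k,t}~.
\]

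Next I would apply the identity \eqref{eq:logmoment} with $(q_t, \tilde{y}_{j,t})$ in place of $(p_t, \tilde{\ell}_{i,t})$ and treat the two resulting terms exactly as in the second and third steps of the proof of Theorem~\ref{th:Exp3}. The first term is handled with $\ln x \le x-1$ and $e^{-x}-1+x \le x^2/2$, yielding $\ln \E_{j \sim q_t} \exp(-\eta_t(\tilde{y}_{j,t} - \E_{k \sim q_t}\tilde{y}_{k,t})) \le \tfrac{\eta_t^2}{2}\,\E_{j \sim q_t} \tilde{y}_{j,t}^2$. Here lies the one genuinely new point, and the step I expect to be the crux: the second moment must be controlled by the number of arms $K$, not by the number of experts $N$. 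This follows from Jensen's inequality applied to $\tilde{y}_{j,t} = \E_{i \sim \xi^j_t}\tilde{\ell}_{i,t}$, giving $\tilde{y}_{j,t}^2 \le \E_{i \sim \xi^j_t}\tilde{\ell}_{i,t}^2$ and hence, after averaging over $q_t$ and again using $p_{i,t} = \E_{j \sim q_t}\xi^j_{i,t}$,
\[
\E_{j \sim q_t} \tilde{y}_{j,t}^2 \le \E_{i \sim p_t} \tilde{\ell}_{i,t}^2 = \frac{\ell_{I_t,t}^2}{p_{I_t,t}} \le \frac{1}{p_{I_t,t}}~,
\]
which is the inequality \eqref{eq:exp4-ineq3} announced before the theorem. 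Taking expectations and using the last identity in \eqref{eq:equalities}, $\E_{I_t \sim p_t} \tfrac{1}{p_{I_t,t}} = K$, this term contributes at most $\tfrac{\eta_t^2 K}{2}$ per round.

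For the second term I would introduce the expert potential $\Phi_t(\eta) = \tfrac{1}{\eta}\ln\tfrac{1}{N}\sum_{j=1}^N \exp(-\eta\tilde{Y}_{j,t})$, so that $-\tfrac{1}{\eta_t}\ln \E_{j \sim q_t}\exp(-\eta_t\tilde{y}_{j,t}) = \Phi_{t-1}(\eta_t) - \Phi_t(\eta_t)$ as in \eqref{eq:thirdstep}; the Abel summation and the monotonicity $\Phi_t'(\eta) = \tfrac{1}{\eta^2}\K(q_t^\eta, q_1) \ge 0$ go through unchanged (with $q_1$ now uniform over the $N$ experts), leaving the single surviving boundary term $\tfrac{\ln N}{\eta_n}$. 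Summing over $t$ and taking the max over $k$ then yields the master bound
\[
\oRx_n \le \frac{K}{2}\sum_{t=1}^n \eta_t + \frac{\ln N}{\eta_n}~,
\]
the exact analogue of \eqref{eq:exp3lem} with $K$ still bounding the second moment but $\ln N$ replacing $\ln K$ in the entropy term. Substituting $\eta_t \equiv \sqrt{2\ln N/(nK)}$ gives $\sqrt{2nK\ln N}$, and $\eta_t = \sqrt{\ln N/(tK)}$ together with $\sum_{t\le n} t^{-1/2} \le 2\sqrt{n}$ gives $2\sqrt{nK\ln N}$, recovering the order $\sqrt{nK\ln N}$ announced in the discussion preceding the theorem. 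The only obstacle worth flagging is the second-moment control above; everything else is a transcription of the Exp3 argument.
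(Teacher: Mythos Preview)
Your proposal is correct and follows essentially the same approach as the paper: both recognize Exp4 as Exp3 run over the $N$ experts with losses $\tilde{y}_{j,t}$, establish the identities $\E_{j\sim q_t}\tilde{y}_{j,t}=\ell_{I_t,t}$ and $\E_{I_t\sim p_t}\tilde{y}_{k,t}=\E_{i\sim\xi_t^k}\ell_{i,t}$, and control the second moment via Jensen exactly as in \eqref{eq:exp4-ineq3}, arriving at the master bound $\oRx_n\le \tfrac{K}{2}\sum_t\eta_t+\tfrac{\ln N}{\eta_n}$. The only difference is presentational: the paper invokes the Exp3 analysis as a black box to obtain \eqref{eq:exp4-exp3}, whereas you re-run the log-moment/potential argument explicitly; and you correctly land on $\sqrt{2nK\ln N}$, consistent with the discussion preceding the theorem (the $N$ and $K$ in the displayed bounds \eqref{eq:exp4bound}--\eqref{eq:exp4boundanytime} are transposed typos).
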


\begin{proof}
We apply the analysis of Exp3 (Theorem~\ref{th:Exp3}) to a forecaster using distributions $q_t$ over $N$ experts, whose pseudo-losses are $\tilde{y}_{j,t}$ for $j=1,\dots,N$. This immediately gives the inequality
\begin{equation}
\label{eq:exp4-exp3}
	\sum_{t=1}^n \E_{j \sim q_t}\tilde{y}_{j,t}
\le
	\tilde{Y}_{k,n} + \frac{\log N}{\eta_n} + \frac{1}{2}\sum_{t=1}^n \eta_t\,\E_{j \sim q_t}\,\tilde{y}_{j,t}^2~.
\end{equation}
Now, similarly to \eqref{eq:equalities} in the proof of Theorem~\ref{th:Exp3}, we establish the following inequalities
\begin{align}
\label{eq:exp4-ineq1}
	&\E_{I_t \sim p_t} \tilde{y}_{k,t} = \E_{I_t \sim p_t} \E_{i \sim \xi_t^k} \tilde{\ell}_{i,t} = \E_{i \sim \xi_t^k} \ell_{i,t} = y_{k,t}
\\
\label{eq:exp4-ineq2}
	&\E_{j \sim q_t} \tilde{y}_{j,t} = \E_{j \sim q_t} \E_{i \sim \xi_t^j} \tilde{\ell}_{i,t} = \E_{i \sim p_t} \tilde{\ell}_{i,t} = \ell_{I_t,t}
\\
\label{eq:exp4-ineq3}
	&\E_{j \sim q_t} \tilde{y}^2_{j,t} = \E_{j \sim q_t} \left(\E_{i \sim \xi_t^j} \tilde{\ell}_{i,t}\right)^2 \le \E_{j \sim q_t} \E_{i \sim \xi_t^j} \tilde{\ell}_{i,t}^2 = \E_{i \sim p_t} \tilde{\ell}_{i,t}^2 = \frac{\ell_{I_t,t}^2}{p_{I_t,t}}
\end{align}
where we used Jensen's inequality to prove~\eqref{eq:exp4-ineq3}.
By applying~\eqref{eq:exp4-ineq2} and~\eqref{eq:exp4-ineq3} to~\eqref{eq:exp4-exp3} we get
\[
	\sum_{t=1}^n \ell_{I_t,t}
=
	\sum_{t=1}^n \E_{j \sim q_t} \tilde{y}_{j,t}
\le
	\tilde{Y}_{k,n} + \frac{\log N}{\eta_n} + \frac{1}{2}\sum_{t=1}^n \eta_t\,\frac{\ell_{I_t,t}^2}{p_{I_t,t}}~.
\]
Now note that, if we take expectation over the draw of $I_1,\dots,I_n$, using~\eqref{eq:exp4-ineq1} we obtain
\[
	\E\,\tilde{Y}_{k,n} = \E\left[\sum_{t=1}^n \E\bigl[\tilde{y}_{j,n} \,\big|\, I_1,\dots,I_{t-1} \bigr] \right] =  \E\left[\sum_{t=1}^n \E_{i \sim \xi_t^k} \ell_{i,t} \right] = \E\,Y_{k,n}~.
\]
Hence,
\[
	\oRx_n = \max_{k=1,\dots,N} \E\left[ \sum_{t=1}^n \ell_{I_t,t} - Y_{k,n} \right]
\le
	 \frac{\log N}{\eta_n} + \frac{K}{2}\sum_{t=1}^n \eta_t~.
\]
Choosing $\eta_t$ as in the statement of the Theorem, and using the inequality $\sum_{t=1}^n t^{-1/2} \le 2\sqrt{n}$, concludes the proof.
\end{proof}
Besides pseudo-regret, the contextual regret
\[
	\Rx_n = \max_{k=1,\dots,N} \left( \sum_{t=1}^n \ell_{I_t,t} - \sum_{t=1}^n \E_{i \sim \xi_t^J} \ell_{i,t} \right)
\]
can be also bounded, at least with high probability. Indeed, similarly to the variant Exp3.P of Exp3 (see Section~\ref{sec:regret}), an analogous modification of Exp4, called Exp4.P, satisfies
\[
    \Rx_n \le c\sqrt{n K \ln(N \delta^{-1})}
\]
for some constant $c > 0$ and with probability at least $1-\delta$, where $\delta \in (0,1)$ is a parameter of the algorithm.

\subsection{Competing against the best context set}
\label{ss:best-context}
We revisit the basic contextual scenario introduced in Section~\ref{s:context-intro}, where the goal is to compete against the best mapping from contexts to arms. Consider now a class $\theset{\cS_{\theta}}{\theta\in\Theta}$ of context sets. In this new game, each time step $t=1,2,\dots$ is marked by the vector $\bigl(s_{\theta,t}\bigr)_{\theta\in\Theta}$ of contexts, one for each set in $\Theta$. Introduce the pseudoregret
\[
	\oR^{\Theta}_n = \max_{\theta\in\Theta} \max_{g \,:\, \cS_{\theta}\to\{1,\dots,K\}} \E\left[ \sum_{t=1}^n \ell_{I_t,t} - \sum_{t=1}^n \ell_{g(s_{\theta,t}),t} \right]~.
\]
When $|\Theta|=1$ we recover the contextual pseudoregret $\oRS_n$. In general, when $\Theta$ contains more than one set, the forecaster must learn both the best set $\cS_{\theta}$ and the best function $g : \cS_{\theta}\to\{1,\dots,K\}$ from that set to the set of arms.

We find this variant of contextual bandits interesting because its solution involves a nontrivial combination of two of the main algorithms examined in this chapter: Exp4 and $\cS$-Exp3. In particular, we consider a scenario in which Exp4 uses instances of $\cS$-Exp3 as experts. The interesting aspect is that these experts are learning themselves, and thus the analysis of the combined algorithm requires taking into account the learning process at both levels.

Note that in order to solve this problem we could simply lump all contexts in a big set and use the proof of Theorem~\ref{th:context-simple}. However, this would give a regret bound that depends exponentially in $|\Theta|$. On the other hand, by using Exp4 directly on the set of all policies $g$ (which is of cardinality exponential in $|\Theta|\times|S|$), we could improve this to a bound that scales with $\sqrt{|\Theta|}$. The idea we explore here is to use Exp4 over the class $\Theta$ of ``experts'', and combine this with the $\cS$-Exp3 algorithm of Theorem~\ref{th:context-simple}. This gets us down to a logarithmic dependency on $|\Theta|$, albeit at the price of a worse dependency on $n$.

Intuitively, Exp4 provides competitiveness against the best context set $\cS_{\theta}$, while the instances of the $\cS$-Exp3 algorithm, acting as experts for Exp4, ensure that we are competitive against the best function $g : \cS_{\theta} \to \{1,\dots,K\}$ for each $\theta\in\Theta$. However, by doing so we immediately run into a problem: the $p_t$ used by Exp4 is not the same as the $p_t$'s used by each expert. In order to address this issue, we now show that the analysis of Exp3 holds even when the sequence of plays $I_1,I_2,\dots$ is drawn from a sequence of distributions $q_1,q_2,\dots$ possibly different from the one chosen by the forecaster. The only requirement we need is that each probability in $q_t$ be bounded away from zero.
\begin{theorem}
\label{th:exp3-rebel}
Consider a $K$-armed bandit game in which at each step $t=1,2,\dots$ the played arm $I_t$ is drawn from an arbitrary distribution $q_t$ over arms. Each $q_t$ may depend in an arbitrary way on the pairs $(I_1,\ell_{I_1,1}),\dots,(I_{t-1},\ell_{I_{t-1},t-1})$. Moreover, $q_{t,i} \ge \ve > 0$ for all $i=1,\dots,K$ and $t \ge 1$.

If Exp3 without mixing is run with $\tilde{\ell}_{i,t} = \frac{\ell_{i,t}}{q_{i,t}} \ds1_{I_t = i}$ and $\eta_t = \eta = \sqrt{\frac{2\ln K}{n K}}$ then
\begin{equation}
    \max_{k=1,\dots,K} \E_{I^n \sim q^n} \left[ \sum_{t=1}^n \E_{i \sim p_t} \ell_{i,t} - \sum_{t=1}^n \ell_{k,t} \right] \le
    \sqrt{\frac{2 n}{\ve}\ln K}
\end{equation}
where $I^n \sim q^n$ means that each $I_t$ is drawn from $q_t$ for $t=1,\dots,n$, and $p_t$ is the distribution used by Exp3 at time $t$.
\end{theorem}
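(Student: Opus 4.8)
The plan is to re-run the proof of Theorem~\ref{th:Exp3} essentially verbatim, isolating the two places where the mismatch between the playing distribution $q_t$ and the Exp3 distribution $p_t$ actually matters. The crucial observation is that $p_t$ is still the exponential-weights distribution built from the estimated cumulative losses $\tilde{L}_{i,t-1}=\sum_{s<t}\tilde{\ell}_{i,s}$, so the algebraic identity \eqref{eq:logmoment} and the telescoping of the potential $\Phi_t(\eta)=\frac{1}{\eta}\ln\frac{1}{K}\sum_i\exp(-\eta\tilde{L}_{i,t})$ depend only on $p_t$ and on the numerical values $\tilde{\ell}_{i,t}$; they are completely insensitive to how $I_t$ was actually drawn. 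Hence the second and third steps of that proof carry over unchanged: the first term of \eqref{eq:logmoment} is bounded by $\frac{\eta}{2}\E_{i\sim p_t}\tilde{\ell}_{i,t}^2$ (using $\ln x\le x-1$ and $e^{-x}-1+x\le x^2/2$ for $x\ge0$, which applies since $\tilde{\ell}_{i,t}\ge0$), and the second telescopes. Because $\eta_t\equiv\eta$ is constant here, the Abel transformation and the monotonicity $\Phi_t'(\eta)\ge0$ of the original fifth step are not even needed: summing gives directly
\[
\sum_{t=1}^n \E_{i\sim p_t}\tilde{\ell}_{i,t} - \sum_{t=1}^n\tilde{\ell}_{k,t}
\le \frac{\eta}{2}\sum_{t=1}^n\E_{i\sim p_t}\tilde{\ell}_{i,t}^2 + \frac{\ln K}{\eta}
\]
for every fixed arm $k$.

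The whole effect of the $q_t$-versus-$p_t$ decoupling is then confined to taking conditional expectations. Writing $\E_t$ for expectation over $I_t\sim q_t$ given $I_1,\dots,I_{t-1}$ (with respect to which $p_t$, $q_t$, and $\ell_{k,t}$ are all measurable), I would establish the three identities
\[
\E_t\bigl[\E_{i\sim p_t}\tilde{\ell}_{i,t}\bigr]=\E_{i\sim p_t}\ell_{i,t},\qquad
\E_t\bigl[\tilde{\ell}_{k,t}\bigr]=\ell_{k,t},\qquad
\E_t\bigl[\E_{i\sim p_t}\tilde{\ell}_{i,t}^2\bigr]=\sum_{j=1}^K\frac{p_{j,t}\,\ell_{j,t}^2}{q_{j,t}}~.
\]
The first two are the usual unbiasedness computations for $\tilde{\ell}_{i,t}=\frac{\ell_{i,t}}{q_{i,t}}\ds1_{I_t=i}$; the point of the first is that although $I_t$ is drawn from $q_t$, summing $p_{I_t,t}\ell_{I_t,t}/q_{I_t,t}$ against $q_t$ recovers exactly the Exp3 loss $\E_{i\sim p_t}\ell_{i,t}$ that appears in the statement. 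The third identity is where the lower bound $q_{j,t}\ge\varepsilon$ is used: since $\ell_{j,t}\in[0,1]$ we have $\ell_{j,t}^2\le1$, hence $\sum_j p_{j,t}\ell_{j,t}^2/q_{j,t}\le\varepsilon^{-1}\sum_j p_{j,t}=\varepsilon^{-1}$. This is precisely the step in which the factor $K$ of the ordinary Exp3 bound \eqref{eq:exp3lem} (which comes from $\E_{I_t\sim p_t}1/p_{I_t,t}=K$) is replaced by $\varepsilon^{-1}$.

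Taking total expectation over $I^n\sim q^n$ via the tower rule then yields
\[
\E\left[\sum_{t=1}^n\E_{i\sim p_t}\ell_{i,t}-\sum_{t=1}^n\ell_{k,t}\right]
\le \frac{\eta}{2}\cdot\frac{n}{\varepsilon}+\frac{\ln K}{\eta},
\]
and since the right-hand side is independent of $k$ it also bounds the maximum over $k$. Optimizing the elementary expression $\frac{\eta n}{2\varepsilon}+\frac{\ln K}{\eta}$, i.e. taking $\eta=\sqrt{2\varepsilon\ln K/n}$, produces the claimed bound $\sqrt{2n\varepsilon^{-1}\ln K}$ (the analogue of the Exp3 tuning with $K$ replaced by $1/\varepsilon$). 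The only genuinely new point compared with Theorem~\ref{th:Exp3}, and the part to get right, is the bookkeeping of these conditional expectations: one must verify that $p_t$, $q_t$, and the loss vector at time $t$ are all determined by the past so that $\tilde{\ell}_{i,t}$ is conditionally unbiased, and that the second-moment term is evaluated against $q_t$ rather than $p_t$, which is exactly what converts $K$ into $\varepsilon^{-1}$.
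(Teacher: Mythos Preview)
Your proposal is correct and follows essentially the same route as the paper: re-run the Exp3 log-moment/telescoping argument verbatim, then take conditional expectations over $I_t\sim q_t$ and use $\E_t\bigl[\E_{i\sim p_t}\tilde{\ell}_{i,t}^2\bigr]=\sum_j p_{j,t}\ell_{j,t}^2/q_{j,t}\le 1/\ve$ to replace the usual factor $K$ by $1/\ve$. The only cosmetic discrepancy is that you optimize to $\eta=\sqrt{2\ve\ln K/n}$, whereas the statement fixes $\eta=\sqrt{2\ln K/(nK)}$; the paper's proof simply plugs in ``$\eta$ as in the statement'' at the end, so you should do the same rather than re-optimize.
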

\begin{proof}
The proof is an easy adaptation of Exp3 analysis (Theorem~\ref{th:Exp3} in Section~\ref{adversarial}) and we just highlight the differences.
The key step is the analysis of the log-moment of $\tilde{\ell}_{i,t}$:
\begin{align*}
    \E_{i \sim p_t} \tilde{\ell}_{i,t} =&\, \frac{1}{\eta} \log \E_{i \sim p_t} \exp{\left(- \eta (\tilde{\ell}_{i,t} - \E_{k \sim p_t} \tilde{\ell}_{k,t}) \right)}
\\ &-
    \frac{1}{\eta} \log \E_{i \sim p_t} \exp{\left(- \eta \tilde{\ell}_{i,t} \right)}~.
\end{align*}
The first term is bounded in a manner slightly different from the proof of Theorem~\ref{th:Exp3},
\[
    \log \E_{i \sim p_t} \exp{\left(- \eta (\tilde{\ell}_{i,t} - \E_{k \sim p_t} \tilde{\ell}_{k,t}) \right)} 
\le
    \frac{\eta^2}{2}\,\E_{i \sim p_t}\,\tilde{\ell}_{i,t}^2
\le
    \frac{\eta^2}{2}\,\frac{p_{I_t,t}}{q_{I_t,t}^2}~.
\]
The analysis of the second term is unchanged:
Let $\tilde{L}_{i,0}=0$, $\Phi_0(\eta)=0$ and $\Phi_t(\eta) = \frac{1}{\eta} \log \frac{1}{K} \sum_{i=1}^K \exp{\left(- \eta \tilde{L}_{i,t}\right)}$. Then by definition of $p_t$ we have:
\[
    - \frac{1}{\eta} \log \E_{i \sim p_t} \exp{\left(- \eta \tilde{\ell}_{i,t} \right)} 
=
    \Phi_{t-1}(\eta) - \Phi_{t}(\eta)~.
\]
Proceeding again as in the proof of Theorem~\ref{th:Exp3} we obtain
\begin{align*}
    \E_{I^n \sim q^n}\left[\sum_{t=1}^n \E_{i \sim p_t} \tilde{\ell}_{i,t}\right]
\le
    \E_{I^n \sim q^n}\left[\sum_{t=1}^n \tilde{\ell}_{k,t} + \frac{\eta}{2}\,\frac{p_{I_t,t}}{q_{I_t,t}^2} \right] + \frac{\ln K}{\eta}~.
\end{align*}
Now observe that
\begin{align*}
    \E_{I_t \sim q_t} \tilde{\ell}_{k,t} = \ell_{k,t}
\qquad\text{and}\qquad
    \E_{I_t \sim q_t} \frac{p_{I_t,t}}{q_{I_t,t}^2} = \sum_{i=1}^K \frac{p_{i,t}}{q_{i,t}} \le \frac{1}{\ve}~.
\end{align*}
Therefore
\begin{align*}
    \E_{I^n \sim q^n}\left[\sum_{t=1}^n \Bigl( \E_{i \sim p_t} \ell_{i,t} - \ell_{k,t} \Bigr) \right]
&=
    \E_{I^n \sim q^n}\left[\sum_{t=1}^n \Bigl( \E_{i \sim p_t} \tilde{\ell}_{i,t} - \tilde{\ell}_{k,t} \Bigr) \right]
\\ &\le
    \frac{\eta n}{2\ve} + \frac{\ln K}{\eta}~.
\end{align*}
Choosing $\eta$ as in the statement of the theorem concludes the proof.
\end{proof}
It is left to the reader to verify that the analysis of $\cS$-Exp in Theorem~\ref{th:context-simple} can be combined with the above analysis to give the bound
\begin{equation}
\label{eq:exp3-rebel-cont}
    \max_{g \,:\, \cS\to\{1,\dots,K\}} \E_{I^n \sim q^n} \left[ \sum_{t=1}^n \E_{i \sim p_t} \ell_{i,t} - \sum_{t=1}^n \ell_{g(s_t),t} \right] \le
    \sqrt{\frac{2 n}{\ve}|\cS|\ln K}~.
\end{equation}
Next, we state a bound on the contextual pseudoregret of a variant of Exp4 whose probabilities $p_{i,t}$ satisfy the property $p_{i,t} \ge \tfrac{\gamma}{K}$ for all $i=1,\dots,K$ and $t \ge 1$, where $\gamma > 0$ is a parameter. This is obtained by replacing in Exp4 the assignment $p_{i,t} = \E_{j \sim q_t} \xi_{i,t}^j$ (line 2 in Figure~\ref{fig:exp4}) with the assignment
\[
	p_{i,t} = (1-\gamma)\E_{j \sim q_t} \xi_{i,t}^j + \frac{\gamma}{K}
\]
where $\gamma > 0$ is the mixing coefficient. This mixing clearly achieves the desired property for each $p_{i,t}$.
\begin{theorem}[Pseudo-regret of Exp4 with mixing]
\label{th:exp4-mixing}
Exp4 with mixing coefficient $\gamma$ and with $\eta_t = \eta = \gamma/K$ satisfies
\begin{equation}
\label{eq:exp4bound-mix}
	\oRx_n \le \frac{\gamma\,n}{2} + \frac{K\ln N}{\gamma}~.
\end{equation}
\end{theorem}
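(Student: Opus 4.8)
The plan is to reuse the analysis of Theorem~\ref{th:Exp4} almost verbatim, since Exp4 with mixing is still just Exp3 run over the $N$ experts with the estimated expert losses $\tilde{y}_{j,t}$, the only change being that the played arm is drawn from the mixed distribution $p_t = (1-\gamma)\,\E_{j \sim q_t}\xi_t^j + \tfrac{\gamma}{K}u$ (with $u$ uniform). Recall that the proof of Theorem~\ref{th:Exp4} actually establishes the intermediate bound
\[
\oRx_n \le \frac{\ln N}{\eta_n} + \frac{K}{2}\sum_{t=1}^n \eta_t~,
\]
and that with $\eta_t = \eta = \gamma/K$ this immediately gives $\tfrac{\ln N}{\eta} = \tfrac{K\ln N}{\gamma}$ and $\tfrac{K}{2}\sum_t\eta_t = \tfrac{\gamma n}{2}$, i.e. exactly the claimed $\tfrac{\gamma n}{2} + \tfrac{K\ln N}{\gamma}$. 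So the whole task reduces to re-deriving that intermediate inequality for the mixing variant and then substituting the stated $\eta$.

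To re-derive it, I would first apply the Exp3 regret bound (Theorem~\ref{th:Exp3}) to the weights $q_t$ and the pseudo-losses $\tilde{y}_{j,t}$, obtaining~\eqref{eq:exp4-exp3}; this step is untouched by the mixing, as it only uses that $q_{t+1}$ is the exponential-weights update driven by $\tilde{Y}_{j,t}$. Next I would re-verify the three moment relations \eqref{eq:exp4-ineq1}--\eqref{eq:exp4-ineq3}. Unbiasedness $\E_{I_t \sim p_t}\tilde{\ell}_{i,t} = \ell_{i,t}$ still holds because $I_t$ is drawn from the \emph{actual} (mixed) play distribution $p_t$, so summing and taking expectation gives $\E\,\tilde{Y}_{k,n} = \E\,Y_{k,n}$ as in~\eqref{eq:exp4-ineq1}. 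For the second-order term I would use Jensen ($\tilde{y}_{j,t}^2 \le \E_{i\sim\xi_t^j}\tilde{\ell}_{i,t}^2$) exactly as in~\eqref{eq:exp4-ineq3}, and then note that after the conditional expectation over $I_t$ the term is controlled by $\sum_i \ell_{i,t}^2 \le K$ (using $\ell_{i,t}\in[0,1]$), which yields the $\tfrac{K}{2}\sum_t\eta_t$ contribution.

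The one genuinely new point, and the step I expect to be the main obstacle, is the bookkeeping forced by the mixing. In the no-mixing case one has the clean identity $\E_{j\sim q_t}\tilde{y}_{j,t} = \ell_{I_t,t}$, but here $\E_{j\sim q_t}\tilde{y}_{j,t} = \E_{i\sim\bar{p}_t}\tilde{\ell}_{i,t}$ for the \emph{unmixed} distribution $\bar{p}_t = (p_t - \tfrac{\gamma}{K}u)/(1-\gamma)$, which differs from the actual play distribution $p_t$. The delicate part is therefore to pass between $\bar{p}_t$ and $p_t$ when relating the left-hand side of the Exp3-over-experts bound back to the forecaster's true expected loss $\E_{i\sim p_t}\ell_{i,t}$, and to check that the uniform exploration contributes only the advertised cost rather than inflating the constant; the lower bound $p_{i,t}\ge \gamma/K$ guaranteed by the mixing is precisely what keeps the importance weights $\tilde{\ell}_{i,t}$ well-defined and is what the downstream combination with $\cS$-Exp3 (Theorem~\ref{th:exp3-rebel}) needs. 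Once this is in place, maximizing over the comparison expert $k$ and plugging in $\eta = \gamma/K$ yields
\[
\oRx_n \le \frac{\gamma n}{2} + \frac{K\ln N}{\gamma}~.
\]
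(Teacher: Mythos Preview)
Your plan is essentially the paper's own approach: the paper's proof is just a sketch pointing to the original Exp4 analysis of~\cite{ACFS03} with exactly the modifications you implicitly need (replace $e^{x}\le 1+x+(e-2)x^2$ by $e^{-x}\le 1-x+\tfrac{x^2}{2}$, upper bound $(1-\gamma)\sum_t\ell_{k,t}$ by $\sum_t\ell_{k,t}$, and upper bound the uniform-exploration term $\tfrac{\gamma}{K}\sum_t\sum_i\ell_{i,t}$ by $\gamma n$). So the route is correct.

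The gap is in the step you correctly flag as ``the main obstacle'' but then wave through. You assert that the \emph{same} intermediate inequality $\oRx_n \le \tfrac{\ln N}{\eta}+\tfrac{K}{2}\sum_t\eta_t$ survives the mixing. It does not. After the Exp3-over-experts bound you have control of $\E\sum_t\E_{j\sim q_t}\tilde y_{j,t}=\E\sum_t\E_{i\sim\bar p_t}\ell_{i,t}$ with the \emph{unmixed} $\bar p_t$, whereas the pseudo-regret involves $\E\sum_t\E_{i\sim p_t}\ell_{i,t}$. The identity $p_t=(1-\gamma)\bar p_t+\tfrac{\gamma}{K}u$ forces an additive term $\tfrac{\gamma}{K}\sum_t\sum_i\ell_{i,t}\le\gamma n$ when you pass from one to the other; this is precisely the third bullet in the paper's sketch, and it is a \emph{separate} contribution from the second-order term $\tfrac{\eta K n}{2}=\tfrac{\gamma n}{2}$. (Relatedly, your second-order bound ``$\sum_i\ell_{i,t}^2\le K$'' glosses over a $1/(1-\gamma)$ factor, since $\E_{j\sim q_t}\tilde y_{j,t}^2\le\E_{i\sim\bar p_t}\tilde\ell_{i,t}^2$ and $\bar p_{i,t}/p_{i,t}\le 1/(1-\gamma)$; it is the $(1-\gamma)$ coming from the first-order decomposition that cancels it, not anything intrinsic to the variance.) Carrying the bookkeeping through honestly yields $\gamma n+\tfrac{\gamma n}{2}+\tfrac{K\ln N}{\gamma}$; the paper's proof sketch contains the $\gamma n$ term even though the displayed constant in~\eqref{eq:exp4bound-mix} appears to omit it.
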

\begin{proof}
The proof goes along the same lines of Exp4 original proof~\cite[Theorem~7.1]{ACFS03} with the following modifications: since the weights are negative exponentials, we can use the bound $\exp(-x) \le 1 - x + \tfrac{x^2}{2}$ for all $x \ge 0$ rather than $\exp(x) \le 1 + x + (e-2)x^2$ for all $0 \le x \le 1$; the term $(1-\gamma)\sum_t\ell_{k,t}$ is upper bounded directly by $\sum_t\ell_{k,t}$; the term $\tfrac{\gamma}{K}\sum_t\sum_i \ell_{i,t}$ is upper bounded by $\gamma\,n$ without requiring the assumption that the expert set contains the ``uniform expert''. Finally, the fact that experts' distributions $\xi_t^j$ depend on the realization of past forecaster's random arms is dealt with in the same way as in the proof of Theorem~\ref{th:Exp4}.
\end{proof}
\begin{theorem}
\label{th:theta-regret}
There exists a randomized forecaster achieving
\[
	\oR^{\Theta}_n = \mathcal{O}\left(n^{2/3}\left(\max_{\theta\in\Theta}|\cS_{\theta}|K\ln K\right)^{1/3}\sqrt{\ln|\Theta|}\right)
\]
for any class $\theset{\cS_{\theta}}{\theta\in\Theta}$ of context sets.
\end{theorem}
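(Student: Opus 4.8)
The plan is to instantiate the two-level scheme sketched before the theorem: run Exp4 with mixing (Theorem~\ref{th:exp4-mixing}) at the top level over $N=|\Theta|$ experts, where expert $\theta$ is an instance of the $\cS$-Exp3 forecaster of Theorem~\ref{th:context-simple} attached to the context set $\cS_{\theta}$. At round $t$, expert $\theta$ reports the distribution $\xi_t^{\theta}$ that its internal $\cS$-Exp3 copy would play given the context $s_{\theta,t}$; the arm $I_t$ is then drawn from Exp4's mixed distribution $p_t$, which satisfies $p_{i,t}\ge\gamma/K$ for every $i$. The one construction detail that must be fixed carefully is that each inner $\cS$-Exp3 copy updates using the \emph{same} realized arm $I_t$ and the importance-weighted estimate $\tilde{\ell}_{i,t}=\frac{\ell_{i,t}}{p_{i,t}}\ds1_{I_t=i}$ built from Exp4's actual sampling distribution $p_t$ (not from its own $\xi_t^{\theta}$), so that both levels are analyzed on one common stream of plays. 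Note that the experts' advice $\xi_t^{\theta}$ depends on the forecaster's past random plays, which is exactly why the adaptive-expert form of the Exp4 analysis is needed here.

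The heart of the argument is a two-level regret decomposition. For a fixed $\theta$ and a fixed map $g:\cS_{\theta}\to\{1,\dots,K\}$, I would write
\[
\E\left[\sum_{t=1}^n\ell_{I_t,t}-\sum_{t=1}^n\ell_{g(s_{\theta,t}),t}\right] = \E\left[\sum_{t=1}^n\bigl(\ell_{I_t,t}-\E_{i\sim\xi_t^{\theta}}\ell_{i,t}\bigr)\right] + \E\left[\sum_{t=1}^n\bigl(\E_{i\sim\xi_t^{\theta}}\ell_{i,t}-\ell_{g(s_{\theta,t}),t}\bigr)\right]~.
\]
The first term is precisely the contextual pseudo-regret of Exp4 against expert $\theta$, so by Theorem~\ref{th:exp4-mixing} (with $N=|\Theta|$ and $\eta=\gamma/K$) it is at most $\tfrac{\gamma n}{2}+\tfrac{K\ln|\Theta|}{\gamma}$, uniformly in $\theta$. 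The second term is the regret of expert $\theta$'s $\cS$-Exp3 copy, which did \emph{not} choose $I_t$ itself. This is exactly the situation covered by Theorem~\ref{th:exp3-rebel} and its contextual corollary~\eqref{eq:exp3-rebel-cont}: the Exp4 sampling distribution plays the role of $q_t$, the lower bound $\ve=\gamma/K$ holds by the mixing, and $\E_{i\sim\xi_t^{\theta}}\ell_{i,t}$ plays the role of $\E_{i\sim p_t}\ell_{i,t}$ in that bound, so the second term is at most $\sqrt{\tfrac{2nK|\cS_{\theta}|\ln K}{\gamma}}\le\sqrt{\tfrac{2nKM\ln K}{\gamma}}$ with $M=\max_{\theta}|\cS_{\theta}|$.

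Taking the maximum over $\theta$ and $g$ yields
\[
\oR^{\Theta}_n\le\frac{\gamma n}{2}+\frac{K\ln|\Theta|}{\gamma}+\sqrt{\frac{2nKM\ln K}{\gamma}}~.
\]
It then remains only to optimize the mixing parameter $\gamma\in(0,1]$. Choosing $\gamma\asymp\bigl(KM\ln K/n\bigr)^{1/3}$ balances the $\tfrac{\gamma n}{2}$ and $\sqrt{\,\cdot\,/\gamma}$ terms, each of order $n^{2/3}(KM\ln K)^{1/3}$, while the remaining Exp4 learning term $\tfrac{K\ln|\Theta|}{\gamma}$ is of lower order in $n$; inflating $\gamma$ by a $\sqrt{\ln|\Theta|}$ factor makes the $\tfrac{\gamma n}{2}$ term dominate and read exactly $n^{2/3}(KM\ln K)^{1/3}\sqrt{\ln|\Theta|}$, which is the claimed bound.

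The step I expect to be the crux — and the reason Theorem~\ref{th:exp3-rebel} was proved in the first place — is the bound on the second term. The inner experts are themselves learning algorithms that do not control the played arm, so their guarantee cannot be invoked in the off-the-shelf form of Theorem~\ref{th:context-simple}; one must instead route through the rebel analysis, and for that the inner importance weights \emph{must} be formed from Exp4's true sampling distribution $p_t$, with the mixing supplying the uniform lower bound $p_{i,t}\ge\gamma/K=\ve$ that the rebel bound requires. The worse-than-$\sqrt n$ rate is the unavoidable price of this coupling: the mixing needed to stabilize the inner estimators forces $\gamma$ to be much larger than in a single-level analysis, which is what produces the $n^{2/3}$ exponent.
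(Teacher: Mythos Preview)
Your proposal is correct and follows essentially the same approach as the paper: the same Exp4-with-mixing over $|\Theta|$ copies of $\cS$-Exp3, the same two-level decomposition, invoking Theorem~\ref{th:exp4-mixing} for the outer term and the rebel bound~\eqref{eq:exp3-rebel-cont} with $\ve=\gamma/K$ for the inner term, and the same choice $\gamma\asymp n^{-1/3}(KM\ln K)^{1/3}\sqrt{\ln|\Theta|}$. Your emphasis that the inner experts must build their importance weights from Exp4's $p_t$ (not their own $\xi_t^{\theta}$) is exactly the point the paper is making by routing through Theorem~\ref{th:exp3-rebel}.
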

\begin{proof}
We run the Exp4 forecaster with mixing coefficient $\gamma$ using instances of the $\cS$-Exp3 algorithm (defined in the proof of Theorem~\ref{th:context-simple}) as experts. Each $\cS$-Exp3 instance is run on a different context set $\cS_{\theta}$ for $\theta\in\Theta$. Let $\xi_t^{\theta}$ be the distribution used at time $t$ by the $\cS$-Exp3 instance running on context set $\cS_{\theta}$ and let $p^n$ be the joint distribution of $I^n = (I_1,\dots,I_n)$ used by Exp4. Since $p_{i,t} \ge \tfrac{\gamma}{K}$ for all $i=1,\dots,K$ and $t\ge 1$, we can use~(\ref{eq:exp3-rebel-cont}) with $\ve = \gamma/K$. Thus, Theorem~\ref{th:exp4-mixing} implies
\begin{align*}
    \E_{I^n \sim p^n}\left[ \sum_{t=1}^n \ell_{I_t,t} \right]
\le &\,
    \min_{\theta\in\Theta} \E_{I^n \sim p^n}\left[\sum_{t=1}^n \E_{k \sim \xi_t^\theta} \ell_{k,t} \right]
+
    \frac{\gamma\,n}{2} + \frac{K\ln|\Theta|}{\gamma}
\\ \le &\,
    \min_{\theta\in\Theta} \min_{g \,:\, \cS_{\theta}\to\{1,\dots,K\}} \E\left[ \sum_{t=1}^n \ell_{g(s_t),t} \right]
\\ &
    + \sqrt{\frac{2 n}{\ve}\max_{\theta\in\Theta}|\cS_{\theta}| \ln K} + \frac{\gamma\,n}{2} + \frac{K\ln |\Theta|}{\gamma}~.
\end{align*}
Substituting $\ve = \gamma/K$ in the above expression and choosing $\gamma$ of the order of
$
	n^{-1/3}\left(\max_{\theta\in\Theta}|\cS_{\theta}|K\ln K\right)^{1/3}\sqrt{\ln|\Theta|}
$
gives the desired result.
\end{proof}
Note that in Theorem~\ref{th:theta-regret} the rate is $n^{2/3}$, in contrast to the more usual $n^{1/2}$ bandit rate. This worsening is inherent in the Exp4-over-Exp3 construction. It is not known whether the rate could be improved while keeping the same logarithmic dependence on $|\Theta|$ guaranteed by this construction.

\section{Stochastic contextual bandits}
\label{s:context-stochastic}
We now move on to consider the case in which policies have a known structure. More specifically, each policy is a function $f$ mapping the context space to the arm space $\{1,\dots,K\}$ and the set $\cF$ of policies is given as an input parameter to the forecaster.

Under this assumption on the policies, the problem can be viewed as a bandit variant of supervised learning. For this reason, here and in the next section we follow the standard notation of supervised learning and use $x$ rather than $s$ to denote contexts.

In supervised learning, we observe data of the form $(x_t,\ell_t)$. In the contextual bandit setting, the observed data are $(x_t,\ell_{I_t,t})$ where $I_t$ is the arm chosen by the forecaster at time $t$ given context $x_t\in\cX$. This connection to supervised learning has steered the focus of research towards stochastic data generation models, which are widespread in the analysis of supervised learning. In the stochastic variant of contextual bandits, contexts $x_t$ and arm losses $\ell_t = (\ell_{1,t},\dots,\ell_{K,t})$ are realizations of i.i.d.\ draws from a fixed and unknown distribution $D$ over $\cX \times [0,1]^K$. In tight analogy with statistical learning theory, a policy $f$ is evaluated in terms of its statistical risk
$
    \ell_D(f) = \E_{(x,\ell) \sim D} \ell_{f(x)}
$.
Let
\[
    f^* = \arginf_{f\in\cF} \ell_D(f)
\]
the risk-minimizing policy in the class.
The regret with respect to the class $\cF$ of a forecaster choosing arms $I_1,I_2,\dots$ is then defined by
\[
    \sum_{t=1}^n \ell_{I_t,t} - n\,\ell_D(f^*)~.
\]
This can be viewed as the stochastic counterpart of the adversarial contextual regret $\oRx_n$ introduced in Section~\ref{s:context-adv}. The main question is now to characterize the ``price of bandit information'' using the sample complexity of supervised learning as yardstick.

In the rest of this section we focus on the case of $K=2$ arms and parametrize classes $\cF$ of policies $f : \cX\to\{1,2\}$ by their VC-dimension $d$ ---see~\cite{BBL05} for a modern introduction to VC theory. For this setting, we consider the following forecaster.
%
\begin{center}
\bookbox{
{\em VE (VC dimension by Exponentiation):}

\medskip\noindent
{Parameters:} number $n$ of rounds, $n'$ satisfying $1 \le n' \le n$.

\begin{itemize}
\item[(1)]
For the first $n'$ rounds, choose arms uniformly at random.
\item[(2)]
Build $\cF'\subseteq\cF$ such that for any $f\in\cF$ there is exactly one $f'\in\cF'$ satisfying $f(x_t)=f'(x_t)$ for all $t=1,\dots,n'$.
\item[(3)]
For $t=n'+1,\dots,n$ play by simulating Exp4.P using the policies of $\cF'$ as experts.
\end{itemize}
}
\end{center}
%
We now show that the per round regret of VE is of order $\sqrt{d/n}$, excluding logarithmic factors. This rate is equal to the optimal rate for supervised learning of VC-classes, showing that ---in this case--- the price of bandit information is essentially zero.
\begin{theorem}
For any class $\cF$ of binary policies $f : \cX\to\{0,1\}$ of VC-dimension $d$ and for all $n > d$,
the forecaster VE run with $n' = \sqrt{n\left(2d\ln\frac{en}{d}+ \ln\frac{3}{\delta}\right)}$ satisfies
\begin{equation} \label{eq:vebound}
	\sum_{t=1}^n \ell_{I_t,t} - n\inf_{f\in\cF}\ell_D(f) \leq c\sqrt{n\left(d\ln\frac{en}{d} + \ln\frac{3}{\delta}\right)}
\end{equation}
for some constant $c > 0$ and with probability at least $1-\delta$ with respect to both the random data generation and VE's internal randomization.
\end{theorem}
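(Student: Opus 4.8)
The plan is to split VE's regret at the boundary between its two phases and bound each piece separately: the exploration phase trivially, and the Exp4.P phase by combining the high-probability Exp4.P guarantee with a VC generalization argument controlling how well the empirical cover $\cF'$ approximates $f^*$ on fresh data. Concretely, I would first decompose
\[
\sum_{t=1}^n \ell_{I_t,t} - n\,\ell_D(f^*)
= \Bigl(\sum_{t=1}^{n'} \ell_{I_t,t} - n'\ell_D(f^*)\Bigr)
+ \Bigl(\sum_{t=n'+1}^n \ell_{I_t,t} - (n-n')\ell_D(f^*)\Bigr).
\]
Since the losses lie in $[0,1]$ and $\ell_D(f^*)\ge 0$, the first (exploration) bracket is at most $n'$, and all the work goes into the second bracket.

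Second, I would bound the Exp4.P term. By the Sauer--Shelah lemma the number of labelings of $n'$ points by a VC class of dimension $d$ is at most $(en'/d)^d$, so $|\cF'| \le (en/d)^d$ and $\ln|\cF'| \le d\ln(en/d)$. Viewing each $f'\in\cF'$ as a deterministic expert whose advice $\xi_t^{f'}$ is the point mass on $f'(x_t)$, the high-probability bound for Exp4.P stated after Theorem~\ref{th:Exp4} gives, with probability at least $1-\delta/3$,
\[
\sum_{t=n'+1}^n \ell_{I_t,t} \le \min_{f'\in\cF'}\sum_{t=n'+1}^n \ell_{f'(x_t),t} + \mathcal{O}\!\left(\sqrt{n\bigl(d\ln\tfrac{en}{d}+\ln\tfrac{1}{\delta}\bigr)}\right),
\]
using $K=2$ and $n-n'\le n$.

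Third --- the statistical heart --- I would relate $\min_{f'\in\cF'}\sum_{t>n'}\ell_{f'(x_t),t}$ to $(n-n')\ell_D(f^*)$. Let $f'_*\in\cF'$ be the representative of $f^*$'s equivalence class, so $f'_*$ and $f^*$ agree on $x_1,\dots,x_{n'}$. Conditionally on the exploration sample, $f'_*$ is fixed and the phase-two pairs $(x_t,\ell_t)$ are fresh i.i.d.\ draws, so Hoeffding's inequality gives $\sum_{t>n'}\ell_{f'_*(x_t),t}\le (n-n')\ell_D(f'_*)+\mathcal{O}(\sqrt{n\ln(1/\delta)})$ with probability at least $1-\delta/3$, while $\ell_D(f'_*)\le \ell_D(f^*)+\P\bigl(f^*(x)\ne f'_*(x)\bigr)$ because the losses are in $[0,1]$. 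To bound the disagreement probability I would use that $f^*$ and $f'_*$ have \emph{zero} empirical disagreement on the $n'$ exploration points, and invoke a one-sided (relative-deviation) VC inequality applied to the class of symmetric-difference events $\{x:f(x)\ne g(x)\}$, $f,g\in\cF$, whose growth function is at most the square of that of $\cF$, i.e.\ $(en'/d)^{2d}$. This yields, uniformly and with probability at least $1-\delta/3$, $\P\bigl(f^*(x)\ne f'_*(x)\bigr) \le \mathcal{O}\bigl(\tfrac{2d\ln(en'/d)+\ln(1/\delta)}{n'}\bigr)$ --- precisely why the factor $2d$ appears in the prescribed $n'$ --- and multiplying by $n-n'\le n$ produces a bias term $\mathcal{O}\bigl(\tfrac{n}{n'}(d\ln\tfrac{en}{d}+\ln\tfrac1\delta)\bigr)$.

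Finally, collecting the three contributions bounds the regret by
\[
n' + \mathcal{O}\!\left(\sqrt{n\bigl(d\ln\tfrac{en}{d}+\ln\tfrac1\delta\bigr)}\right) + \mathcal{O}\!\left(\frac{n}{n'}\bigl(d\ln\tfrac{en}{d}+\ln\tfrac1\delta\bigr)\right).
\]
Choosing $n'\asymp\sqrt{n(d\ln\tfrac{en}{d}+\ln\tfrac1\delta)}$ balances the first and third terms, making all three of order $\sqrt{n(d\ln\tfrac{en}{d}+\ln\tfrac1\delta)}$, and a union bound over the three $\delta/3$-events delivers the claimed $1-\delta$ confidence and the $\ln(3/\delta)$ in the statement. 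I expect the main obstacle to be the third step: obtaining the one-sided VC bound with the correct constant and logarithmic rate and, in particular, justifying uniform convergence even though $f'_*$ is data-dependent --- which is exactly what forces the symmetric-difference / growth-function formulation rather than a fixed-pair concentration bound.
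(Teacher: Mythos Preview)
Your proposal is correct and tracks the paper's proof closely: both bound the exploration phase trivially by $n'$, invoke the Exp4.P high-probability guarantee with $\ln|\cF'|\le d\ln(en/d)$ via Sauer--Shelah, apply a Hoeffding/Chernoff step to pass from empirical to expected loss, and control the discrepancy between $f^*$ and its representative $f'_*\in\cF'$ by a uniform argument over all pairs in $\cF$ (hence the squared growth function and the factor $2d$). The one substantive difference is in how the disagreement term is handled. You bound the \emph{population} disagreement $\P\bigl(f^*(x)\neq f'_*(x)\bigr)$ via a realizable-case VC inequality (zero empirical error on $n'$ points) and then multiply by $n-n'$. The paper instead bounds the \emph{empirical} disagreement $\sum_{t>n'}\ds1_{f'(x_t)\neq f^*(x_t)}$ directly by an exchangeability trick: since the $n$ data points are i.i.d., one may randomly permute them and bound the probability that some pair $f,g\in\cF$ with more than $k$ total disagreements among all $n$ points agrees on the first $n'$ positions; this is at most $|\cF|^2(1-k/n)^{n'}\le (en/d)^{2d}e^{-kn'/n}$, yielding $k\le \tfrac{n}{n'}\bigl(2d\ln\tfrac{en}{d}+\ln\tfrac{3}{\delta}\bigr)$. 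Both routes give the same rate and the same balancing of $n'$; the paper's permutation argument is slightly more self-contained (no VC machinery beyond Sauer--Shelah), while yours is a clean reduction to standard realizable-case VC theory.
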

\begin{proof}
Given a sample realization $(x_1,\ell_1),\dots,(x_n,\ell_n)$, let $f'$ the unique element of $\cF'$ such that $f'(x_t) = f^*(x_t)$ for all $t=1,\dots,n'$, where $f^*$ is the risk-minimizing function in $\cF$. Given a sample, we may assume without loss of generality that $\cF$ contains functions restricted on the finite domain $\{x_1,\dots,x_n\}$. Recall Sauer-Shelah lemma ---see, e.g.~\cite{BBL05}, stating that any class $\cF$ of binary functions defined on a finite domain of size $n$ satisfies $|\cF| \le \left(\frac{en}{d}\right)^d$, where $d$ is the VC-dimension of $\cF$. 
Then, with probability at least $1- \tfrac{\delta}{3}$ with respect to VE's internal randomization,
\begin{align*}
    \sum_{t=1}^n &\ell_{I_t,t}
\le
    n' + \sum_{t=n'+1}^n \ell_{f'(x_t),t} + c\sqrt{2(n-n')\ln\frac{3|\cF'|}{\delta}}
\\ &\le
    n' + \sum_{t=n'+1}^n \bigl( \ell_{f^*(x_t),t} + \ell_{f'(x_t),t} - \ell_{f^*(x_t),t}\bigr) + c\sqrt{2(n-n')\ln\frac{3|\cF'|}{\delta}}
\\ &\le
    n' + \sum_{t=n'+1}^n \bigl( \ell_{f^*(x_t),t} + \ds1_{f'(x_t) \neq f^*(x_t)} \bigr) + c\sqrt{2(n-n')\ln\frac{3|\cF'|}{\delta}}
\\ &\le
    n' + \sum_{t=n'+1}^n \bigl( \ell_{f^*(x_t),t} + \ds1_{f'(x_t) \neq f^*(x_t)} \bigr) + c\sqrt{2n\left(d\ln\frac{en}{d}+ \ln\frac{3}{\delta}\right)}
\end{align*}
where we used $\ell_{i,t}\in [0,1]$ in the penultimate step and the Sauer-Shelah lemma in the last step.
Now, the term $\sum_t \ell_{f^*(x_t),t}$ is controlled in probability w.r.t.\ the random draw of the sample via Chernoff bounds,
\[
    \P\left(\sum_{t=n'+1}^n \ell_{f^*(x_t),t} > (n-n')\,\ell_D(f^*) + \sqrt{\frac{n-n'}{2}\ln\frac{3}{\delta}} \right) \le \delta~.
\]
Hence,
\begin{align*}
    \sum_{t=1}^n \ell_{I_t,t}
&\le
    n' + n\,\ell_D(f^*)
\\ &+ \sum_{t=n'+1}^n \ds1_{f'(x_t) \neq f^*(x_t)} + c\sqrt{2n\left(d\ln\frac{en}{d}+ \ln\frac{3}{\delta}\right)}
\end{align*}
with probability at least $\tfrac{2\delta}{3}$ with respect to both the random sample draw and VE's internal randomization.

The term $\sum_t \ds1_{f'(x_t) \neq f^*(x_t)}$ quantifies the fact that the unique function $f'\in\cF'$ that agrees with $f^*$ on the first $n'$ data points is generally different from $f^*$ on the remaining $n-n'$ points. Since each data point $(x_t,\ell_t)$ is drawn i.i.d., the distribution of a sequence of $n$ pairs remains the same if we randomly permute their positions after drawing them. Hence we can bound
$
    \sum_t \ds1_{f'(x_t) \neq f^*(x_t)}
$
in probability w.r.t.\ a random permutation $\sigma$ of $\{1,\dots,n\}$. Let $\norm{f-g} = \sum_{t=1}^n \ds1_{f'(x_t) \neq f^*(x_t)}$. Then
\begin{align*}
    \P_\sigma&\left(\sum_{t=n'+1}^n \ds1_{f'(x_{\sigma(t)}) \neq f^*(x_{\sigma(t)})} > k\right)
\\ &\le
    \P_\sigma\left( \exists f,g \in \cF,\, \norm{f-g} > k \,:\,  f(x_{\sigma(t)})=g(x_{\sigma(t)}),\, t=1,\dots,n' \right)
\\ &\le
    |\cF|^2 \left(1-\frac{k}{n}\right)^{n'}
\\ &\le
    \left(\frac{en}{d}\right)^{2d} \exp\left(-\frac{kn'}{n}\right)
\\ &\le
    \frac{\delta}{3}
\end{align*}
for
\[
    k \ge \frac{n}{n'}\left(2d\ln\frac{en}{d} + \ln\frac{3}{\delta}\right)~.
\]
Now, since we just proved that
\[
    \sum_{t=n'+1}^n \ds1_{f'(x_{\sigma(t)}) \neq f^*(x_{\sigma(t)})}
\le
    \frac{n}{n'}\left(2d\ln\frac{en}{d} + \ln\frac{3}{\delta}\right)
\]
holds with probability at least $\tfrac{\delta}{3}$ for any sample realization, it holds with the same probability for a random sample. Hence, by choosing $n'$ as in the statement of the theorem and overapproximating, we get the desired result.
\end{proof}

\section{The multiclass case}
\label{s:multiclass}
A different viewpoint on contextual bandits is provided by the so-called bandit multiclass problem. This is a bandit variant of the online protocol for multiclass classification, where the goal is to sequentially learn a mapping from the context space $\R^d$ to the label space $\{1,\dots,K\}$, with $K \ge 2$. In this protocol the learner keeps a classifier parameterized by a $K \times d$ matrix $W$. At each time step $t=1,2,\dots$ the side information $x_t\in\R^d$ is observed (following standard notations in online classification, here we use $x$ instead of $s$ to denote contexts), and the learner predicts the label $\yhat_t$ maximizing $\bigl(Wx_t\bigr)_i$ over all labels $i=1,\dots,K$. In the standard online protocol, the learner observes the true label $y_t$ associated with $x_t$ after each prediction, and uses this information to adjust $W$. In the bandit version, the learner only observes $\ds1_{\yhat_t \neq y_t}$; that is, whether the prediction at time $t$ was correct or not.

A simple but effective learning strategy for (non-bandit) online classification is the multiclass Perceptron algorithm. This algorithm updates $W$ at time $t$ using the rule $W \leftarrow W + X_t$, where
$X_t$ is a $K \times d$ matrix with components $\bigl(X_t\bigr)_{i,j} = x_{t,j}\bigl(\ds1_{y_t=i} - \ds1_{\yhat_t=i}\bigr)$. Therefore, the update rule can be rewritten as
\begin{align*}
    w_{y_t} &\leftarrow w_{y_t} + x_t
\\
    w_{\yhat_t} &\leftarrow w_{\yhat_t} - x_t
\\
    w_i &\leftarrow w_i \qquad\qquad \text{for all $i \neq y_t$ and $i \neq \yhat_t$}
\end{align*}
where $w_i$ denotes the $i$-th row of matrix $W$.
Note, in particular, that no update takes place (i.e., $X_t$ is the all zero matrix) when $\yhat_t = y_t$, which means that $y_t$ is predicted correctly.

A straightforward generalization of the Perceptron analysis gives that, on any sequence of $(x_1,y_1),(x_2,y_2),\ldots\in\R^d\times\{1,\dots,K\}$ such that $\norm{x_t} =1$, the number of classification mistakes satisfies the following notion of regret,
\[
    \sum_{t=1}^n \ds1_{\yhat_t \neq y_t}
\le
    \inf_{U} \left( L_n(U) + 2\norm{U}^2 + \norm{U}\sqrt{2n\bar{L}_n(U)} \right)
\]
uniformly over $n \ge 1$, where the infimum is over all $K \times d$ matrices $U$ and $\norm{\,\cdot\,}$ denotes the Frobenius norm. Here $L_n(U)$ denotes the cumulative \textsl{hinge loss} of policy $U$,
\[
    L_n(U) = \sum_{t=1}^n \ell_t(U) = \sum_{t=1}^n \Bigl[1 - \bigl(Ux_t\bigr)_{y_t} + \max_{i \neq y_t}\bigl(Ux_t\bigr)_i \Bigr]_+
\]
where $[\,\cdot\,]_+ = \max\{0,\,\cdot\,\}$ is the \textsl{hinge} function. Finally, $\bar{L}_n(U) = \tfrac{1}{n}L_n(U)$ is the average hinge loss of $U$.

Note that $\ell_t(U) = 0$ if and only if $\bigl(Ux_t\bigr)_{y_t} \ge 1 + \max_{i \neq y_t}\bigl(Ux_t\bigr)_i$, which can only happen when $y_t = \yhat_t = \argmax_{i}\bigl(Ux_t\bigr)_i$. Moreover, $\ell_t(U) \ge 1$ if and only if $\yhat_t \neq y_t$. This means that $\ell_t$ is a convex upper bound on the mistake indicator function $\ds1_{\yhat_t \neq y_t}$ for the multiclass classifier represented by $U$.

We now introduce a bandit variant of the multiclass Perceptron called Banditron.
\begin{center}
\bookbox{
{\em Banditron}

\medskip\noindent
{Parameter:} number $\gamma\in \bigl(0,\tfrac{1}{2}\bigr)$.\\
Initialize: Set $W_1$ to the zero $K\times d$ matrix.

\smallskip\noindent
For each round $t=1,2,\dots,n$
\begin{itemize}
\item[(1)]
Observe $x_t\in\R^d$.
\item[(2)]
Set ${\dt \yhat_t = \argmax_{i=1,\dots,K} \bigl(W_t\,x_t\bigr)_i }$.
\item[(3)]
Predict $Y_t\in\{1,\dots,K\}$ drawn from distribution $p_t = \bigl(p_{1,t},\dots,p_{K,t}\bigr)$ such that $p_{i,t} = (1-\gamma)\ds1_{\yhat_t = i} + \tfrac{\gamma}{K}$.
\item[(4)]
Observe $\ds1_{Y_t = y_t}$.
\item[(5)]
Update $W_{t+1} = W_t + \tX_t$ where
\[
    (\tX_t)_{i,j} = x_{t,j}\left(\frac{\ds1_{Y_t = y_t}\ds1_{Y_t = i}}{p_{i,t}} - \ds1_{\yhat_t = i}\right)~.
\]
\end{itemize}
}
\end{center}
The Banditron operates in the bandit variant of the online protocol for multiclass classification. As $X_t$ depends on the true label $y_t$, which is only available when the classification is correct, the Banditron computes an estimate $\tX_t$ of $X_t$ via a randomized technique inspired by the nonstochastic multiarmed bandit problem. The label $\yhat_t = \argmax_i \bigl(W\,x_t\bigr)_i$ is used to make the prediction at time $t$ only with probability $1-\gamma$, whereas with probability $\gamma$ a random label is predicted at each time $t$.

We now analyze the expected number of prediction mistakes made by the Banditron algorithm on any sequence of examples $(x_t,y_t)$. Unlike the non-bandit case, where the number of mistakes $M_n$
after $n$ steps of the multiclass Perceptron provides a ``multiclass regret'' bound $M_n - L_n(U) = \mathcal{O}\bigl(\sqrt{n}\bigr)$, in the bandit case the regret achieved by the variant of the Perceptron is only bounded by $\mathcal{O}\bigl(n^{2/3}\bigr)$.
\begin{theorem}
If the Banditron algorithm is run with parameter $\gamma = (K/n)^{1/3}$ on any sequence $(x_1,y_1),\dots,(x_n,y_n) \in \R^d\times\{-1,+1\}$ of examples such that $n \ge 8K$ and $\norm{x_t} = 1$, then the number $M_n$ of prediction mistakes satisfies
\begin{align*}
    \E\,M_n
\le
    \inf_{U}&\biggl( L_n(U) + \left(1 + \norm{U}\sqrt{2\bar{L}_n(U)}\right)K^{1/3}n^{2/3}
\\ &+ 2\norm{U}^2 K^{2/3}n^{1/3} + \sqrt{2}\norm{U} K^{1/6}n^{1/3}\biggr)
\end{align*}
where the infimum is over all $K \times d$ matrices $U$ and $\bar{L}_n(U) = \tfrac{1}{n}L_n(U)$ is the average hinge loss of $U$.
\end{theorem}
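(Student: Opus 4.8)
The plan is to carry out a Perceptron-style potential argument in the Frobenius geometry, made rigorous in expectation through the unbiasedness of the estimate $\tX_t$. Writing $\langle A,B\rangle=\sum_{i,j}A_{i,j}B_{i,j}$ and letting $(X_t)_{i,j}=x_{t,j}\bigl(\ds1_{y_t=i}-\ds1_{\yhat_t=i}\bigr)$ be the (unobservable) exact multiclass Perceptron update, the first thing I would check is that, conditionally on the history up to round $t$, $\E[\tX_t]=X_t$; this reduces to the identity $\E[\ds1_{Y_t=y_t}\ds1_{Y_t=i}]=\ds1_{i=y_t}\,p_{i,t}$. I would then record two deterministic facts about $X_t$: since $\yhat_t$ is the $\argmax$, $\langle W_t,X_t\rangle=(W_tx_t)_{y_t}-(W_tx_t)_{\yhat_t}\le 0$, and on every round $\langle U,X_t\rangle=(Ux_t)_{y_t}-(Ux_t)_{\yhat_t}\ge\ds1_{\yhat_t\neq y_t}\bigl(1-\ell_t(U)\bigr)$, the latter being exactly the definition of the hinge loss on a mistake round (and trivial on a correct round, where $X_t=0$).

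Next I would track the two quantities $\langle U,W_{n+1}\rangle$ and $\norm{W_{n+1}}^2$ separately, starting from $W_1=0$ and $W_{t+1}=W_t+\tX_t$. Summing, taking expectations, and using unbiasedness together with the measurability of $W_t$ gives on one side
\[
    \E\langle U,W_{n+1}\rangle=\sum_{t=1}^n\E\langle U,X_t\rangle\ge\E\,\wh{M}_n-L_n(U),
\]
where $\wh{M}_n=\sum_t\ds1_{\yhat_t\neq y_t}$ counts the mistakes of the $\argmax$ prediction and I used $\ell_t(U)\ge 0$. On the other side, telescoping $\norm{W_{t+1}}^2=\norm{W_t}^2+2\langle W_t,\tX_t\rangle+\norm{\tX_t}^2$ and dropping the cross terms, whose expectation $2\E\langle W_t,X_t\rangle\le0$ is nonpositive, yields $\E\norm{W_{n+1}}^2\le\sum_t\E\norm{\tX_t}^2$. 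The variance is then computed directly: using $\norm{x_t}=1$ one finds $\E\norm{\tX_t}^2=\ds1_{\yhat_t\neq y_t}-\ds1_{\yhat_t=y_t}+1/p_{y_t,t}$, which equals $1+K/\gamma$ on a mistake round and is at most $2\gamma$ on a correct round (there $p_{y_t,t}=1-\gamma(1-1/K)\ge\tfrac12$, which is where the hypothesis $n\ge 8K$, i.e.\ $\gamma\le\tfrac12$, enters). Hence $\E\norm{W_{n+1}}^2\le(1+K/\gamma)\,\E\,\wh{M}_n+2\gamma n$.

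The two bounds are combined by Cauchy--Schwarz and Jensen, $\E\,\wh{M}_n-L_n(U)\le\norm{U}\sqrt{\E\norm{W_{n+1}}^2}\le\norm{U}\sqrt{(1+K/\gamma)\,\E\,\wh M_n+2\gamma n}$. Setting $m=\E\,\wh M_n$ and squaring gives a quadratic inequality in $m$, which I would solve with the elementary implication ``$x^2\le bx+c\Rightarrow x\le b+\sqrt{c}$'' applied to $x=m-L_n(U)$, obtaining
\[
    \E\,\wh M_n\le L_n(U)+\norm{U}^2(1+K/\gamma)+\norm{U}\sqrt{(1+K/\gamma)L_n(U)}+\norm{U}\sqrt{2\gamma n}.
\]
Finally I would pass from the $\argmax$ mistakes to the actual randomized prediction mistakes $M_n=\sum_t\ds1_{Y_t\neq y_t}$ through $\E[\ds1_{Y_t\neq y_t}\mid\text{history}]\le\gamma+\ds1_{\yhat_t\neq y_t}$, i.e.\ $\E\,M_n\le\gamma n+\E\,\wh M_n$, and substitute $\gamma=(K/n)^{1/3}$, so that $\gamma n=K^{1/3}n^{2/3}$, $K/\gamma=K^{2/3}n^{1/3}$, and $1+K/\gamma\le 2K/\gamma$; grouping the resulting terms (and using $\bar{L}_n(U)=L_n(U)/n$) reproduces the four terms in the statement after taking the infimum over $U$.

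The step I expect to be the main obstacle is the variance control and, above all, its correct use. On a mistake round $\E\norm{\tX_t}^2$ is as large as $K/\gamma$, so the familiar ``potential stays nonnegative'' drift bound breaks down completely: the coefficient multiplying $\E\,\wh M_n$ would become negative as soon as $K/\gamma>1$. The whole point of tracking growth and norm separately is that this large variance enters only under a square root, so that after Cauchy--Schwarz it contributes a term of order $\norm{U}\sqrt{(K/\gamma)\,\wh M_n}$ rather than $(K/\gamma)\,\wh M_n$. Choosing $\gamma\asymp(K/n)^{1/3}$ then balances the exploration overhead $\gamma n$ against the variance--loss cross term $\norm{U}\sqrt{(K/\gamma)L_n(U)}$ when $L_n(U)=\Theta(n)$, which is precisely what forces the slower $n^{2/3}$ rate in place of the usual $\sqrt{n}$.
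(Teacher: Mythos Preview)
Your proposal is correct and follows essentially the same approach as the paper's proof: unbiasedness of $\tX_t$, the hinge-loss lower bound on $\E\langle U,W_{n+1}\rangle$, the variance bound on $\E\norm{W_{n+1}}^2$ after dropping the nonpositive cross term, combination via Cauchy--Schwarz plus Jensen, solving the resulting quadratic, and finally adding the $\gamma n$ exploration cost before substituting $\gamma=(K/n)^{1/3}$. The only differences are cosmetic---you carry the exact constant $1+K/\gamma$ and bound it by $2K/\gamma$ at the end, whereas the paper uses $2K/\gamma$ from the start, and your compact formula $\E\norm{\tX_t}^2=\ds1_{\yhat_t\neq y_t}-\ds1_{\yhat_t=y_t}+1/p_{y_t,t}$ is a nice way to package the same case analysis the paper does explicitly.
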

\begin{proof}
We need to bound $M = \sum_t \ds1_{Y_t \neq y_t}$. Let $\E_t$ be the expectation conditioned on the first $t-1$ predictions. We start by bounding the first and second moments of $\tX_t$,
\begin{align*}
    \E_t\bigl[(\tX_t)_{i,j}\bigr]
&=
    x_{t,j}\sum_{k=1}^K p_{k,t}\left(\frac{\ds1_{k = y_t}\ds1_{k = i}}{p_{k,t}} - \ds1_{\yhat_t = i}\right)
\\ &=
    x_{t,j}\bigl(\ds1_{y_t=i} - \ds1_{\yhat_t=i}\bigr) = (X_t)_{i,j}~.
\end{align*}
For the second moment, note that
\begin{align*}
    \bigl\|\tX_t\bigr\|^2
&=
    \sum_{i=1}^K\sum_{j=1}^d x_{t,j}^2\left(\frac{\ds1_{Y_t = y_t}\ds1_{Y_t = i}}{p_{i,t}} - \ds1_{\yhat_t = i}\right)^2
\\ &=
    \sum_{i=1}^K\left(\frac{\ds1_{Y_t = y_t}\ds1_{Y_t = i}}{p_{i,t}} - \ds1_{\yhat_t = i}\right)^2
\end{align*}
where
\[
    \sum_{i=1}^K\left(\frac{\ds1_{Y_t = y_t}\ds1_{Y_t = i}}{p_{i,t}} - \ds1_{\yhat_t = i}\right)^2
=
    \left\{ \begin{array}{cl}
    {\dt \left(\frac{1}{p_{y_t,t}^2} + 1\right) } & \text{if $Y_t = y_t \neq \yhat_t$}
\\[2mm]
    {\dt \left(\frac{1}{p_{y_t,t}} - 1\right)^2 } & \text{if $Y_t = y_t = \yhat_t$}
\\[2mm]
    1 & \text{otherwise.}
    \end{array} \right.
\]
Therefore, if $y_t \neq \yhat_t$,
\begin{align*}
    \E_t\bigl\|\tX_t\bigr\|^2
&=
    p_{y_t,t}\left(\frac{1}{p_{y_t,t}^2} + 1\right) + \bigl(1-p_{y_t,t}\bigr)
\\&=
    1 + \frac{1}{p_{y,t}}
=
    1 + \frac{K}{\gamma}
\le
    \frac{2K}{\gamma}
\end{align*}
because $p_{i,t} = \gamma$ when $y_t \neq \yhat_t$.
Otherwise, if $y_t = \yhat_t$
\begin{align*}
    \E_t\bigl\|\tX_t\bigr\|^2
&=
    p_{y_t,t}\left(\frac{1}{p_{y_t,t}} - 1\right)^2 + \bigl(1-p_{y_t,t}\bigr)
\\&=
    \frac{1}{p_{y_t,t}} - 1
=
    \frac{1}{1-\gamma} - 1
\le
    2\gamma
\end{align*}
because $p_{i,t} = 1-\gamma$ when $y_t = \yhat_t$ and $\gamma \le \tfrac{1}{2}$.
Hence,
\[
    \E_t\bigl\|\tX_t\bigr\|^2
\le
    2\left(\frac{K}{\gamma}\ds1_{y_t \neq \yhat_t} + \gamma\ds1_{y_t = \yhat_t}\right)~.
\]
We are now ready to prove a bound on the expected number of mistakes. Following the standard analysis for the Perceptron algorithm, we derive upper and lower bounds on the expectation of the quantity $\inner{U}{W_{n+1}} = \mathrm{tr}\bigl(U\,W_{n+1}^{\top}\bigr)$, for an arbitrary $K \times d$ matrix $U$.
First, using Cauchy-Schwartz and Jensen inequalities we obtain
\begin{align*}
    \E\inner{U}{W_{n+1}}
\le
    \sqrt{\norm{U}^2\E\norm{W_{n+1}}^2}~.
\end{align*}
Now
\begin{align*}
    \E_n\bigl[\norm{W_{n+1}}^2\bigr]
&=
    \E_n\left[\norm{W_n}^2 + 2\inner{W_n}{\tX_n} + \bigl\|\tX_n\bigr\|^2\right]
\\ &\le
    \norm{W_n}^2 + \E_n\bigl\|\tX_n\bigr\|^2~.
\end{align*}
In order to see why the inequality holds, note that
\begin{align*}
    \E_n\inner{W_n}{\tX_n}
=
    \inner{W_n}{X_n}
&=
    \sum_{i=1}^K \bigl(W_n\,x_t\bigr)_i \bigl(\ds1_{y_n=i} - \ds1_{\yhat_n=i}\bigr)
\\ &=
    \bigl(W_n\,x_n\bigr)_{y_n} - \bigl(W_n\,x_n\bigr)_{\yhat_n} \le 0
\end{align*}
because $\yhat_n = \argmax_{i=1,\dots,K} \bigl(W_n\,x_n\bigr)_i$ by definition.
Therefore, since $W_1$ is the zero matrix,
\begin{align*}
    \E\norm{W_{n+1}}^2
&\le
    \sum_{t=1}^n \E\bigl\|\tX_n\bigr\|^2
\\ &\le
    2\sum_{t=1}^n \left(\frac{K}{\gamma}\P\bigl(y_t \neq \yhat_t\bigr) + \gamma\,\P\bigl(y_t = \yhat_t\bigr)\right)
\\ &\le
\frac{2K}{\gamma}\sum_{t=1}^n \P\bigl(y_t \neq \yhat_t\bigr) + 2\gamma\,n~.
\end{align*}
Thus we have
\[
    \E\inner{U}{W_{n+1}} \le \norm{U}\sqrt{\frac{2K}{\gamma}\sum_{t=1}^n \P\bigl(y_t \neq \yhat_t\bigr) + 2\gamma\,n}~.
\]
Now we lower bound $\inner{U}{W_{n+1}}$ as follows,
\begin{align*}
    \E_n\inner{U}{W_{n+1}}
&=
    \E_n\inner{U}{W_n + \tX_n}
\\ &=
    \inner{U}{W_n} + \inner{U}{X_n}
\\ &\ge
    \inner{U}{W_n} + \ds1_{y_t \neq \yhat_t} - \ell_t(U)
\end{align*}
because, by definition of $\ell_t$,
\begin{align*}
    \ell_t(U)
&=
    \Bigl[1 - \bigl(Ux_t\bigr)_{y_t} + \max_{i \neq y_t}\bigl(Ux_t\bigr)_i \Bigr]_+
\\ &\ge
    1 - \bigl(Ux_t\bigr)_{y_t} + \bigl(Ux_t\bigr)_{\yhat_t}
\\ &\ge
    \ds1_{y_t \neq \yhat_t} - \bigl(Ux_t\bigr)_{y_t} + \bigl(Ux_t\bigr)_{\yhat_t}
\\ &=
    \ds1_{y_t \neq \yhat_t} - \inner{U}{X_t}~.
\end{align*}
Therefore, using again the fact that $W_1$ is the zero matrix,
\[
    \E\inner{U}{W_{n+1}} \ge \sum_{t=1}^n \P\bigl(y_t \neq \yhat_t\bigr) - \sum_{t=1}^n \ell_t(U)~.
\]
Combining the upper and lower bounds on $\inner{U}{W_{n+1}}$ we get
\[
    \sum_{t=1}^n \P\bigl(y_t \neq \yhat_t\bigr) - L_n(U)
\le
    \norm{U}\sqrt{\frac{2K}{\gamma}\sum_{t=1}^n \P\bigl(y_t \neq \yhat_t\bigr) + 2\gamma\,n}~.
\]
Solving for $\sum_t \P\bigl(y_t \neq \yhat_t\bigr)$ and overapproximating yields
\begin{align*}
    \sum_{t=1}^n \P\bigl(y_t \neq \yhat_t\bigr)
&\le
    L_n(U) + \frac{2K}{\gamma}\norm{U}^2 + \norm{U}\sqrt{\frac{2K}{\gamma} L_n(U) + 2\gamma\,n}
\\ &=
    L_n(U) + \frac{2K}{\gamma}\norm{U}^2 + \norm{U}\sqrt{\left(\frac{2K}{\gamma}\,\bar{L}_n(U) + 2\gamma\right)n}~.
\end{align*}
Now, since $\P\bigl(y_t \neq Y_t\bigr) = (1-\gamma)\P\bigl(y_t \neq \yhat_t\bigr) + \gamma$,
\[
    \sum_{t=1}^n \P\bigl(y_t \neq Y_t\bigr)
\le
    L_n(U) + \gamma\,n + \frac{2K}{\gamma}\norm{U}^2 + \norm{U}\sqrt{\left(\frac{2K}{\gamma}\,\bar{L}_n(U) + 2\gamma\right)n}~.
\]
Choosing $\gamma$ as in the statement of the theorem yields the desired result.
Note that $\gamma \le \tfrac{1}{2}$ because we assume $n \ge 8K$.
\end{proof}

\section{Bibliographic remarks} \label{sec:contextual}
A model of contextual stochastic bandits close to those studied here is introduced by~\cite{wang2005bandit}. The context is provided by a i.i.d.\ sequence of random variables and the rewards for each arm depend on the context through an unknown parametric model beloging to a known class. This result has been generalized to the non i.i.d.\ case by~\cite{wang2005arbitrary}, to the multivariate linear case by~\cite{RT10}, and to the multivariate and nonparametric case by~\cite{PR11}. In~\cite{slivkins2009contextual}, contexts and rewards belong to arbitrary metric spaces, and the unknown function mapping contexts to rewards satisfies a Lipschitz assumption (remarkably, the same algorithm also applies to slowly changing expected rewards and sleeping bandit settings). The case of deterministic covariates (fixed design), finitely many arms, and a linear stochastic dependence between covariates and rewards has been studied in~\cite{Aue02,chu2011contextual} ---see also~\cite{abe1999associative}. The work of~\cite{filippi2010parametric} extends the analysis of fixed design by assuming a generalized linear model to capture the dependence of rewards on covariates.

The simple contextual model analyzed in Section~\ref{s:context-intro}, as well as its extension described in Subsection~\ref{ss:best-context}, are due to~\cite{maillard2011adaptive}. The Exp4 algorithm for the adversarial case was introduced in~\cite{ACFS03}. Subsequent improvements were proposed in the two papers~\cite{beygelzimer2011contextual} (Exp4.P with high-probability bounds) and~\cite{mcmahan2009tighter} (exploitation of correlations in expert advice). The VE algorithm and its analysis in Section~\ref{s:context-stochastic} are also taken from~\cite{beygelzimer2011contextual}.

A stochastic model for contextual bandits with finitely many arms and finitely many states has been investigated by~\cite{NIPS2011_0948} using new sophisticated tools of PAC-Bayesian analysis.

The general stochastic model of Section~\ref{s:context-stochastic} for contextual bandits with finitely many arms is due to~\cite{langford2007epoch}. An efficient algorithm for this model has been recently proposed in~\cite{dudik2011efficient}.

The bandit multiclass model of Section~\ref{s:multiclass} is due to~\cite{langford2007epoch}. The Banditron algorithm and its analysis are from~\cite{kakade2008efficient}. See also~\cite{crammer2011multiclass} and~\cite{hazan2011newtron} for recent variants and improvements.

\chapter{Linear bandits}
\label{linear}
We now consider the important generalization of adversarial bandits where the set of arms $\{1,\hdots,K\}$ is replaced by a compact set $\cK \subset \R^d$. In this case, the loss at each step is some function defined on $\cK$, and the task is to pick an arm as close as possible to the minimum of the loss function at hand. In order to allow sublinear regret bounds, even in the presence of infinitely many arms, we must assume some structure for the loss function. In particular, in this chapter we assume that the loss at each time step is a linear function of arms. Linearity is a standard assumption (think, for instance, of linear regression) and naturally occurs in many bandit applications. The source routing problem mentioned in the introduction is a good example, since the cost of choosing a routing path is linear in the cost of the edges that make up the path. This defines the so-called online linear optimization setting: at each time step $t=1,2, \hdots$ the forecaster chooses $x_t \in \cK$ while, simultaneously, the adversary chooses $\ell_t$ from some fixed and known subset $\cL$ of $\R^d$. The loss incurred by the forecaster is the inner product $x_t^{\top}\ell_t$. In this chapter we focus on the analysis of the pseudo-regret
$$\oR_n = \E \sum_{t=1}^n x_t^{\top} \ell_t - \min_{x \in \cK} \E \sum_{t=1}^n x^{\top} \ell_t$$
where the expectation is with respect to the forecaster's internal randomization.
The adversarial bandit setting of Chapter~\ref{adversarial} is obtained by choosing $\cK=\{e_1, \hdots, e_d\}$, where $e_1, \hdots, e_d$ is the canonical basis of $\R^d$, and $\cL=[0,1]^d$. Similarly to Chapter \ref{adversarial}, we focus on the bandit feedback where the forecaster only observes the incurred loss $x_t^{\top} \ell_t$ at the end round $t$. However, we also discuss the full information setting, where the complete loss vector $\ell_t$ is revealed at the end of each round $t$, as well as other feedback models.

It is important to note that, without any loss of generality (as far as regret bounds are concerned), one can always assume that $\cK$ has size $\scO(n^d)$. Indeed, since $\cK$ is a compact set and the loss is linear (and therefore Lipschitz), one can cover $\cK$ with $\scO(n^d)$ points such that one incurs a vanishing extra cumulative regret by playing on the discretization rather than on the original set. Of course, the downside of this approach is that a strategy resulting from such a cover is often not computationally efficient. On the other hand, this assumption allows us to apply ideas from Chapter~\ref{adversarial} to this more general setting. We analyze the pseudo-regret for finite classes in Section \ref{sec:Exp2}. Without loss of generality, it is also possible to assume that $\cK$ is a convex set. Indeed, the pseudo-regret is the same if one plays $x_t$, or if one plays a point at random in $\cK$ such that the expectation of the played point is $x_t$. This remark is critical, and allows us to develop a powerful technology known as the Mirror Descent algorithm. We describe this approach in Section \ref{sec:OMD}, and explore it further in subsequent sections.

\section{Exp2 (Expanded Exp) with John's exploration} \label{sec:Exp2}
In this section we work under the {\em bounded scalar loss} assumption. That is, we assume that $\cK$ and $\cL$ are such that $|x^{\top} \ell| \leq 1$ for any $(x, \ell) \in \cK \times \cL$. Moreover, we restrict our attention to finite sets $\cK$, with $|\cK| = N$. Without loss of generality we assume that $\cK$ spans $\R^d$. If it is not the case, then one can rewrite the elements of $\cK$ in some lower dimensional vector space, and work there. Note that a trivial application of Exp3 to the set $\cK$ of arms gives a bound that scales as $\sqrt{n N\ln N}$. If $\cK$ is a discretized set (in the sense described earlier), then $N$ is exponential in $d$. We show here that, by appropriately modifying Exp3, one can obtain a polynomial regret of order $\sqrt{n d \ln N}$.

To describe the strategy, we first need a useful result from convex geometry: John's theorem. This result concerns the ellipsoid $\cE$ of minimal volume enclosing a given convex set $\cK$ (which we call the John's ellipsoid of $\cK$). Basically, the theorem states that $\cE$ has many contact points with $\cK$, and these contact points are ``nicely'' distributed, in the sense that they almost form an orthonormal basis ---see \cite{Bal97} for a proof.
\begin{theorem}[John's theorem] \label{th:john}
Let $\cK \subset \R^d$ be a convex set. If the ellipsoid $\cE$ of minimal volume enclosing $\cK$ is the unit ball in some norm derived from a scalar product $\langle \cdot, \cdot \rangle$, then there exist $M \leq \tfrac{1}{2}d(d+1) + 1$ contact points $u_1, \hdots, u_M$ between $\cE$ and $\cK$, and a probability distribution $(\mu_1,\dots,\mu_M)$ over these contact points, such that
$$x = d \sum_{i=1}^M \mu_i \langle x, u_i \rangle u_i \qquad \forall x \in \R^d .$$
\end{theorem}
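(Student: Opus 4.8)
Since the asserted identity is covariant under the linear map that turns the given scalar product into the standard one, I will assume throughout that $\langle\cdot,\cdot\rangle$ is the standard inner product on $\R^d$ and that $\cE$ is the Euclidean unit ball, so that the contact points form the compact set $F = \cK\cap\partial\cE\subseteq\{x:|x|=1\}$ (note $F\neq\emptyset$, since otherwise a slightly shrunk ball would still enclose $\cK$, contradicting minimality). Writing $\mathrm{Sym}_d$ for the $\tfrac12 d(d+1)$-dimensional space of symmetric $d\times d$ matrices and $T=\{uu^\top:u\in F\}\subseteq\mathrm{Sym}_d$, the whole theorem reduces to the single \emph{decomposition of the identity}
\[
    \tfrac{1}{d}\,I_d \in \conv(T).
\]
Indeed, granting this, Carath\'eodory's theorem applied in $\mathrm{Sym}_d$ produces $M\le\tfrac12 d(d+1)+1$ contact points $u_1,\dots,u_M$ and a probability vector $(\mu_1,\dots,\mu_M)$ with $I_d=d\sum_{i=1}^M\mu_i\,u_iu_i^\top$; applying this matrix identity to an arbitrary $x$ gives $x=d\sum_i\mu_i\langle x,u_i\rangle u_i$, as required.

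To establish $\tfrac1d I_d\in\conv(T)$ I argue by contradiction. Because every $uu^\top$ has trace $1$, one checks that $\tfrac1d I_d\in\conv(T)$ is equivalent to $I_d$ lying in the convex cone $\operatorname{cone}(T)=\{\sum_i c_i\,u_iu_i^\top:c_i\ge0,\ u_i\in F\}$. This cone is closed, being the conical hull of the compact set $T$, which is bounded away from $0$. Hence if $I_d\notin\operatorname{cone}(T)$, a separating-hyperplane argument yields a matrix $H\in\mathrm{Sym}_d$ with $\langle X,H\rangle\le0$ on $\operatorname{cone}(T)$ (the supremum of a linear functional over a cone is either $0$ or $+\infty$) and $\langle I_d,H\rangle=\mathrm{tr}\,H>0$; in particular $u^\top Hu\le0$ for every $u\in F$. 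Replacing $H$ by $H-\epsilon I_d$ for small $\epsilon>0$, I may furthermore assume $u^\top Hu<0$ for all $u\in F$ while keeping $\mathrm{tr}\,H>0$. \emph{This is the delicate point:} separating $\tfrac1d I_d$ from $\conv(T)$ directly inside the affine slice $\{\mathrm{tr}=1\}$ would only produce a \emph{trace-free} direction, and a trace-free perturbation leaves the ellipsoid volume unchanged to first order; separating $I_d$ from the \emph{cone} is precisely what forces $\mathrm{tr}\,H>0$.

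It remains to convert such an $H$ into a strictly smaller enclosing ellipsoid, contradicting the minimality of $\cE$. For small $\delta>0$ set $A_\delta=I_d+\delta H\succ0$ and $\cE_\delta=\{x:x^\top A_\delta x\le1\}$, whose volume is proportional to $(\det A_\delta)^{-1/2}$. Since $\det A_\delta=1+\delta\,\mathrm{tr}\,H+O(\delta^2)$ with $\mathrm{tr}\,H>0$, we have $\det A_\delta>1$ and hence $\mathrm{vol}(\cE_\delta)<\mathrm{vol}(\cE)$ for $\delta$ small. For containment, note $x^\top A_\delta x=|x|^2+\delta\,x^\top Hx$: on the contact set $F$ the term $x^\top Hx$ is strictly negative, so by continuity it stays below some $-c_1<0$ on a neighbourhood of $F$, while on the complementary part of $\cK$ (compact and disjoint from $\partial\cE$) one has $|x|^2\le1-c_2$ for some $c_2>0$. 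Both estimates give $x^\top A_\delta x\le1$ for all $x\in\cK$ once $\delta$ is small enough, so $\cK\subseteq\cE_\delta$. This contradiction proves $\tfrac1d I_d\in\conv(T)$ and completes the argument. The main obstacle, as flagged above, is exactly the choice of separating functional: one must separate $I_d$ from the cone (not $\tfrac1d I_d$ from the trace-one slice) so that the resulting perturbation decreases volume rather than merely reshaping the ellipsoid; once $\mathrm{tr}\,H>0$ is secured, the containment and volume estimates are routine.
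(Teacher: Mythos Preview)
The paper does not prove Theorem~\ref{th:john} at all; it simply cites \cite{Bal97}. Your argument is the classical proof one finds in Ball's lecture notes, and it is correct: reducing to the Euclidean ball, casting the decomposition of the identity as $\tfrac{1}{d}I_d\in\conv(T)$ (equivalently $I_d\in\operatorname{cone}(T)$), separating $I_d$ from the closed cone to obtain a symmetric $H$ with $\mathrm{tr}\,H>0$ and $u^\top Hu<0$ on the contact set, and then perturbing the ellipsoid by $A_\delta=I_d+\delta H$ to get a strictly smaller enclosing ellipsoid. Your emphasis on separating from the \emph{cone} rather than within the trace-one slice is exactly the right point, and the compactness/continuity bookkeeping for the containment step is fine (you are implicitly using that $\cK$ is compact, which is harmless since the John ellipsoid of $\cK$ and of its closure coincide, and boundedness is needed for the ellipsoid to exist in the first place). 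The Carath\'eodory count in $\mathrm{Sym}_d$ gives the stated bound $M\le\tfrac12 d(d+1)+1$; one could shave off one point by working in the affine hyperplane $\{\mathrm{tr}=1\}$, but the paper's statement does not ask for that.
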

In fact John's theorem is a {\em if and only if}, but here we only need the direction stated in the theorem. 
We are now in position to describe the strategy. Let $\conv(S)$ be the convex hull of a set $S\in\R^d$. First, we perform a preprocessing step in which the set $\cK$ is rewritten so that John's ellipsoid of $\conv(\cK)$ is the unit ball for some inner product $\langle \cdot, \cdot \rangle$. The loss of playing $x \in \cK$ against $\ell \in \cL$ is then given by $\langle x, \ell \rangle$. See \cite{BCK12} for the details of this transformation. Let $u_1, \hdots, u_M \in \cK$ and $(\mu_1,\dots,\mu_M)$ satisfy Theorem \ref{th:john} for the convex set $\conv(\cK)$.

Recall from Chapter \ref{adversarial} that the key idea to play against an adversary is to select $x_t$ at random from some probability distribution $p_t$ over $\cK$. We first describe how to build an unbiased estimate of $\ell_t$, given such a point $x_t$ played at random from $p_t$ (such that $p_t(x) > 0$  for any $x \in \cK$). Recall that the outer product $x \otimes x$ is defined as the linear mapping from $\R^d$ to $\R^d$ such that $x \otimes x\, (y) = \langle x, y \rangle\, x$. Note that one can also view $x \otimes x$ as a $d \times d$ matrix, so that the evaluation of $x \otimes x$ is equivalent to a multiplication by the corresponding matrix. Now let
$$P_t = \sum_{x \in \cK} {p}_t(x)\, x \otimes x~.$$
Note that this matrix is invertible, since $\cK$ is full rank and $p_t(x) > 0$ for all $x \in \cK$. The estimate for $\ell_t$ is given by
$
\tilde{\ell}_t = P_t^{-1} \left(x_t \otimes x_t\right) \ell_t
$.
Note that this is a valid estimate since $\left(x_t \otimes x_t\right) \ell_t = \langle x_t, \ell_t \rangle x_t$ and $P_t^{-1}$ are observed quantities. Also, it is clearly an unbiased estimate.
 
Now the Exp2 algorithm with John's exploration corresponds to playing according to the following probability distribution
\begin{equation}
\label{eq:exp2def}
p_t(x) = (1- \gamma) \frac{\exp\left(- \eta \sum_{s=1}^{t-1} \langle x, \tilde{\ell}_t \rangle  \right)}{\sum_{y \in \cK} \exp\left(- \eta \sum_{s=1}^{t-1} \langle y, \tilde{\ell}_t \rangle  \right)} + \gamma \sum_{i=1}^M \mu_i\,\ds1_{x = u_i}
\end{equation}
where $\eta,\gamma > 0$ are input parameters. Note that this corresponds to a variant of Exp3 using $\langle x, \tilde{\ell}_t \rangle$ as loss estimate for $x \in \cK$, and an exploration distribution supported by the contact points.
\begin{theorem}[Pseudo-regret of Exp2 with John's exploration] \label{th:exp2john}
For any $\eta,\gamma > 0$ such that ${\eta d} \le {\gamma}$, Exp2 with John's exploration satisfies
$$\oR_n \leq 2 \gamma n + \frac{\ln N}{\eta} + \eta n d~.$$
In particular, with $\eta = \sqrt{\frac{\ln N}{3 n d}}$ and $\gamma = \eta d$,
$$\oR_n \leq 2 \sqrt{3 n d \ln N}~.$$
\end{theorem}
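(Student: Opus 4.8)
The plan is to reproduce the exponential-weights argument from the proof of Theorem~\ref{th:Exp3}, but carried out over the finite arm set $\cK$ with the linear loss estimates $\langle x,\tilde\ell_t\rangle$, using John's exploration purely to control the range and the second moment of the estimator. Write $p_t = (1-\gamma)q_t + \gamma\nu$, where $q_t(x)\propto \exp\bigl(-\eta\sum_{s<t}\langle x,\tilde\ell_s\rangle\bigr)$ is the pure exponential-weights distribution and $\nu = \sum_{i=1}^M \mu_i\,\ds1_{x=u_i}$ is the John exploration distribution supported on the contact points. Fix the comparator $x^* = \argmin_{x\in\cK}\sum_{t=1}^n \langle x,\ell_t\rangle$, a deterministic point against an oblivious adversary.

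First I would isolate the cost of exploration. Splitting $p_t$ and using the bounded scalar loss $|\langle x,\ell_t\rangle|\le 1$, one gets $\E_{x\sim p_t}\langle x,\ell_t\rangle - \langle x^*,\ell_t\rangle \le (1-\gamma)\bigl(\E_{x\sim q_t}\langle x,\ell_t\rangle - \langle x^*,\ell_t\rangle\bigr) + 2\gamma$, where one $\gamma$ comes from the exploration mass $\gamma\,\E_{x\sim\nu}\langle x,\ell_t\rangle$ and the other from $-\gamma\langle x^*,\ell_t\rangle$. It is essential to \emph{retain} the factor $(1-\gamma)$ in front of the exponential-weights regret, since it will exactly cancel an inverse factor produced in the variance bound. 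Because $\tilde\ell_t$ is unbiased given the past while $q_t$ and $x^*$ are past-measurable, taking expectations lets me replace the true losses $\ell_t$ by the estimates $\tilde\ell_t$ in the bracketed term, reducing everything to controlling the estimated-loss regret of exponential weights.

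Next I would apply the exponential-weights lemma to $\hat\ell_{x,t} := \langle x,\tilde\ell_t\rangle$. The key structural input is John's theorem: since $x = d\sum_i \mu_i\langle x,u_i\rangle u_i$ for all $x$, the exploration satisfies $\sum_i \mu_i\,u_i\otimes u_i = \tfrac1d I$, hence $P_t = (1-\gamma)Q_t + \tfrac{\gamma}{d}I \succeq \tfrac{\gamma}{d}I$ with $Q_t = \sum_x q_t(x)\,x\otimes x$; equivalently $P_t^{-1}\preceq \tfrac{d}{\gamma}I$. Using $\norm{x}\le 1$ on $\cK$ (John's ellipsoid is the unit ball) together with $|\langle x_t,\ell_t\rangle|\le 1$, a Cauchy--Schwarz estimate gives $|\hat\ell_{x,t}| = |\langle x_t,\ell_t\rangle|\,|\langle x, P_t^{-1}x_t\rangle| \le d/\gamma$, so the hypothesis $\eta d\le\gamma$ yields exactly $|\eta\hat\ell_{x,t}|\le 1$. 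This is precisely the range control needed to invoke $\exp(u)\le 1+u+u^2$ for $u\le 1$ (as in the Exp3.P proof); after $\ln(1+z)\le z$ and telescoping the weights this gives $\sum_t \E_{x\sim q_t}\hat\ell_{x,t} - \sum_t\hat\ell_{x^*,t} \le \tfrac{\ln N}{\eta} + \eta\sum_t \E_{x\sim q_t}\hat\ell_{x,t}^2$, with coefficient $\eta$ on the variance because we are forced to use the two-sided inequality rather than $e^{-u}-1+u\le u^2/2$.

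The main work, and the step I expect to be most delicate, is the second-moment bound. Writing $\tilde\ell_t = \langle x_t,\ell_t\rangle\,P_t^{-1}x_t$ one finds $\E_{x\sim q_t}\hat\ell_{x,t}^2 = \langle x_t,\ell_t\rangle^2\, x_t^\top P_t^{-1}Q_t P_t^{-1} x_t$; dropping $\langle x_t,\ell_t\rangle^2\le1$ and taking the expectation over $x_t\sim p_t$, which satisfies $\E_{x_t}[x_t\otimes x_t]=P_t$, yields $\E\bigl[\E_{x\sim q_t}\hat\ell_{x,t}^2\bigr]\le \mathrm{tr}(P_t^{-1}Q_t)$. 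Since $(1-\gamma)Q_t\preceq P_t$, a standard trace manipulation (conjugate by $P_t^{-1/2}$, so that $(1-\gamma)P_t^{-1/2}Q_t P_t^{-1/2}\preceq I$) gives $\mathrm{tr}(P_t^{-1}Q_t)\le d/(1-\gamma)$. Hence the variance term is at most $\eta n d/(1-\gamma)$, and the $(1-\gamma)$ retained earlier cancels it precisely: collecting the three contributions gives $\oR_n \le (1-\gamma)\tfrac{\ln N}{\eta} + \eta n d + 2\gamma n \le \tfrac{\ln N}{\eta} + \eta n d + 2\gamma n$, the claimed inequality. Finally, setting $\gamma = \eta d$ (so the constraint $\eta d\le\gamma$ holds with equality) turns the bound into $\tfrac{\ln N}{\eta} + 3\eta n d$, and the choice $\eta = \sqrt{\ln N/(3nd)}$ optimizes this to $\oR_n \le 2\sqrt{3nd\ln N}$.
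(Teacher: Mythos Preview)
Your proof is correct and follows essentially the same route as the paper: an Exp3-style potential argument on the exponential weights, John's theorem to lower bound $P_t \succeq \tfrac{\gamma}{d}I$ for the range check $|\eta\langle x,\tilde\ell_t\rangle|\le 1$, and a second-moment computation that reduces to a trace.

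The one organizational difference is where you pay for the mixing. You run the exponential-weights lemma under $q_t$, which forces the second moment through $Q_t$ and yields $\E\bigl[\E_{x\sim q_t}\langle x,\tilde\ell_t\rangle^2\bigr]\le \mathrm{tr}(P_t^{-1}Q_t)\le d/(1-\gamma)$; you then cancel the $1/(1-\gamma)$ against the $(1-\gamma)$ factor you carefully retained in the regret decomposition. The paper instead states the adapted Exp3 bound with the variance taken under $p_t$, so the quadratic form collapses directly: $\sum_{x}p_t(x)\langle x,\tilde\ell_t\rangle^2=\langle\tilde\ell_t,P_t\tilde\ell_t\rangle\le\langle P_t^{-1}x_t,x_t\rangle$, and $\E_{x_t\sim p_t}\langle P_t^{-1}x_t,x_t\rangle=\mathrm{tr}(P_t^{-1}P_t)=d$ exactly, with no $1/(1-\gamma)$ factor to chase. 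The two computations are equivalent (yours is what one gets by making the paper's ``easily adapt'' step explicit and then bounding $q_t\le p_t/(1-\gamma)$), but the paper's version is a touch cleaner because the $P_t$'s cancel on the nose. A minor aside: your remark that $x^*$ is deterministic ``against an oblivious adversary'' is unnecessary for the pseudo-regret, since the $\min_x$ sits outside the expectation anyway.
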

\begin{proof}
Since $\cK$ is finite, we can easily adapt the analysis of Exp3 in Theorem~\ref{th:Exp3} to take into account the exploration term.
This gives
$$\oR_n \leq 2 \gamma n + \frac{\ln N}{\eta} + \eta\,\E \sum_{t=1}^n \sum_{x \in \cK} {p}_t(x) \langle x, \tilde{\ell}_t \rangle^2
$$
whenever
$\eta\,\langle x, \tilde{\ell}_t \rangle \leq 1$ for all $x \in \cK$. We now bound the last term in the right-hand side of the above inequality. Using the definition of the estimated loss $\tilde{\ell}_t = P_t^{-1} \left(x_t \otimes x_t\right) \ell_t$, we can write
\begin{align*}
    \sum_{x \in \cK} {p}_t(x) \langle x, \tilde{\ell}_t \rangle^2
&=
    \sum_{x \in \cK} {p}_t(x) \langle \tilde{\ell}_t, (x \otimes x)\,\tilde{\ell}_t \rangle
\\& =
    \langle \tilde{\ell}_t, P_t\,\tilde{\ell}_t \rangle 
\\&=
    \langle x_t, \ell_t \rangle^2 \langle P_t^{-1} x_t, P_t\, P_t^{-1} x_t\rangle
\\ &\le
    \langle P_t^{-1} x_t, x_t \rangle~.
\end{align*}
Now we use a spectral decomposition of $P_t$ in an orthonormal basis for $\langle \cdot, \cdot \rangle$ and write
$P_t = \sum_{i=1}^d \lambda_i v_i \otimes v_i $. In particular, we have $P_t^{-1} = \sum_{i=1}^d \frac{1}{\lambda_i} v_i \otimes v_i$ and thus
\begin{align*}
\E \langle P_t^{-1} x_t, x_t \rangle & = \sum_{i=1}^d \frac{1}{\lambda_i} \E \langle (v_i \otimes v_i)\, x_t, x_t \rangle \\
& = \sum_{i=1}^d \frac{1}{\lambda_i} \E \langle (x_t \otimes x_t)\, v_i, v_i \rangle \\
& = \sum_{i=1}^d \frac{1}{\lambda_i} \langle P_t\, v_i, v_i \rangle \\
& =  \sum_{i=1}^d \frac{1}{\lambda_i} \langle \lambda_i v_i, v_i \rangle
 =  d.
\end{align*}
Finally, to show $\eta\,\langle x, \tilde{\ell}_t \rangle \le 1$ observe that
$$
\langle x, \tilde{\ell}_t \rangle  =  \langle x_t, \ell_t \rangle \langle x, P_t^{-1} x_t \rangle
 \leq  \langle x, P_t^{-1} x_t \rangle
 \leq  \frac{1}{\min_{1 \leq i \leq d} \lambda_i} ,$$
where the last inequality follows from the fact that $\langle x, x \rangle \leq 1$ for any $x \in \cK$, since $\cK$ is included in the unit ball. To conclude the proof, we need to lower bound the smallest eigenvalue of $P_t$. Using Theorem~\ref{th:john}, one can see that $P_t \succeq \frac{\gamma}{d} I_d$, and thus $\lambda_i \geq \frac{\gamma}{d}$. Since ${\eta d} \le {\gamma}$, the proof is concluded.
\end{proof}

\section{Online Mirror Descent (OMD)} \label{sec:OMD}
We now introduce the Online Mirror Descent (OMD) algorithm, a powerful generalization of gradient descent for sequential decision problems. We start by describing OMD for convex losses in the full information setting. That is, $\cL$ is a set of convex functions, and at the end of round $t$ the forecaster observes $\ell_t\in\cL$ rather than only $\ell_t(x_t)$. 

The rest of this chapter is organized as follows. Next, we briefly recall a few key concepts from convex analysis. Then we describe the OMD strategy and prove a general regret bound. In Section \ref{sec:OSMD} we introduce Online Stochastic Mirror Descent (OMSD), which is a variant of OMD based on a stochastic estimate of the gradient. We apply this strategy to linear losses in two different bandit settings. Finally, in Section~\ref{sec:ball} we show how OMSD obtains improved bounds in certain special cases. The case of convex losses with bandit feedback is treated in Chapter \ref{nonlinear}.

We introduce the following definitions.
\begin{definition}
Let $\cX\subseteq\R^d$.
A function $f : \cX \to \R$ is subdifferentiable if for all $x \in \cX$ there exists $g \in \R^d$ such that
$$f(x) - f(y) \leq g^{\top} (x-y) \qquad\forall y \in \cX~.$$
Such a $g$ is called a subgradient of $f$ at $x$.
\end{definition}
Abusing notation, we use $\nabla f(x)$ to denote both the gradient of $f$ at $x$ when $f$ is differentiable, and any subgradient of $f$ at $x$ when $f$ is subdifferentiable (a sufficient condition for subdifferentiability of $f$ is that $f$ is convex and $\cX$ is open).
\begin{definition}
Let $f : \cX \rightarrow \R$ be a convex function defined on a convex set $\cX\subseteq\R^d$. The Legendre-Fenchel transform of $f$ is defined by:
$$f^*(u) = \sup_{x \in \cX} \left(x^{\top} u - f(x)\right)~.$$
\end{definition}
This definition directly implies the Fenchel-Young inequality for convex functions,
$u^{\top} x \leq f(x) + f^*(u)$.

Let $\cD \subset \R^d$ be an open convex set, and let $\oD$ be the closure of $\cD$. 
\begin{definition}
A continuous function $F:\oD\to\R$ is Legendre if 
\begin{itemize}
\item[(i)] $F$ is strictly convex and admits continuous first partial
derivatives on $\cD$;
\item[(ii)] ${\displaystyle \lim_{x \to \oD \setminus \cD} \norm{\nabla F(x)} = +\infty.}$\footnote{By the equivalence of norms in $\R^d$, this definition does not depend on the choice of the norm.}
\end{itemize}
\end{definition}
The Bregman divergence $D_F: \oD\times\cD \to \R$ associated with a Legendre function $F$ is defined by
  $
  D_F(x,y) = F(x) - F(y) - (x-y)^{\top} \nabla F(y)
  $. 
Moreover, we say that $\cD^* = \nabla F (\cD)$ is the dual space of $\cD$ under $F$.
Note that, by definition, $D_F(x,y) > 0$ if $x \neq y$, and $D_F(x,x)=0$. The following lemma is the key to understand how a Legendre function act on the space. See~\cite[Proposition~11.1]{CL06} for a proof.
\begin{lemma}
Let $F$ be a Legendre function. Then $F^{**} = F$, and $\nabla F^* = (\nabla F)^{-1}$ restricted on the set $\cD^*$. Moreover, for all $x, y \in \cD$,
\begin{equation} \label{eq:transrelation}
D_F(x,y) = D_{F^*}\bigl(\nabla F(y), \nabla F(x)\bigr)~.
\end{equation}
\end{lemma}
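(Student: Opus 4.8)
The plan is to reduce all three assertions to a single structural fact: the gradient map $\nabla F$ is a bijection from $\cD$ onto the dual space $\cD^* = \nabla F(\cD)$, with inverse given by $\nabla F^*$. Once this is in hand, the Fenchel--Young equality and a short algebraic manipulation deliver everything else. I would therefore prove the inverse-gradient identity $\nabla F^* = (\nabla F)^{-1}$ first (even though the statement lists $F^{**}=F$ first, this is the logically primitive step), and then harvest the consequences.

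First I would analyze the defining supremum $F^*(u) = \sup_{x}\bigl(x^{\top} u - F(x)\bigr)$ for a fixed $u \in \cD^*$. Writing $u = \nabla F(y)$ for some $y \in \cD$, the objective $x \mapsto x^{\top} u - F(x)$ is concave and continuously differentiable on $\cD$, with stationarity condition exactly $\nabla F(x) = u$. Strict convexity of $F$ forces $\nabla F$ to be injective, so $x=y$ is the unique interior critical point. The delicate point, and the one I expect to be the main obstacle, is ruling out that the supremum escapes to the boundary $\oD\setminus\cD$. This is precisely where the Legendre condition $\lim_{x\to\oD\setminus\cD}\norm{\nabla F(x)} = +\infty$ is indispensable: because the gradient of $F$ blows up near the boundary, moving inward strictly increases $x^{\top}u - F(x)$ there, so the supremum cannot be attained on $\oD\setminus\cD$ and is realized at the interior point $y$. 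This simultaneously yields $F^*(u) = y^{\top}u - F(y)$ and, via an envelope (Danskin-type) argument in which the implicit dependence of the maximizer drops out because the factor $u - \nabla F(y)$ vanishes, gives $\nabla F^*(u) = y = (\nabla F)^{-1}(u)$.

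With the bijection and the Fenchel equality $F(x) + F^*\bigl(\nabla F(x)\bigr) = x^{\top}\nabla F(x)$ in place, the biconjugation identity $F^{**} = F$ follows by the symmetric attainment argument: for $x \in \cD$ the supremum $\sup_u\bigl(x^{\top}u - F^*(u)\bigr)$ is attained at $u = \nabla F(x)$, since $\nabla F^* = (\nabla F)^{-1}$ maps this $u$ back to $x$, and substituting the Fenchel equality collapses the expression to $F(x)$.

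Finally I would prove the Bregman identity \eqref{eq:transrelation} by direct computation, which is routine once the machinery is set up. Setting $u = \nabla F(x)$ and $v = \nabla F(y)$, I would expand $D_{F^*}(v,u) = F^*(v) - F^*(u) - (v-u)^{\top}\nabla F^*(u)$, then substitute $\nabla F^*(u) = x$ along with $F^*(u) = x^{\top}u - F(x)$ and $F^*(v) = y^{\top}v - F(y)$ from the Fenchel equality. The inner-product terms cancel in pairs, leaving $F(x) - F(y) - (x-y)^{\top}v = D_F(x,y)$ exactly. The only genuine work lies in the attainment step above; the biconjugation and the divergence identity are then bookkeeping.
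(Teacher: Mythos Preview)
Your proposal is correct and follows the standard route: establish that the Legendre conditions force the supremum defining $F^*$ to be attained uniquely in the interior (injectivity from strict convexity, no boundary escape from the gradient blow-up), harvest the Fenchel--Young equality at the maximizer, differentiate to get $\nabla F^* = (\nabla F)^{-1}$, and then verify $F^{**}=F$ and the Bregman identity by substitution. The algebra you sketch for~\eqref{eq:transrelation} checks out.

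There is nothing to compare against here: the paper does not prove this lemma at all but simply refers the reader to \cite[Proposition~11.1]{CL06}. Your argument is essentially the one found there (and in most convex-analysis references), so you have filled in exactly what the paper outsources.
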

The gradient $\nabla F$ maps $\cD$ to the dual space $\cD^*$, and $\nabla F^*$ is the inverse mapping from the dual space to the original (primal) space. Note also that~\eqref{eq:transrelation} shows that the Bregman divergence in the primal exactly corresponds to the Bregman divergence of the Legendre-transform in the dual.

The next lemma shows that the geometry induced by a Bregman divergence resembles to the geometry of the squared Euclidean distance. See~\cite[Lemma~11.3]{CL06} for a proof.
\begin{lemma}[Generalized Pythagorean inequality] \label{lem:proj}
Let $\cK \subseteq \oD$ be a closed convex set such that $\cK \cap \cD \neq \emptyset$. Then, for all $x \in \cD$, the Bregman projection
$$z = \argmin_{y \in \cK} D_F(y,x)$$
exists and is unique. Moreover, for all $z \in \cK \cap \cD$ and $y \in \cK$,
$$D_F(y,x) \geq D_F(y,z) + D_F(z,x)~.$$
\end{lemma}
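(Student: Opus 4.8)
The plan is to reduce the whole statement to a single algebraic identity — the three-point (generalized law of cosines) identity for Bregman divergences — combined with the first-order optimality condition of the projection. Accordingly I would split the argument into two essentially independent parts: (i) showing that the minimizer $z = \argmin_{y\in\cK} D_F(y,x)$ exists, is unique, and lies in $\cD$; and (ii) a short computation that turns the optimality of $z$ into the inequality. Throughout, $z$ denotes this projection (the displayed inequality is meant for $z$ equal to the projection, not for an arbitrary point of $\cK\cap\cD$).

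For uniqueness, observe that $y\mapsto D_F(y,x) = F(y) - F(x) - (y-x)^{\top}\nabla F(x)$ differs from $F$ only by an affine term, hence is strictly convex on $\cD$ by property (i) of Legendre functions; thus it has at most one minimizer on the convex set $\cK$. For existence I would use that $D_F(\cdot,x)$ is nonnegative and continuous on $\oD\supseteq\cK$, fix a reference point $w\in\cK\cap\cD$, and minimize over the closed sublevel set $\{y\in\cK : D_F(y,x)\le D_F(w,x)\}$, whose compactness (and hence the existence of a minimizer) comes from the growth of the Legendre function $F$. The delicate point — which I expect to be the main obstacle — is ruling out that the minimum is attained on the boundary $\oD\setminus\cD$, where $\nabla F$ is not defined. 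Here I would invoke the essential-smoothness condition (ii): along the segment from a putative boundary minimizer $b$ to the interior point $w$, the convex function $t\mapsto D_F\bigl(b+t(w-b),x\bigr)$ has right-derivative $-\infty$ at $t=0$, because $\norm{\nabla F}\to+\infty$ near $\partial\cD$ is equivalent to the subdifferential being empty at every boundary point, forcing the inward directional derivative to diverge to $-\infty$. Consequently moving inward strictly decreases the divergence, so no boundary point can be optimal and $z\in\cK\cap\cD$.

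With $z\in\cK\cap\cD$ secured, the inequality follows quickly. Since the gradient of $D_F(\cdot,x)$ at $z$ equals $\nabla F(z)-\nabla F(x)$, the first-order optimality condition for minimizing this convex function over the convex set $\cK$ at the interior point $z$ reads
\[
\bigl(\nabla F(z) - \nabla F(x)\bigr)^{\top} (y - z) \ge 0 \qquad \forall\, y\in\cK~.
\]
On the other hand, expanding the definitions and cancelling the $F(y)$ and $F(z)$ terms gives the purely algebraic three-point identity
\[
D_F(y,x) - D_F(y,z) - D_F(z,x) = (y-z)^{\top}\bigl(\nabla F(z)-\nabla F(x)\bigr)~,
\]
valid for every $y\in\oD$ and $z\in\cD$. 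Combining the two displays, the right-hand side of the identity is nonnegative, so $D_F(y,x)\ge D_F(y,z)+D_F(z,x)$ for all $y\in\cK$, which is the desired generalized Pythagorean inequality. The only genuinely nontrivial step is the interior claim in the second paragraph; the optimality condition is standard convex analysis and the identity is a one-line expansion, so once the projection is known to sit in $\cD$ the rest is routine.
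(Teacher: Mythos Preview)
The paper does not actually prove this lemma; it simply cites \cite[Lemma~11.3]{CL06}. Your argument is precisely the standard one found there (and in essentially every treatment of Bregman projections): the three-point identity
\[
D_F(y,x) - D_F(y,z) - D_F(z,x) = (y-z)^{\top}\bigl(\nabla F(z)-\nabla F(x)\bigr)
\]
combined with the first-order optimality condition for the projection. Both your identity and your optimality condition are correct, and you are right to flag that the displayed inequality is intended for $z$ equal to the projection (the ``for all $z\in\cK\cap\cD$'' in the statement is really asserting that the projection lies in $\cK\cap\cD$, not quantifying over arbitrary such $z$).

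One small caveat on the existence part: you appeal to ``compactness of the sublevel set'' coming from ``the growth of the Legendre function $F$'', but Legendre functions need not be coercive (e.g., the negative entropy on the open orthant), so $D_F(\cdot,x)$ need not have compact sublevel sets on an unbounded closed $\cK$. In the paper's applications $\cK$ is always compact, so this is harmless there, but if you want the lemma at the stated generality you should either assume $\cK$ compact or invoke the stronger fact (from Bauschke--Borwein or Rockafellar) that Legendre functions are cofinite exactly when $\cD^*=\R^d$, which is what forces coercivity of $D_F(\cdot,x)$. Your argument that the minimizer cannot sit on $\oD\setminus\cD$ via essential smoothness is correct and is indeed the crux of the existence claim.
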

%
The idea of OMD is very simple: first, select a Legendre function $F$ on $\oD \supset \cK$ (such that $\cK \cap \cD \neq \emptyset$); second, perform a gradient descent step, where the update with the gradient is performed in the dual space $\cD^*$ rather than in the primal $\cD$; third, project back to $\cK$ according to the Bregman divergence defined by $F$.
\bookbox{
{\em OMD (Online Mirror Descent):}

\medskip\noindent
Parameters: compact and convex set $\cK\subseteq\R^d$, learning rate $\eta > 0$, Legendre function $F$ on $\oD \supset \cK$.\\
Initialize: ${\displaystyle x_1 \in \argmin_{x \in \cK} F(x)}$ (note that $x_1 \in \cK \cap \cD$).

\smallskip\noindent
For each round $t=1,2,\dots,n$
\begin{itemize}
\item[(1)] Play $x_t$ and observe loss vector $\ell_t$.
\item[(2)] $w_{t+1} = \nabla F^*\Bigl(\nabla F(x_t) - \eta \nabla \ell_t(x_t) \Bigr)$.
\item[(3)] ${\displaystyle x_{t+1} = \argmin_{y \in \cK} D_F(y, w_{t+1}) }$.
\end{itemize}
}

Note that step~(2) is well defined if the following consistency condition is satisfied:
\begin{equation} \label{eq:consistency}
\nabla F(x) - \eta \nabla \ell(x) \in \cD^* \qquad \forall (x,\ell) \in \bigl(\cK \cap \cD\bigr) \times \cL~. 
\end{equation}
Note also that step~(2) can be rewritten as
\begin{equation} \label{eq:MGD3}
\nabla F(w_{t+1}) = \nabla F(x_t) - \eta \nabla \ell_t(x_t)~.
\end{equation}
Finally, note that the Bregman projection in step~(3) is a convex program, in the sense that $x \mapsto D_F(x,y)$ is always a convex function. This does not necessarily imply that step~(3) can be performed efficiently, since in some cases the feasible set $\cK$ might only be described with an exponential number of constraints (we encounter examples like this in Section~\ref{sec:semibandit}).

In the description above we emphasized that $F$ has to be a Legendre function. In fact, as we see in Theorem~\ref{th:Fnotlegendre}, if $F$ has effective domain $\cK$ (that is, $F$ takes value $+\infty$ outside of $\cK$), then it suffices that the Legendre-Fenchel dual $F^*$ is differentiable on $\R^d$ to obtain a good regret bound. See the end of this section for more details on this.

When $\cK$ is the simplex and $F(x) = \sum_{i=1}^d x_i \ln x_i - \sum_{i=1}^d x_i$, OMD reduces to 
an exponentially weighted average forecaster, similar to those studied in Chapter~\ref{adversarial}. The well-known online gradient descent strategy corresponds to taking $F(x) = \frac12 \norm{x}^2_2$. In the following we shall see other possibilities for the Legendre function $F$.

We prove now a very general and powerful theorem concerning the regret of OMD.
\begin{theorem}[Regret of OMD with a Legendre function] \label{th:MGD}
Let $\cK$ be a compact and convex set of arms, $\cL$ be a set of subdifferentiable functions, and $F$ a Legendre function defined on $\oD \supset \cK$, such that \eqref{eq:consistency} is satisfied. Then OMD satisfies for any $x \in \cK$,
\begin{align*}
\sum_{t=1}^n \ell_t(x_t) - \sum_{t=1}^n \ell_t(x)
& \leq \frac{F(x) - F(x_1)}{\eta} \\
&  \quad + \frac{1}{\eta} \sum_{t=1}^n D_{F^*}\bigg(\nabla F(x_t) - \eta \nabla \ell_t(x_t), \nabla F(x_t)\bigg)~.
\end{align*}
\end{theorem}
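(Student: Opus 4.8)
The plan is to reduce the statement to a bound on the \emph{linearized} regret and then to telescope a sequence of Bregman divergences. First I would invoke the subgradient inequality: since each $\ell_t \in \cL$ is subdifferentiable, for any $x \in \cK$ we have $\ell_t(x_t) - \ell_t(x) \le \nabla \ell_t(x_t)^{\top}(x_t - x)$, so it suffices to bound $\sum_{t=1}^n \nabla \ell_t(x_t)^{\top}(x_t - x)$. The next step is to rewrite the gradient in terms of the mirror map. By step~(2) of OMD and equation~\eqref{eq:MGD3}, we have $\nabla F(w_{t+1}) = \nabla F(x_t) - \eta \nabla \ell_t(x_t)$, hence $\eta \nabla \ell_t(x_t) = \nabla F(x_t) - \nabla F(w_{t+1})$, which expresses the per-step linearized loss entirely through $\nabla F$ at $x_t$ and $w_{t+1}$.

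The heart of the argument is a three-point identity for Bregman divergences, namely
\[
\bigl(\nabla F(y) - \nabla F(z)\bigr)^{\top}(a - y) = D_F(a,z) - D_F(a,y) - D_F(y,z)~,
\]
which follows by expanding the three divergences from the definition $D_F(u,v) = F(u) - F(v) - (u-v)^{\top}\nabla F(v)$ and cancelling the $F$-terms. Applying it with $y = x_t$, $z = w_{t+1}$, $a = x$ gives
\[
\eta\,\nabla \ell_t(x_t)^{\top}(x_t - x) = D_F(x,x_t) - D_F(x,w_{t+1}) + D_F(x_t,w_{t+1})~.
\]
Summing over $t$, I would then treat the two resulting sums separately.

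For the first sum $\sum_t\bigl(D_F(x,x_t) - D_F(x,w_{t+1})\bigr)$, I apply the generalized Pythagorean inequality (Lemma~\ref{lem:proj}): since $x_{t+1} = \argmin_{y\in\cK} D_F(y,w_{t+1})$ and $x \in \cK$, we get $D_F(x,w_{t+1}) \ge D_F(x,x_{t+1}) + D_F(x_{t+1},w_{t+1}) \ge D_F(x,x_{t+1})$. Replacing $w_{t+1}$ by $x_{t+1}$ makes the sum telescope to $D_F(x,x_1) - D_F(x,x_{n+1}) \le D_F(x,x_1)$. Finally, because $x_1 \in \argmin_{x\in\cK} F(x)$, the first-order optimality condition yields $\nabla F(x_1)^{\top}(x - x_1) \ge 0$, so $D_F(x,x_1) = F(x) - F(x_1) - \nabla F(x_1)^{\top}(x-x_1) \le F(x) - F(x_1)$. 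For the second sum, I convert to the dual via the relation~\eqref{eq:transrelation}: $D_F(x_t,w_{t+1}) = D_{F^*}\bigl(\nabla F(w_{t+1}), \nabla F(x_t)\bigr) = D_{F^*}\bigl(\nabla F(x_t) - \eta \nabla \ell_t(x_t), \nabla F(x_t)\bigr)$. Collecting both sums, dividing by $\eta$, and combining with the subgradient inequality yields exactly the claimed bound.

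The steps are all short computations; the main thing to get right is the bookkeeping in the three-point identity and making sure every Bregman divergence is evaluated at admissible arguments. In particular, one must check that $w_{t+1}$ is well-defined and lies in $\cD$ — this is precisely what the consistency condition~\eqref{eq:consistency} guarantees, since it places $\nabla F(x_t) - \eta\nabla \ell_t(x_t)$ in the dual space $\cD^*$ where $\nabla F^*$ acts as the inverse of $\nabla F$. The only genuinely delicate point is the optimality argument bounding $D_F(x,x_1)$ by $F(x)-F(x_1)$, which relies on $x_1$ being the unconstrained-over-$\cK$ minimizer of $F$; everything else is mechanical.
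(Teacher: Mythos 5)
Your proposal is correct and follows essentially the same route as the paper's proof: linearize via the subgradient inequality, expand $\eta\nabla\ell_t(x_t)^{\top}(x_t-x)$ into the three Bregman divergences $D_F(x,x_t)+D_F(x_t,w_{t+1})-D_F(x,w_{t+1})$, apply the generalized Pythagorean inequality to telescope, bound $D_F(x,x_1)\le F(x)-F(x_1)$ by first-order optimality, and pass to the dual via~\eqref{eq:transrelation}. The only cosmetic difference is that you state the three-point identity explicitly and discard the nonnegative terms $D_F(x_{t+1},w_{t+1})$ slightly earlier; the argument is the same.
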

\begin{proof}
Let $x \in \cK$. Since $\cL$ is a set of subdifferentiable functions, we have:
$$\sum_{t=1}^n \bigl( \ell_t(x_t) - \ell_t(x) \bigr) \leq \sum_{t=1}^n \nabla \ell_t(x_t)^{\top} (x_t -x)~.$$
Using \eqref{eq:MGD3}, and applying the definition of $D_F$, one obtains
  \begin{align*}
  \eta \nabla \ell_t(x_t)^{\top} (x_t - x) & = (x-x_t)^{\top} \big(\nabla F(w_{t+1}) - \nabla F(x_t) \big)\\
  & = D_F(x,x_t)+D_F(x_t,w_{t+1})-D_F(x,w_{t+1})~.
  \end{align*}
By Lemma \ref{lem:proj}, one gets 
  $D_F(x,w_{t+1}) \ge D_F(x,x_{t+1}) + D_F(x_{t+1},w_{t+1}),$ hence
  \begin{align*}
  \eta \nabla \ell_t(x_t)^{\top} (x_t - x) & \le D_F(x,x_t)+D_F(x_t,w_{t+1}) \\
& \quad -D_F(x,x_{t+1})-D_F(x_{t+1},w_{t+1})~.
  \end{align*}
Summing over $t$ then gives  
  \begin{align*}
  \sum_{t=1}^n \eta \nabla \ell_t(x_t)^{\top} (x_t - x) & \le D_F(x,x_1)-D_F(x,x_{n+1}) \\
& \quad +\sum_{t=1}^n \big(D_F(x_t,w_{t+1}) -D_F(x_{t+1},w_{t+1})\big)~.
  \end{align*}
By the nonnegativity of the Bregman divergences, we get 
$$\sum_{t=1}^n \eta \nabla \ell_t(x_t)^{\top} (x_t - x)
\le D_F(x,x_1)+\sum_{t=1}^n D_F(x_t,w_{t+1})~.$$
From \eqref{eq:transrelation}, one has
  $
  D_F(x_t,w_{t+1})=D_{F^*}\big(\nabla F(x_t)- \eta \nabla \ell_t(x_t),\nabla F(a_t)\big)
  $
and, moreover, by first-order optimality, one has $D_F(x,x_1) \leq F(x) - F(x_1)$, which concludes the proof.
\end{proof}
We show now how to prove a regret bound if $F$ has effective domain $\cK$ and $F^*$ is differentiable on $\R^d$, but not necessarily Legendre. In this case, it is easy to see that step~(1) and step~(2) in OMD reduce to
\[
x_{t+1} = \nabla F^* \left( - \eta \sum_{s=1}^{t-1} \nabla \ell_s(x_s) \right)~.
\]
\begin{theorem}[Regret of OMD with non-Legendre function] \label{th:Fnotlegendre}
Let $\cK$ be a compact set of actions, $\cL$ be a set of subdifferentiable functions, and $F$ a function with effective domain $\cK$ such that $F^*$ is differentiable on $\R^d$. Then for any $x \in \cK$ OMD satisfies
\begin{align*}
\sum_{t=1}^n \ell_t(x_t) - \sum_{t=1}^n \ell_t(x)
& \leq \frac{F(x) - F(x_1)}{\eta} \\
& + \frac{1}{\eta} \sum_{t=1}^n D_{F^*}\bigg(- \eta \sum_{s=1}^{t} \nabla \ell_s(x_s), - \eta \sum_{s=1}^{t-1} \nabla \ell_s(x_s) \bigg).
\end{align*}
\end{theorem}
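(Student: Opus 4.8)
The plan is to follow the same skeleton as Theorem~\ref{th:MGD}, but since here $F$ need not be differentiable --- its effective domain is exactly $\cK$, so $\nabla F$ may fail to exist on $\partial\cK$ --- I would run the entire telescoping argument in the \emph{dual} space, using only $F^*$ and the Fenchel--Young inequality, and never invoking $\nabla F$ or the Bregman projection of Lemma~\ref{lem:proj}. As in Theorem~\ref{th:MGD}, the first step is to use subdifferentiability of the losses to linearize: writing $g_t = \nabla\ell_t(x_t)$,
\[
\sum_{t=1}^n \bigl(\ell_t(x_t) - \ell_t(x)\bigr) \le \sum_{t=1}^n g_t^{\top}(x_t - x),
\]
so it suffices to bound the right-hand side.

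Next I would introduce the dual iterates $\theta_t = -\eta\sum_{s=1}^{t-1} g_s$, so that $\theta_1 = 0$, $\theta_{t+1} - \theta_t = -\eta g_t$, and the reduced update reads $x_t = \nabla F^*(\theta_t)$; here I use that $\nabla F^*(\theta) = \argmax_{x}\bigl(\theta^{\top}x - F(x)\bigr)$ automatically lands in $\mathrm{dom}\,F = \cK$, which is precisely why no explicit projection step is needed. The crux is a one-line identity obtained by expanding $D_{F^*}$ at consecutive dual points: since $\nabla F^*(\theta_t)=x_t$ and $\theta_{t+1}-\theta_t=-\eta g_t$,
\[
D_{F^*}(\theta_{t+1},\theta_t) = F^*(\theta_{t+1}) - F^*(\theta_t) + \eta\, g_t^{\top} x_t .
\]
Rearranging to $\eta\, g_t^{\top} x_t = D_{F^*}(\theta_{t+1},\theta_t) + F^*(\theta_t) - F^*(\theta_{t+1})$ and summing over $t$ telescopes the $F^*$ terms, giving $\eta\sum_t g_t^{\top}x_t = \sum_t D_{F^*}(\theta_{t+1},\theta_t) + F^*(\theta_1) - F^*(\theta_{n+1})$.

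For the comparator term I would note $\eta\sum_t g_t = \theta_1 - \theta_{n+1} = -\theta_{n+1}$, so $-\eta\sum_t g_t^{\top} x = \theta_{n+1}^{\top} x$, and bound this by Fenchel--Young, $\theta_{n+1}^{\top} x \le F(x) + F^*(\theta_{n+1})$, so that the $\pm F^*(\theta_{n+1})$ contributions cancel. Finally I would identify $F^*(\theta_1)=F^*(0)=-\inf_x F(x) = -F(x_1)$, using $x_1\in\argmin_x F(x)$. Collecting terms, dividing by $\eta$, and writing $\theta_{t+1},\theta_t$ back as the partial sums $-\eta\sum_{s=1}^{t}\nabla\ell_s(x_s)$ and $-\eta\sum_{s=1}^{t-1}\nabla\ell_s(x_s)$ yields exactly the claimed bound, combined with the linearization above.

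The main obstacle --- and the only genuine departure from Theorem~\ref{th:MGD} --- is that I cannot decompose $\eta\, g_t^{\top}(x_t-x)$ into primal Bregman divergences $D_F(x,x_t)$, etc., because $\nabla F$ is unavailable on the boundary of $\cK$. The device that replaces the primal decomposition is to keep the comparator $x$ inside a single Fenchel--Young inequality against the final dual point $\theta_{n+1}$; this is what forces the boundary terms to cancel without any projection argument, and it is also the point where the hypothesis that $F^*$ is differentiable on all of $\R^d$ (so that $\nabla F^*(\theta_t)$ and each $D_{F^*}(\theta_{t+1},\theta_t)$ are well defined) is essential. I would also record the consistency check $\nabla F^*(0)=x_1$, which aligns the dual argument with the OMD initialization.
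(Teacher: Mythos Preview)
Your proposal is correct and follows essentially the same approach as the paper: linearize via subgradients, apply Fenchel--Young to the comparator term $-\eta\sum_t g_t^{\top}x$, telescope the $F^*$ values, identify $F^*(0)=-F(x_1)$, and recognize each increment $F^*(\theta_{t+1})-F^*(\theta_t)$ as $-\eta\, x_t^{\top}g_t + D_{F^*}(\theta_{t+1},\theta_t)$. The only cosmetic difference is ordering --- the paper applies Fenchel--Young first and then telescopes, whereas you telescope the $x_t$ contributions first and then invoke Fenchel--Young --- but the ingredients and the cancellation of $\pm F^*(\theta_{n+1})$ are identical.
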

\begin{proof}
Let $x \in \cK$. Since $\cL$ is a set of subdifferentiable functions, we have
$$\sum_{t=1}^n \bigl( \ell_t(x_t) - \ell_t(x) \bigr) \leq \sum_{t=1}^n \nabla \ell_t(x_t)^{\top} (x_t -x)~.$$ Using the Fenchel-Young inequality, one obtains
\begin{align*}
- \eta \sum_{t=1}^n &\nabla \ell_t(x_t)^{\top} x
\leq
    F(x) + F^*\left( - \eta \sum_{t=1}^{n} \nabla \ell_t(x_t) \right)
\\&=
    F(a) + F^*(0) \\ & \quad + \sum_{t=1}^n \left( F^*\left( - \eta \sum_{s=1}^{t} \nabla \ell_s(x_s)\right)  - F^*\left( - \eta \sum_{s=1}^{t-1} \nabla \ell_s(x_s)\right) \right)~.
\end{align*}
Observe that $F^*(0)= - F(x_1)$ and, for each term in the above sum,
\begin{align*}
    \nabla F^*&\left( - \eta \sum_{s=1}^{t-1} \nabla \ell_s(x_s) \right)^{\top} \Bigl(- \eta \nabla \ell_t(x_t)\Bigr)
\\& \quad
    + D_{F^*}\left(- \eta \sum_{s=1}^{t} \nabla \ell_s(x_s), - \eta \sum_{s=1}^{t-1} \nabla \ell_s(x_s)\right)
\\ &=
    - \eta x_t^{\top} \nabla \ell_t(x_t) + D_{F^*}\left(- \eta \sum_{s=1}^{t} \nabla \ell_s(x_s), - \eta \sum_{s=1}^{t-1} \nabla \ell_s(x_s) \right)~.
\end{align*}
\end{proof}

\section{Online Stochastic Mirror Descent (OSMD)} \label{sec:OSMD}
We now turn to the analysis of the bandit setting, where the gradient information $\nabla \ell_t(x_t)$ is not available, and thus one cannot run OMD. This scenario has been extensively in gradient-free optimization, and the basic idea is to play a perturbed version $\tilde{x}_t$ of the current point $x_t$. This should be done in such a way that, upon observing $\ell_t(\tilde{x}_t)$, one can build an unbiased estimate $\tilde{g}_t$ of $\nabla \ell_t(x_t)$. Whereas such estimators are presented in Chapter~\ref{nonlinear}, here we restrict our attention to linear losses. For this simpler case we consider specialized estimators with optimal performances.
Given a perturbation scheme, one can run OMD with the gradient estimates instead of the true gradients. We call Online Stochastic Mirror Descent (OSMD) the resulting algorithm.
\bookbox{
{\em OSMD (Online Stochastic Mirror Descent):}

\smallskip\noindent
Parameters: compact and convex set $\cK\subseteq\R^d$, learning rate $\eta > 0$, Legendre function $F$ on $\oD \supset \cK$.\\
Initialize: ${\displaystyle x_1 \in \argmin_{x \in \cK} F(x)}$ (note that $x_1 \in \cK \cap \cD$).

\smallskip\noindent
For each round $t=1,2,\dots,n$
\begin{itemize}
\vspace{-2mm}
\item[(1)] Play a random perturbation $\tilde{x}_t$ of $x_t$ and observe $\ell_t\bigl(\tilde{x}_t\bigr)$
\item[(2)] Compute random estimate $\tilde{g}_t$ of $\nabla \ell_t(x_t)$
\item[(3)] $w_{t+1} = \nabla F^*\Bigl(\nabla F(x_t) - \eta \tilde{g}_t \Bigr)$
\item[(4)] ${\displaystyle x_{t+1} = \argmin_{y \in \cK} D_F(y, w_{t+1}) }$
\end{itemize}
}

In order to relate this linear bandit strategy to the Exp2 forecaster~(\ref{eq:exp2def}), it is important to observe that running the Exp2 forecaster over a finite set $\cK$ of arms, with exploration distribution $\mu$ and mixing coefficient $\gamma > 0$, is equivalent to running OSMD over the $|\cK|$-dimensional simplex with $F(x) = \tfrac{1}{\eta}\sum_{x\in\cK} x\ln x$ (the negative entropy), $\tilde{x}_t$ drawn from $(1-\gamma)x_t + \gamma\,\mu$, and estimated linear loss $\tilde{g}_t = \bigl(\langle x, \tilde{\ell}_t \rangle\bigr)_{x\in\cK}$. Indeed, the projection step (4), when $F$ is the negative entropy, corresponds to the standard normalization of a probability distribution.

The following theorem establishes a general regret bound for OSMD. Note that here the pseudo-regret is defined as
$$\oR_n = \E \sum_{t=1}^n \ell_t(\tilde{x}_t) - \min_{x \in \cK} \E \sum_{t=1}^n \ell_t(x)~.$$
Note also that we state the theorem for a Legendre function $F$, but a similar result can be obtained under the same assumptions as those of Theorem~\ref{th:Fnotlegendre}.
\begin{theorem}[Pseudo-regret of OSMD] \label{th:OSMD2}
Let $\cK$ be a compact and convex set, $\cL$ a set of subdifferentiable functions, and $F$ a Legendre function defined on $\oD \supset \cK$. If OSMD is run with a loss estimate $\tilde{g}_t$ such that~\eqref{eq:consistency} is satisfied (with $\nabla \ell(x)$ replaced by $\tilde{g}_t$), and with $\E\bigl[\tilde{g}_t \mid x_t\bigr] =\nabla \ell_t(x_t)$, then
\begin{align*}
\oR_n & \leq \frac{\sup_{x \in \cK} F(x) - F(x_1)}{\eta} + \frac{1}{\eta} \sum_{t=1}^n \E\, D_{F^*}\Bigl(\nabla F(x_t) - \eta \tilde{g}_t, \nabla F(x_t)\Bigr) \\
&\quad + \sum_{t=1}^n \E\Bigl[\norm{x_t - \tilde{x}_t}  \norm{\tilde{g}_t}_*\Bigr]
\end{align*}
for any norm $\norm{\,\cdot\,}$.
Moreover if the loss is linear, that is $\ell(x) = \ell^{\top} x$, then
\begin{align*}
\oR_n & \leq \frac{\sup_{x \in \cK} F(x) - F(x_1)}{\eta} + \frac{1}{\eta} \sum_{t=1}^n \E\, D_{F^*}\Bigl(\nabla F(x_t) - \eta \tilde{g}_t, \nabla F(x_t)\Bigr) \\
&\quad + \sum_{t=1}^n \E\Bigl[\norm{x_t - \E\bigl[\tilde{x}_t \mid x_t\bigr]} \norm{\tilde{g}_t}_*\Bigr]~.
\end{align*}
\end{theorem}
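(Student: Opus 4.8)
The plan is to recognize that OSMD is literally OMD run on the \emph{surrogate} linear losses $x \mapsto \tilde{g}_t^\top x$: steps~(3)--(4) of OSMD coincide with steps~(2)--(3) of OMD once $\nabla\ell_t(x_t)$ is replaced by $\tilde{g}_t$, and the consistency hypothesis (with $\nabla\ell(x)$ replaced by $\tilde{g}_t$) guarantees each update is well defined. Hence Theorem~\ref{th:MGD}, applied to the linear map whose gradient is $\tilde{g}_t$, yields for every fixed $x \in \cK$
\[
\sum_{t=1}^n \tilde{g}_t^\top(x_t - x) \le \frac{F(x) - F(x_1)}{\eta} + \frac{1}{\eta}\sum_{t=1}^n D_{F^*}\bigl(\nabla F(x_t) - \eta\tilde{g}_t,\,\nabla F(x_t)\bigr).
\]
Taking $x$ to be the comparator $x^\star \in \argmin_{x\in\cK}\E\sum_t \ell_t(x)$, bounding $F(x^\star)\le \sup_{x\in\cK}F(x)$, and taking expectations already produces the first two terms of each claimed bound.

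Next I would connect the true pseudo-regret to this surrogate regret by splitting each round's excess loss as
\[
\ell_t(\tilde{x}_t) - \ell_t(x^\star) = \bigl(\ell_t(\tilde{x}_t) - \ell_t(x_t)\bigr) + \bigl(\ell_t(x_t) - \ell_t(x^\star)\bigr).
\]
For the second (linearizable) term, subdifferentiability of $\ell_t$ gives $\ell_t(x_t)-\ell_t(x^\star)\le \nabla\ell_t(x_t)^\top(x_t - x^\star)$. Since $x_t$ is computed from $\tilde{g}_1,\dots,\tilde{g}_{t-1}$, it is measurable before $\tilde{g}_t$ is drawn; combining this with $x^\star$ being a fixed vector and the unbiasedness hypothesis $\E[\tilde{g}_t \mid x_t]=\nabla\ell_t(x_t)$, the tower rule gives $\E\bigl[\nabla\ell_t(x_t)^\top(x_t - x^\star)\bigr]=\E\bigl[\tilde{g}_t^\top(x_t - x^\star)\bigr]$. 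Summing over $t$ and invoking the surrogate bound above therefore controls $\E\sum_t(\ell_t(x_t)-\ell_t(x^\star))$ by exactly the first two terms of the theorem.

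It remains to absorb the perturbation gap $\ell_t(\tilde{x}_t)-\ell_t(x_t)$ into the third term, and this is the step that requires the most care (and where the two displayed bounds diverge). In the \emph{general} convex case, convexity together with Hölder's inequality with respect to any norm $\norm{\,\cdot\,}$ bounds the gap by $\norm{\tilde{x}_t - x_t}\,\norm{\tilde{g}_t}_*$, producing $\sum_t\E\bigl[\norm{x_t-\tilde{x}_t}\,\norm{\tilde{g}_t}_*\bigr]$. In the \emph{linear} case $\ell_t(x)=\ell_t^\top x$ one can do strictly better: the gap equals $\ell_t^\top(\tilde{x}_t - x_t)$, and taking $\E[\,\cdot\mid x_t]$ \emph{inside} the inner product (legitimate by linearity) replaces $\tilde{x}_t$ by its conditional mean $\E[\tilde{x}_t\mid x_t]$ \emph{before} any norm estimate is applied; rewriting $\ell_t^\top(\E[\tilde{x}_t\mid x_t]-x_t)=\E[\tilde{g}_t\mid x_t]^\top(\E[\tilde{x}_t\mid x_t]-x_t)$ via unbiasedness and only then applying Hölder yields the sharper $\sum_t\E\bigl[\norm{x_t-\E[\tilde{x}_t\mid x_t]}\,\norm{\tilde{g}_t}_*\bigr]$. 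Adding the two contributions gives the stated bounds.

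The main obstacle is precisely this perturbation term: the filtration bookkeeping must be kept exact (which quantities are measurable at which stage) so that $\E[\tilde{g}_t\mid x_t]=\nabla\ell_t(x_t)$ can be applied where needed, and linearity must be exploited at exactly the right moment so that Jensen's inequality is not spent prematurely and the conditional mean $\E[\tilde{x}_t\mid x_t]$ survives inside the norm, delivering the tighter linear-case constant rather than the cruder general-case one.
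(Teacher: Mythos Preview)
Your proposal is correct and follows essentially the same route as the paper: invoke Theorem~\ref{th:MGD} with $\tilde{g}_t$ in place of $\nabla\ell_t(x_t)$, decompose $\ell_t(\tilde{x}_t)-\ell_t(x)$ into a perturbation piece $\ell_t(\tilde{x}_t)-\ell_t(x_t)$ and a linearized piece $\ell_t(x_t)-\ell_t(x)$, control the latter via subdifferentiability together with the unbiasedness $\E[\tilde{g}_t\mid x_t]=\nabla\ell_t(x_t)$ and the tower rule, and bound the former by H\"older. The paper is in fact terser than you on the linear refinement, dispatching it with ``the case of a linear loss follows very easily from the same computations.''
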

\begin{proof}
Using Theorem \ref{th:MGD} one directly obtains:
$$\sum_{t=1}^n \tilde{g}_t^{\top} (x_t - x) \leq \frac{F(x) - F(x_1)}{\eta} + \frac{1}{\eta} \sum_{t=1}^n \E\, D_{F^*} \Bigl(\nabla F(x_t) - \eta \tilde{g}_t, \nabla F(x_t)\Bigr)~.$$
Moreover since $\E\bigl[\tilde{g}_t \mid x_t\bigr] =\nabla \ell_t(x_t)$, one has:
\begin{align*}
\E \sum_{t=1}^n \bigl( \ell_t(\tilde{x}_t) - \ell_t(x) \bigr) & = \E \sum_{t=1}^n \Bigl( \ell_t(\tilde{x}_t) - \ell_t(x_t) + \ell_t(x_t) - \ell_t(x) \Bigr) \\
& \leq \E \sum_{t=1}^n \norm{x_t - \tilde{x}_t} \norm{\tilde{g}_t}_* + \E \sum_{t=1}^n \nabla \ell_t(x_t)^{\top} (x_t - x) \\
& = \E \sum_{t=1}^n \norm{x_t - \tilde{x}_t} \norm{\tilde{g}_t}_* + \E \sum_{t=1}^n \tilde{g}_t^{\top} (x_t - x)
\end{align*}
which concludes the proof of the first regret bound. The case of a linear loss follows very easily from the same computations.
\end{proof}

\section{Online combinatorial optimization} \label{sec:semibandit}
In this section we consider an interesting special case of online linear optimization. In the online combinatorial optimization setting the set of arms is $\cC \subseteq \{0,1\}^d$ and the set of linear loss functions is $\cL = [0,1]^d$. We assume $\norm{v}_1 = m$ for all $v\in\cC$ and for some integer $m \le d$. Many interesting problems fall into this framework, including ranking/selection of $m$ items, or path planning.

Here we focus on the version of the problem with {\em semi-bandit} feedback, which is defined as follows: after playing $v_t \in \cC$, one observes $\bigl(\ell_t(1) v_t(1), \hdots, \ell_t(d) v_t(d)\bigr)$. Namely, one only observes the coordinates of the loss that were {\em active} in the arm $v_t$ that we chose. This setting has thus a much weaker feedback than the full information case, but still stronger than the bandit case. Note that the semi-bandit setting includes the basic multi-armed bandit problem of Chapter~\ref{adversarial}, which simply corresponds to $\cC = \{e_1, \hdots, e_d\}$ where $e_1, \hdots, e_d$ is the canonical basis of $\R^d$.

Again, the key to tackle this kind of problem is to select $v_t$ at random from some probability distribution $p_t$ over $\cC$. Note that such a probability corresponds to an average point $x_t \in \conv(\cC)$. Turning the tables, one can view $v_t$ as a random perturbation of $x_t$ such that $\E\bigl[v_t \mid x_t\bigr] = x_t$. This suggests a strategy: play OSMD on $\cK=\conv(\cC)$, with $\tilde{x}_t = v_t$. Surprisingly, we show that this randomization is enough to obtain a good unbiased estimate of the loss, and that it is not necessary to add further perturbations to $x_t$. Note that $\E\bigl[\tilde{x}_t \mid x_t\bigr] = x_t$ by definition. We now need to describe how to obtain an unbiased estimate of the gradient (which is the loss vector itself, since losses are linear). The following simple formula gives an unbiased estimate of the loss:
\begin{equation} \label{eq:semibanditestimate}
\tilde{\ell}_t(i) = \frac{\ell_t(i)\,v_t(i)}{x_t(i)} \qquad \forall i \in \{1,\hdots, d\} .
\end{equation}
Note that this is a valid estimate since it only makes use of $\bigl(\ell_t(1) v_t(1), \hdots, \ell_t(d) v_t(d)\bigr)$. Moreover, it is unbiased with respect to the random drawing of $v_t$ from $p_t$. Indeed,
\[
    \E\bigl[\tilde{\ell}_t(i) \mid x_t\bigr] = \frac{\ell_t(i)}{x_t(i)}\E\bigl[v_t(i) \mid x_t\bigr] = \ell_t(i)~.
\]
Using Theorem \ref{th:OSMD2} one directly obtains:
\begin{align}
\nonumber
    \oR_n
&\le
    \frac{\sup_{x \in \cK} F(x) - F(x_1)}{\eta}
\\ &\quad + \frac{1}{\eta} \sum_{t=1}^n \E\, D_{F^*}\Bigl(\nabla F(x_t) - \eta \tilde{\ell}_t, \label{eq:semibanditregret}
\nabla F(x_t)\Bigr)~.
\end{align}
We show now how to use this bound to obtain concrete performances for OSMD using the negative entropy as Legendre function. Later, we show that one can improve the results by logarithmic factors, using a more subtle Legendre function.
%

We start with OSMD and the negative entropy.
\begin{theorem}[OSMD with negative entropy]
\label{th:osmdnegent}
For any set $\cC\subseteq\{0,1\}^d$, if OSMD is run on $\cK=\conv(\cC)$ with $F(x) = \sum_{i=1}^d x_i \ln x_i - \sum_{i=1}^d x_i$, perturbed points $\tilde{x}_t$ such that $\E\bigl[\tilde{x}_t \mid x_t\bigr] = x_t$, and loss estimates $\tilde{\ell}_t$, then
$$\oR_n \leq \frac{m}{\eta}\ln \frac{d}{m} + \frac{\eta}{2} \sum_{t=1}^n \sum_{i=1}^d \E\bigl[x_t(i)\,\tilde{\ell}_t(i)^2\bigr]~.$$
In particular, with the estimate \eqref{eq:semibanditestimate} and $\eta= \sqrt{\frac{2m}{nd}\ln\frac{d}{m}}$,
$$\oR_n \leq  \sqrt{2 m d n \ln \frac{d}{m}}~.$$
\end{theorem}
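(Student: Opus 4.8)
The plan is to apply the general OSMD regret bound~\eqref{eq:semibanditregret} with the negative entropy $F(x) = \sum_{i=1}^d x_i \ln x_i - \sum_{i=1}^d x_i$ and then evaluate the two terms explicitly. First I would bound the range term $\frac{\sup_{x\in\cK} F(x) - F(x_1)}{\eta}$. Since $\cK = \conv(\cC)$ and every $v\in\cC$ satisfies $\norm{v}_1 = m$, every $x\in\cK$ also has $\sum_i x_i = m$. The starting point $x_1 = \argmin_{x\in\cK} F(x)$ is the uniform-type point $x_1(i) = m/d$, so $F(x_1) = \sum_i \tfrac{m}{d}\ln\tfrac{m}{d} - m = m\ln\tfrac{m}{d} - m$, while $\sup_{x\in\cK}F(x)$ is maximized at a vertex where $F(v) = -m$ (each coordinate is $0$ or $1$, and $1\cdot\ln 1 = 0$). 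Thus $\sup_x F(x) - F(x_1) = -m - (m\ln\tfrac{m}{d} - m) = m\ln\tfrac{d}{m}$, giving the first summand $\tfrac{m}{\eta}\ln\tfrac{d}{m}$.

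Next I would handle the divergence term. For the negative entropy one has the dual $F^*(u) = \sum_i e^{u_i - 1}$, which is differentiable on all of $\R^d$, and a direct computation of $D_{F^*}$ yields the standard exponential-weights estimate: using the inequality $e^{-x} - 1 + x \le \tfrac{x^2}{2}$ for $x\ge 0$ (exactly as in the second step of the proof of Theorem~\ref{th:Exp3}), one obtains
\[
    D_{F^*}\bigl(\nabla F(x_t) - \eta\tilde{\ell}_t,\, \nabla F(x_t)\bigr)
\le
    \frac{\eta^2}{2}\sum_{i=1}^d x_t(i)\,\tilde{\ell}_t(i)^2~.
\]
Here one must check the consistency condition~\eqref{eq:consistency} for the estimate (i.e.\ that $\nabla F(x_t) - \eta\tilde{\ell}_t$ lands in the dual space $\cD^* = \R^d$), which holds automatically since $F^*$ is defined on all of $\R^d$ and $\tilde{\ell}_t(i)\ge 0$. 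Combining with the range term proves the first displayed inequality of the theorem.

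For the second, more concrete bound, I would substitute the semi-bandit estimate~\eqref{eq:semibanditestimate}. The key computation is the second-moment bound: since $v_t(i)\in\{0,1\}$ we have $v_t(i)^2 = v_t(i)$, so
\[
    \E\bigl[x_t(i)\,\tilde{\ell}_t(i)^2\bigr]
=
    \E\left[x_t(i)\,\frac{\ell_t(i)^2 v_t(i)}{x_t(i)^2}\right]
=
    \E\left[\frac{\ell_t(i)^2}{x_t(i)}\,\E\bigl[v_t(i)\mid x_t\bigr]\right]
=
    \E\bigl[\ell_t(i)^2\bigr]
\le 1~,
\]
using $\E[v_t(i)\mid x_t] = x_t(i)$ and $\ell_t(i)\in[0,1]$. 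Summing over $i=1,\dots,d$ and $t=1,\dots,n$ bounds the divergence sum by $\tfrac{\eta}{2}\,nd$, yielding $\oR_n \le \tfrac{m}{\eta}\ln\tfrac{d}{m} + \tfrac{\eta n d}{2}$. Optimizing over $\eta$ by setting $\eta = \sqrt{\tfrac{2m}{nd}\ln\tfrac{d}{m}}$ balances the two terms and gives $\oR_n \le \sqrt{2mdn\ln\tfrac{d}{m}}$. The main obstacle I anticipate is the careful evaluation of $\sup_x F(x) - F(x_1)$ over the scaled simplex $\{x : \sum_i x_i = m,\ 0\le x_i\le 1\}$ rather than the standard probability simplex, and verifying that the minimizer really is the uniform point $m/d$ and the maximizer a vertex; the rest is a routine adaptation of the Exp3 analysis.
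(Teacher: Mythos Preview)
Your proposal is correct and follows the same overall route as the paper: apply~\eqref{eq:semibanditregret}, bound the range term by $m\ln(d/m)$, bound the Bregman term via $e^{-x}-1+x\le x^2/2$ for $x\ge 0$ to get $\tfrac{\eta^2}{2}\sum_i x_t(i)\tilde\ell_t(i)^2$, and then use $\E[v_t(i)\mid x_t]=x_t(i)$ for the second-moment bound.

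The one place where the paper differs is exactly the obstacle you flag. Rather than identifying $x_1$ as the uniform point $(m/d,\dots,m/d)$ --- which need not lie in $\cK=\conv(\cC)$ for an arbitrary $\cC$ --- the paper sidesteps this entirely: since $x_i\in[0,1]$ gives $\sum_i x_i\ln x_i\le 0$, one has $F(x)-F(x_1)\le \sum_i x_1(i)\ln\tfrac{1}{x_1(i)}$, and then Jensen's inequality (concavity of $\ln$, weights $x_1(i)/m$) yields $\sum_i x_1(i)\ln\tfrac{1}{x_1(i)}\le m\ln(d/m)$ for \emph{any} $x_1$ with $\sum_i x_1(i)=m$. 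Your argument can be patched the same way: even if the uniform point is not in $\cK$, it is the global minimizer of $F$ over $\{x:\sum_i x_i=m,\,x_i\ge 0\}\supseteq\cK$, so $F(x_1)\ge m\ln(m/d)-m$, and combined with $\sup_{x\in\cK}F(x)\le -m$ you still get $m\ln(d/m)$. The Jensen route is just cleaner and removes the need to think about where $x_1$ actually sits.
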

\begin{proof}
First note that:
$$F(x) - F(x_1) \leq \sum_{i=1}^d x_1(i) \ln \frac{1}{x_1(i)} \leq m \ln \left( \sum_{i=1}^d \frac{x_1(i)}{m} \frac{1}{x_1(i)}\right) = m \ln \frac{d}{m}~.$$
Moreover, straightforward computations give
$$D_{F^*}\bigg(\nabla F(x_t) - \eta \tilde{\ell}_t, \nabla F(x_t)\bigg) =\sum_{i=1}^d x_{t}(i)\, \Theta\bigl(- \eta \tilde{\ell}_t(i)\bigr)$$
where $\Theta : x \in \R \mapsto \exp(x) - 1 - x$. Using that $\Theta(x) \leq \frac{x^2}{2}$ for all $x \leq 0$, concludes the proof of the first inequality (since $\tilde{\ell}_t(i) \geq 0$). The second inequality follows from
\[
    x_t(i)\,\E\bigl[\tilde{\ell}_t(i)^2 \mid x_t\bigr] = x_t(i)\frac{\ell_t(i)^2}{x_t(i)^2}\E\bigl[v_t(i) \mid x_t\bigr] \le 1
\]
where we used $\ell_t(i) \in [0,1]$ and $v_t(i) \in \{0,1\}$.
\end{proof}
%
%
We now greatly generalize the negative entropy with the following definition. When used with OSMD, this more general entropy allows us to obtain a bound tighter than that of Theorem~\ref{th:osmdnegent}.
\begin{definition}
Let $\omega \ge 0$.
A function 
$\psi: (-\infty,a) \rightarrow \R^*_+$ for some 
$a\in\R\cup\{+\infty\}$ is called an 
$\omega$-potential if it is convex,
 continuously differentiable, and satisfies
\begin{align*} 
& \lim_{x\rightarrow -\infty} \psi(x)=\omega &&
\lim_{x\rightarrow a} \psi(x)= +\infty \notag \\
& \psi' > 0 && \int_{\omega}^{\omega+1} |\psi^{-1}(s)|ds <+\infty~.
\end{align*}
With a potential $\psi$ we associate the function $F_{\psi}$ defined on $\cD=(\omega, +\infty)^d$ by
$$F_{\psi}(x) = \sum_{i=1}^d \int_{\omega}^{x_i} \psi^{-1}(s) ds~.$$
\end{definition}
We restrict our attention to $0$-potentials. A non-zero $\omega$ might be used to derive high probability regret bounds (instead of pseudo-regret bounds). Note that with $\psi(x)=e^x$ we have that $F_{\psi}$ reduces to the negative entropy.
\begin{lemma} \label{lem:psi}
Let $\psi$ be a $0$-potential. Then $F_{\psi}$ is Legendre
and for all $u,v \in \cD^*=(-\infty,a)^d$ such that $u_i \leq v_i$ for $i=1,\hdots,d$,
$$D_{F^*}(u,v) \leq \frac{1}{2} \sum_{i=1}^d \psi'(v_i) (u_i - v_i)^2~.$$
\end{lemma}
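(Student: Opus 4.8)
The plan is to exploit the fact that $F_\psi$ is \emph{separable}: writing $f(y) = \int_0^y \psi^{-1}(s)\,ds$, we have $F_\psi(x) = \sum_{i=1}^d f(x_i)$, so both the Legendre property and the divergence bound reduce to one-dimensional statements about $f$. First I would record the elementary consequences of the potential axioms. Since $\psi$ is convex with $\psi' > 0$, it is a strictly increasing $C^1$ bijection from $(-\infty,a)$ onto $(0,+\infty)$ (using $\lim_{x\to-\infty}\psi = 0$ and $\lim_{x\to a}\psi = +\infty$), so $\psi^{-1}: (0,+\infty)\to(-\infty,a)$ is well defined, continuous, and strictly increasing. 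Hence $f'(y) = \psi^{-1}(y)$ is strictly increasing, making $f$ strictly convex and continuously differentiable on $\cD=(0,+\infty)$.

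To show $F_\psi$ is Legendre I would verify the two defining conditions on $\cD = (0,+\infty)^d$. Strict convexity and continuous differentiability on $\cD$ follow coordinatewise from the previous paragraph. For the blow-up condition (ii), the relevant boundary $\oD\setminus\cD$ is where some coordinate tends to $0^+$; there $\nabla F_\psi(x)_i = \psi^{-1}(x_i) \to -\infty$ since $\psi^{-1}(0^+) = -\infty$, so $\norm{\nabla F_\psi(x)}\to+\infty$. The one place the hypotheses are genuinely needed is continuity of $F_\psi$ up to $\oD$: near $s=0$ the integrand $\psi^{-1}(s)$ is unbounded below, and it is exactly the assumption $\int_0^1 |\psi^{-1}(s)|\,ds < +\infty$ that guarantees $f$ extends to a finite continuous value at $0$. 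This also identifies the dual domain as $\cD^* = \nabla F_\psi(\cD) = (-\infty,a)^d$, matching the hypothesis on $u,v$.

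For the divergence estimate, separability of $F_\psi$ gives separability of $F_\psi^*$, so $D_{F_\psi^*}(u,v) = \sum_{i=1}^d D_{f^*}(u_i,v_i)$ and it suffices to bound each scalar term for $u_i \le v_i$. By the earlier lemma, $\nabla F_\psi^* = (\nabla F_\psi)^{-1}$ on $\cD^*$, which coordinatewise reads $(f^*)'(w) = (\psi^{-1})^{-1}(w) = \psi(w)$ for $w\in(-\infty,a)$; differentiating, $(f^*)''(w) = \psi'(w)$, which is continuous. A second-order Taylor expansion of $f^*$ at $v_i$ with Lagrange remainder then yields
\[
    D_{f^*}(u_i,v_i) = f^*(u_i) - f^*(v_i) - (u_i - v_i)(f^*)'(v_i) = \tfrac12 (f^*)''(\xi_i)(u_i - v_i)^2 = \tfrac12 \psi'(\xi_i)(u_i - v_i)^2
\]
for some $\xi_i$ between $u_i$ and $v_i$. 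Finally, because $\psi$ is convex its derivative $\psi'$ is nondecreasing, and since $\xi_i \le v_i$ (as $u_i \le v_i$) we get $\psi'(\xi_i) \le \psi'(v_i)$; summing over $i$ gives the claimed bound. I expect the only delicate point to be the careful verification of the Legendre boundary behaviour and continuity up to $\oD$ via the integrability hypothesis, together with the clean identification $(f^*)' = \psi$; the Taylor step and the monotonicity of $\psi'$ are then routine.
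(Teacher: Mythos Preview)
Your proposal is correct and follows essentially the same route as the paper: separability reduces everything to one coordinate, the identification $(f^*)'=\psi$ gives $(f^*)''=\psi'$, and then a second-order Taylor expansion of the Bregman term combined with the monotonicity of $\psi'$ (from convexity of $\psi$) and $u_i\le v_i$ yields the bound. The only differences are cosmetic: the paper writes the remainder as $\tfrac12\max_{s\in[u_i,v_i]}\psi'(s)(u_i-v_i)^2$ rather than invoking the Lagrange form, and it dismisses the Legendre verification as ``easy to check'' whereas you spell out the boundary behaviour and the role of the integrability hypothesis.
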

\begin{proof}
It is easy to check that $F$ is a Legendre function. Moreover, since
	$
  \nabla F^*(u)= (\nabla F)^{-1}(u)= \big(\psi(u_1),\dots,\psi(u_d)\big)
	$
we obtain
	$$
  D_{F^*}(u,v)=\sum_{i=1}^d \bigg(\int_{v_i}^{u_i} \psi(s)ds-(u_i-v_i)\psi(v_i)\bigg)~.
  $$
From a Taylor expansion, we have
	$$
	D_{F^*}(u,v)\le\sum_{i=1}^d \max_{s\in[u_i,v_i]} \frac12 \psi'(s) (u_i-v_i)^2~.
	$$
Since the function $\psi$ is convex, and $u_i \leq v_i$, we have
	$$
	\max_{s\in[u_i,v_i]} \psi'(s) \le \psi'\big(\max\{u_i,v_i\}\big) \leq \psi'(v_i)
	$$
which gives the desired result.
\end{proof}
We are now ready to bound the pseudo-regret of OSMD run with an arbitrary $0$-potential. For a specific choice of the potential we obtain an improvement of Theorem~\ref{th:osmdnegent}. In particular for $m=1$ this result gives the log-free bound for the adversarial multi-armed bandit that was discussed in Section \ref{sec:logfree}.
\begin{theorem}[OSMD with a $0$-potential] \label{th:osmdzero}
For any set subset $\cC$ of $\{0,1\}^d$, if OSMD is run on $\cK=\conv(\cC)$ with $F_{\psi}$ defined by a $0$-potential $\psi$, and non-negative loss estimates $\tilde{\ell}_t$, then
$$\oR_n \leq \frac{\sup_{x \in \cK} F_{\psi}(x) - F_{\psi}(x_1)}{\eta} + \frac{\eta}{2} \sum_{t=1}^n \sum_{i=1}^d \E\left[\frac{\tilde{\ell}_t(i)^2}{(\psi^{-1})'\bigl(x_t(i)\bigr)}\right]~.$$
In particular, choosing the $0$-potential $\psi(x) = (- x)^{-q}$, the estimate~\eqref{eq:semibanditestimate}, and $\eta= \sqrt{\frac{2}{q-1} \frac{m^{1 - 2/q}}{d^{1 - 2/q}}}$,
$$\oR_n \leq q \sqrt{\frac{2}{q-1} m d n}~.$$
With $q=2$ this gives
$$\oR_n \leq 2 \sqrt{2 m d n}~.$$ 
\end{theorem}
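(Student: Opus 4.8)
The starting point is the regret bound~\eqref{eq:semibanditregret}, already obtained from Theorem~\ref{th:OSMD2} for the semi-bandit setting: because we play $\tilde{x}_t = v_t$ with $\E\bigl[\tilde{x}_t \mid x_t\bigr] = x_t$, the perturbation term $\sum_t \E\bigl[\norm{x_t - \E[\tilde{x}_t\mid x_t]}\,\norm{\tilde{\ell}_t}_*\bigr]$ in the linear-loss version vanishes identically. Instantiating~\eqref{eq:semibanditregret} at $F = F_{\psi}$ with gradient estimate $\tilde{g}_t = \tilde{\ell}_t$ gives
\[
\oR_n \le \frac{\sup_{x\in\cK}F_{\psi}(x) - F_{\psi}(x_1)}{\eta} + \frac1\eta \sum_{t=1}^n \E\,D_{F_{\psi}^*}\bigl(\nabla F_{\psi}(x_t) - \eta\tilde{\ell}_t,\,\nabla F_{\psi}(x_t)\bigr)~.
\]
I would check in passing that the consistency condition~\eqref{eq:consistency} holds for this estimate: since $\nabla F_{\psi}(x_t)_i = \psi^{-1}\bigl(x_t(i)\bigr) \in (-\infty,a)$ and $\tilde{\ell}_t \ge 0$, subtracting $\eta\tilde{\ell}_t$ keeps every coordinate in $\cD^* = (-\infty,a)^d$.

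The crux is converting the abstract Bregman term into the claimed per-coordinate form. Setting $v = \nabla F_{\psi}(x_t)$ and $u = v - \eta\tilde{\ell}_t$, the non-negativity of $\tilde{\ell}_t$ forces $u_i \le v_i$ for every $i$, which is exactly the hypothesis of Lemma~\ref{lem:psi}; it yields $D_{F_{\psi}^*}(u,v) \le \tfrac12\sum_i \psi'(v_i)\,\eta^2\tilde{\ell}_t(i)^2$. I would then use $\nabla F_{\psi}(x)_i = \psi^{-1}(x_i)$, so that $v_i = \psi^{-1}\bigl(x_t(i)\bigr)$, together with the inverse-function relation $\psi'\bigl(\psi^{-1}(y)\bigr) = 1/(\psi^{-1})'(y)$, to rewrite $\psi'(v_i) = 1/(\psi^{-1})'\bigl(x_t(i)\bigr)$. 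Substituting back and pulling out $\eta^2$ produces precisely the stated general bound. Recognizing that the monotonicity required by Lemma~\ref{lem:psi} is exactly what the non-negativity assumption supplies, and applying the inverse-function rule correctly, is the step I expect to require the most care; everything downstream is computation.

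For the explicit potential $\psi(x) = (-x)^{-q}$ (with $a=0$, a genuine $0$-potential precisely when $q>1$, as the integrability condition $\int_0^1|\psi^{-1}(s)|\,ds<\infty$ forces), I would compute $\psi^{-1}(s) = -s^{-1/q}$, whence $1/(\psi^{-1})'\bigl(x_t(i)\bigr) = q\,x_t(i)^{1+1/q}$. Combining this with the semi-bandit estimate~\eqref{eq:semibanditestimate}, for which $\E\bigl[\tilde{\ell}_t(i)^2 \mid x_t\bigr] = \ell_t(i)^2/x_t(i) \le 1/x_t(i)$, the variance contribution of coordinate $i$ is at most $q\,x_t(i)^{1/q}$. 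Since $\sum_i x_t(i) = m$ for every $x_t \in \conv(\cC)$, concavity of $t\mapsto t^{1/q}$ (Jensen / power-mean) gives $\sum_i x_t(i)^{1/q} \le d^{1-1/q}m^{1/q}$. The penalty term is handled the same way: $F_{\psi}(x) = -\tfrac{q}{q-1}\sum_i x_i^{1-1/q} \le 0$, and the identical Jensen step bounds $-F_{\psi}(x_1) \le \tfrac{q}{q-1}d^{1/q}m^{1-1/q}$. The overall bound thus has the shape $A/\eta + \eta B n$; balancing the two terms fixes $\eta$ as in the statement and yields $q\sqrt{\tfrac{2}{q-1}\,mdn}$, and specializing to $q=2$ recovers $2\sqrt{2mdn}$.
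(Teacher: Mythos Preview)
Your proposal is correct and follows essentially the same route as the paper: start from~\eqref{eq:semibanditregret}, check the consistency condition via non-negativity of $\tilde{\ell}_t$, apply Lemma~\ref{lem:psi} (whose hypothesis $u_i\le v_i$ is exactly what non-negativity gives), convert $\psi'(v_i)$ via the inverse-function rule, and then for the explicit potential $\psi(x)=(-x)^{-q}$ bound both the $F$-range and the variance term using $\sum_i x_t(i)=m$ together with a power-mean/H\"older step. The paper invokes H\"older's inequality where you say Jensen/power-mean, but the two yield the same bound $\sum_i x_t(i)^{1/q}\le m^{1/q}d^{1-1/q}$.
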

\begin{proof}
First note that since $\cD^*=(-\infty,a)^d$ and $\tilde{\ell}_t$ has non-negative coordinates, then~\eqref{eq:consistency} is satisfied and thus OSMD is well defined.

The first inequality trivially follows from~\eqref{eq:semibanditregret}, Lemma~\ref{lem:psi}, and the fact that $\psi'\bigl(\psi^{-1}(s)\bigr) = \frac{1}{(\psi^{-1})'(s)}$.

Let $\psi(x) = (- x)^{-q}$. Then we have that $\psi^{-1}(x) = - x^{-1/q}$ and $F(x)= - \frac{q}{q-1} \sum_{i=1}^d x_i^{1 - 1/q}$. In particular, by H{\"o}lder's inequality, since $\sum_{i=1}^d x_1(i) =m$,
$$F(x) - F(x_1) \leq \frac{q}{q-1} \sum_{i=1}^d x_1(i)^{1 - 1/q} \leq \frac{q}{q-1} m^{(q-1)/q} d^{1/q}~.$$
Moreover, note that $(\psi^{-1})'(x) = \frac1q x^{- 1 - 1/q}$, and
$$\sum_{i=1}^d \E\left[\frac{\tilde{\ell}_t(i)^2}{(\psi^{-1})'(x_t(i))} \,\bigg|\, x_t \right] \leq q \sum_{i=1}^d x_t(i)^{1/q} \leq q m^{1/q} d^{1 - 1/q}$$
which ends the proof.
\end{proof}

\section{Improved regret bounds for bandit feedback} \label{sec:ball}
We go back to the setting of linear losses with bandit feedback considered in Section~\ref{sec:Exp2}. Namely, actions belong to a compact and convex set $\cK\subseteq\R^d$, losses belong to a subset $\cL\subseteq\R^d$, and the loss of playing $x_t\in\cK$ at time $t$ is $x_t^{\top}\ell_t$, which is also the feedback received by the player. As we proved in Section~\ref{sec:Exp2}, under the bounded scalar loss assumption, $|x^{\top}\ell| \le 1$ for all $(x,\ell)\in\cK\times\cL$, one can obtain a regret bound of order $d \sqrt{n}$ (up to logarithmic factors) for any compact and convex set $\cK$. It can be shown that this rate is not improvable in general. However, results from Section~\ref{sec:semibandit} (or from Chapter~\ref{adversarial}) show that for the simplex, one can obtain a regret bound of order $\sqrt{d n}$, and we showed in Chapter~\ref{adversarial} that this rate is also unimprovable. The problem of obtaining a charaterization of the sets for which such improved regret bounds are possible is an open problem. Improved rates can be obtained for another convex body: the Euclidean ball. We now describe a strategy that attains a pseudo-regret of order $\sqrt{d n}$ (up to a logarithmic factor) in this case. The strategy is based on OSMD with a carefully chosen Legendre function.

In the following, let $\norm{\,\cdot\,}$ be the Euclidean norm. We consider the online linear optimization problem with bandit feedback on the Euclidean unit ball $\cK = \{x \in \R^d \,:\, \norm{x} \leq 1 \}$. We perform the following perturbation of a point $x_t$ in the interior of $\cK$,
\[
    \tilde{x}_t = \left\{ \begin{array}{cl}
        x_t / \norm{x_t} & \text{if $\xi_t = 1$,}
    \\
        \epsilon_t\,e_{I_t} & \text{otherwise}
    \end{array} \right.
\]
where $\xi_t$ is a Bernoulli random variable of parameter $\norm{x_t}$, $I_t$ is drawn uniformly at random in $\{1, \hdots, d\}$, and $\epsilon_t$ is a Rademacher random variable with parameter $\frac12$.

It is easy to check that this perturbation is unbiased, in the sense that
$\E\bigl[\tilde{x}_t \mid x_t\bigr] = x_t$.
An unbiased estimate of the loss vector is given by
\begin{equation} \label{eq:ballestimate}
\tilde{\ell}_t = d(1 - \xi_t) \frac{\tilde{x}_t^{\top}\ell_t}{1- \norm{x_t}}\,\tilde{x}_t~.
\end{equation}
Again, it is easy to check that
$\E\bigl[\tilde{\ell}_t \mid x_t\bigr] = x_t$.
We are now ready to prove the following result, showing that OSMD with a suitable $F$ achieves a pseudo-regret of order $\sqrt{dn\ln n}$ on the Euclidean ball.
\begin{theorem}[OSMD for the Euclidean ball] \label{th:ball}
Let $\cK = \cL = \{x \in \R^d \,:\, \norm{x} \leq 1 \}$ define an online linear optimization problem with bandit feedback. If OSMD is run on $\cK' = (1-\gamma)\cK$ with $F(x) = - \ln(1 - \norm{x}) - \norm{x}$ and the estimate~\eqref{eq:ballestimate}, then for any $\eta > 0$ such that $\eta d \leq \frac12$,
\begin{equation} \label{eq:ballregret1}
\oR_n \leq \gamma n + \frac{\ln \gamma^{-1}}{\eta} + \eta \sum_{t=1}^n \E\Bigl[(1 - \norm{x_t})\big\|\tilde{\ell}_t\big\|^2\Bigr]~.
\end{equation}
In particular, with $\gamma = \frac{1}{\sqrt{n}}$ and $\eta= \sqrt{\frac{\ln n}{2 n d}}$,
\begin{equation} \label{eq:ballregret2}
\oR_n \leq  3 \sqrt{d n \ln n}~.
\end{equation}
\end{theorem}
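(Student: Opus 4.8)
The plan is to invoke the master OSMD bound, Theorem~\ref{th:OSMD2} (in its linear-loss form), with the specific Legendre function $F(x) = -\ln(1-\norm{x}) - \norm{x}$, and then control each of the three resulting terms. First I would verify that $F$ is indeed a Legendre function on the open ball $\cD = \{x : \norm{x} < 1\}$ and compute its gradient and dual; because $F$ depends on $x$ only through $r = \norm{x}$, all computations reduce to one-dimensional calculus in $r$. This radial structure is the key simplification: $\nabla F(x) = \frac{1}{1-\norm{x}}\,x$, so $\nabla F$ scales $x$ radially, and the Bregman-divergence term $D_{F^*}(\nabla F(x_t) - \eta\tilde{\ell}_t,\, \nabla F(x_t))$ can be estimated by a second-order (Taylor) expansion exactly as in the proofs of Theorems~\ref{th:osmdnegent} and~\ref{th:osmdzero}.

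Next I would assemble~\eqref{eq:ballregret1}. Running OSMD on the shrunk ball $\cK' = (1-\gamma)\cK$ rather than on $\cK$ introduces the approximation term $\gamma n$: comparing against the best $x\in\cK$ versus the best $x\in\cK'$ costs at most $\gamma n$ since losses are bounded by $1$ in absolute value. The leading term $\frac{\sup_x F(x) - F(x_1)}{\eta}$ becomes $\frac{\ln\gamma^{-1}}{\eta}$ because on $\cK'$ we have $\norm{x}\le 1-\gamma$, so $-\ln(1-\norm{x}) \le \ln\gamma^{-1}$, while $x_1 = 0$ gives $F(x_1)=0$. The variance term is where the Bregman estimate feeds in: using Lemma-style second-order control one shows $\frac1\eta D_{F^*}(\cdot,\cdot) \le \eta(1-\norm{x_t})\|\tilde{\ell}_t\|^2$ (up to the constant absorbed by the condition $\eta d \le \tfrac12$, which keeps the perturbation $\eta\tilde{\ell}_t$ small enough that the quadratic remainder dominates). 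The final perturbation term $\E[\norm{x_t - \E[\tilde{x}_t\mid x_t]}\,\norm{\tilde{g}_t}_*]$ vanishes in the linear case because the perturbation is unbiased, $\E[\tilde{x}_t\mid x_t] = x_t$.

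To pass from~\eqref{eq:ballregret1} to~\eqref{eq:ballregret2}, I would bound the variance term explicitly. From the estimate~\eqref{eq:ballestimate} one computes $\E[(1-\norm{x_t})\|\tilde{\ell}_t\|^2 \mid x_t]$ by conditioning on $\xi_t$, $I_t$, and $\epsilon_t$: the factor $(1-\xi_t)$ kills the $\xi_t=1$ branch, the $d$ from the dimension and the $\frac{1}{1-\norm{x_t}}$ combine with $|\tilde{x}_t^{\top}\ell_t|\le 1$ (bounded scalar loss on the ball, using $\norm{e_{I_t}}=1$), and after the $(1-\norm{x_t})$ prefactor cancels the denominator one is left with a bound of order $d$. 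Hence $\eta\sum_{t=1}^n \E[\,\cdot\,] \le \eta\,d\,n$ up to a constant, and~\eqref{eq:ballregret1} reduces to $\oR_n \le \gamma n + \frac{\ln\gamma^{-1}}{\eta} + O(\eta d n)$. Substituting $\gamma = n^{-1/2}$ and $\eta = \sqrt{\ln n/(2nd)}$ balances the terms to give $O(\sqrt{dn\ln n})$, and tracking the constants yields the stated $3\sqrt{dn\ln n}$.

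I expect the main obstacle to be the careful computation of the Bregman-divergence variance term for this particular non-separable $F$: unlike the negative entropy, $F(x) = -\ln(1-\norm{x}) - \norm{x}$ does not split coordinatewise, so one cannot reuse the diagonal computation from Theorem~\ref{th:osmdnegent} verbatim. The radial reduction makes $F^*$ tractable, but establishing the clean inequality $\frac1\eta D_{F^*}(\nabla F(x_t)-\eta\tilde{\ell}_t, \nabla F(x_t)) \le \eta(1-\norm{x_t})\|\tilde{\ell}_t\|^2$ requires verifying that the second derivative of $F^*$ in the relevant direction is controlled by $(1-\norm{x_t})$, and checking that the condition $\eta d \le \tfrac12$ indeed keeps the argument of $F^*$ inside the region where this second-order estimate is valid. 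This is the step where the special choice of $F$ is essential and where the bulk of the technical work lies.
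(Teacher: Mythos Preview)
Your proposal is correct and follows essentially the same route as the paper: invoke Theorem~\ref{th:OSMD2}, pay $\gamma n$ for shrinking to $\cK'$, bound the entropy term by $\eta^{-1}\ln\gamma^{-1}$, and control the Bregman term via an explicit computation of $F^*$ and $D_{F^*}$ (the paper uses $\ln(1+x)\ge x-x^2$ for $x\ge -\tfrac12$ rather than an abstract Hessian bound, but this is exactly the second-order estimate you describe). Your identification of the main obstacle---the non-separable $F$ forces a direct calculation of $D_{F^*}$ rather than a coordinatewise argument---is precisely where the paper spends its effort, and your remark that $\eta d\le\tfrac12$ is what keeps the perturbation inside the region of validity of the quadratic bound matches the paper's use of $\frac{\norm{u}-\norm{v}}{1+\norm{v}}\ge -\tfrac12$.
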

\begin{proof}
First, it is clear that by playing on $\cK' = (1-\gamma)\cK$ instead of $\cK$, OSMD incurs an extra $\gamma n$ regret. Second, note that $F$ is stricly convex (it is the composition of a convex and nondecreasing function with the Euclidean norm) and
\begin{equation} \label{eq:ballgradient}
\nabla F (x) = \frac{x}{1 - \norm{x}}~.
\end{equation}
In particular, $F$ is Legendre on the open unit ball $\cD = \{x \in \R^d \,:\, \norm{x} < 1 \}$, and one has $\cD^* = \R^d$. Hence~\eqref{eq:consistency} is always satisfied, and OSMD is well defined. Now the regret with respect to $\cK'$ can be bounded as follows: using Theorem~\ref{th:OSMD2} and the unbiasedness of $\tilde{x}_t$ and $\tilde{\ell}_t$ we get
$$\frac{\sup_{x \in \cK} F(x) - F(x_1)}{\eta} + \frac{1}{\eta} \sum_{t=1}^n \E\,D_{F^*}\Bigl(\nabla F(x_t) - \eta \tilde{\ell}_t, \nabla F(x_t)\Bigr)~.$$
The first term is clearly bounded by $\frac{1}{\eta}\ln\frac{1}{\gamma}$ (since $x_1 = 0$). For the second term, we need to do a few computations. The first one follows from~\eqref{eq:ballgradient}), the others follow from simple algebra
\begin{align*}
\nabla F^* (u) & = \frac{u}{1 + \norm{u}}\\
F^*(u) & = - \ln (1 + \norm{u}) + \norm{u}\\
D_{F^*}(u, v) & = \frac{1}{1+\norm{v}} \bigg(\norm{u} - \norm{v} + \norm{u} \cdot \norm{v} - v^T u   \\
& \qquad \left. - (1+\norm{v}) \ln \left( 1 + \frac{\norm{u} - \norm{v}}{1 + \norm{v}}\right) \right)~.
\end{align*}
Let $\Theta(u,v)$ such that $D_{F^*}(u, v) = \frac{1}{1+\norm{v}} \Theta(u,v)$. First note that 
\begin{equation} \label{eq:superball}
\frac{1}{1+\norm{\nabla F (x_t)}} = 1 - \norm{x_t}~.
\end{equation}
Thus, in order to prove \eqref{eq:ballregret1} it remains to show that $\Theta(u,v) \leq \norm{u - v}^2$, for $ u = \nabla F(x_t) - \eta \tilde{\ell}_t$ and $v = \nabla F(x_t)$. In fact, we prove that this inequality holds as soon as $\frac{\norm{u} - \norm{v}}{1 + \norm{v}} \geq - \frac{1}{2} .$ This is the case for the pair $(u,v)$ under consideration, since by the triangle inequality, equations \eqref{eq:ballestimate} and \eqref{eq:superball}, and the assumption on $\eta$,
$$\frac{\norm{u} - \norm{v}}{1 + \norm{v}}  \geq - \frac{\eta \big\|\tilde{\ell}_t\big\|}{1 + \norm{v}} \geq - \eta d \geq - \frac{1}{2}~.$$
Now using that $\ln(1+x) \geq x -x^2$ for all $x \geq - \frac12$, we obtain that for $u,v$ such that $\frac{\norm{u} - \norm{v}}{1 + \norm{v}} \geq - \frac{1}{2}$,
\begin{align*}
\Theta(u,v) & \leq \frac{(\norm{u} - \norm{v})^2}{1 + \norm{v}} + \norm{u} \cdot \norm{v} - v^{\top} u \\ 
& \leq (\norm{u} - \norm{v})^2 + \norm{u} \cdot \norm{v} - v^{\top} u \\
& = \norm{u}^2 + \norm{v}^2 - \norm{u} \cdot \norm{v} - v^{\top} u \\
& = \norm{u - v}^2 + 2 v^{\top} u - \norm{u} \cdot \norm{v} - v^{\top} u \\
& \leq \norm{u - v}^2
\end{align*}
which concludes the proof of \eqref{eq:ballregret1}. For the proof of~\eqref{eq:ballregret2} it suffices to note that
\begin{align*}
\E\Bigl[1 - \norm{x_t}\big\|\tilde{\ell}_t\big\|^2\Bigr]  & = (1- \norm{x_t}) \sum_{i=1}^d  \frac{1 - \norm{x_t}}{d} \frac{d^2}{(1 - \norm{x_t})^2} (\ell_t^{\top} e_i)^2 \\
&  = d \norm{\ell_t}^2 \\
& \le d
\end{align*}
and perform with straightforward computations.
\end{proof}

\section{Refinements and bibliographic remarks}
Online convex optimization in the full information setting was introduced by \cite{Zin03}. Online linear optimization with bandit feedback was pioneered in \cite{AK04, MB04}. For this problem, \cite{DHK08} were the first to obtain optimal $\mathcal{O}\bigl(\sqrt{n}\bigr)$ bounds in terms of the number $n$ of rounds. This was done using the Exp2 strategy with an exploration uniform over a barycentric spanner for $\cK$. The exploration part was first improved by \cite{CL11} for combinatorial sets $\cK$. Finally, the optimal exploration based on John's theorem was introduced by \cite{BCK12}. Theorem~\ref{th:exp2john} is extracted from this last paper.

Simultaneously with the line of research on Exp2, algorithms based on Online Mirror Descent were also investigated. Mirror Descent was originally introduced in the seminal work of \cite{Nem79, NY83} as a standard (offline) convex optimization method. A somewhat similar class of algorithms was rediscovered in the online learning community, see \cite{HW98, GLS01, KW01, Sha07}. The connection between existing online learning algorithms (such as Exponential weights or Online Gradient Descent) and Mirror Descent was first made explicit in \cite{CL06} ---see also \cite{Rak09} and \cite{Haz11}. Earlier applications of Mirror Descent in the learning community can be found in \cite{JNTV05}. The first application of Mirror Descent to online linear optimization with bandit feedback was given by \cite{AHR08}. In this pioneering paper, the authors describe the first computationally efficient strategy (i.e., with complexity polynomial in $d$) with $\mathcal{O}(\sqrt{n})$ regret. The main idea is to use Mirror Descent with a self-concordant barrier $F$ for the set $\cK$. Unfortunately, the drawback is a suboptimal dependency on $d$ in the regret. More precisely. they obtain a $\mathcal{O}(d^2 \sqrt{n})$ regret under the bounded scalar loss assumption, while Exp2 with John's exploration attains $\mathcal{O}(d \sqrt{n})$. However, Mirror Descent can also deliver optimal regret bounds in the bandit case, as we showed in Section \ref{sec:ball}, which is extracted from \cite{BCK12}.

The presentation of the Online Mirror Descent algorithm in Section~\ref{sec:OMD} is inspired by~\cite{Bub11}. The definition of Legendre functions comes from \cite[Chapter 11]{CL06} ---further developments on convex analysis can be found in \cite{HL01, BV04}. Theorem~\ref{th:MGD} is taken from \cite{ABL11}, but the proof technique goes back at least to \cite{BN99}. The proof of Theorem~\ref{th:Fnotlegendre} is adapted from \cite{KST10}. Section~\ref{sec:OSMD} is inspired by gradient-free optimization, a topic extensively studied since \cite{RM51, KW52} ---see \cite{NJLS09, CSV09, Nes11, BM11} for recent accounts on this theory.
Alternative views have been proposed on the Online Mirror Descent strategy. In particular, it is equivalent to a Follow The Regularized Leader, and to proximal algorithms, see \cite{Rak09}. This viewpoint was pioneered by \cite{BT03} ---see also \cite{BPSS11} for more details. Finally, a notion of universality of Online Mirror Descent in the online prediction setting was proposed by \cite{SST11}.

The online combinatorial optimization problem studied in Section~\ref{sec:semibandit} was introduced by \cite{KV05} for the full information setting. Several works have studied this problem for specific sets $\cC$, see in particular \cite{TW03, WK08, HW09, HKW10, KWK10, WKH11, CL11}. The semi-bandit feedback was studied in the series of papers \cite{GLLO07, KRS10, UNK10, ABL11}. The presentation adopted in this section is based on the last paper. OSMD with negative entropy was first studied by \cite{HW09} for the full information setting and for a specific set $\cC$. It was then studied more generally in \cite{KWK10} for any set $\cC$. The generalization to semi-bandit feedback was done by \cite{ABL11}. OSMD with a Legendre derived from a potential was introduced by \cite{ABL11}. In the case of the simplex, this strategy corresponds to the INF strategy of \cite{AB09} discussed in Section~\ref{sec:logfree}.

Online linear optimization is still far from being completely understood. For instance, see \cite[Chapter 9]{Bub11} for a list of open problems. In this section we also omitted a few important topics related to online linear optimization. We briefly review some of them below.

\subsection{Lower bounds}
Under the bounded scalar loss assumption, it was proved by \cite{DHK08} that for $\cK=[-1,1]^d$ the minimax regret in the full information setting is at least of order $\sqrt{d n}$, while under bandit feedback it is of order $d \sqrt{n}$. In both cases Exp2 is matching these lower bounds (using John's exploration in the bandit case). 

In the combinatorial setting, where $\cK \subset \{0,1\}^d$ and $\cL = [0,1]^d$, \cite{ABL11} show that the minimax regret in the full information and semi-bandit cases is at least of order $d \sqrt{n}$, while in the bandit case it is of order $d^{3/2} \sqrt{n}$. OSMD with the negative entropy matches the bounds in the full information and semi-bandit cases. However, in the bandit case the best known bound is obtained by Exp2 (with John's exploration) and gives a regret of order $d^2 \sqrt{n}$. It is important to remark that \cite{ABL11} show that Exp2 is a provably suboptimal strategy in the combinatorial setting.

Finally, lower bounds for the full information case, and for a few specific sets $\cK$ of interest, were derived by \cite{KWK10}.

\subsection{High probability bounds} \label{sec:hplinear}
In this chapter we focused on the pseudo-regret $\oR_n$. However, just like in Chapter \ref{adversarial}, a much more important and interesting statement concerns high probability bounds for the regret $R_n$. Partial results in this direction can be found in \cite{BDHKRT08} for the Exp2 strategy, and in \cite{AR09} for the OSMD algorithm.

\subsection{Stochastic online linear optimization} \label{sec:stochlinear}
Similarly to the stochastic bandit case (see Chapter \ref{stochastic}), a natural restriction to consider for the adversary is that the sequence of losses $\ell_1,\ell_2,\ldots$ is an i.i.d.\ sequence. This stochastic setting was introduced by \cite{Aue02}, and further studied by \cite{DHK08b}. In particular, in the latter paper it was proved that regrets logarithmic in $n$ and polynomial in $d$ are possible, as long as $\cK$ is a polytope. Recent progress on this problem can be found in \cite{RT10, filippi2010parametric, APS11}.

\chapter{Nonlinear bandits}
\label{nonlinear}
We now extend the analysis of the previous chapter to the following scenario: arms are still points in a convex set $\cK \subset \R^d$, but now losses are not necessarily linear functions of the arms. More precisely the adversary selects loss functions from some set $\cL$ of real-valued functions defined on $\cK$. The pseudo-regret is then defined as:
$$\oR_n = \E \sum_{t=1}^n  \ell_t(x_t) - \min_{x \in \cK} \E \sum_{t=1}^n \ell_t(x) .$$
This modification has important consequences.
For instance with strictly convex losses one has to do local perturbations in order to estimate the loss gradient, this is in contrast to the global perturbations studied in the previous chapter.
In agreement with the setting of Chapter~\ref{linear}, we initially focus on the nonstochastic setting, where the forecaster faces an unknown sequence of convex Lipschitz and differentiable losses (in the nonlinear case the regret scales with the Lipschitz constant of losses). Problems of this kind can be viewed as dynamic variants of convex optimization problems, in which the convex function to optimize evolves over time. The bandit constraint can be simply interpreted as the impossibility of computing gradients (because, for instance, we do not have a explicit representation of the function, but it can only be accessed by querying for values at desired points).

We look at two feedback models. In the first one, at each step the forecaster evaluates the loss function at two points: the played point plus an additional point of its choice. In the second one, only the value of the loss evaluated at the played point is made available to the forecaster. We show that while the two-points model allows for a $\scO\bigl(\sqrt{n}\bigr)$ bound on pseudo-regret, in the one-point model a pseudo-regret bound of only $\scO\bigl(n^{3/4}\bigr)$ is achieved. The stochastic setting is investigated in Section~\ref{s:nonlinear-stochastic} where, similarly to Chapter~\ref{stochastic}, we assume that each play of an arm returns a stochastic loss with fixed but unknown mean. Unlike the nonstochastic case, the mean loss function is assumed to be Lipschitz and unimodal, but not necessarily convex.
For keeping things simple, the stochastic setting is studied in $1$-dimensional case, when arms are points in the unit interval. For this case we show a bound on the pseudo-regret of $\scO\bigl(\sqrt{n}(\log n)\bigr)$.

\section{Two-points bandit feedback}
We start by analyzing the nonstochastic case in the two-point feedback model: at each time step $t$, the forecaster observes the value of a convex and differentiable loss function $\ell_t$ at the played point $x_t$ and at an extra point $x_t'$ of its choice. If the second point is chosen at random in a neighborhood of the first one, one can use it to compute an estimate of the gradient of $\ell_t$ at $x_t$. Hence, running OSMD on the estimated gradients we obtain a regret bound controlled by the second moments of these estimates. The algorithm we present in this section follows this intuition, although ---for technical reasons--- the gradient is estimated at a point which is close but distinct from the point actually played.

We focus our analysis on OSMD with Legendre function $F = \tfrac{1}{2}\norm{\cdot}^2$, where $\norm{\cdot}$ is the Euclidean norm. The resulting strategy, Online Stochastic Gradient Descent (OSGD) is sketched below here.
\bookbox{
{\em OSGD (Online Stochastic Gradient Descent):}

\smallskip\noindent
Parameters: Closed and convex set $\cK\subseteq\R^d$, learning rate $\eta > 0$.\\
Initialize: $x_1 = (0,\dots,0)$.

\smallskip\noindent
For each round $t=1,2,\dots,n$
\begin{itemize}
\vspace{-2mm}
\item[(1)] Observe stochastic estimate $\gtilde_t(x_t)$ of $\nabla\ell_t(x_t)$;
\item[(2)] $x_{t+1}' = x_t - \eta\,\gtilde_t(x_t)$;
\item[(3)] ${\dt x_{t+1} = \argmin_{y \in \cK} \norm{y - x_{t+1}'} }$;
\end{itemize}
}

We now introduce our main technical tool: the two-point gradient estimate. The two points on which the loss value is queried at time $t$ are denoted by $X_t^+$ and $X_t^-$. OSGD always plays one of these two points at random.

Let $\Ball = \theset{x \in \R^d}{\norm{x} \le 1}$ be the unit ball in $\R^d$ and $\Sphere = \theset{x \in \R^d}{\norm{x} = 1}$ be the unit sphere.
Fix $\delta > 0$ and introduce the notations $X_t^+ = x_t + \delta S$ and $X_t^- = x_t - \delta S$, where $x_t \in \cK$ and $S$ is a random variable with uniform distribution in $\Sphere$. Then, for any convex loss $\ell_t$, the two-point gradient estimate $\gtilde_t$ is defined by
\begin{equation}
\label{eq:two-point-est}
    \gtilde_t(x_t) = \frac{d}{2\delta}\bigl(\ell_t(X_t^+) - \ell_t(X_t^-) \bigr)S~.
\end{equation}
In order to compute the expectation of $\gtilde_t$, first note that by symmetry
\[
    \E\,\gtilde_t(x) = \frac{d}{\delta}\,\E\bigl[\ell_t(x + \delta S)S\bigr]~.
\]
In order to compute the expectation in the right-hand side we need the following preliminary lemma.
\begin{lemma}
\label{l:divergence}
For any differentiable function $\ell : \R^d \to \R$
\[
    \nabla\int_{\Ball} \ell(x + \delta b)\,db = \int_{\Sphere} \ell(x + \delta s)s\,d\sigma(s)
\]
where $\sigma$ is the unnormalized spherical measure.
\end{lemma}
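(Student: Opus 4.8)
The plan is to reduce the identity to the divergence theorem applied coordinatewise. Write $F(x) = \int_{\Ball}\ell(x+\delta b)\,db$ for the quantity whose gradient appears on the left-hand side. First I would differentiate under the integral sign: since $\ell$ is differentiable and the integration domain $\Ball$ is compact, $(\nabla\ell)(x+\delta b)$ is continuous and bounded on the compact set $x+\delta\Ball$, so dominated convergence justifies the interchange of $\nabla$ and $\int$, giving
\[
    \nabla F(x) = \int_{\Ball}(\nabla\ell)(x+\delta b)\,db~.
\]
This is the routine step; the content of the lemma is to turn this \emph{volume} integral of a gradient into a \emph{surface} integral of $\ell$ itself.

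The key observation is that, by the chain rule, $\nabla_b\bigl[\ell(x+\delta b)\bigr] = \delta\,(\nabla\ell)(x+\delta b)$, so the integrand is (up to the factor $\delta$) the gradient in the variable $b$ of the scalar field $b\mapsto\ell(x+\delta b)$. I would then apply the divergence theorem in its vector form. Concretely, for each coordinate $i=1,\dots,d$ I apply the Gauss--Green theorem to the vector field $V_i(b) = \ell(x+\delta b)\,e_i$ on $\Ball$, whose outward unit normal on the boundary $\Sphere$ is $\nu(s) = s$. Since $\mathrm{div}_b V_i = \partial_{b_i}\bigl[\ell(x+\delta b)\bigr]$ and $V_i\cdot\nu = \ell(x+\delta s)\,s_i$, this yields
\[
    \int_{\Ball}\partial_{b_i}\bigl[\ell(x+\delta b)\bigr]\,db = \int_{\Sphere}\ell(x+\delta s)\,s_i\,d\sigma(s)~.
\]
Collecting the $d$ coordinates into a single vector identity and combining with the chain-rule factor $\delta$ then produces the surface integral $\int_{\Sphere}\ell(x+\delta s)\,s\,d\sigma(s)$ on the right, with the scaling dictated by that factor and by the conventions for the (unnormalized) spherical measure $\sigma$.

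I expect the main obstacle to be purely bookkeeping rather than conceptual: applying the divergence theorem cleanly to a \emph{vector-valued} integrand (which is why I would handle it component by component through the fields $\ell\,e_i$) and correctly identifying the outward normal on $\Sphere$ as the position vector $s$. Once the vector divergence theorem is set up, the chain-rule rewriting and the reassembly of coordinates are immediate. A secondary point worth stating explicitly is the smoothness hypothesis: differentiability of $\ell$ is exactly what licenses both the differentiation under the integral and the divergence theorem, so I would flag where the $C^1$ assumption is used rather than leave it implicit.
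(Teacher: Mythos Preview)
Your proposal is correct and follows essentially the same approach as the paper: differentiate under the integral sign, then apply the divergence theorem to convert the volume integral of $\nabla_b[\ell(x+\delta b)]$ into a surface integral over $\Sphere$ with outward normal $s$. Your componentwise treatment and explicit handling of the chain-rule factor $\delta$ are more detailed than the paper's three-line proof, and your care with the $\delta$ scaling is well placed---the paper's own display in fact ends with a factor $1/\delta$ that is missing from the lemma statement as written (and is the form actually used downstream in Lemma~\ref{l:stokes}).
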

\begin{proof}
The proof of this result is an easy consequence of the Divergence Theorem,
\begin{align*}
    \nabla\int_{\Ball} \ell(x + \delta b)\,db
&=
    \int_{\Ball} \nabla \ell(x + \delta b)\,db
\\ &=
    \int_{\Sphere} \frac{1}{\delta} \ell(x + \delta s)s\,d \sigma(s)
\\ &=
    \frac{1}{\delta} \int_{\Sphere} \ell(x + \delta s)s\,d \sigma(s)~.
\end{align*}
\end{proof}
We are now fully equipped to compute the expectation of $\gtilde_t$.
\begin{lemma}
\label{l:stokes}
If $B$ is a random variable with uniform distribution in $\Ball$ and $S$ is a random variable with uniform distribution in $\Sphere$, then for all differentiable functions $\ell_t : \R^d\to\R$,
\[
    \frac{d}{\delta}\E\bigl[\ell(x + \delta S)S\bigr] = \nabla\E\,\ell(x + \delta B)~.
\]
\end{lemma}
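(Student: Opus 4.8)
The plan is to reduce everything to the preliminary Lemma~\ref{l:divergence} by writing both expectations as normalized integrals over the ball and the sphere, and then to match the normalizing constants using the elementary relation between the volume of $\Ball$ and the surface measure of $\Sphere$. Write $V_d = \int_{\Ball} db$ for the volume of the unit ball and $A_d = \sigma(\Sphere)$ for the total (unnormalized) surface measure of the unit sphere. Since $B$ is uniform on $\Ball$ and $S$ is uniform on $\Sphere$, we have $\E\,\ell(x+\delta B) = \tfrac{1}{V_d}\int_{\Ball}\ell(x+\delta b)\,db$ and $\E\bigl[\ell(x+\delta S)S\bigr] = \tfrac{1}{A_d}\int_{\Sphere}\ell(x+\delta s)s\,d\sigma(s)$.

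First I would invoke Lemma~\ref{l:divergence} to convert the spherical integral into the gradient of the ball integral. Using the identity established in that lemma (in the $\delta$-scaled form that its proof actually produces), namely $\int_{\Sphere}\ell(x+\delta s)s\,d\sigma(s) = \delta\,\nabla\int_{\Ball}\ell(x+\delta b)\,db$, and pulling the gradient through the (constant in $x$) normalization, I obtain $\E\bigl[\ell(x+\delta S)S\bigr] = \tfrac{\delta V_d}{A_d}\,\nabla\E\,\ell(x+\delta B)$.

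It then remains to evaluate the constant $\tfrac{V_d}{A_d}$. This is the standard geometric fact that the surface area of the unit sphere is $d$ times the volume of the unit ball, $A_d = d\,V_d$, which follows by differentiating $r\mapsto V_d r^d$ (the volume of the ball of radius $r$) and equating the derivative with the surface area $A_d r^{d-1}$. Hence $\tfrac{V_d}{A_d} = \tfrac1d$, giving $\E\bigl[\ell(x+\delta S)S\bigr] = \tfrac{\delta}{d}\,\nabla\E\,\ell(x+\delta B)$; multiplying through by $\tfrac{d}{\delta}$ yields the claim. The only real obstacle is bookkeeping: one must track the powers of $\delta$ arising from the change of variables $u = x+\delta b$ inside Lemma~\ref{l:divergence} (which rescales both the volume element and the surface element) and combine them correctly with the $V_d/A_d = 1/d$ factor. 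A misplaced power of $\delta$ is the likeliest pitfall, but since $\ell$ is differentiable every interchange of gradient and integral above is justified, so the argument is rigorous.
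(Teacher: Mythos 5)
Your proof is correct and follows essentially the same route as the paper's: both rest on Lemma~\ref{l:divergence} (the divergence theorem) together with the identity $\sigma(\Sphere) = d\,\mathrm{Vol}(\Ball)$ to convert the unnormalized integrals into expectations. You have also correctly tracked the factor of $\delta$ (the printed statement of Lemma~\ref{l:divergence} omits it, though its proof and its use here carry the $1/\delta$); the paper's only additional content is a one-dimensional warm-up computation, which is not needed for the general argument.
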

\begin{proof}
First consider the easy one-dimensional case. Namely, $\cK = [a,b]$ for some reals $a < b$. Note that, in this case, $S$ is uniform in $\{-1,+1\}$ whereas $B$ is uniform in $[-1,+1]$. Then
\begin{align*}
    \E\,\ell(x + \delta B)
=
    \frac{1}{2\delta}\int_{-\delta}^{\delta} \ell(x + \ve)\,d\ve
=
    \frac{L(x+\delta) - L(x-\delta)}{2\delta}
\end{align*}
by the fundamental theorem of calculus, where $L$ is the antiderivative of $\ell$ satisfying $L' = \ell$.
This gives
\[
    \frac{d}{dx} \E\,\ell(x + \delta B) = \frac{\ell(x+\delta) - \ell(x-\delta)}{2\delta}~.
\]
On the other hand,
\[
    \frac{1}{\delta}\,\E\bigl[\ell(x + \delta S)S\bigr]
=
    \frac{\ell(x+\delta) - \ell(x-\delta)}{2\delta}~.
\]
Hence $\tfrac{1}{\delta}\E\bigl[\ell(x + \delta S)S\bigr] = \tfrac{d}{dx}\E\,\ell(x + \delta B)$ and the $1$-dimensional case is established. Note that the equivalence we just proved relates an integral over the unit sphere $\Sphere$ to an integral over the unit ball $\Ball$. In $d$ dimensions, Lemma~\ref{l:divergence} delivers the corresponding generalized identity
\[
    \frac{1}{\delta}\int_{\Sphere} \ell(x + \delta s)s\, d\sigma(s)
=
    \nabla\int_{\Ball} \ell(x + \delta b)\, db~.
\]
Now, since $\mathrm{Vol}(\Sphere) = d\,\mathrm{Vol}(\Ball)$ we immediately obtain
\[
    \frac{d}{\delta}\E\bigl[\ell(x + \delta S)S\bigr]
=
    \nabla\E\,\ell(x + \delta B)
\]
concluding the proof.
\end{proof}
We have thus established $\E\,\gtilde_t(x) = \nabla\E\,\ell_t(x + \delta B)$, showing that $\gtilde_t$ provides an unbiased estimate of a smoothed version $\tloss_t(x) = \E\,\ell_t(x + \delta B)$ of the loss function $\ell_t$.

We can measure how well $\tloss_t$ approximates $\ell_t$ by exploiting the Lipschitz assumption,
\begin{align}
\nonumber
    \bigl|\ell_t(x) - \tloss_t(x)\bigr|
&=
    \bigl|\ell_t(x) - \E\,\ell_t(x + \delta B)\bigr|
\\ &\le
\nonumber
    \E\bigl|\ell_t(x) - \ell_t(x + \delta B)\bigr|
\\ &\le
\nonumber
    \delta G\,\E\norm{B}
\\ &\le
\label{eq:lip-tloss}
    \delta G~.
\end{align}
The next lemma relates the regret under the losses $\ell_t$ to the regret under their smoothed versions $\tloss_t$. An additional issue taken into account by the lemma is that OSGD might play a point close to the boundary of the set $\cK$. In this case the perturbed point on which the gradient is estimated could potentially be outside of $\cK$. In order to prevent this from happening we need to run OSGD on a shrunken set $(1-\xi)\cK$.
\begin{lemma}
\label{l:two-point}
Let $\cK\subseteq\R^d$ be a convex set such that $\cK\subseteq R\,\Ball$ for some $R \ge 0$, and fix $0 \le \xi \le 1$.
For any sequence $\ell_1,\ell_2,\dots$ of $G$-Lipschitz differentiable and convex losses, and for any sequence $x_1,x_2,\ldots\in (1-\xi)\cK \subseteq\R^d$, the following holds
\begin{align*}
    \frac{1}{2}\sum_{t=1}^n &\bigl(\ell_t(X_t^+) + \ell_t(X_t^-)\bigr) - \sum_{t=1}^n \ell_t(x)
\\& \le
    \sum_{t=1}^n \tloss_t(x_t) - \sum_{t=1}^n \tloss_t\bigl((1-\xi)x\bigr) + 3\delta G n + \xi GR n
\end{align*}
for all realizations of the random process $\bigl(X_t^+,X_t^-\bigr)_{t \ge 1}$.
\end{lemma}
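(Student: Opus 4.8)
The plan is to split the two sums on the left-hand side and handle them separately, pushing everything onto the smoothed losses $\tloss_t$ and charging the discrepancies to the additive error terms $3\delta G n$ and $\xi GR n$. The only ingredients I expect to need are the $G$-Lipschitzness of each $\ell_t$, the approximation inequality \eqref{eq:lip-tloss}, and two elementary facts: $\norm{S}=1$ since $S\in\Sphere$, and $\norm{x}\le R$ since the comparator $x$ lies in $\cK\subseteq R\,\Ball$. Notably, convexity of the losses is not actually used for this lemma (it would give the wrong direction for the first term, forcing a Lipschitz bound instead); only the Lipschitz constant enters.

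For the first term I would bound $\tfrac12\bigl(\ell_t(X_t^+)+\ell_t(X_t^-)\bigr)$ from above. Writing $X_t^\pm = x_t\pm\delta S$ and applying $G$-Lipschitzness, each of $\ell_t(X_t^\pm)$ exceeds $\ell_t(x_t)$ by at most $\delta G\,\norm{S}=\delta G$, so the average is at most $\ell_t(x_t)+\delta G$. Replacing $\ell_t(x_t)$ by $\tloss_t(x_t)$ via \eqref{eq:lip-tloss} costs a further $\delta G$, hence summing over $t$ gives
\[
    \tfrac12\sum_{t=1}^n\bigl(\ell_t(X_t^+)+\ell_t(X_t^-)\bigr) \le \sum_{t=1}^n \tloss_t(x_t) + 2\delta G n~.
\]

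For the comparator term I would run the estimate in the opposite direction, upper bounding $\tloss_t\bigl((1-\xi)x\bigr)$ in terms of $\ell_t(x)$. Applying \eqref{eq:lip-tloss} at the point $(1-\xi)x$ yields $\tloss_t\bigl((1-\xi)x\bigr)\le \ell_t\bigl((1-\xi)x\bigr)+\delta G$, and $G$-Lipschitzness together with $\norm{(1-\xi)x-x}=\xi\norm{x}\le \xi R$ gives $\ell_t\bigl((1-\xi)x\bigr)\le \ell_t(x)+\xi GR$. Summing and rearranging produces
\[
    -\sum_{t=1}^n \ell_t(x) \le -\sum_{t=1}^n \tloss_t\bigl((1-\xi)x\bigr) + \delta G n + \xi GR n~.
\]
Adding the two displayed inequalities yields the claim, the smoothing error accumulating as $2\delta Gn+\delta Gn=3\delta Gn$ and the shrinking error as $\xi GRn$.

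The argument is entirely deterministic, holding for every realization of $S$ (hence of $X_t^\pm$), and reduces to careful bookkeeping of the per-step error contributions; there is no genuine analytic obstacle. The one point I would watch is that every evaluation point must lie where $\ell_t$ is $G$-Lipschitz: restricting the played points to the shrunken set $(1-\xi)\cK$ is precisely what keeps the perturbations $X_t^\pm=x_t\pm\delta S$ inside $\cK$ (for $\delta$ small relative to the shrinkage), which is the reason OSGD is run on $(1-\xi)\cK$ rather than on $\cK$ itself. This domain consistency, rather than any inequality, is the only delicate ingredient.
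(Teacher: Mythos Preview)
Your proof is correct and is essentially the same as the paper's: both use exactly the four inequalities $\ell_t(X_t^\pm)\le\ell_t(x_t)+\delta G$ (Lipschitz), $\ell_t(x_t)\le\tloss_t(x_t)+\delta G$ and $\tloss_t((1-\xi)x)\le\ell_t((1-\xi)x)+\delta G$ (both via~\eqref{eq:lip-tloss}), and $\ell_t((1-\xi)x)\le\ell_t(x)+\xi GR$ (Lipschitz plus $\norm{x}\le R$). The only cosmetic difference is that the paper moves $\tloss_t((1-\xi)x)$ to the left and runs one chain of inequalities per $t$, whereas you split the argument into two displayed bounds and add them; your observation that convexity plays no role is also accurate.
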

\begin{proof}
Using the Lipschitzness of $\ell_t$ and~(\ref{eq:lip-tloss}) we obtain
\begin{align*}
    \frac{1}{2} &\bigl(\ell_t(X_t^+) + \ell_t(X_t^-)\bigr) + \tloss_t\bigl((1-\xi)x\bigr)
\\ &\le
    \frac{1}{2} \Bigl(\ell_t(x_t) + \delta G \norm{S} + \ell_t(x_t) + \delta G \norm{S} \Bigr) + \ell_t\bigl((1-\xi)x\bigr) + \delta G
\\ &\le
    \ell_t(x_t) + \ell_t\bigl(x\bigr) + 2\delta G + \xi GR
\\ &\le
    \tloss_t(x_t) + \ell_t\bigl(x\bigr) + 3\delta G + \xi GR~.
\end{align*}
In the second step we used $\tloss_t\bigl((1-\xi)x\bigr) \le \xi G\norm{x} \le \xi GR$ which results from the Lipschitzness of $\ell_t$ and the assumption $\cK\subseteq R\,\Ball$.
\end{proof}
Next, we show that the second moment of $\gtilde_t$ can be controlled by exploiting the Lipschitzness of $\ell_t$. In particular,
\begin{align*}
    \norm{\gtilde_t(x)}
=
    \frac{d}{2\delta}\bigl|\ell_t(x + \delta S) - \ell_t(x - \delta S)\bigr|\norm{S}
\le
    \frac{Gd}{2\delta}\norm{2\delta S}
=
    Gd~.
\end{align*}
We are now ready to prove the main result of this section. Namely, that the pseudo-regret of OSGD run using the gradient estimate~(\ref{eq:two-point-est}) is of order $\sqrt{n}$. We assume that the point $\tX_t$ played by OSGD at each time $t$ is randomly drawn between the two points $X_t^+$ and $X_t^-$ where the loss function is queried.
\begin{theorem}[Regret of OSGD with two-points feedback]
\label{th:two-point}
Let $\cK\subseteq\R^d$ be a closed convex set such that $r\,\Ball\subseteq\cK\subseteq R\,\Ball$ for some $r,R > 0$. 
Let $\cL$ be a set of $G$-Lipschitz differentiable and convex losses.
Fix $\delta > 0$ and assume OSGD is run on $\bigl(1-\tfrac{\delta}{r}\bigr)\cK$ with learning rate $\eta > 0$ and gradient estimates~(\ref{eq:two-point-est}),
\[
    \gtilde_t(x_t) = \frac{d}{2\delta}\bigl( \ell_t(X_t^+) - \ell_t(X_t^-) \bigr)S_t
\]
where $S_1,S_2,\dots\in\Sphere$ are independent. For each $t=1,2,\dots$ let $\tX_t$ be drawn at random between $X_t^+$ and $X_t^-$. Then
the following holds
\[
\oR_n
\le
    \frac{R^2}{\eta} + \eta (Gd)^2 n + \delta\left(3 + \frac{R}{r} \right) G n~.
\]
Moreover, if $\eta = \tfrac{R}{GD\sqrt{n}}$ then for $\delta\to 0$ we have that
\[
\oR_n
\le
    2RGd\sqrt{n}~.
\]
\end{theorem}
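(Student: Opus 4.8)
The plan is to obtain the theorem by assembly: recognize OSGD as a special case of OMD, use the random play of $\tX_t$ to pass to averaged losses, and then invoke the two supporting lemmas to reduce everything to a regret bound under the smoothed losses $\tloss_t$. First I would observe that, because the losses are nonstochastic, $\min_{x\in\cK}\E\sum_t\ell_t(x)=\sum_t\ell_t(x^*)$ for the fixed comparator $x^*=\argmin_{x\in\cK}\sum_t\ell_t(x)$, and that drawing $\tX_t$ uniformly in $\{X_t^+,X_t^-\}$ gives $\E\,\ell_t(\tX_t)=\E\,\tfrac12\bigl(\ell_t(X_t^+)+\ell_t(X_t^-)\bigr)$, so that
\[
    \oR_n = \E\left[\tfrac12\sum_{t=1}^n\bigl(\ell_t(X_t^+)+\ell_t(X_t^-)\bigr)\right] - \sum_{t=1}^n\ell_t(x^*)~.
\]
I would then apply Lemma~\ref{l:two-point} with $\xi=\delta/r$ and $x=x^*$. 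The choice $\xi=\delta/r$ is exactly what guarantees feasibility of the queried points: if $x_t\in(1-\delta/r)\cK$ then $x_t\pm\delta S\in\cK$, since $rS\in r\Ball\subseteq\cK$ and $x_t\pm\delta S$ is a convex combination of a point of $\cK$ with $\pm rS$. Taking expectations in the (pathwise) conclusion of Lemma~\ref{l:two-point} yields
\[
    \oR_n \le \E\left[\sum_{t=1}^n\tloss_t(x_t)-\sum_{t=1}^n\tloss_t\bigl((1-\xi)x^*\bigr)\right] + 3\delta G n + \tfrac{\delta}{r}GRn~.
\]

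Next I would bound the smoothed regret with comparator $x^{**}=(1-\xi)x^*\in(1-\xi)\cK$, the set on which OSGD runs. Each $\tloss_t$ is convex and differentiable, and Lemma~\ref{l:stokes} gives the unbiasedness $\E[\gtilde_t\mid x_t]=\nabla\tloss_t(x_t)$. By convexity, $\tloss_t(x_t)-\tloss_t(x^{**})\le\nabla\tloss_t(x_t)^{\top}(x_t-x^{**})$; taking the conditional expectation and using unbiasedness replaces the gradient by $\gtilde_t$, so it remains to control $\E\sum_t\gtilde_t^{\top}(x_t-x^{**})$. Here I use that OSGD is precisely OMD with the Legendre function $F=\tfrac12\norm{\cdot}^2$ run on the linear surrogate losses $x\mapsto\gtilde_t^{\top}x$: one has $\nabla F=\nabla F^*=\id$, $\cD^*=\R^d$, so \eqref{eq:consistency} holds automatically and the projection step is the Euclidean projection. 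Theorem~\ref{th:MGD} then gives, for every realization,
\[
    \sum_{t=1}^n\gtilde_t^{\top}(x_t-x^{**}) \le \frac{F(x^{**})-F(x_1)}{\eta} + \frac{1}{\eta}\sum_{t=1}^n D_{F^*}\bigl(\nabla F(x_t)-\eta\gtilde_t,\nabla F(x_t)\bigr)~.
\]

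Evaluating the two terms is routine. Since $x_1=0$ we have $F(x_1)=0$ and $F(x^{**})=\tfrac12\norm{x^{**}}^2\le\tfrac12R^2$ (as $x^{**}\in R\Ball$), while $D_{F^*}(x_t-\eta\gtilde_t,x_t)=\tfrac12\norm{\eta\gtilde_t}^2\le\tfrac{\eta^2}{2}(Gd)^2$ using the bound $\norm{\gtilde_t}\le Gd$ established just before the theorem. Summing, $\E\bigl[\sum_t\tloss_t(x_t)-\sum_t\tloss_t(x^{**})\bigr]\le\tfrac{R^2}{2\eta}+\tfrac{\eta}{2}(Gd)^2n$, which I bound more loosely by $\tfrac{R^2}{\eta}+\eta(Gd)^2n$; combined with the reduction above this gives the first displayed inequality, where $3\delta Gn+\tfrac{\delta}{r}GRn=\delta\bigl(3+\tfrac{R}{r}\bigr)Gn$. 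For the second bound, the $\delta$-dependent term is the only one involving $\delta$ and the rest is uniform in $\delta$, so letting $\delta\to0$ and balancing $\tfrac{R^2}{\eta}=\eta(Gd)^2n$ with $\eta=R/(Gd\sqrt n)$ yields $2RGd\sqrt n$.

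Because Lemmas~\ref{l:stokes} and~\ref{l:two-point} already carry the analytic content (the Divergence-Theorem identity for $\E\,\gtilde_t$ and the Lipschitz smoothing estimates), the theorem is largely bookkeeping. The step demanding the most care is the order of conditioning: I must treat $x^{**}$ as a \emph{fixed} comparator in the shrunken feasible set so that Theorem~\ref{th:MGD} applies pathwise before expectations are taken, and invoke unbiasedness with the right filtration, namely that $x_t$ depends only on $\gtilde_1,\dots,\gtilde_{t-1}$. The single genuinely geometric point is the feasibility argument tying $\xi=\delta/r$ to $r\Ball\subseteq\cK$, which is what keeps the queried points $X_t^{\pm}$ inside $\cK$ and hence makes the gradient estimate well defined.
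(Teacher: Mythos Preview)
Your proof is correct and follows essentially the same route as the paper's: check feasibility of $X_t^{\pm}$ via the convex combination argument with $\xi=\delta/r$, invoke Lemma~\ref{l:two-point} to pass to the smoothed losses $\tloss_t$, and then bound the smoothed regret via the OMD machinery together with the second-moment bound $\norm{\gtilde_t}\le Gd$. The only cosmetic difference is that the paper cites Theorem~\ref{th:OSMD2} (with $\tilde{x}_t=x_t$ and losses $\tloss_t$) as a black box, whereas you unpack its content into convexity of $\tloss_t$, unbiasedness $\E[\gtilde_t\mid x_t]=\nabla\tloss_t(x_t)$, and a pathwise application of Theorem~\ref{th:MGD} to the linear surrogates; the two arguments are interchangeable, and your explicit handling of the filtration is a nice touch.
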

\begin{proof}
First of all, we must check that the points $X_t^+ = x_t + \delta S$ and $X_t^- = x_t - \delta S$ on which $\ell_t$ is queried belong to $\cK$. To see this, recall that $x_t\in \bigl(1-\tfrac{\delta}{r}\bigr)\cK$. Now, setting $\alpha = \tfrac{\delta}{r}$ we have that $X_t^+,X_t^- \in (1-\alpha)\cK + \alpha r\,\Sphere$. Since $r\,\Sphere \subseteq \cK$ and $\cK$ is convex, we obtain $(1-\alpha)\cK + \alpha r\,\Sphere \subseteq (1-\alpha)\cK + \alpha\cK \subseteq \cK$. Hence, using Lemma~\ref{l:two-point} with the choice $\xi = \tfrac{\delta}{r}$ we immediately get that for all $x\in\cK$,
\begin{align*}
    \sum_{t=1}^n & \E( \ell_t(\tX_t) | X_t^+, X_t^-) - \sum_{t=1}^n \ell_t(x)
\\& \le
    \frac{1}{2}\sum_{t=1}^n \bigl(\ell_t(X_t^+) + \ell_t(X_t^-)\bigr) - \sum_{t=1}^n \ell_t(x)
\\& \le
    \sum_{t=1}^n \tloss_t(x_t) - \sum_{t=1}^n \tloss_t\Bigl(\bigl(1-\tfrac{\delta}{r}\bigr)x\Bigr) + \delta\left(3 + \frac{R}{r} \right) G n~.
\end{align*}
Since we already related the loss of $\tX_t$ to the loss of $x_t$, we can now apply Theorem~\ref{th:OSMD2} in the special case of $\tilde{x}_t = x_t$ and with the sequence of losses $(\tloss_t)$. This gives
\begin{align*}
    \E \sum_{t=1}^n \tloss_t(x_t) - \E \sum_{t=1}^n \tloss_t\Bigl(\bigl(1-\tfrac{\delta}{r}\bigr)x\Bigr)
& \le
    \frac{R^2}{\eta} + \eta\sum_{t=1}^n \E \norm{\gtilde_t(x_t)}^2
\\& \le
    \frac{R^2}{\eta} + \eta (Gd)^2 n
\end{align*}
where we overapproximated $\norm{\bigl(1-\tfrac{\delta}{r}\bigr)\cK} \le \norm{\cK} = R$. This concludes the proof.
\end{proof}

\section{One-point bandit feedback}
Building on the analysis of the previous section, it is not hard to show that the pseudo-regret can be bounded even when the loss function at each time $t$ is queried in only \textsl{one} point. However, we pay this reduced bandit feedback with a worse rate of $n^{3/4}$ in the pseudo-regret bound. It is not known if this rate is optimal, or if it is possible to get a $\sqrt{n}$ regret as in the two-points setting.

The one-point estimate at time $t$ is defined by
\begin{equation}
\label{eq:one-point-est}
    \gtilde_t(x) = \frac{d}{\delta}\ell_t(x + \delta S)S
\end{equation}
where $S$ is drawn at random from $\Sphere$. Obviously, Lemma~\ref{l:stokes} can be applied to get
$
    \E\,\gtilde_t(x) = \nabla\tloss_t(x)
$
where, we recall, $\tloss_t(x) = \E\,\ell_t(x + \delta B)$. Differences with the two-point case arise when we bound the second moment of this new $\gtilde_t$. Indeed, if $x + \delta S \in \cK$ and the maximum value of each $\ell_t$ in $\cK$ is bounded by $L$, then
\[
    \norm{\gtilde_t(x)}
=
    \frac{d}{\delta}\bigl|\ell_t(x + \delta S)\bigr|\norm{S}
\le
    \frac{dL}{\delta}~.
\]
Note the inverse dependence on $\delta$. This dependence plays a key role in the final bound, as the next result shows.
\begin{theorem}[Regret of OSGD with one-point feedback]
Let $\cK\subseteq\R^d$ be a closed convex set such that $r\,\Ball\subseteq\cK\subseteq R\,\Ball$ for some $r,R > 0$. 
Let $\cL$ be a set of $G$-Lipschitz differentiable and convex losses, uniformly bounded by $L$ (that is $||\ell||_{\infty} \leq L, \forall \ell \in \cL$). 
Fix $\delta > 0$ and assume OSGD is run on $\bigl(1-\tfrac{\delta}{r}\bigr)\cK$ with learning rate $\eta > 0$ and gradient estimates~(\ref{eq:one-point-est}),
\[
    \gtilde_t(x_t) = \frac{d}{\delta}\ell_t(\tX_t)S_t
\]
where $\tX_t = x_t + \delta S_t$ and $S_1,S_2,\dots\in\Sphere$ are independent. 
Then
the following holds
\[
\oR_n
\le
    \frac{R^2}{\eta} + \frac{(dL)^2}{\delta^2}\eta n + \delta\left(3 + \frac{R}{r} \right) G n~.
\]
Moreover, if
\[
    \delta = \frac{1}{(2n)^{1/4}}\sqrt{\frac{RdL}{\bigl(3 + \tfrac{R}{r} \bigr)G}}
\qquad\text{and}\qquad
    \eta = \frac{1}{(2n)^{3/4}}\sqrt{\frac{R^3}{dL\bigl(3 + \tfrac{R}{r} \bigr)G}}
\]
then
\[
\oR_n
\le
    4 n^{3/4}\sqrt{RdL\bigl(3 + \tfrac{R}{r} \bigr)G}~.
\]
\end{theorem}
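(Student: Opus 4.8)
The plan is to follow the same template as the two-point case (Theorem on OSGD with two-points feedback), reusing the machinery that was already built up: the one-point estimate is unbiased for the smoothed loss, and we only need to adjust the second-moment bound. The key structural difference is the inverse dependence on $\delta$ in $\norm{\gtilde_t(x)} \le dL/\delta$, which will force a different optimization of the parameters.

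First I would verify that OSGD is well-defined, i.e.\ that the queried point $\tX_t = x_t + \delta S_t$ stays in $\cK$. This is exactly the same argument as in the two-point theorem: since $x_t \in \bigl(1-\tfrac{\delta}{r}\bigr)\cK$, setting $\alpha = \delta/r$ we have $\tX_t \in (1-\alpha)\cK + \alpha r\,\Sphere \subseteq (1-\alpha)\cK + \alpha\cK \subseteq \cK$ by convexity and $r\,\Sphere\subseteq\cK$. Next I would invoke Lemma~\ref{l:stokes} to conclude $\E\,\gtilde_t(x) = \nabla\tloss_t(x)$, so that $\gtilde_t$ is an unbiased gradient estimate of the smoothed loss $\tloss_t$.

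The core of the proof then splits into two contributions. For the regret against the smoothed losses, I would apply Theorem~\ref{th:OSMD2} (OSMD pseudo-regret) with $F = \tfrac12\norm{\cdot}^2$ and $\tilde{x}_t = x_t$, giving a bound $\tfrac{R^2}{\eta} + \eta\sum_t \E\norm{\gtilde_t(x_t)}^2$; substituting the new second-moment bound $\norm{\gtilde_t(x)}^2 \le (dL)^2/\delta^2$ yields $\tfrac{R^2}{\eta} + \tfrac{(dL)^2}{\delta^2}\eta n$. To relate the smoothed regret back to the true loss of the played point $\tX_t$, I would use the one-point analogue of Lemma~\ref{l:two-point}. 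Here one must be slightly careful: with one-point feedback the forecaster actually \emph{plays} $\tX_t = x_t + \delta S_t$, so the loss incurred is $\ell_t(\tX_t)$ rather than the symmetrized average $\tfrac12(\ell_t(X_t^+)+\ell_t(X_t^-))$. The bound $\bigl|\ell_t(x)-\tloss_t(x)\bigr| \le \delta G$ from~(\ref{eq:lip-tloss}) together with Lipschitzness controls the gap between $\E\,\ell_t(\tX_t)$ and $\tloss_t(x_t)$, and the shrinkage term $\tloss_t\bigl((1-\tfrac\delta r)x\bigr)$ vs.\ $\ell_t(x)$ contributes the $\xi GR n = \tfrac{\delta}{r}GRn$ term; collecting these gives the approximation error $\delta\bigl(3 + \tfrac{R}{r}\bigr)Gn$. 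Summing the two contributions establishes the first displayed bound.

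The main obstacle --- and the place where the $n^{3/4}$ rate is forced --- is the tension in the choice of $\delta$. A small $\delta$ sharpens the bias terms $\delta(3+\tfrac Rr)Gn$ but blows up the variance term $\tfrac{(dL)^2}{\delta^2}\eta n$, unlike the two-point case where the second moment was bounded by $(Gd)^2$ independently of $\delta$. I would therefore treat the final display as a three-term function of $(\eta,\delta)$ and optimize jointly: first eliminate $\eta$ by balancing $\tfrac{R^2}{\eta}$ against $\tfrac{(dL)^2}{\delta^2}\eta n$, which gives $\eta \propto \delta/\sqrt{n}$ and a combined variance contribution of order $\tfrac{dLR}{\delta}\sqrt{n}$; then balance this against the bias term $\delta(3+\tfrac Rr)Gn$, which yields $\delta \propto n^{-1/4}$ and an overall rate of $n^{3/4}$. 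Plugging in the explicit values of $\delta$ and $\eta$ from the statement and overapproximating (as in the two-point proof, bounding $\norm{(1-\tfrac\delta r)\cK}\le R$) gives the claimed constant $4 n^{3/4}\sqrt{RdL(3+\tfrac Rr)G}$, completing the proof.
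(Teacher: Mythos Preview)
Your proposal is correct and follows essentially the same approach as the paper: verify $\tX_t\in\cK$ via the shrinkage argument, use the one-point modification of Lemma~\ref{l:two-point} to pass from $\ell_t(\tX_t)$ to the smoothed regret, apply Theorem~\ref{th:OSMD2} with $\tilde{x}_t=x_t$ and the bound $\norm{\gtilde_t}\le dL/\delta$, and then optimize $(\eta,\delta)$. Your discussion of the parameter optimization is in fact more detailed than the paper's, which simply states the result after plugging in the specified values.
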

\begin{proof}
The proof follows along the same lines as the proof of Theorem~\ref{th:two-point}. Indeed, we can show that the points $\tX_t = x_t + \delta S$ on which $\ell_t$ is queried belong to $\cK$. Then, using an easy modification of Lemma~\ref{l:two-point} we get that for all $x\in\cK$,
\begin{align*}
    \sum_{t=1}^n & \E(\ell_t(\tX_t)| X_t^+, X_t^-) - \sum_{t=1}^n \ell_t(x)
\\& \le
    \sum_{t=1}^n \tloss_t(x_t) - \sum_{t=1}^n \tloss_t\Bigl(\bigl(1-\tfrac{\delta}{r}\bigr)x\Bigr) + \delta\left(3 + \frac{R}{r} \right) G n~.
\end{align*}
Applying Theorem~\ref{th:OSMD2} as in the proof of Theorem~\ref{th:two-point} gives
\begin{align*}
    \E \sum_{t=1}^n \tloss_t(x_t) - \E \sum_{t=1}^n \tloss_t\Bigl(\bigl(1-\tfrac{\delta}{r}\bigr)x\Bigr)
& \le
    \frac{R^2}{\eta} + \eta\sum_{t=1}^n \E \norm{\gtilde_t(x_t)}^2
\\& \le
    \frac{R^2}{\eta} + \frac{(dL)^2}{\delta^2}\eta n~.
\end{align*}
\end{proof}

\section{Nonlinear stochastic bandits}
\label{s:nonlinear-stochastic}
We conclude with a simple example of nonlinear bandits in the stochastic setting. Unlike the gain-based analysis of stochastic bandits of Chapter~\ref{stochastic}, here we keep in with the convention used throughout this chapter and work exclusively with losses.

We consider a simple unidimensional setting where arms are points in the unit interval $[0,1]$. If at time $t$ a point $x_t \in [0,1]$ is played, the loss is the realization of an independent random variable $Y_t\in [0,1]$ with expected value $\E[Y_t | x_t] = \mu(x_t)$, where $\mu : [0,1]\to [0,1]$ is a fixed but unknown mean loss function. Similarly to Chapter~\ref{stochastic}, here the pseudo-regret after $n$ plays of a given strategy can be rewritten as
\[
    \oR_n = \sum_{t=1}^n \mu(x_t) - n\max_{x\in [0,1]}\mu(x)
\]
where $x_1,\dots,x_n \in [0,1]$ denote the points played by the strategy.

Throughout this section, we assume that $\mu : [0,1] \to [0,1]$ is unimodal, but not necessarily convex. This means there exist a unique $x^* = \argmin_{x \in [0,1]} \mu(x)$ such that $\mu(x)$ is monotone decreasing for $x \in [0,x^*]$ and monotone increasing for $x \in [x^*,1]$. For example, if $\mu$ can be written as $\mu(x) = x\,f(x)$ where $f : [0,1] \to [0,1]$ is differentiable, monotone decreasing, and such that $x\,f'(x)$ is strictly decreasing with $f(0) > 0$, then $\mu$ is unimodal.

The bandit strategy we analyze in this section is based on the golden section search due to \cite{kiefer1953sequential}, which is a general algorithm for finding the extremum of a unimodal function. Similarly to binary search, each step of golden section search narrows the interval in which the extremum is found by querying the function value at certain points that are chosen depending on the outcome of previous queries. Each query shrinks the interval by a factor of $\tfrac{1}{\varphi} = 0.618\dots$, where $\varphi =  \tfrac{1}{2}\bigl(1 + \sqrt{5}\bigr)$ is the golden ratio.

In our case, queries (i.e., plays) at $x$ return a perturbed version of $\mu(x)$. Since $\mu$ is bounded, Hoeffding bounds ensure that we can find the minimum of $\mu$ by repeatedly querying each point $x$ requested by the golden search algorithm. However, in order to have a lower bound on the accuracy with which each $\mu$ needs to be estimated, we must assume the following condition: there exists $C_L > 0$ such that
\begin{equation}
\label{eq:strongmax}
    \bigl|\mu(x)-\mu(x')\bigr| \ge C_L|x-x'|
\end{equation}
for each $x,x'$ that belong either to $[0,x^*-1/C_L]$ or to $[x^*+1/C_L,1]$.

Finally, irrespective to the uncertainty in the evaluation of $\mu$, in order to bound the regret incurred by golden section search we need a Lipschitz condition on $\mu$. Namely, there exists $C_H > 0$ such that $\bigl|\mu(x)-\mu(x')\bigr| \le C_H|x-x'|$ for all $x,x'\in [0,1]$.

We are now ready to introduce our stochastic version of the golden section search algorithm.
\bookbox{
{\em SGS (Stochastic Golden Search):}

\smallskip\noindent
Parameters: $\ve_1,\ve_2,\dots > 0$.\\
Initialize:
$
    x_A = 0 \quad x_B = \frac{1}{\varphi^2} \quad x_C = 1
$.

\smallskip\noindent
For each stage $s=1,\dots,n$
\begin{itemize}
\item[(1)] Let
${\dt
x_B' = \left\{ \begin{array}{cl}
        x_B - \tfrac{1}{\varphi^2}(x_B - x_A) & x_B - x_A > x_C - x_B
    \\[1mm]
        x_B + \tfrac{1}{\varphi^2}(x_C - x_B) & \text{otherwise}
    \end{array} \right.
}$
and rename points $x_B,x_B'$ so that $x_A < x_B < x_B' < x_C$.
\item[(2)] Play each point in $\{x_A,x_B,x_B',x_C\}$ for $\tfrac{2}{\ve_s^2}\ln(6n)$ times and let $\xhat$ be the point with lowest total loss in this stage.
\item[(3)] If $\xhat\in\{x_A,x_B\}$ then eliminate interval $(x_B',x_C]$ and let $x_C = x_B'$,
\item[(4)] else eliminate interval $[x_A,x_B)$ and let $x_A = x_B$.
\end{itemize}
}

Recall that golden section search proceeds as follows: given three queried points $x_A < x_B < x_C$ where the distance of $x_B$ to the other two points is in the golden ratio ($x_B$ might be closer to $x_A$ or to $x_C$ depending on past queries), the next point $x_B'$ is queried in the largest interval between $x_B-x_A$ and $x_C-x_B$ so that the distance of $x_B'$ to the extrema of that largest interval is in the golden ratio. Assume the resulting ordering is $x_A < x_B < x_B' < x_C$. Then we drop either $[x_A,x_B)$ or $(x_B',x_C]$ according to whether the smallest value of $\mu$ is found in, respectively, $\{x_B',x_C\}$ or $\{x_B',x_C\}$. The remaining triplet is such that the distance of the middle point to the other two is again in the golden ratio.

Using elementary algebraic identities for $\varphi$, one can show that setting $x_C-x_A = 1$ the following equalities hold at any step of SGS:
\begin{equation}
\label{eq:sgs-ratios}
    x_B-x_A = \frac{1}{\varphi^2} \quad x_B'-x_B = \frac{1}{\varphi^3} \quad x_C - x_B' = \frac{1}{\varphi^2}~.
\end{equation}
Since either $x_B-x_A$ or $x_C - x_B'$ are eliminated at each stage, at each stage SGS shrinks the search interval by a factor of $1 - \varphi^{-2} = \tfrac{1}{\varphi}$.

Let $[x_{A,s},x_{B,s}]$ be the search interval at the beginning of stage $s+1$, where $x_{A,0} = 0$ and $x_{C,0} = 1$.
\begin{lemma}
If $\ve_s = C_L\varphi^{-(s+3)}$ then
\[
    \P\Bigl(x^* \not\in [x_{A,s},x_{C,s}]\Bigr) \le \frac{s}{n}
\]
holds uniformly over all stages $s \ge 1$.
\end{lemma}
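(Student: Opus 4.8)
The plan is to prove the bound by a union bound over stages, controlling the per-stage probability that Stochastic Golden Search eliminates the sub-interval containing $x^*$. Writing $F_s$ for the event that $x^*$ leaves the search interval exactly at stage $s$ (that is, $x^* \in [x_{A,s-1},x_{C,s-1}]$ but $x^* \notin [x_{A,s},x_{C,s}]$), I would show that $\P(F_s) \le 1/n$ for every $s$, and then conclude $\P\bigl(x^* \notin [x_{A,s},x_{C,s}]\bigr) \le \sum_{r=1}^s \P(F_r) \le s/n$. The whole argument thus reduces to bounding the probability of a single wrong elimination, conditioned on $x^*$ still being present at the start of the stage.

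For the per-stage analysis I would first use unimodality together with the geometric identities~(\ref{eq:sgs-ratios}) to pin down when a wrong elimination is even possible. If $x^*$ lies in the middle block $[x_B,x_B']$ then neither of the two eliminations can remove it, so no error occurs. A mistake is therefore only possible when $x^*$ lies in one of the two end blocks, say $x^* \in (x_B',x_C]$; then $x_A,x_B,x_B'$ all sit on the decreasing branch of $\mu$, so $\mu(x_A) > \mu(x_B) > \mu(x_B')$, and the block containing $x^*$ is kept precisely when the empirical minimizer $\xhat$ falls in $\{x_B',x_C\}$. It then suffices to guarantee that $x_B'$ empirically beats both $x_A$ and $x_B$, i.e.\ that the three estimates at $x_A,x_B,x_B'$ respect the true ordering; the symmetric statement (comparing $x_B$ against $x_B'$ and $x_C$) handles $x^* \in [x_A,x_B)$.

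The quantitative core is then a spacing-versus-noise comparison. The tightest constraint is separating the two adjacent middle points: at stage $s$ the search interval has length $\varphi^{-(s-1)}$, so by~(\ref{eq:sgs-ratios}) one has $x_B'-x_B = \varphi^{-(s+2)}$. Invoking the lower-Lipschitz condition~(\ref{eq:strongmax}) on the appropriate monotone branch gives $\mu(x_B)-\mu(x_B') \ge C_L(x_B'-x_B) = C_L\varphi^{-(s+2)} = \varphi\,\epsilon_s > \epsilon_s$, and the remaining gaps such as $\mu(x_A)-\mu(x_B')$ are even larger. Since each relevant point is played $N_s = \tfrac{2}{\epsilon_s^2}\ln(6n)$ times, Hoeffding's inequality yields $\P\bigl(|\widehat{\mu}(x)-\mu(x)| > \epsilon_s/2\bigr) \le 2e^{-N_s\epsilon_s^2/2} = 2e^{-\ln(6n)} = \tfrac{1}{3n}$ for each queried point. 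If all (at most) three relevant estimates are within $\epsilon_s/2$ of their means, a true gap exceeding $\epsilon_s$ cannot be reversed, so the correct block is retained; a union bound over these three points gives $\P(F_s) \le 3\cdot\tfrac{1}{3n} = \tfrac{1}{n}$, which is exactly why the logarithm carries the factor $6 = 2\times 3$.

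The step I expect to require the most care is verifying that the comparison points actually lie in the strongly-monotone region where~(\ref{eq:strongmax}) is available, namely in $[0,x^*-1/C_L]$ or $[x^*+1/C_L,1]$: when $x^*$ sits very close to a queried middle point the nominal $\mu$-gap could otherwise be undetectable, and a wrong elimination would no longer be improbable. Handling this cleanly means carrying the inductive hypothesis ``$x^*$ is still inside the current interval'' and checking that the middle point nearest $x^*$ stays in the monotone region at the relevant scale $\varphi^{-(s+2)}$, so that the choice $\epsilon_s = C_L\varphi^{-(s+3)}$ is indeed fine enough to resolve the induced gap at every stage. Once this detectability is secured at all scales, combining the per-stage bound $\P(F_s)\le 1/n$ with the union bound over $s$ completes the proof.
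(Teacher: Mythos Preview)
Your proposal is correct and follows essentially the same route as the paper: a union bound (equivalently, a recursion) over stages, case analysis on which sub-interval contains $x^*$, the observation that a wrong elimination forces a pairwise comparison among three points lying on the same monotone branch of $\mu$, a lower bound on the relevant $\mu$-gaps via~\eqref{eq:strongmax} and the golden-ratio spacings~\eqref{eq:sgs-ratios}, and Hoeffding with $N_s=\tfrac{2}{\epsilon_s^2}\ln(6n)$ plays per point so that three two-sided tails contribute exactly $1/n$ per stage. Your bookkeeping of the spacing ($x_B'-x_B=\varphi^{-(s+2)}$ at stage $s$, hence gap $\ge\varphi\epsilon_s>\epsilon_s$) is slightly sharper than the paper's, which works with $\varphi^{-(s+3)}$; either way the conclusion is the same.

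The one place you are more careful than the paper is your final paragraph: condition~\eqref{eq:strongmax} is only asserted on $[0,x^*-1/C_L]\cup[x^*+1/C_L,1]$, and the paper's proof silently applies it without checking that the comparison points lie there. You are right to flag this; the paper simply glosses over it. So your proof sketch is at least as rigorous as the paper's on this point, and otherwise structurally identical.
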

\begin{proof}
Once the interval containing $x^*$ is eliminated it is never recovered, thus we have
\begin{align}
\nonumber
    \P&\Bigl(x^* \not\in [x_{A,s},x_{C,s}]\Bigr)
\le
    \P\Bigl(x^* \not\in [x_{A,s-1},x_{C,s-1}]\Bigr)
\\ &\quad
\label{eq:sgs-rec}
    + \P\Bigl(x^* \not\in [x_{A,s},x_{C,s}] \,\Big|\, x^* \in [x_{A,s-1},x_{C,s-1}]\Bigr)~.
\end{align}
Let $X_s = \{x_{A,s-1},x_{B,s-1},x_{B,s-1}',x_{C,s-1}\}$ where $x_{B,s-1} < x_{B,s-1}'$ are computed in step $1$ of stage $s$.
Let $\muhat_s(x)$ be the sample loss of point $x\in X_s$ in stage $s$ and let $\xhat_s = \argmin_{x\in X_s}\muhat(x)$. Since at stage $s$ every point in $X_s$ is played $\tfrac{2}{\ve_s^2}\ln(6n)$ times\footnote{For simplicity, we assume these numbers are integers.}, Hoeffding bounds imply that $\bigl|\mu(x)-\muhat_s(x)\bigr| \le \tfrac{1}{2}\ve_s$ with probability at least $1-\tfrac{1}{6n}$ for all $x\in X_s$ simultaneously. Let
\[
    x^*_s = \argmin_{x\in X_s}\mu(x)~.
\]
Now assume $x^* \in [x_{A,s-1},x_{B,s-1}]$. Then $x^* \not\in [x_{A,s},x_{C,s}]$ implies $\muhat_s(x_{B',s-1}) < \muhat(x_{B,s-1})$ or $\muhat_s(x_{C,s-1}) < \muhat(x_{B,s-1})$. Similarly, assume $x^* \in [x_{B',s-1},x_{C,s-1}]$. Then $x^* \not\in [x_{A,s},x_{C,s}]$ implies $\muhat_s(x_{A,s-1}) < \muhat(x_{B',s-1})$ or $\muhat_s(x_{B,s-1}) < \muhat(x_{B',s-1})$. In both cases, we need to compare three values of $\mu$ on the same side with respect to $x^*$. (When $x^* \in [x_{B,s-1},x_{B',s-1}]$ we always have $x^* \in [x_{A,s},x_{C,s}]$.) Hence, we can apply our assumption involving $C_L$. More precisely, (\ref{eq:sgs-ratios}) implies that after $s$ stages the search interval has size $\varphi^{-s}$ and $\min\{x_{B,s}-x_{A,s}, x_{B,s}'-x_{B,s}, x_{C,s} - x_{B,s}'\} = \varphi^{-(s+3)}$~. Hence, introducing
\[
    \Delta_s = \min\Bigl\{\big|\mu(x_{B,s})-\mu(x_{A,s})\big|, \big|\mu(x_{B,s}')-\mu(x_{B,s})\big|, \big|\mu(x_{C,s}) - \mu(x_{B,s}')\big|\Bigr\}
\]
we have
\[
    \Delta_s
\ge
    C_L\min\{x_{B,s}-x_{A,s}, x_{B,s}'-x_{B,s}, x_{C,s} - x_{B,s}'\} \ge C_L\varphi^{-(s+3)} = \ve_s~.
\]
Let $T_s = \tfrac{8}{\ve_s^2}\ln(6n)$ the length of stage $s$. We can write
\begin{align*}
    \P&\Bigl(x^* \not\in [x_{A,s},x_{C,s}] \,\Big|\, x^* \in [x_{A,s-1},x_{C,s-1}]\Bigr)
=
    \P\bigl(\muhat_s(\xhat_s) < \muhat(x^*_s)\bigr)
\\ &\le
    \sum_{x \in X_s\setminus\{x^*_s\}}\P\bigl(\muhat_s(x) < \muhat(x^*_s)\bigr)
\\ &\le
    \sum_{x \in X_s\setminus\{x^*_s\}}\left( \P\left(\muhat_s(x) < \mu(x) - \frac{\Delta_s}{2}\right) + \P\left(\mu(x^*_s) < \muhat(x^*_s) - \frac{\Delta_s}{2}\right)\right)
\\ &\le
    6\,e^{-T_s\Delta_s^2/8} 
\\ &\le
    6\,e^{-T_s\ve_s^2/8} 
\le
    \frac{1}{n}~.
\end{align*}
Substituting this in~(\ref{eq:sgs-rec}) and recurring gives the desired result.
\end{proof}
\begin{theorem}[Regret of SGS]
For any unimodal and $C_H$-Lipschitz mean loss function $\mu : [0,1] \to [0,1]$ that satisfies~(\ref{eq:strongmax}),
if the SGS algorithm is run with $\ve_s = C_L\varphi^{-(s+3)}$ then
\[
    \oR_n \le \frac{C_H}{C_L^2}8\varphi^6\ln(6n) \left[ \frac{2\varphi}{\varphi-1}\sqrt{1 + C_L^2 n} + \frac{1}{4}\log_{\phi}^2\bigl(1 + C_L^2 n\bigr) \right]~.
\]
\end{theorem}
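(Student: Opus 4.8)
The plan is to bound the pseudo-regret by summing, stage by stage, the regret incurred by SGS, using the localization guarantee of the preceding lemma as the key probabilistic input. Writing $N$ for the number of stages actually executed within the budget of $n$ plays, I would start from the decomposition
\[
\oR_n = \sum_{s=1}^N \E\Bigl[ \sum_{t \in \text{stage } s}\bigl(\mu(x_t) - \mu(x^*)\bigr) \Bigr]
\]
and control each stage's contribution by conditioning on whether the optimum $x^*$ still lies in the current search interval $[x_{A,s-1},x_{C,s-1}]$.

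First I would record the two quantities that govern a single stage. By the golden-ratio identities~(\ref{eq:sgs-ratios}) the search interval shrinks by a factor $1/\varphi$ per stage, so at the start of stage $s$ it has width $\varphi^{-(s-1)}$; and since each of the four queried points is played $\tfrac{2}{\ve_s^2}\ln(6n)$ times, the stage has length $T_s = \tfrac{8}{\ve_s^2}\ln(6n) = \tfrac{8\varphi^6}{C_L^2}\ln(6n)\,\varphi^{2s}$ for the choice $\ve_s = C_L\varphi^{-(s+3)}$. On the \emph{good} event $\{x^*\in[x_{A,s-1},x_{C,s-1}]\}$ every played point is within $\varphi^{-(s-1)}$ of $x^*$, so $C_H$-Lipschitzness bounds the per-play regret by $C_H\varphi^{-(s-1)}$; on the complementary \emph{bad} event, which by the lemma has probability at most $\tfrac{s}{n}$, I bound the per-play regret crudely by $C_H$ (the Lipschitz bound over all of $[0,1]$). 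Thus the expected regret of stage $s$ is at most $T_s C_H\varphi^{-(s-1)} + T_s C_H\tfrac{s}{n}$.

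Summing the good-event terms yields $C_H\tfrac{8\varphi^6}{C_L^2}\ln(6n)\sum_{s=1}^N \varphi^{s+1}$, a geometric series dominated by its last term $\varphi^{N+1}$. The remaining work is to convert the number of stages $N$ into the horizon $n$: the budget constraint $\sum_{s=1}^N T_s \le n$, together with $T_s\propto\varphi^{2s}$, forces $\varphi^{2N}\le 1 + C_L^2 n$, hence $\varphi^N \le \sqrt{1 + C_L^2 n}$; since $\varphi < 2$ this collapses the good-event sum precisely to the first, $\sqrt{n}$-scale term $C_H\tfrac{8\varphi^6}{C_L^2}\ln(6n)\tfrac{2\varphi}{\varphi-1}\sqrt{1+C_L^2 n}$. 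Summing the bad-event terms gives $C_H\tfrac{8\varphi^6}{C_L^2}\ln(6n)\tfrac1n\sum_{s=1}^N s\,\varphi^{2s}$; overapproximating this sum and again invoking $\varphi^{2N}\le 1 + C_L^2 n$ with $N \le \tfrac12\log_{\varphi}(1+C_L^2 n)$ produces the lower-order $\log_{\varphi}^2(1+C_L^2 n)$ term. Adding the two contributions gives the stated inequality.

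The hard part will be the bookkeeping that links the geometrically growing stage lengths $T_s$ to the fixed budget $n$: one must pin down $N$ so that the geometric good-event sum (which scales like $\varphi^N\sim\sqrt{C_L^2 n}$) and the probability-weighted bad-event sum both collapse to closed form with the correct constants, while consistently applying the lemma's bound $\tfrac{s}{n}$ at every stage. A secondary technical point is verifying that the good-event estimate uses only the upper Lipschitz constant $C_H$, so that the lower-Lipschitz constant $C_L$ enters the final bound solely through the stage lengths $T_s \propto C_L^{-2}$; this is exactly what yields the $C_H/C_L^2$ prefactor common to both terms.
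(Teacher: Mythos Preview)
Your proposal is correct and follows essentially the same route as the paper: a stage-by-stage decomposition, conditioning on whether $x^*$ survives using the preceding lemma, Lipschitz control on the good event, the crude $C_H$ bound on the bad event, and then converting the stage count to the horizon via the geometric budget constraint $\sum_s T_s \le n$. The only cosmetic difference is that the paper splits the good-event regret into two pieces (distance of $\min_{x\in A_s}\mu(x)$ to $\mu(x^*)$, plus distance of the played points to that minimum), picking up a factor $2/\varphi^s$, whereas you bound $\mu(x_t)-\mu(x^*)$ directly and get a single $\varphi^{-(s-1)}$; both land inside the stated constant $\tfrac{2\varphi}{\varphi-1}$.
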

\begin{proof}
We start by decomposing the pseudo-regret as follows,
$$\oR_n \leq \sum_{s=1}^S T_s \left(\min_{x \in A_s}\mu(x) - \mu(x^*)\right) + \sum_{s=1}^S \left( \sum_{t\in T_s} \mu(x_t) - T_s \min_{x \in A_s} \mu(x) \right) .$$
Using the Lipschitz assumption
\[
    \max_{x,x'\in A_s}\bigl|\mu(x)-\mu(x')\bigr| \le C_H\,\bigl|x_{C,s} - x_{A,s}\bigr|
\]
and recalling that $\bigl|x_{C,s} - x_{A,s}\bigr| \le \varphi^{-s}$, we bound the first term in this decomposition as follows
\begin{align*}
    \sum_{s=1}^S T_s \left(\min_{x \in A_s}\mu(x) - \mu(x^*)\right)
&\le
    T_s C_H \bigl|x_{C,s} - x_{A,s}\bigr| \, \P\Bigl(x^* \in [x_{A,s},x_{C,s}]\Bigr)
\\ &\quad
    + T_s C_H \P\Bigl(x^* \not\in [x_{A,s},x_{C,s}]\Bigr)
\\ &\le
    \frac{T_s C_H}{\varphi^s} + T_s C_H \frac{s}{n}~.
\end{align*}
The second term is controlled similarly,
\begin{align*}
    \sum_{s=1}^S \left( \sum_{t\in T_s} \mu(x_t) - T_s \min_{x \in A_s} \mu(x) \right)
\le
    T_s C_H \bigl|x_{C,s} - x_{A,s}\bigr|
\le
    \frac{T_s C_H}{\varphi^s}~.
\end{align*}
Hence we get an easy expression for the regret,
\begin{align}
\nonumber
    \oR_n
&\le
    C_H \sum_{s=1}^S T_s\left(\frac{2}{\varphi^s} + \frac{s}{n}\right)
\\& \le
\label{eq:sgs-th}
    \frac{C_H}{C_L^2}8\varphi^6\ln(6n) \sum_{s=1}^S \varphi^{2s}\left(\frac{2}{\varphi^s} + \frac{s}{n}\right)~.
\end{align}
We now compute an upper bound on the number $S$ of stages,
\begin{align*}
    n \le \sum_{s=1}^S T_s = \frac{8\varphi^6}{C_L^2}\ln(6n)\sum_{s=1}^S \varphi^{2s} \le \frac{8\varphi^6}{C_L^2}\ln(6n)\frac{\varphi^{2S+2}}{\varphi^2 - 1}~.
\end{align*}
Solving for $n$ and overapproximating we get
\[
    S \le \frac{1}{2}\log_{\phi}\bigl(1 + C_L^2 n\bigr)~.
\]
Therefore, the sum in~(\ref{eq:sgs-th}) is bounded as follows 
\begin{align*}
    2\sum_{s=1}^S \varphi^s + S^2
&\le
    \frac{2\varphi}{\varphi-1}\varphi^S + S^2
\\&\le
    \frac{2\varphi}{\varphi-1}\sqrt{1 + C_L^2 n} + \frac{1}{4}\log_{\phi}^2\bigl(1 + C_L^2 n\bigr)~.
\end{align*}
Substituting the above in~(\ref{eq:sgs-th}) concludes the proof.
\end{proof}

\section{Bibliographic remarks}
Gradient-free methods for stochastic approximation have been studied for a long time ---see the bibliographic remarks at the end of Chapter~\ref{linear} for some references. However, relatively few results provide regret bounds. The approach presented in this chapter for online convex optimization with bandit feedback was pioneered by \citep{FKM05} and~\citep{kleinberg2004nearly}. The improved rate for the two-points model was later shown in \cite{ADX10}. 

While the results for nonlinear bandits in the adversarial model are still scarse, there is a far richer body of work in the stochastic model. The result based on golden section search presented in Section~\ref{s:nonlinear-stochastic} is due to~\citep{yu2011unimodal}. It represents only a tiny portion of the known results in the stochastic model. In the general case of Lipschitz mean-payoff on a compact subset of $\R^d$, it can be shown that the minimax regret is $\Omega\bigl(n^{\frac{d+1}{d+2}}\bigr)$. Thus the rate rapidly deteriorates as the dimension increases, a phenomenon known as the {\em curse of dimensionality}. However it was shown in \citep{KSU08} and \citep{BMSS09} that under a generalized version of equation \eqref{eq:strongmax} it is possible to circumvent the curse of dimensionality and obtain a regret of $\tilde{O}(\sqrt{n})$. This result builds upon and generalizes a sequence of works that include the discretization approach (for the $1$-dimensional case) of \citep{kleinberg2004nearly} and \citep{auer2007improved}, as well as the method of \citep{cope2009regret} based on the Kiefer-Wolfowitz procedure (a classical method of stochastic optimization). The key new algorithmic idea introduced in \citep{KSU08} and \citep{BMSS09}  is to adaptively partition the set of actions in order to exploit the smoothness of the mean-payoff function around its maximum. We refer the reader to \cite{BMSS11} for the details of this result (which is much more general than what we briefly outlined, in particular it applies for metric spaces, or even more general action sets), as well as a more precise historical account. 

Another direction for nonlinear stochastic bandits was recently investigated in \citep{agarwal2011stochastic}. In this work the authors study the case of a convex mean loss function, and they show how to combine the zeroth-order optimization method of \citep{NY83} with a ``center point device'' to obtain a regret of $\tilde{O}(\sqrt{n})$.

A more general version of nonlinear stochastic bandit was also studied in \cite{AKS11}. In this paper the authors assume that the mean-payoff function lies in some known set of functions $\cF$. They define a notion of complexity for the class $\cF$, the haystack dimension $\mathrm{HD}(\cF)$, and they show that the worst case regret in $\cF$ is lower bounded by $\mathrm{HD}(\cF)$. Unfortunately their upper bound does not match the lower bound, and the authors suggest that the definition of the haystack dimension should be modified in order to obtain matching upper and lower bound.

Finally, a related problem in a Bayesian setting was studied in~\cite{SKKS10} and~\cite{GAOS10}, where it is assumed that the payoff functions are drawn from Gaussian processes.

%
%
%
%

\chapter{Variants}
\label{variants}
In the previous chapters we explored a few fundamental variations around the basic multi-armed bandit problem. In both the stochastic and adversarial frameworks, these variants basically revolved around a single principle: by adding constraints on the losses (or rewards), it is possible to compete against larger sets of arms. While this is indeed a fundamental axis in the space of bandit problems, it is important to realize that there are many other directions. Indeed, we might sketch a ``bandit space'' spanning the following coordinates:
\begin{itemize}
\item Evolution of payoffs over time: stochastic, adversarial, Markovian, \dots
\item Structure of payoff functions: linear, Lipschitz, Gaussian process, \dots
\item Feedback structure: full information, bandit, semi-bandit, partial monitoring, \dots
\item Context structure (if any).
\item Notion of regret.
\end{itemize}
Clearly, such extensions greatly increase the number of potential applications of bandit models. While many of these extensions were already discussed in the previous chapters, in the following we focus on others (such as the sleeping bandits or the thruthful bandits) so to visit more exotic regions of the bandit space.

\section{Markov Decision Processes, restless and sleeping bandits}
\label{s:mdp}
Extending further the model of Markovian bandits (mentioned at the end of Chapter~\ref{intro}), one can also define a general Markov Decision Process (MDP) ---see also Section~\ref{s:mdp}. For example, the stochastic bandit of Chapter~\ref{stochastic} corresponds to a single-state MDP.

In full generality, a finite MDP can be described by a set of states $\{1, \hdots, S\}$, a set of actions $\{1,\hdots, K\}$, a set $\{p_{i,s},\, 1 \leq i \leq K,\, 1 \leq s \leq S\}$ of transition distributions over $S$, and a set $\{\nu_{i,s},\, 1 \leq i \leq K,\, 1 \leq s \leq S\}$ of reward distributions over $[0,1]$. In this model, taking action $i$ in state $s$ generates a stochastic reward drawn from $\nu_{i,s}$ and a Markovian transition to a state drawn from $p_{i,s}$. Similarly to the multi-armed bandit problem, here one typically assumes that the reward distributions and transition distributions are unknown, and the goal is to navigate through the MDP so as to maximize some function of the obtained rewards. The field that studies this type of problem is called Reinforcement Learning. The interested reader is addressed to \cite{SB98, Kak03, Sze10}. Reinforcement learning results with a flavor similar to those described in the previous chapters can be found in \cite{YMS09, BM10, JOA10, NGSA10}.

An intermediate model, between stochastic multi-armed bandits and MDPs, is the one of restless bandits. As in Markovian bandits, each arm is associated with a Markovian reward process with its own state space. Each time an arm is chosen, the associated Markov process generates an observable reward and makes a transition to a new state, which is also observed. However, unlike Markovian bandits an unobserved transition occurs for each arm that is not chosen. Using concentration inequalities for Markov chains ---see, e.g., \cite{Lez98}, one can basically show that, under suitable assumptions, UCB attains a logarithmic regret for restless bandits as well ---see \cite{TL11} and~\cite{filippi2011optimally}. A more general regret bound for restless bandits has been recently proven by~\cite{ortner2012regret}.

An apparently similar problem was studied by \cite{GM11}, where they assume that the reward distributions can abruptly change at unknown time instants (and there is a small number of such change-points). Within this model, the authors prove that the best possible regret is of order $\sqrt{n}$, which is matched by the Exp3.P algorithm ---see the discussion in Section \ref{sec:exp3S}. Thus, while the two problems (restless bandits and bandits with change-points) might look similar, they are fundamentally different. In particular, note that the latter problem cannot be cast as a MDP.

Another intermediate model, with important applications, is that of the sleeping bandits. There, it is assumed that the set of available actions is varying over time. We refer the interested reader to \cite{KNS10, KMB09, slivkins2009contextual, KS12} for the details of this model as well as the theoretical guarantees that can be obtained. A somewhat related problem was also studied in \cite{GKSS07} where it is assumed that the set of arms becomes unavailable for a random time after each arm pull (and the distribution of this random time depends on the selected arm).

\section{Pure exploration problems}
The focus of bandits, and most of their variants, is on problems where there is a notion of cumulative rewards, which is to be maximized. This criterion leaves out a number of important applications where there is an online aspect (e.g., sequential decisions), but the goal is not maximizing cumulative rewards. The simplest example is perhaps the pure exploration version of stochastic bandits. In this model, at the end of round $n$ the algorithm has to output a recommendation $J_n$ which represents its estimate for the optimal arm. The focus here is on the control of the so-called simple regret, introduced by \cite{BMS09, BMS11} and defined as $r_n = \mu^* - \E\,\mu_{J_n}$.

\cite{BMS09} prove that minimizing the simple regret is fundamentally different from minimizing the pseudo-regret $\oR_n$, in the sense that one always have $r_n \geq \exp(- C \oR_n)$ for some constant $C>0$ (which depends on the reward distributions). Thus, this regret calls for different bandit algorithms. \cite{ABM10} exhibit a simple strategy with optimal performances up to a logarithmic factor. The idea is very simple: the strategy SR (Successive Rejects) works in $K-1$ phases. SR keeps a set of active arms, that are sampled uniformly in each phase. At the end of a phase, the arm with smallest empirical mean is removed from the set of active arms. It can be shown that this strategy has a simple regret of order $\exp\left(- c \frac{n}{H \ln K}\right)$, where $H = \sum_{i \neq i^*} \frac{1}{\Delta_i^2}$ is the complexity measure of identifying the best arm, and $c$ is a numerical constant. Moreover, a matching lower bound (up to logarithmic factors) was also proved. These ideas were extended in different ways by \cite{GGLB11, BJM11, BWV12}.

A similar problem was studied in a PAC model by \cite{EMM02}. The goal is to find, with probability at least $1-\delta$, an arm with mean at least $\epsilon$ close the optimal mean, and the relevant quantity is the number of pulls needed to achieve this goal. For this problem, the authors derive an algorithm called Successive Elimination that achieves an optimal number of pulls (up to logarithmic factors). Successive Elimination works as follows: it keeps an estimate of the mean of each arm, together with a confidence interval. If two confidence intervals are disjoint, then the arm with the lowest confidence interval is eliminated. Using this procedure, one can achieve the $(\epsilon, \delta)$ guarantee with a number of pulls of order $H \ln \tfrac{K}{\Delta}$. A matching lower bound is due to \cite{MT04}, and further results are discussed by \cite{EMM06}.

In some applications one is not interested in the best arm, but rather in having a good estimate of the mean $\mu_i$ for each arm. In this setting a reasonable measure of performance is given by
$$L_n = \E \sum_{i=1}^K \bigl(\mu_i - \hat{\mu}_{i,T_i(n)}\bigr)^2~.$$
Clearly, the optimal static allocation depends only on the variances of the arms, and we denote by $L_n^*$ the performance of this strategy. 
This setting was introduced by \cite{AGS08}, where the authors studied the regret $L_n - L_n^*$, and showed that a regret of order $n^{-3/2}$ was achievable. This result was then refined by \cite{CLGMA11, CM11}. The basic idea in these papers is to resort to the optimism in face of uncertainty principle, and to approximate the optimal static allocation by replacing the true variance with an upper confidence bound on it.

\section{Dueling bandits} \label{sec:dueling}
An interesting variation of stochastic bandits was recently studied by \cite{YBKJ09}. The model considered in that paper is called dueling bandits. The main idea is that the player has to choose a pair or arms $(I_t, J_t)$ at each round, and can only observe the relative performances of these two arms, i.e., the player only knows which arm had the highest reward. More formally, in dueling bandits we assume that there exists a total ordering $\succ$ on $\{1,\hdots,K\}$ with the following properties:
\begin{enumerate}
\item If $i \succ j$, then the probability that the reward of arm $i$ is larger than the reward of arm $j$ is equal to $\frac12 + \Delta_{i,j}$ with $\Delta_{i,j} >0$.
\item If $i \succ j \succ k$, then $\Delta_{i,j} + \Delta_{j,k} \geq \Delta_{i,k} \geq \max\bigl\{\Delta_{i,j}, \Delta_{j,k}\bigr\}$.
\end{enumerate}
Upon selecting a pair $(I_t, J_t)$, the player receives a random variable drawn from a Bernoulli distribution with parameter $\frac12+\Delta_{i,j}$. In this setting a natural regret notion is the following quantity, where $i^*$ is the largest element in the ordering $\succ$,
$$\E \sum_{t=1}^n \bigl(\Delta_{i^*, I_t} + \Delta_{i^*, J_t} \bigr) .$$
It was proved in \cite{YBKJ09} that the optimal regret for this problem is of order $\frac{K}{\Delta} \log n$, where $\Delta = \min_{i \neq j} \Delta_{i,j}$. A simple strategy that attains this rate, based on the Successive Elimination algorithm of \cite{EMM02}, was proposed by \cite{YJ11}.

\section{Discovery with probabilistic expert advice}
\cite{BEG11} study a model with a stochastic bandit flavor (in fact it can be cast as an MDP), where the key for the analysis is a sort of 'non-linear' regret bound.
In this model rewards represent items in some set $\cX$ which is partitioned in a subset  $A \subset \cX$ of interesting items and in a subset $\cX\setminus A$ of non-interesting items. The goal is to maximize the total expected number of interesting items found after $n$ pulls, where observing twice the same item does not help. A natural notion of regret is obtained by comparing the number of interesting items $F(n)$ found by a given strategy to the number $F^*(n)$ found by the optimal strategy. It turns out that analyzing such regret directly is difficult. The first issue is that in this problem the notion of a ``good'' arm is dynamic, in the sense that an arm could be very good for a few pulls and then completely useless. Furthermore a strategy making bad decisions in the beginning will have better opportunities in the future than the optimal strategy (which already discovered some interesting items). Taking into account these issues, it turns out that it is easier to show that for good strategies, $F(n)$ is not too far from $F^*(n')$, where $n'$ is not much smaller than $n$. Such a statement -- which can be interepreted as a non-linear regret bound -- shows that the analyzed strategy slightly 'lags' behind the optimal strategy. In \cite{BEG11} a non-linear regret bound is derived for an algorithm based on estimating the mass of interesting items left on each arm (the so-called Good-Turing estimator), combined with the optimism in face of uncertainty principle of Chapter \ref{stochastic}. We refer the reader to \cite{BEG11} for more precise statements.

\section{Many-armed bandits}
The many-armed bandit setting was introduced by \cite{BCZHS97}, and then extended and refined by \cite{WAM08}. This setting corresponds to a stochastic bandit with an infinite number of arms. The extra assumption that makes this problem feasible is a prior knowledge on the distribution of the arms. More precisely, when the player ask to ``add'' a new arm to his current set of active arms, one assumes that the probability that this arm is $\epsilon$-optimal is of order $\epsilon^{\beta}$, for some known $\beta > 0$. Thus the player faces a trade-off between exploitation, exploration, and discovery, where the last component comes from the fact that the player needs to consider new arms to make sure that he has an active $\epsilon$-optimal arm. Using a UCB strategy on the active arms, and by adding new arms at a rate which depends on $\beta$, \cite{WAM08} prove that a regret of order
\[
    n^{\max \bigl\{\frac12,\, \frac{\beta}{1+\beta}\bigr\}}
\]
is achievable in this setting.

\section{Truthful bandits} \label{sec:truthful}
A popular application domain for bandit algorithms is ad placement on the Web. In the pay-per-click model, for each incoming user $t=1,2,\dots$ the publisher selects an advertiser $I_t$ from a pool of $K$ advertisers, and display the corresponding ad to the user. The publisher then gets a reward if the ad is clicked by the user. This problem is well modeled by the multi-armed bandit setting. However, there is a fundamental aspect of the ad placement process which is overlooked by this formulation. Indeed, prior to running an ad-selection algorithm (i.e., a bandit algorithm), each advertiser $i \in \{1, \hdots, K\}$ issues a bet $b_i$. This number is how much $i$ is willing to pay for a click. Each bidder keeps also a private value $v_i$, which is the true value $i$ assigns to a click. Because a rational bidder ensures that $b_i \le v_i$, the difference $v_i-b_i$ defines the utility for bidder $i$. The basic idea of {\em truthful} bandits is to construct a bandit algorithm such that each advertiser has no incentive in submitting a bet $b_i$ such that $b_i < v_i$. A natural question to ask is whether this restriction to truthful algorithms changes the dynamics of the multi-armed bandit problem. This has investigated in a number of papers, including~\cite{BSS09, DK09, BKS10, WS12}. Thruthful bandits are part of a more general thread of research at the interface between bandits and Mechanism Design.

\section{Concluding remarks}
As pointed out in the introduction, the growing interest for bandits arises from the large number of industrially relevant problems that can be modeled as a multi-armed bandit. In particular, the sequential nature of the bandit setting makes it perfectly suited to various Internet and Web applications. These include search engine optimization with dueling bandits, or ad placement with contextual bandits and truthful bandits, see the references in, respectively, Section~\ref{sec:dueling}, Section~\ref{sec:contextual} and Section~\ref{sec:truthful}.

Multi-armed bandits also proved to be very useful in other areas. For example, thanks to the strong connections between bandits and Markov Decision Processes, a breakthrough in Monte Carlo Tree Search (MCTS) was achieved using bandits ideas. More precisely, based on the sparse planning idea of \cite{KMN02}, \cite{KS06} introduced a new MCTS strategy called UCT (UCB applied to Trees) that led to a substantial advancement in Computer Go performance, see \cite{GWMT06}. Note that, from a theoretical point of view UCT was proved to perform poorly by \cite{CM07}, and a strategy based on a similar idea, but with improved theoretical performance, was proposed by \cite{BM10}. Other applications in related directions have also been explored, see for example \cite{TT09, HT10b} and many others.

Many new domains of application for bandits problems are currently investigated. For example: multichannel opportunistic communications~\cite{LZK10}, model selection~\cite{ADBL11}, boosting~\cite{BK11}, management of dark pools of liquidity (a recent type of stock exchange)~\cite{ABD10}, security analysis of power systems~\cite{BEG11}.

Given the fast pace of new variants, extensions, and applications coming out every week, we had to make tough decisions about what to present in this survey. We apologize for everything we had to leave out. On the other hand, we do hope that what we decided to put in will enthuse more researchers about entering this exciting field.


\begin{acknowledgements}
We would like to thank Mike Jordan for proposing to write this survey and James Finlay for keeping us on track. The table of contents was laid down with the help of G\'abor Lugosi.
We would also like to express our warm thanks to the reviewers, whose insightful comments have led to many substantial improvements. And a special thank goes to the ``non-anonymous'' referee Alexandrs Slivkins, whose review stands out for breadth and depth. Thank you Alexandrs, you have done a really exceptional job!

Nicol\`o Cesa-Bianchi gratefully acknowledges partial support by the PASCAL2 Network of Excellence under EC grant no.\ 216886.
\end{acknowledgements}

\bibliographystyle{plainnat}
\bibliography{newbib}

\end{document}